\documentclass{ecsthesis}      
\usepackage{natbib}            
\usepackage{psfrag}
\usepackage{multirow}
\hypersetup{colorlinks=true}   
\newcommand{\BibTeX}{{\rm B\kern-.05em{\sc i\kern-.025em b}\kern-.08em T\kern-.1667em\lower.7ex\hbox{E}\kern-.125emX}}






\newcommand{\F}[1]{\ensuremath{\mathrm{#1}}\xspace}

\newcommand{\tr}{\F{trace}}




\newcommand{\e}{\ensuremath{\epsilon}\xspace}



\newcounter{alg}

\def\argmin{\operatornamewithlimits{arg\,min}}
\def\argmax{\operatornamewithlimits{arg\,max}}
\newcommand{\innerp}[1]{\left\langle #1 \right\rangle}
\DeclareMathOperator{\rank}{rank}

\usepackage{amsfonts}
\newcommand{\tickYes}{\textcolor{green}{\checkmark}}
\usepackage{pifont}
\newcommand{\tickNo}{\textcolor{red}{\hspace{1pt}\ding{55}}}

\usepackage{epigraph}


\usepackage{gastex}

\newenvironment{proof2}[1][Proof]{\begin{trivlist}
\item[\hskip \labelsep {\bfseries #1}]}{\end{trivlist}}
\newenvironment{proof3}[1][]{\begin{trivlist}
\item[\hskip \labelsep {\bfseries #1}]}{\hfill$\Box$\end{trivlist}}

\definecolor{gray}{gray}{0.97}
\makeatletter\newenvironment{bc_box}{%
   \begin{lrbox}{\@tempboxa}\begin{minipage}{0.9\columnwidth}}{\end{minipage}\end{lrbox}%
   {\setlength{\fboxsep}{8pt}\colorbox{gray}{\usebox{\@tempboxa}}}
}\makeatother
\newenvironment{chapabstract}{
  \begin{small}
  \begin{center}
  \begin{bc_box}\textbf{Chapter abstract}
}{
  \end{bc_box}
  \end{center}
  \end{small}
}

\makeatletter\makeatother

\usepackage{bibentry}
\makeatletter
\let\BRatbibitem\BR@bibitem
\makeatother

\usepackage{titlesec}
\titleformat{\chapter}[display]{\Large\bfseries}{\titlerule\vspace*{-12pt}\filleft\MakeUppercase{\chaptertitlename} \Huge\thechapter\vspace*{-12pt}}{0pt}{\filleft\Large\bfseries\sffamily}[\titlerule]

\titleformat{\part}[frame]{\LARGE\bfseries}{\filcenter\MakeUppercase{\partname} \Huge\thepart}{30pt}{\filcenter\Huge\bfseries\sffamily}

\usepackage[all]{hypcap} 

\usepackage{booktabs} 

\DeclareMathOperator{\sign}{sign}
\begin{document}
\begingroup
  \makeatletter
  \let\BR@bibitem\BRatbibitem
  \nobibliography*
\endgroup
\frontmatter
\title      {Supervised Metric Learning with Generalization Guarantees} 
\titleaffiche{Supervised Metric Learning \\ with Generalization Guarantees}
\author    {Aur\'elien Bellet}
\date       {11 D\'ecembre 2012}
\subject    {PhD Thesis}
\keywords   {}

\maketitle
\begin{acknowledgements}

Je tiens tout d'abord \`a remercier Pierre Dupont, Professeur \`a l'Universit\'e Catholique de Louvain, et Jose Oncina, Professeur \`a l'Universit\'e d'Alicante, d'avoir accept\'e d'\^etre les rapporteurs de mon travail de th\`ese. Leurs remarques pertinentes m'ont permis d'am\'eliorer la qualit\'e de ce manuscrit. Plus g\'en\'eralement, je remercie l'ensemble du jury, notamment R\'emi Gilleron, Professeur \`a l'Universit\'e de Lille, et Liva Ralaivola, Professeur \`a Aix-Marseille Universit\'e, qui ont tout de suite accept\'e d'\^etre examinateurs.

Je remercie chaleureusement mon directeur et mon co-directeur de th\`ese, Marc et Amaury, avec qui j'ai d\'evelopp\'e des liens professionnels et personnels qui de toute \'evidence dureront au-del\`a de cette th\`ese. Je suis particuli\`erement reconnaissant envers Marc qui, malgr\'e son attrait pour une certaine \'equipe de football, a su me convaincre de faire cette th\`ese et m'a fait confiance en acceptant un arrangement extraordinaire (dans tous les sens du terme) pour que je puisse passer ma premi\`ere ann\'ee \`a \'Edimbourg.

Je veux \'egalement saluer les coll\`egues du Laboratoire Hubert Curien et du d\'epartement d'informatique de l'UJM. En premier lieu, mon voisin de bureau et ami JP, qui fut aussi un excellent co-\'equipier Warlight. Je pense aussi aux autres doctorants (anciens et actuels) que sont Laurent, Christophe, \'Emilie, David, Fabien, Tung, Chahrazed et Mattias. Enfin, je veux mentionner les personnes rencontr\'ees dans le cadre des projets PASCAL2 et LAMPADA, notamment Emilie et Pierre du LIF de Marseille avec qui j'esp\`ere avoir l'occasion de travailler et de collaborer encore dans le futur.

D'un point de vue plus personnel, je salue \'evidemment les amis, qui sont trop nombreux pour \^etre cit\'es mais qui se reconna\^itront. La pr\'esence de certains \`a ma soutenance me fait \'enorm\'ement plaisir.
Ces remerciements ne seraient pas complets sans un mot pour Marion, qui m'a beaucoup soutenu et encourag\'e pendant ces (presque) trois ann\'ees. Elle a m\^eme essay\'e de s'int\'eresser \`a la classification lin\'eaire parcimonieuse, r\'eussissant \`a faire illusion lors d'une r\'eception \`a ECML!
Enfin, \emph{last but not least}, je d\'edie tout simplement cette th\`ese \`a mes parents, mes grands-parents et mon petit fr\`ere.

\end{acknowledgements}

\tableofcontents
\listoffigures
\listoftables
\thispagestyle{plain}

\null\vfill
``There is nothing more practical than a good theory.''

\begin{flushright}
--- James C. Maxwell
\end{flushright}

\vspace*{1cm}

``There is a theory which states that if ever anyone discovers exactly what the Universe is for and why it is here, it will instantly disappear and be replaced by something even more bizarre and inexplicable.

There is another theory which states that this has already happened.''

\begin{flushright}
--- Douglas Adams
\end{flushright}

\vfill\vfill\vfill\vfill\vfill\vfill\null
\clearpage

\mainmatter
\addtocontents{toc}{\protect\vspace{14pt}}

\chapter{Introduction}
\label{chap:intro}

The goal of machine learning is to automatically figure out how to perform tasks by generalizing from examples. A machine learning algorithm takes a data sample as input and infers a model that captures the underlying mechanism (usually assumed to be some unknown probability distribution) which generated the data. Data can consist of features vectors (e.g., the age, body mass index, blood pressure, ... of a patient) or can be structured, such as strings (e.g., text documents) or trees (e.g., XML documents). A classic setting is \emph{supervised learning}, where the algorithm has access to a set of training examples along with their labels and must learn a model that is able to accurately predict the label of future (unseen) examples. Supervised learning encompasses classification problems, where the label set is finite (for instance, predicting the label of a character in a handwriting recognition system) and regression problems, where the label set is continuous (for example, the temperature in weather forecasting). On the other hand, an \emph{unsupervised learning} algorithm has no access to the labels of the training data. A classic example is clustering, where we aim at assigning data into similar groups. The generalization ability of the learned model (i.e., its performance on unseen examples) can sometimes be guaranteed using arguments from statistical learning theory. 

Relying on the saying ``birds of a feather flock together'', many supervised and unsupervised machine learning algorithms are based on a notion of \emph{metric} (similarity or distance function) between examples, such as $k$-nearest neighbors or support vector machines in the supervised setting and $K$-Means clustering in unsupervised learning. The performance of these algorithms critically depends on the relevance of the metric to the problem at hand --- for instance, we hope that it identifies as similar the examples that share the same underlying label and as dissimilar those of different labels. Unfortunately, standard metrics (such as the Euclidean distance between feature vectors or the edit distance between strings) are often not appropriate because they fail to capture the specific nature of the problem of interest.

For this reason, a lot of effort has gone into \emph{metric learning}, the research topic devoted to automatically learning metrics from data. In this thesis, we focus on supervised metric learning, where we try to adapt the metric to the problem at hand using the information brought by a sample of labeled examples. Many of these methods aim to find the parameters of a metric so that it best satisfies a set of local constraints over the training sample, requiring for instance that pairs of examples of the same class should be similar and that those of different class should be dissimilar according to the learned metric. A large body of work has been devoted to supervised metric learning from feature vectors, in particular Mahalanobis distance learning, which essentially learns a linear projection of the data into a new space where the local constraints are better satisfied. While early methods were costly and could not be applied to medium-sized problems, recent methods offer better scalability and interesting features such as sparsity. Supervised metric learning from structured data has received less attention because it requires more complex procedures. Most of the work has focused on learning metrics based on the edit distance. Roughly speaking, the edit distance between two objects corresponds to the cheapest sequence of edit operations (insertion, deletion and substitution of subparts) turning one object into the other, where operations are assigned specific costs gathered in a matrix. Edit distance learning consists in optimizing the cost matrix and usually relies on maximizing the likelihood of pairs of similar examples in a probabilistic model.

Overall, we identify two main limitations of the current supervised metric learning methods. First, metrics are optimized based on \emph{local} constraints and used in \emph{local} algorithms, in particular $k$-nearest neighbors. However, it is unclear whether the same procedures can be used to obtain good metrics for use in \emph{global} algorithms such as linear separators, which are simple yet powerful classifiers that often require less memory and provide greater prediction speed than $k$-nearest neighbors. In this context, one may want to optimize the metrics according to a \emph{global} criterion but, to the best of our knowledge, this has never been addressed. Second, and perhaps more importantly, there is a substantial lack of theoretical understanding of generalization in metric learning. It is worth noting that in this context, the question of generalization is two-fold, as illustrated in \fref{fig:genml}. First, one may be interested in the generalization ability of the metric itself, i.e., its consistency not only on the training sample but also on unseen data coming from the same distribution. Very little work has been done on this matter, and existing frameworks lack generality. Second, one may also be interested in the generalization ability of the learning algorithm that uses the learned metric, i.e., can we derive generalization guarantees for the learned model in terms of the quality of the learned metric? In practice, the learned metric is plugged into a learning algorithm and one can only hope that it yields good results. Although some approaches optimize the metric based on the decision rule of classification algorithms such as $k$-nearest neighbors, this question has never been investigated in a formal way. As we will see later in this document, the recently-proposed theory of $(\epsilon,\gamma,\tau)$-good similarity function \citep{Balcan2008,Balcan2008a} has been the first attempt to bridge the gap between the properties of a similarity function and its performance in linear classification, but has not been used so far in the context of metric learning. This theory plays a central role in two of our contributions.

\begin{figure}[t]
\begin{center}
\includegraphics[width=0.8\textwidth]{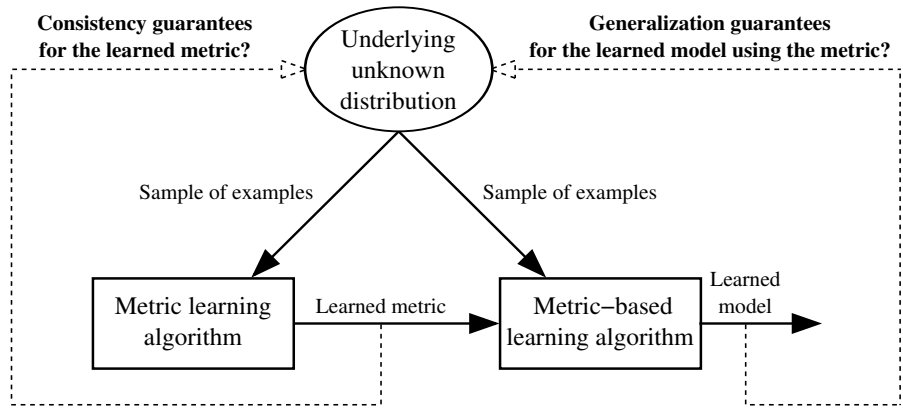}
\caption[The two-fold problem of generalization in metric learning]{The two-fold problem of generalization in metric learning. We are interested in the generalization ability of the learned metric itself: can we say anything about its consistency on unseen data drawn from the same distribution? Furthermore, we are interested in the generalization ability of the learned model using that metric: can we relate its performance on unseen data to the quality of the learned metric?}
\label{fig:genml}
\end{center}
\end{figure}

The limitations described above constitute the main motivation for this thesis, and our contributions address them in several ways. First, we introduce a string kernel that allows the use of learned edit distances in kernel-based methods such as support vector machines. This provides a way to use these learned metrics in global classifiers. Second, we propose two metric learning approaches based on $(\epsilon,\gamma,\tau)$-goodness, for which generalization guarantees can be derived both for the learned metric itself and for a linear classifier built from that metric. In the first approach (which deals with structured data), the metric is optimized with respect to local pairs to ensure the optimality of the solution. In the second approach, dealing with feature vectors allows us to optimize a global criterion that is more appropriate to linear classification. Lastly, we introduce a general framework that can be used to derive generalization guarantees for many existing metric learning methods based on local constraints.

\paragraph{Context of this work} This thesis was carried out in the machine learning team of Laboratoire Hubert Curien UMR CNRS 5516, part of University of Saint-\'Etienne and University of Lyon. The contributions presented in this thesis were developed in the context of the ANR project Lampada\footnote{\url{http://lampada.gforge.inria.fr/}} (ANR-09-EMER-007), which deals with scaling learning algorithms to handle large sets of structured data, with focuses on metric learning and sparse learning, and PASCAL2\footnote{\url{http://pascallin2.ecs.soton.ac.uk/}}, a European Network of Excellence supporting research in machine learning, statistics and optimization.

\paragraph{Outline of the thesis}

This dissertation is organized as follows.
\pref{part:background} reviews the background work relevant to this thesis:
\begin{itemize}
\item \cref{chap:preliminaries} formally introduces the scientific context: supervised learning, analytical frameworks for deriving generalization guarantees, and various types of metrics.
\item \cref{chap:metriclearning} is a large survey of supervised metric learning from feature vectors and structured data, with a focus on the relative merits and limitations of the methods of the literature.
\end{itemize}
\pref{part:struct} gathers our contributions on metric learning from structured data:
\begin{itemize}
\item \cref{chap:pr} introduces a new string kernel based on learned edit probabilities. Unlike other string edit kernels, it is parameter-free and guaranteed to be valid. Its naive form requires the computation of an infinite sum over all finite strings that can be built from the alphabet. We show how to get round this problem by using intersection of probabilistic automata and algebraic manipulation. Experiments highlight the performance of our kernel against state-of-the-art string kernels of the literature.
\item \cref{chap:ecml} builds upon the theory of $(\epsilon,\gamma,\tau)$-good similarity function. We first show that we can use edit similarities directly in this framework and achieve competitive performance. The main contribution of this chapter is a novel method for learning string and tree edit similarities called GESL (for Good Edit Similarity Learning) that relies on a relaxed version of $(\epsilon,\gamma,\tau)$-goodness. The proposed approach, which is more flexible than previous methods, learn an edit similarity from \emph{local} pairs and is then used to build a \emph{global} linear classifier. Using uniform stability arguments, we are able to derive generalization guarantees for the learned similarity that actually give an upper bound on the generalization error of the linear classifier. We conduct extensive experiments that show the usefulness of our approach and the performance and sparsity of the resulting linear classifiers.
\end{itemize}
\pref{part:vect} gathers our contributions on metric learning from feature vectors:
\begin{itemize}
\item \cref{chap:icml} presents a new bilinear similarity learning method for linear classification, called SLLC (for Similarity Learning for Linear Classification). Unlike GESL, SLLC directly optimizes the empirical $(\epsilon,\gamma,\tau)$-goodness criterion, which makes the approach entirely \emph{global}: the similarity is optimized with respect to a global criterion (instead of local pairs) and plugged in a global linear classifier. SLLC is formulated as a convex minimization problem that can be efficiently solved in a batch or online way. We also kernelize our approach, thus learning a linear similarity in a nonlinear feature space induced by a kernel. Using similar arguments as for GESL, we derive generalization guarantees for SLLC highlighting that our method actually minimizes a tighter bound on the generalization error of the classifier than GESL. Experiments on several standard datasets show that SLLC leads to competitive classifiers that have the additional advantage of being very sparse, thus speeding up prediction.
\item \cref{chap:nips} addresses the lack of general framework for establishing generalization guarantees for metric learning. It is based on a simple adaptation of algorithmic robustness to the case where training data is made of pairs of examples. We show that a robust metric learning algorithm has generalization guarantees, and furthermore that a weak notion of robustness is actually necessary and sufficient for a metric learning algorithm to generalize. We illustrate the usefulness of our approach by showing that a large class of metric learning algorithms are robust. In particular, we are able to deal with sparsity-inducing regularizers, which was not possible with previous frameworks.
\end{itemize}

\paragraph{Notation}

Throughout this document, $\mathbb{N}$ denotes the set of natural numbers while $\mathbb{R}$ and $\mathbb{R}_+$ respectively denote the sets of real numbers and nonnegative real numbers. Arbitrary sets are denoted by calligraphic letters such as $\mathcal{S}$, and $|\mathcal{S}|$ stands for the number of elements in $\mathcal{S}$. A set of $m$ elements from $\mathcal{S}$ is denoted by $\mathcal{S}^m$.

We denote vectors by bold lower case letters. For a vector $\mathbf{x}\in\mathbb{R}^d$ and $i\in[d] = \{1,\dots,d\}$, $x_i$ denotes the $i^{th}$ component of $\mathbf{x}$. The inner product between two vectors is denoted by $\innerp{\cdot,\cdot}$.
We denote matrices by bold upper case letters. For a $c\times d$ real-valued matrix $\mathbf{M}\in\mathbb{R}^{c\times d}$ and a pair of integers $(i,j)\in[c]\times[d]$, $M_{i,j}$ denotes the entry at row $i$ and column $j$ of the matrix $\mathbf{M}$. The identity matrix is denoted by $\mathbf{I}$ and the cone of symmetric positive semi-definite (PSD) $d\times d$ real-valued matrices by $\mathbb{S}^{d}_+$. $\|\cdot\|$ denotes an arbitrary (vector or matrix) norm and $\|\cdot\|_p$ the $L_p$ norm.
Strings are denoted by sans serif letters such as $\mathsf{x}$. We use $|\mathsf{x}|$ to denote the length of $\mathsf{x}$ and $\mathsf{x_i}$ to refer to its $i^{th}$ symbol.

In the context of learning problems, we use $\mathcal{X}$ and $\mathcal{Y}$ to denote the input space (or instance space) and the output space (or label space) respectively. We use $\mathcal{Z}=\mathcal{X}\times\mathcal{Y}$ to denote the joint space, and an arbitrary labeled instance is denoted by $z=(x,y)\in\mathcal{Z}$. The hinge function $[\cdot]_+:\mathbb{R}\to\mathbb{R}_+$ is defined as $[c]_+=\max(0,c)$. $\Pr[A]$ denotes the probability of the event $A$, $\mathbb{E}[X]$ the expectation of the random variable $X$ and $x\sim P$ indicates that $x$ is drawn according to the probability distribution $P$.

A summary of the notations is given in \tref{tab:notations}.

\begin{table}[t]
\begin{center}
\begin{footnotesize}
\begin{tabular}{lp{1cm}l}
\toprule
\textbf{Notation} && \textbf{Description}\\
\midrule
$\mathbb{R}$ && Set of real numbers\\
$\mathbb{R}_+$ && Set of nonnegative real numbers\\
$\mathbb{R}^d$ && Set of $d$-dimensional real-valued vectors\\
$\mathbb{R}^{c\times d}$ && Set of $c\times d$ real-valued matrices\\
$\mathbb{N}$ && Set of natural numbers, i.e., $\{0,1,\dots\}$\\
$\mathbb{S}^{d}_+$ && Cone of symmetric PSD $d\times d$ real-valued matrices\\
$[k]$ && The set $\{1,2,\dots,k\}$\\
$\mathcal{S}$ && An arbitrary set\\
$|\mathcal{S}|$ && Number of elements in $\mathcal{S}$\\
$\mathcal{S}^m$ && A set of $m$ elements from $\mathcal{S}$\\
$\mathcal{X}$ && Input space\\
$\mathcal{Y}$ && Output space\\
$z=(x,y)\in\mathcal{X}\times\mathcal{Y}$ && An arbitrary labeled instance\\
$\mathbf{x}$ && An arbitrary vector\\
$x_j$, $x_{i,j}$ && The $j^{th}$ component of $\mathbf{x}$ and $\mathbf{x_i}$\\
$\innerp{\cdot,\cdot}$ && Inner product between vectors\\
$[\cdot]_+$ && Hinge function\\
$\mathbf{M}$ && An arbitrary matrix\\
$\mathbf{I}$ && The identity matrix\\
$M_{i,j}$ && Entry at row $i$ and column $j$ of matrix $\mathbf{M}$\\
$\|\cdot\|$ && An arbitrary norm\\
$\|\cdot\|_p$ && $L_p$ norm\\
$\mathsf{x}$ && An arbitrary string\\
$|\mathsf{x}|$ && Length of string $\mathsf{x}$\\
$\mathsf{x_i},\mathsf{x_{i,j}}$ && $j^{th}$ symbol of $\mathsf{x}$ and $\mathsf{x_i}$\\
$x\sim P$ && $x$ is drawn i.i.d. from probability distribution $P$\\
$\Pr[\cdot]$ && Probability of event\\
$\mathbb{E}[\cdot]$ && Expectation of random variable\\
\bottomrule
\end{tabular}
\end{footnotesize}
\caption[Summary of notation]{Summary of notation.}
\label{tab:notations}
\end{center}
\end{table}

\part{Background}\label{part:background}
\chapter{Preliminaries}
\label{chap:preliminaries}

\begin{chapabstract}
In this chapter, we introduce the scientific context of this thesis as well as relevant background work.
We first introduce formally the supervised learning setting and describe the main ideas of statistical learning theory, with a focus on binary classification. We then present three analytical frameworks (uniform convergence, uniform stability and algorithmic robustness) for establishing that a learning algorithm has generalization guarantees. Lastly, we recall the definition of several types of metrics and give examples of such functions for feature vectors and structured data.
\end{chapabstract}

\section{Supervised Learning}
\label{sec:suplearn}

The goal of supervised learning\footnote{Note that there exist other learning paradigms, such as unsupervised learning \citep{Ghahramani2003}, semi-supervised learning \citep{Chapelle2006}, transfer learning \citep{Pan2010}, reinforcement learning \citep{Sutton1998}, etc.} is to automatically infer a model (hypothesis) from a set of labeled examples that is able to make predictions given new unlabeled data.
In the following, we review basic notions of statistical learning theory, a very popular framework pioneered by \citet{Vapnik1971}. The interested reader can refer to \citet{Vapnik1998} and \citet{Bousquet2003} for a more thorough description.

\subsection{Typical Setting}

In supervised learning, we learn a hypothesis from a set of labeled examples. This notion of training sample is formalized below.

\begin{definition}[Training sample]
A training sample of size $n$ is a set $\mathcal{T} = \{z_i=(x_i,y_i)\}_{i=1}^n$ of $n$ observations independently and identically distributed (i.i.d.) according to an unknown joint distribution $P$ over the space $\mathcal{Z} = \mathcal{X}\times\mathcal{Y}$, where $\mathcal{X}$ is the input space and $\mathcal{Y}$ the output space. For a given observation $z_i$, $x_i\in\mathcal{X}$ is the instance (or example) and $y_i\in\mathcal{Y}$ its label. When $\mathcal{Y}$ is discrete, we are dealing with a classification task, and $y_i$ is called the class of $x_i$. When $\mathcal{Y}$ is continuous, this is a regression task. In this thesis, we mainly focus on binary classification tasks, where we assume $\mathcal{Y} = \{-1,1\}$.
\end{definition}

We will mostly deal with feature vectors and strings. For feature vectors, we generally assume that $\mathcal{X} \subseteq \mathbb{R}^d$. For strings, we need the following definition.

\begin{definition}[Alphabet and string]
An alphabet $\Sigma$ is a finite nonempty set of symbols. A string $\mathsf{x}$ is a finite sequence of symbols from $\Sigma$. The empty string/symbol is denoted by $\$$ and $\Sigma^*$ is the set of all finite strings (including $\$$) that can be generated from $\Sigma$. Finally, the length of a string $\mathsf{x}$ is denoted by $|\mathsf{x}|$.
\end{definition}

We can now formally define what we mean by supervised learning.

\begin{definition}[Supervised learning]
Supervised learning is the task of inferring a function (often referred to as a hypothesis or a model) $h_\mathcal{T}:\mathcal{X}\to\mathcal{L}$ belonging to some hypothesis class $\mathcal{H}$ from a training sample $\mathcal{T}$, which ``best'' predicts $y$ from $x$ for any $(x,y)$ drawn from $P$. Note that the decision space $\mathcal{L}$ may or may not be equal to $\mathcal{Y}$.
\end{definition}

In order to choose $h_\mathcal{T}$, we need a criterion to assess the quality of an arbitrary hypothesis $h$. Given a nonnegative loss function $\ell:\mathcal{H}\times\mathcal{Z}\to\mathbb{R}^+$ measuring the degree of agreement between $h(x)$ and $y$, we define the notion of true risk.

\begin{definition}[True risk]
The true risk (also called generalization error) $R^\ell(h)$ of a hypothesis $h$ with respect to a loss function $\ell$ is the expected loss suffered by $h$ over the distribution $P$:
\begin{eqnarray*}
R^\ell(h) & = & \mathbb{E}_{z\sim P}\left[\ell(h,z)\right].
\end{eqnarray*}
\end{definition}

The most natural loss function for binary classification is the 0/1 loss (also called classification error):
$$\ell_{0/1}(h,z) = \left\{ \begin{array}{rcl}
1 && \text{if}~yh(x) < 0\\
0 && \text{otherwise}.
\end{array} \right.$$
$R^{\ell_{0/1}}(h)$ then corresponds to the proportion of time $h(x)$ and $y$ agree in sign, and in particular to the proportion of correct predictions when $\mathcal{L} = \mathcal{Y}$.

The goal of supervised learning is then to find a hypothesis that achieves the smallest true risk. Unfortunately, in general we cannot compute the true risk of a hypothesis since the distribution $P$ is unknown. We can only measure it empirically on the training sample. This is called the empirical risk.

\begin{definition}[Empirical risk]
Let $\mathcal{T} = \{z_i=(x_i,y_i)\}_{i=1}^n$ be a training sample. The empirical risk (also called empirical error) $R_\mathcal{T}^\ell(h)$ of a hypothesis $h$ over $\mathcal{T}$ with respect to a loss function $\ell$ is the average loss suffered by $h$ on the instances in $\mathcal{T}$:
\begin{eqnarray*}
R^\ell_\mathcal{T}(h) & = & \frac{1}{n}\displaystyle\sum_{i=1}^n\ell(h,z_i).
\end{eqnarray*}
\end{definition}

Under some restrictions, using the empirical risk to select the best hypothesis is a good strategy, as discussed in the next section.

\subsection{Finding a Good Hypothesis}

This section focuses on classic strategies for finding a good hypothesis in the true risk sense. The derivation of guarantees on the true risk of the selected hypothesis will be studied in \sref{sec:gengua}.

Simply minimizing the empirical risk over all possible hypotheses would obviously be a good strategy if infinitely many training instances were available. Unfortunately, in realistic scenarios, training data is limited and there always exists a hypothesis $h$, however complex, that perfectly predicts the training sample, i.e., $R_\mathcal{T}^\ell(h)=0$, but generalizes poorly, i.e., $h$ has a nonzero (potentially large) true risk. This situation where the true risk of a hypothesis is much larger than its empirical risk is called \emph{overfitting}. The intuitive idea behind it is that learning the training sample ``by heart'' does not provide good generalization to unseen data. 

There is therefore a trade-off between minimizing the empirical risk and the complexity of the considered hypotheses, known as the bias-variance trade-off. There essentially exist two ways to deal with it and avoid overfitting: (i) restrict the hypothesis space, and (ii) favor simple hypotheses over complex ones. In the following, we briefly present three classic strategies for finding a hypothesis with small true risk.

\paragraph{Empirical Risk Minimization}

The idea of the Empirical Risk Minimization (ERM) principle is to pick a restricted hypothesis space $\mathcal{H}\subset \mathcal{L}^\mathcal{X}$ (for instance, linear classifiers, decision trees, etc.) and select a hypothesis $h_\mathcal{T}\in\mathcal{H}$ that minimizes the empirical risk:
\begin{equation*}
\begin{aligned}
h_\mathcal{T} & = & \argmin_{h\in\mathcal{H}} &&& R^\ell_\mathcal{T}(h).
\end{aligned}
\end{equation*}
This may work well in practice but depends on the choice of hypothesis space. Essentially, we want $\mathcal{H}$ large enough to include hypotheses with small risk, but $\mathcal{H}$ small enough to avoid overfitting. Without background knowledge on the task, picking an appropriate $\mathcal{H}$ is difficult.

\paragraph{Structural Risk Minimization}

In Structural Risk Minimization (SRM), we use an infinite sequence of hypothesis classes $\mathcal{H}_1\subset\mathcal{H}_2\subset\dots$ of increasing size and select the hypothesis that minimizes a penalized version of the empirical risk that favors ``simple'' classes:
\begin{equation*}
\begin{aligned}
h_\mathcal{T} & = & \argmin_{h\in\mathcal{H}_c,c\in\mathbb{N}} &&& R^\ell_\mathcal{T}(h) + pen(\mathcal{H}_c).
\end{aligned}
\end{equation*}
This implements the Occam's razor principle according to which one should choose the simplest explanation consistent with the training data.

\paragraph{Regularized Risk Minimization}

Regularized Risk Minimization (RRM) also builds upon the Occam's razor principle but is easier to implement: one picks a single, large hypothesis space $\mathcal{H}$ and a regularizer (usually some norm $\|h\|$) and selects a hypothesis that achieves the best trade-off between empirical risk minimization and regularization:
\begin{equation}
\label{eq:rrm}
\begin{aligned}
h_\mathcal{T} & = & \argmin_{h\in\mathcal{H}} &&& R^\ell_\mathcal{T}(h) + \lambda\|h\|,
\end{aligned}
\end{equation}
where $\lambda$ is the trade-off parameter (in practice, it is set using validation data). The role of regularization is to penalize ``complex'' hypotheses. Note that it also provides a built-in way to break the tie between hypotheses that have the same empirical risk. 

\begin{table}[t]
\begin{center}
\begin{footnotesize}
\begin{tabular}{llll}
\toprule
\textbf{Name} & \textbf{Formula} & \textbf{Pros} & \textbf{Cons}\\
\midrule
$L_0$ norm $\|\mathbf{x}\|_0$ & Number of nonzero components & SP & NCO, NSM\\
$L_1$ norm $\|\mathbf{x}\|_1$ & $\sum |x_i|$ & CO, SP & NSM\\
(Squared) $L_2$ norm $\|\mathbf{x}\|_2^2$ & $\sum x_i^2$ & CO, SM &\\
$L_{2,1}$ norm $\|\mathbf{x}\|_{2,1}$ & Sum of $L_2$ norms of grouped variables & CO, GSP & NSM\\
\bottomrule
\end{tabular}
\end{footnotesize}
\caption[Common regularizers on vectors]{Common regularizers on vectors. CO/NCO stand for convex/nonconvex, SM/NSM for smooth/nonsmooth and SP/GSP for sparsity/group sparsity.}
\label{tab:regvec}
\end{center}
\end{table}

\begin{table}[t]
\begin{center}
\begin{footnotesize}
\begin{tabular}{llll}
\toprule
\textbf{Name} & \textbf{Formula} & \textbf{Pros} & \textbf{Cons}\\
\midrule
$L_0$ norm $\|\mathbf{M}\|_0$ & Number of nonzero components & SP & NCO, NSM\\
$L_1$ norm $\|\mathbf{M}\|_1$ & $\sum |M_{i,j}|$ & CO, SP & NSM\\
(Squared) Frobenius norm $\|\mathbf{M}\|_{\mathcal{F}}^2$ & $\sum M_{i,j}^2$ & CO, SM &\\
$L_{2,1}$ norm  $\|\mathbf{M}\|_{2,1}$ & Sum of $L_2$ norms of rows/columns & CO, GSP & NSM\\
Trace (nuclear) norm $\|\mathbf{M}\|_*$ & Sum of singular values & CO, LO & NSM\\
\bottomrule
\end{tabular}
\end{footnotesize}
\caption[Common regularizers on matrices]{Common regularizers on matrices. Abbreviations are the same as in \tref{tab:regvec}, with LO standing for low-rank.}
\label{tab:regmat}
\end{center}
\end{table}

The choice of regularizer is important and depends on the considered task and the desired effect. Common regularizers for vector and matrix models are given in \tref{tab:regvec} and \tref{tab:regmat} respectively. Some regularizers are easy to optimize because they are convex and smooth (for instance, the squared $L_2$ norm) while others do not have these convenient properties and are thus harder to deal with (see \fref{fig:reg-balls} for a graphical insight into some of these regularizers). However, the latter may bring some potentially interesting effects such as sparsity: they tend to set some parameters of the hypothesis to zero. \fref{fig:reg-L1L2} illustrates this on $L_2$ and $L_1$ constraints --- this also holds for regularization.\footnote{In fact, regularized and constrained problems are equivalent in the sense that for any value of the parameter $\beta$ of a feasible constrained problem, there exists a value of the parameter $\lambda$ of the corresponding regularized problem such that both problems have the same set of solutions, and vice versa. In practice, regularized problems are more convenient to use because they are always feasible.}

\begin{figure}[t]
\begin{center}
\psfrag{Ridge}[][][0.8]{$L_2$ norm}
\psfrag{Lasso}[][][0.8]{$L_1$ norm}
\psfrag{Group lasso}[][][0.8]{$L_{2,1}$ norm}
\includegraphics[width=0.6\textwidth]{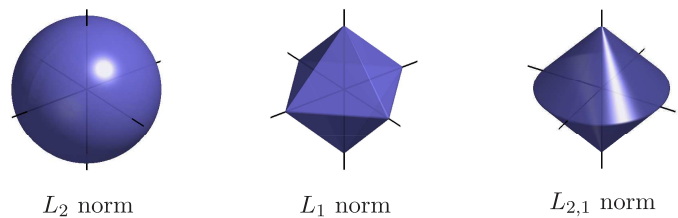}
\caption[3D unit balls of the $L_1$, $L_2$ and $L_{2,1}$ norms]{3D unit balls of the $L_1$, $L_2$ and $L_{2,1}$ norms \citep[taken from][]{Grandvalet2011}. The $L_2$ norm is convex, smooth and does not induce sparsity. The $L_1$ norm is convex, nonsmooth and induces sparsity at the coordinate level. The $L_{2,1}$ norm is convex, nonsmooth and induces sparsity at the group level (simultaneous sparsity of coordinates belonging to the same predefined group).}
\label{fig:reg-balls}
\end{center}
\end{figure}

\begin{figure}[t]
\begin{center}
\psfrag{L1 constraint}[][][0.8]{$L_1$ constraint}
\psfrag{L2 constraint}[][][0.8]{$L_2$ constraint}
\includegraphics[width=0.6\textwidth]{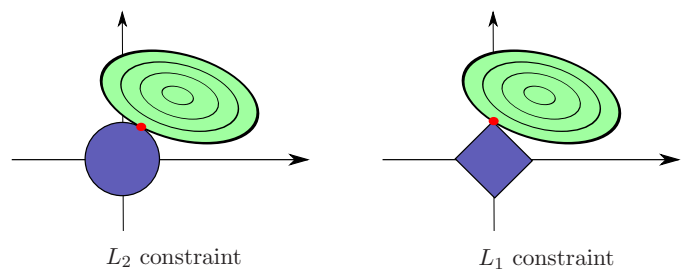}
\caption[Geometric interpretation of $L_2$ and $L_1$ constraints]{Geometric interpretation of $L_2$ and $L_1$ constraints in 2D. Suppose that we are looking for a hypothesis $\mathbf{h}\in\mathbb{R}^2$ with a constraint $\|\mathbf{h}\|\leq \beta$ (represented in dark blue) that minimizes the empirical risk (represented by the light green contour line). Unlike the $L_2$ norm, the $L_1$ norm tends to zero out coordinates, thus reducing dimensionality.}
\label{fig:reg-L1L2}
\end{center}
\end{figure}

Regularization is used in many successful learning methods and, as we will see in \sref{sec:gengua}, may help deriving generalization guarantees.

\subsection{Surrogate Loss Functions}

The methods described above all rely on minimizing the empirical risk. However, due to the nonconvexity of the 0/1 loss, minimizing (or approximately minimizing) $R^{\ell_{0/1}}$ is known to be NP-hard even for simple hypothesis classes \citep{Ben-David2003}. For this reason, surrogate convex loss functions (that can be more efficiently handled) are often used. The most prominent choices in the context of binary classification are:
\begin{itemize}
\item the hinge loss: $\ell_{hinge}(h,z) = [1-yh(x)]_+=\max(0,1-yh(x))$, used for instance in support vector machines \citep{Cortes1995}.
\item the exponential loss: $\ell_{exp}(h,z) = e^{-yh(x)}$, used in Adaboost \citep{Freund1995}.
\item the logistic loss: $\ell_{log}(h,z) = \log(1+\e^{-yh(x)})$, used in Logitboost \citep{Friedman2000}.
\end{itemize}
These loss functions are plotted in \fref{fig:loss} along with the nonconvex 0/1 loss.

\begin{figure}[t]
\begin{center}
\psfrag{Classification error (0/1 loss)}[][][0.7]{Classification error (0/1 loss)}
\psfrag{Hinge loss}[][][0.7]{Hinge loss}
\psfrag{Logistic loss}[][][0.7]{Logistic loss}
\psfrag{Exponential loss}[][][0.7]{Exponential loss}
\psfrag{h(x)*ell}[][][0.7]{$yh(x)$}
\psfrag{ell(h(x),ell)}[][][0.7]{$\ell(h,z)$}
\includegraphics[width=0.6\textwidth]{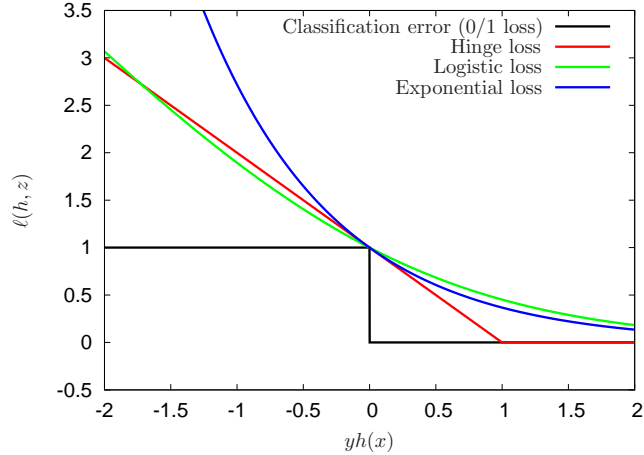}
\caption[Plot of several loss functions for binary classification]{Plot of several loss functions for binary classification.}
\label{fig:loss}
\end{center}
\end{figure}

Choosing an appropriate loss function is not an easy task and strongly depends on the problem, but there exist general results on the relative merits of different loss functions. For instance, \citet{Rosasco2004} studied statistical properties of several convex loss functions in a general classification setting and concluded that the hinge loss has a better convergence rate than other loss functions. \citet{Ben-David2012} have further shown that in the context of linear classification, the hinge loss offers the best guarantees in terms of classification error.

In the following section, we present analytical frameworks that allow the derivation of generalization guarantees, i.e., relating the empirical risk of $h_\mathcal{T}$ to its true risk.

\section{Deriving Generalization Guarantees}
\label{sec:gengua}

In the previous section, we described a few generic methods for learning a hypothesis $h_\mathcal{T}$ from a training sample $\mathcal{T}$ based on minimizing the (penalized) empirical risk. However, learning a hypothesis with small \emph{true} risk is what we are really interested in.
Typically, the empirical risk can be seen as an optimistically biased estimation of the true risk (especially when the training sample is small), and a considerable amount of research has gone into deriving generalization guarantees for learning algorithms, i.e., bounding the deviation of the true risk of the learned hypothesis from its empirical measurement. These bounds are often referred to as PAC (Probably Approximately Correct) bounds \citep{Valiant1984} and have the following form:
$$\Pr[|R^\ell(h)-R^\ell_\mathcal{T}(h)| > \epsilon] \leq \delta,$$
where $\epsilon \geq 0$ and $\delta\in[0,1]$. In other words, it bounds the probability to observe a large gap between the true risk and the empirical risk of an hypothesis.

The key instruments for deriving PAC bounds are concentration inequalities. They essentially assess the deviation of some functions of independent random variables from their expectation. Different concentration inequalities tackle different functions of the variables. The most commonly used in machine learning are Chebyshev (only one variable is considered), Hoeffding (sums of variables) and McDiarmid (that can accommodate any sufficiently regular function of the variables). For more details about concentration inequalities, see for instance the survey of \citet{Boucheron2004}.

In this section, we present three theoretical frameworks for establishing generalization bounds: uniform convergence, uniform stability and algorithmic robustness \citep[for a more general overview, please refer to the tutorial by][]{Langford2005}. Note that our contributions in \cref{chap:ecml}, \cref{chap:icml} and \cref{chap:nips} make use of these frameworks. 

\subsection{Uniform Convergence}

The theory of uniform convergence of empirical quantities to their mean \citep{Vapnik1971,Vapnik1982} is one of the most prominent tools for deriving generalization bounds. It provides guarantees that hold for any hypothesis $h\in\mathcal{H}$ (including $h_\mathcal{T}$) and essentially bounds (with some probability $1-\delta$) the true risk of $h$ by its empirical risk plus a penalty term that depends on the number of training examples $n$, the size (or complexity) of the hypothesis space $\mathcal{H}$ and the value of $\delta$. Intuitively, large $n$ brings high confidence (since as $n\to\infty$ the empirical risk converges to the true risk by the law of large numbers), complex $\mathcal{H}$ brings low confidence (since overfitting is more likely), and $\delta$ accounts for the probability of drawing an ``unlucky'' training sample (i.e., not representative of the underlying distribution $P$).

When the hypothesis space is finite, we get the following PAC bound in $O(1/\sqrt{n})$.
\begin{theorem}[Uniform convergence bound for the finite case] Let $\mathcal{T}$ be a training sample of size $n$ drawn i.i.d. from some distribution $P$, $\mathcal{H}$ a finite hypothesis space and $\delta>0$. For any $h\in\mathcal{H}$, with probability $1-\delta$ over the random sample $\mathcal{T}$, we have:
$$R^\ell(h) \leq R_\mathcal{T}^\ell(h) + \sqrt{\frac{\ln |\mathcal{H}|+\ln(1/\delta)}{2n}}.$$
\end{theorem}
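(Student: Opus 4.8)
The plan is to combine Hoeffding's inequality (a single hypothesis at a time) with a union bound over the finite class $\mathcal{H}$, then invert the resulting tail bound to read off $\epsilon$. Throughout I assume, as is implicit for the 0/1 loss, that $\ell$ takes values in $[0,1]$ (or more generally in a bounded interval, which only rescales the constant).

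First I would fix a single hypothesis $h\in\mathcal{H}$. Since $\mathcal{T}=\{z_i\}_{i=1}^n$ is drawn i.i.d.\ from $P$, the quantities $\ell(h,z_1),\dots,\ell(h,z_n)$ are i.i.d.\ random variables taking values in $[0,1]$, with common expectation $\mathbb{E}_{z\sim P}[\ell(h,z)] = R^\ell(h)$, and their empirical average is exactly $R^\ell_\mathcal{T}(h)$. Applying the one-sided Hoeffding inequality to these bounded variables gives, for every $\epsilon\geq 0$,
$$\Pr\bigl[R^\ell(h) - R^\ell_\mathcal{T}(h) > \epsilon\bigr] \leq e^{-2n\epsilon^2}.$$

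Next I would take a union bound over $\mathcal{H}$. Because $\mathcal{H}$ is finite, the event that \emph{some} $h\in\mathcal{H}$ has $R^\ell(h) - R^\ell_\mathcal{T}(h) > \epsilon$ has probability at most $\sum_{h\in\mathcal{H}} e^{-2n\epsilon^2} = |\mathcal{H}|\,e^{-2n\epsilon^2}$. Setting $\delta = |\mathcal{H}|\,e^{-2n\epsilon^2}$ and solving for $\epsilon$ yields $\epsilon = \sqrt{(\ln|\mathcal{H}| + \ln(1/\delta))/(2n)}$. Hence, with probability at least $1-\delta$ over the draw of $\mathcal{T}$, every $h\in\mathcal{H}$ — in particular the data-dependent $h_\mathcal{T}$ — satisfies $R^\ell(h) \leq R^\ell_\mathcal{T}(h) + \sqrt{(\ln|\mathcal{H}| + \ln(1/\delta))/(2n)}$, which is the claimed bound.

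The only genuine subtlety, rather than an obstacle, is the reading of the quantifiers: the statement "for any $h\in\mathcal{H}$, with probability $1-\delta$" must be understood as "with probability $1-\delta$, for any $h\in\mathcal{H}$" — i.e.\ the high-probability event is a single event on which the inequality holds simultaneously for all hypotheses. This is exactly why the union bound (and the resulting $\ln|\mathcal{H}|$ term) is needed, and it is what makes the bound applicable to $h_\mathcal{T}$, which is selected after seeing $\mathcal{T}$. One should also note the bound is one-sided; the two-sided version with $|R^\ell(h)-R^\ell_\mathcal{T}(h)|$ follows identically at the cost of replacing $|\mathcal{H}|$ by $2|\mathcal{H}|$ inside the logarithm.
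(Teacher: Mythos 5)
Your proof is correct: the paper states this theorem as a standard background result without proof, and the Hoeffding-plus-union-bound argument you give is exactly the canonical derivation it relies on (with the correct reading of the quantifiers, i.e.\ a single high-probability event uniform over $\mathcal{H}$, which is what justifies applying the bound to the data-dependent $h_\mathcal{T}$). No gaps.
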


When $\mathcal{H}$ is continuous (for instance, if $\mathcal{H}$ is the space of linear classifiers), we need a measure of the complexity of $\mathcal{H}$ such as the VC dimension \citep{Vapnik1971}, the fat-shattering dimension \citep{Alon1997} or the Rademacher complexity \citep{Koltchinskii2001,Bartlett2002}. For instance, using the VC dimension, we get the following bound.
\begin{theorem}[Uniform convergence bound with VC dimension] Let $\mathcal{T}$ be a training sample of size $n$ drawn i.i.d. from some distribution $P$, $\mathcal{H}$ a continuous hypothesis space with VC dimension $VC(\mathcal{H})$ and $\delta>0$. For any $h\in\mathcal{H}$, with probability $1-\delta$ over the random sample $\mathcal{T}$, we have:
$$R^\ell(h) \leq R_\mathcal{T}^\ell(h) + \sqrt{\frac{VC(\mathcal{H})\left(\ln\frac{2n}{VC(\mathcal{H})}+1\right)+\ln(4/\delta)}{n}}.$$
\end{theorem}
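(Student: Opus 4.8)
The plan is to establish the \emph{uniform} version of the bound, i.e.\ a high-probability bound on $\sup_{h\in\mathcal{H}}\bigl(R^\ell(h)-R^\ell_\mathcal{T}(h)\bigr)$; the stated inequality for an arbitrary fixed $h\in\mathcal{H}$ (and in particular for $h_\mathcal{T}$) then follows immediately. Since $\mathcal{H}$ is infinite, the finite-case argument behind the previous theorem does not apply directly; the idea is to replace the cardinality $|\mathcal{H}|$ by a combinatorial surrogate — the growth function $\Pi_{\mathcal{H}}(m)$, i.e.\ the largest number of distinct labelings of $m$ points realizable by hypotheses in $\mathcal{H}$ — and then to control $\Pi_{\mathcal{H}}$ in terms of $VC(\mathcal{H})$ via the Sauer--Shelah lemma.

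First I would perform the classical \emph{symmetrization} (ghost-sample) step: introduce a second sample $\mathcal{T}'=\{z'_i\}_{i=1}^n$ drawn i.i.d.\ from $P$, independent of $\mathcal{T}$, and show that for $n$ large enough relative to $\epsilon$,
$$\Pr\!\left[\sup_{h\in\mathcal{H}}\bigl(R^\ell(h)-R^\ell_\mathcal{T}(h)\bigr)>\epsilon\right]\ \le\ 2\,\Pr\!\left[\sup_{h\in\mathcal{H}}\bigl(R^\ell_{\mathcal{T}'}(h)-R^\ell_\mathcal{T}(h)\bigr)>\tfrac{\epsilon}{2}\right].$$
The payoff is that the right-hand event depends on $\mathcal{H}$ only through the values the hypotheses (composed with the $0/1$ loss) take on the $2n$ points of $\mathcal{T}\cup\mathcal{T}'$, of which there are at most $\Pi_{\mathcal{H}}(2n)$.

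Next I would condition on the $2n$ sample points and run the \emph{permutation / Rademacher} argument: swapping each pair $(z_i,z'_i)$ independently with probability $1/2$ leaves the joint distribution unchanged, so $R^\ell_{\mathcal{T}'}(h)-R^\ell_\mathcal{T}(h)$ becomes a signed average to which Hoeffding's inequality applies for each fixed dichotomy; a union bound over the at most $\Pi_{\mathcal{H}}(2n)$ dichotomies, followed by taking expectation over the sample, yields
$$\Pr\!\left[\sup_{h\in\mathcal{H}}\bigl|R^\ell(h)-R^\ell_\mathcal{T}(h)\bigr|>\epsilon\right]\ \le\ 4\,\Pi_{\mathcal{H}}(2n)\,e^{-n\epsilon^2/8}.$$
Invoking Sauer--Shelah, $\Pi_{\mathcal{H}}(2n)\le(2en/VC(\mathcal{H}))^{VC(\mathcal{H})}$ whenever $2n\ge VC(\mathcal{H})$, hence $\ln\Pi_{\mathcal{H}}(2n)\le VC(\mathcal{H})\bigl(\ln\frac{2n}{VC(\mathcal{H})}+1\bigr)$. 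Setting the right-hand side equal to $\delta$ and solving for $\epsilon$ gives, up to the multiplicative constant in the exponent, exactly the claimed expression; a slightly sharper symmetrization/concentration step (or absorbing the constant into lower-order terms) removes the spurious factor and produces the clean constant-free form in the statement.

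The main obstacle is the bookkeeping of constants so that the final bound comes out in the stated form $\sqrt{(\,\cdot\,)/n}$ rather than the looser $\sqrt{8(\,\cdot\,)/n}$ that the crude symmetrization-plus-Hoeffding route gives; this is where one must be careful about the precise version of the symmetrization lemma and of the concentration inequality used. A secondary subtlety is that the statement is phrased for a general nonnegative (bounded) loss $\ell$, not only the $0/1$ loss: for a genuinely real-valued $\ell$ the growth function must be replaced by a covering number of the induced loss class and $VC(\mathcal{H})$ by an appropriate scale-sensitive dimension (fat-shattering / pseudo-dimension), after which the same symmetrization-and-union-bound skeleton goes through.
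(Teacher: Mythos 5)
This theorem is quoted in the thesis as a classical background result (the Vapnik--Chervonenkis uniform convergence bound) and is not proved there, so there is no in-paper argument to compare against; your sketch is exactly the standard route that underlies it: symmetrization with a ghost sample, the permutation/Hoeffding step conditioned on the $2n$ points, a union bound over at most $\Pi_{\mathcal{H}}(2n)$ dichotomies, and Sauer--Shelah to replace $\ln\Pi_{\mathcal{H}}(2n)$ by $VC(\mathcal{H})\bigl(\ln\frac{2n}{VC(\mathcal{H})}+1\bigr)$. That outline is correct, and you rightly flag the two soft spots: the crude symmetrization-plus-Hoeffding chain gives a constant such as $\sqrt{8(\cdot)/n}$ rather than the clean $\sqrt{(\cdot)/n}$ displayed (different textbook versions of the symmetrization lemma yield different constants, and the thesis simply quotes one such variant without derivation), and for a general bounded loss the growth function/VC dimension must be replaced by covering numbers or a scale-sensitive dimension, the stated form being really the $0/1$-loss case. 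Neither caveat is a gap in substance relative to the paper, which asserts the bound as cited background rather than deriving it.
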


A drawback of uniform convergence analysis is that it is only based on the size of the training sample and the complexity of the hypothesis space, and completely ignores the learning algorithm, i.e., how the hypothesis $h_\mathcal{T}$ is selected.\footnote{In fact, the Rademacher complexity can sometimes implicitly take into account the regularization term of the algorithm.} In the following, we present two analytical frameworks that explicitly take into account the algorithm and can be used to derive generalization guarantees for $h_\mathcal{T}$ specifically, in particular in the regularized risk minimization setting \eqref{eq:rrm}.

\subsection{Uniform Stability}
\label{sec:stability}

Building on previous work on algorithmic stability, \citet{Bousquet2001,Bousquet2002} introduced new definitions that allow the derivation of generalization bounds for a large class of algorithms. Intuitively, an algorithm is said stable if it is robust to small changes in its input (in our case, the training sample), i.e., the variation in its output is small. Formally, we focus on uniform stability, a version of stability that allows the derivation of rather tight bounds.

\begin{definition}[Uniform stability]
\label{def:stability}
An algorithm $\mathcal{A}$ has uniform stability $\kappa/n$ with respect to a loss function $\ell$ if the following holds:
$$\forall \mathcal{T}, |\mathcal{T}|=n,\forall i \in [n] : \displaystyle \sup_{z} |\ell(h_\mathcal{T},z)-\ell(h_{T^i},z)|\leq \frac{\kappa}{n},$$
where $\kappa$ is a positive constant, $\mathcal{T}^i$ is obtained from the training sample $\mathcal{T}$ by replacing the $i^{th}$ example $z_i\in \mathcal{T}$ by another example $z_i'$ drawn i.i.d. from $P$, $h_\mathcal{T}$ and $h_{\mathcal{T}^i}$ are the hypotheses learned by $\mathcal{A}$ from $\mathcal{T}$ and $\mathcal{T}^i$ respectively.\footnote{\defref{def:stability} corresponds to the case where the training sample is altered through the replacement of an instance by another. \citet{Bousquet2001,Bousquet2002} also give a definition of uniform stability based on the removal of an instance from the training sample, which implies \defref{def:stability}. We will use \defref{def:stability} throughout this thesis: we find it more convenient to deal with since replacement preserves the size of the training sample.}
\end{definition}

\citet{Bousquet2001,Bousquet2002} have shown that a large class of regularized risk minimization algorithms satisfies this definition. The constant $\kappa$ typically depends on the form of the loss function, the regularizer and the regularization parameter $\lambda$. Making a good use of McDiarmid's inequality, they show that when \defref{def:stability} is fulfilled, the following bound in $O(1/\sqrt{n})$ holds.

\begin{theorem}[Uniform stability bound] Let $\mathcal{T}$ be a training sample of size $n$ drawn i.i.d. from some distribution $P$ and $\delta>0$. For any algorithm $\mathcal{A}$ with uniform stability $\kappa/n$ with respect to a loss function $\ell$ upper-bounded by some constant $B$,\footnote{Note that many loss functions are unbounded if their domain is assumed to be unbounded (see \fref{fig:loss}), but in practice they have bounded domain due for example to the common assumption that the norm of any instance is bounded.} with probability $1-\delta$ over the random sample $\mathcal{T}$, we have:
$$R^\ell(h_\mathcal{T}) \leq R_\mathcal{T}^\ell(h_\mathcal{T}) + \frac{\kappa}{n} + (2\kappa+B) \sqrt{\frac{\ln (1/\delta)}{2n}},$$
where $h_T$ is the hypothesis learned by $\mathcal{A}$ from $\mathcal{T}$.
\end{theorem}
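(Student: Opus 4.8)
The plan is to apply McDiarmid's inequality to the random variable $\Phi(\mathcal{T}) = R^\ell(h_\mathcal{T}) - R^\ell_\mathcal{T}(h_\mathcal{T})$, viewed as a function of the $n$ i.i.d. training examples $z_1,\dots,z_n$. First I would verify that $\Phi$ has bounded differences: replacing one example $z_i$ by $z_i'$ changes $\Phi$ by at most some quantity $c$, and I expect $c$ to be of order $\kappa/n$ (more precisely something like $(2\kappa+B)/n$). The true-risk term $R^\ell(h_\mathcal{T})$ changes by at most $\sup_z |\ell(h_\mathcal{T},z) - \ell(h_{\mathcal{T}^i},z)| \le \kappa/n$ directly by \defref{def:stability}. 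For the empirical-risk term $R^\ell_\mathcal{T}(h_\mathcal{T}) = \frac{1}{n}\sum_j \ell(h_\mathcal{T},z_j)$, replacing $z_i$ affects it in two ways: the single summand at index $i$ changes (bounded by $B/n$ since $\ell \le B$), and the hypothesis itself changes, affecting all $n$ summands (each bounded by $\kappa/n$ via stability, summing to $\kappa/n$). Collecting these gives a bounded-difference constant $c \le (2\kappa + B)/n$.

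Next I would bound the expectation $\mathbb{E}_\mathcal{T}[\Phi(\mathcal{T})]$. By linearity and the i.i.d. assumption, $\mathbb{E}_\mathcal{T}[R^\ell_\mathcal{T}(h_\mathcal{T})] = \mathbb{E}_\mathcal{T}[\ell(h_\mathcal{T}, z_i)]$ for any fixed $i$, and by renaming variables this equals $\mathbb{E}_{\mathcal{T}^i, z_i}[\ell(h_{\mathcal{T}^i}, z_i)]$ where $z_i$ now plays the role of a fresh test point for $h_{\mathcal{T}^i}$ — which is exactly $\mathbb{E}[R^\ell(h_{\mathcal{T}^i})] = \mathbb{E}[R^\ell(h_\mathcal{T})]$. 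Hence $\mathbb{E}_\mathcal{T}[\Phi(\mathcal{T})] = \mathbb{E}_\mathcal{T}[R^\ell(h_\mathcal{T}) - \ell(h_\mathcal{T},z_i)]$; comparing $h_\mathcal{T}$ against $h_{\mathcal{T}^i}$ and using stability once more bounds this in absolute value by $\kappa/n$.

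Finally, applying McDiarmid's inequality to $\Phi$ with bounded-difference constant $c = (2\kappa+B)/n$ gives, with probability at least $1-\delta$,
$$\Phi(\mathcal{T}) \le \mathbb{E}[\Phi(\mathcal{T})] + (2\kappa + B)\sqrt{\frac{\ln(1/\delta)}{2n}} \le \frac{\kappa}{n} + (2\kappa+B)\sqrt{\frac{\ln(1/\delta)}{2n}},$$
which rearranges to the claimed bound on $R^\ell(h_\mathcal{T})$. The main obstacle is the careful bookkeeping in the bounded-differences step: one must separately account for the perturbation of the single summand at the replaced index and the global perturbation of the learned hypothesis across all summands, and likewise handle the expectation via a symmetry/renaming argument rather than a naive substitution. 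Everything else is a direct invocation of \defref{def:stability} and McDiarmid.
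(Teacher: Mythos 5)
Your proof is correct and follows essentially the same route as the paper: this is the standard Bousquet--Elisseeff argument (McDiarmid applied to $\Phi(\mathcal{T})=R^\ell(h_\mathcal{T})-R^\ell_\mathcal{T}(h_\mathcal{T})$, bounded differences $(2\kappa+B)/n$ obtained by separating the replaced summand from the global hypothesis perturbation, and $\mathbb{E}[\Phi]\leq\kappa/n$ via the renaming/symmetry trick), which is exactly the scheme the thesis itself adapts in \lref{lem:espD}, \lref{lem:diffD} and the proof of \thref{thm:bound}. The only detail worth making explicit is that the claimed $\ln(1/\delta)$ requires the one-sided form of McDiarmid's inequality, since the two-sided version stated in \thref{thm:McDiarmid} would give $\ln(2/\delta)$.
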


The main difference between uniform convergence and uniform stability is that the latter incorporates regularization (through $\kappa$ and $h_T$) and does not require any hypothesis space complexity argument. In particular, uniform stability can be used to derive generalization guarantees for hypothesis classes that are difficult to analyze with classic complexity arguments, such as $k$-nearest neighbors or support vector machines that have infinite VC dimension. It can also be adapted to non-i.i.d. settings \citep{Mohri2007,Mohri2010}. We will use uniform stability in the contributions presented in \cref{chap:ecml} and \cref{chap:icml}.

On the other hand, \citet{Xu2012} have shown that algorithms with sparsity-inducing regularization are not stable.\footnote{Sparsity is seen here as the ability to identify redundant features.} Algorithmic robustness, presented in the next section, is able to deal with such algorithms. We will make use of this framework in \cref{chap:nips}.

\subsection{Algorithmic Robustness}
\label{sec:robustness}

Algorithmic robustness \citep{Xu2010,Xu2012a} is the ability of an algorithm to perform ``similarly'' on a training example and on a test example that are ``close''.
It relies on a partitioning of the space $\mathcal{Z}$ to characterize closeness: two examples are close to each other if they lie in the same partition of the space. The partition itself is based on the notion of covering number \citep{Kolmogorov1961}.
\begin{definition}[Covering number]
For a metric space $(\mathcal{S},\rho)$ and $\mathcal{V}\subset \mathcal{S}$, we say that $\hat{\mathcal{V}}\subset \mathcal{V}$ is a $\gamma$-cover of $\mathcal{V}$ if $\forall t\in\mathcal{V}$, $\exists \hat{t}\in\hat{\mathcal{V}}$ such that $\rho(t,\hat{t})\leq \gamma$. The $\gamma$-covering number of $\mathcal{V}$ is
$$\mathcal{N}(\gamma,\mathcal{V},\rho) = \min\left\{|\hat{\mathcal{V}}| : \hat{\mathcal{V}}\text{ is a }\gamma\text{-cover of }\mathcal{V}\right\}.$$
\end{definition}
In particular, when $\mathcal{X}$ is compact, $\mathcal{N}(\gamma,\mathcal{X},\rho)$ is finite, leading to a finite cover.
Then, $\mathcal{Z}$ can be partitioned into $|\mathcal{Y}|\mathcal{N}(\gamma,\mathcal{X},\rho)$ subsets such that if two examples $z=(x,y)$ and $z'=(x',y')$ belong to the same subset, then $y=y'$ and $\rho(x,x')\leq\gamma$.

We can now formally define the notion of robustness.
\begin{definition}[Algorithmic robustness]
Algorithm $\mathcal{A}$ is $(K,\epsilon(\cdot))$-robust, for $K\in\mathbb{N}$ and $\epsilon(\cdot):\mathcal{Z}^n\to\mathbb{R}$, if $\mathcal{Z}$ can be partitioned into $K$ disjoint sets, denoted by $\{C_i\}_{i=1}^K$, such that the following holds for all $\mathcal{T}\in\mathcal{Z}^n$:
$$\forall z\in \mathcal{T}, \forall z'\in\mathcal{Z},\forall i\in[K] : \text{if } z,z'\in C_i, \text{then } |\ell(h_\mathcal{T},z)-\ell(h_\mathcal{T},z')|\leq \epsilon(\mathcal{T}),$$
where $h_\mathcal{T}$ is the hypothesis learned by $\mathcal{A}$ from $\mathcal{T}$.
\end{definition}

Briefly speaking, an algorithm is robust if for any example $z'$ falling in the same subset as a training example $z$, then the gap between the losses associated with $z$ and $z'$ is bounded (by a quantity that may depend on the training sample $\mathcal{T}$). The existence of the partition itself is guaranteed by the definition of covering number. Note that both uniform stability and algorithmic robustness properties involve a bound on deviations between losses. The key difference is that uniform stability studies the variation of the loss associated with any example $z$ under small changes in the training sample (implying that the learned hypothesis itself does not vary much), while algorithmic robustness considers the deviation between the losses associated with two examples that are close (implying that the learned hypothesis is locally consistent).

\citet{Xu2010,Xu2012a} have shown that a robust algorithm has generalization guarantees. This is formalized by the following theorem.
\begin{theorem}[Robustness bound]
Let $\ell$ be a loss function upper-bounded by some constant $B$, and $\delta > 0$. If an algorithm $\mathcal{A}$ is $(K,\epsilon(\cdot))$-robust, then with probability $1-\delta$, we have:
$$R^\ell(h_\mathcal{T}) \leq R_\mathcal{T}^\ell(h_\mathcal{T}) + \epsilon(\mathcal{T}) + B\sqrt{\frac{2K\ln 2+2\ln(1/\delta)}{n}},$$
where $h_T$ is the hypothesis learned by $\mathcal{A}$ from $\mathcal{T}$.
\end{theorem}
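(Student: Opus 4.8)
The plan is to follow the classic two-step recipe for concentration-based generalization bounds. First I would show that the expected gap between the true and empirical risk is controlled by $\epsilon(\mathcal{T})$, and then I would establish concentration of this gap around its mean using McDiarmid's inequality applied to a suitable function of the training sample. The partition $\{C_i\}_{i=1}^K$ guaranteed by the covering number (and the $(K,\epsilon(\cdot))$-robustness definition) is the central combinatorial object: it lets us compare the loss on a test point to the loss on a co-located training point, provided some training point lies in the same cell.

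For the first step, fix $h_\mathcal{T}$ and let $N_i$ denote the number of training points falling in cell $C_i$. Conditioning on which cells are nonempty, and writing the true risk as $\sum_i \Pr[z\in C_i]\,\mathbb{E}[\ell(h_\mathcal{T},z)\mid z\in C_i]$ and the empirical risk as $\frac1n\sum_i\sum_{z_j\in C_i}\ell(h_\mathcal{T},z_j)$, the robustness condition bounds $|\ell(h_\mathcal{T},z)-\ell(h_\mathcal{T},z_j)|\le \epsilon(\mathcal{T})$ whenever $z$ and $z_j$ share a cell. The contribution of cells that contain no training point is at most $B$ times the total probability mass of those cells; a standard Breteganolle--Huber--Carol / multinomial deviation argument shows that this empty-cell mass is, in expectation (and with high probability), of order $B\sqrt{K/n}$. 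Putting these together yields $R^\ell(h_\mathcal{T})\le R^\ell_\mathcal{T}(h_\mathcal{T})+\epsilon(\mathcal{T})+B\sqrt{(2K\ln 2+2\ln(1/\delta))/n}$ on the relevant event; more precisely one gets the bound in expectation and uses concentration to upgrade it.

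For the second step, I would define $g(\mathcal{T}) = R^\ell(h_\mathcal{T}) - R^\ell_\mathcal{T}(h_\mathcal{T})$ — or rather the empty-cell mass quantity — and check that replacing one training example changes it by at most $O(B/n)$, so McDiarmid gives a deviation term $B\sqrt{\ln(1/\delta)/(2n)}$, which is absorbed into the stated $B\sqrt{(2K\ln2+2\ln(1/\delta))/n}$ term. Alternatively, and more cleanly, one applies concentration directly to the random variable $\sum_i |N_i/n - \Pr[z\in C_i]|$, whose bounded-differences constant is $2/n$; this is exactly the step that produces the $\sqrt{K}$ factor, since the $L_1$ deviation of an empirical measure on $K$ cells concentrates at rate $\sqrt{K/n}$.

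The main obstacle is the careful bookkeeping in the first step: one must correctly separate the cells into those hit by the training sample (where robustness applies directly) and those missed (where only the crude bound $B$ is available), and then argue that the total mass of missed cells is small. The subtlety is that $h_\mathcal{T}$ depends on $\mathcal{T}$, so the partition cells are fixed but the hypothesis is random; fortunately the robustness property holds for \emph{all} $\mathcal{T}$ uniformly, so no union bound over hypotheses is needed — this is precisely what makes the argument go through despite the infinite-capacity hypothesis classes the framework is designed for. The remaining work is routine: plug the expected-value bound and the McDiarmid deviation together and simplify the constants.
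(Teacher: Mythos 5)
Your overall strategy --- partition $\mathcal{Z}$ into the $K$ cells, compare each test point to a co-located training point via the robustness property, and control the cell-frequency fluctuations with a multinomial concentration bound --- is the same route the paper takes (its proof of the pair-based analogue in the robustness chapter uses exactly this triangle-inequality decomposition together with the Breteganolle--Huber--Carol inequality). But your primary bookkeeping has a concrete gap: you charge the $B\sqrt{(2K\ln 2+2\ln(1/\delta))/n}$ term only to the probability mass of \emph{empty} cells. That is not enough. On a nonempty cell $C_i$ the true risk weights the conditional expectation by $\mu(C_i)$ while the empirical risk weights the within-cell average loss by $N_i/n$; robustness only matches losses \emph{inside} a cell, so the weight mismatch $|\mu(C_i)-N_i/n|$ still costs up to $B$ per unit of mass on \emph{every} cell, not just the missed ones. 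The correct deterministic step is
$$R^\ell(h_\mathcal{T})-R^\ell_\mathcal{T}(h_\mathcal{T}) \;\le\; \epsilon(\mathcal{T}) \;+\; B\sum_{i=1}^K\left|\frac{N_i}{n}-\mu(C_i)\right|,$$
and the theorem follows by applying Breteganolle--Huber--Carol once to $\sum_i|N_i/n-\mu(C_i)|$, which yields the stated constants directly; your ``cleaner alternative'' is in fact the necessary argument rather than an optional variant, and no expectation-then-upgrade step is needed.

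The second issue is your first-listed concentration target: applying McDiarmid to $g(\mathcal{T})=R^\ell(h_\mathcal{T})-R^\ell_\mathcal{T}(h_\mathcal{T})$ with a claimed $O(B/n)$ bounded-difference constant would fail. Replacing one training example changes $h_\mathcal{T}$ itself, and robustness (unlike uniform stability) gives no control on $|\ell(h_\mathcal{T},z)-\ell(h_{\mathcal{T}'},z)|$ under that replacement; $\epsilon(\mathcal{T})$ may change as well. This is precisely why the concentration must be applied to the hypothesis-independent multinomial quantity $\sum_i|N_i/n-\mu(C_i)|$, after which the robustness property --- which, as you correctly note, holds uniformly over all training samples, so no union bound over hypotheses is required --- enters only through the deterministic decomposition above.
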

Note that there is a tradeoff between the size $K$ of the partition and $\epsilon(\mathcal{T})$: the latter can essentially be made as small as possible by using a finer-grained cover.

PAC bounds based on robustness are generally not tight since they rely on unspecified (potentially large) covering numbers. On the other hand, a great advantage of robustness is that it can deal with a larger class of regularizers than stability (in particular, sparsity-inducing norms can be considered), and its geometric interpretation makes adaptations to non-standard settings (such as non-i.i.d. data) possible. Our contribution in \cref{chap:nips} adapts robustness to the case of metric learning, when training data consist of non-i.i.d. pairs of examples. Finally, note that \citet{Xu2010,Xu2012a} established that a weak notion of robustness is necessary and sufficient for an algorithm to generalize asymptotically, making robustness a key property for the generalization of learning algorithms.

After having presented the supervised learning setting and analytical frameworks for deriving generalization guarantees, we now turn to the topic of metrics, which has a great place in this thesis.

\section{Metrics}
\label{sec:metrics}

The notion of metric (used here as a generic term for distance, similarity or dissimilarity function) plays an important role in many machine learning problems such as classification, regression, clustering, or ranking. Successful examples include:
\begin{itemize}
\item $k$-Nearest Neighbors ($k$-NN) classification \citep{Cover1967}, where the predicted class of an instance $x$ corresponds to the majority class among the $k$-nearest neighbors of $x$ in the training sample, according to some distance or similarity.
\item Kernel methods \citep{Scholkopf2001}, where a specific type of similarity function called kernel (see \defref{def:kernel}) is used to implicitly project data into a new high-dimensional feature space. The most prominent example is Support Vector Machines (SVM) classification \citep{Cortes1995}, where a large-margin linear classifier is learned in that space.
\item $K$-Means \citep{Lloyd1982}, a clustering algorithm which aims at finding the $K$ clusters that minimize the within-cluster distance on the training sample according to some metric.
\item Information retrieval, where a similarity function is often used to retrieve documents (webpages, images, etc.) that are similar to a query or to another document \citep{Salton1975,Baeza-Yates1999,Sivic2009}.
\item Data visualization, where visualization of interesting patterns in high-dimensional data is sometimes achieved by means of a metric \citep{Venna2010,Bertini2011}.
\end{itemize}
It should be noted that metrics are especially important when dealing with structured data (such as strings, trees, or graphs) because they are often a convenient proxy to manipulate these complex objects: if a metric is available, then any metric-based algorithm (such as those presented in the above list) can be used.

In this section, we first give the definitions of distance, similarity and kernel functions (\ref{sec:metricdef}), and then give some examples (by no means an exhaustive list) of such metrics between feature vectors (\ref{sec:metricvect}) and between structured data (\ref{sec:metricstruct}).

\subsection{Definitions}
\label{sec:metricdef}

We start by introducing the definition of a distance function.
\begin{definition}[Distance function]
A distance over a set $\mathcal{X}$ is a pairwise function $d:\mathcal{X}\times\mathcal{X}\to \mathbb{R}$ which satisfies the following properties $\forall x,x',x''\in\mathcal{X}$:
\begin{enumerate}
\item $d(x,x') \geq 0$ (nonnegativity),
\item $d(x,x') = 0$ if and only if $x=x'$ (identity of indiscernibles),
\item $d(x,x') = d(x',x)$ (symmetry),
\item $d(x,x'') \leq d(x,x')+d(x',x'')$ (triangle inequality).
\end{enumerate} 
\end{definition}
A \emph{pseudo-distance} satisfies the properties of a metric, except that instead of property 2, only $d(x,x)=0$ is required. Note that the property of triangle inequality can be used to speedup learning algorithms such as $k$-NN \citep[e.g.,][]{Mico1994,Lai2007,Wang2011a} or $K$-Means \citep{Elkan2003}.

While a distance function is a well-defined mathematical concept, there is no general agreement on the definition of a (dis)similarity function, which can essentially be any pairwise function. Throughout this thesis, we will use the following definition.

\begin{definition}[Similarity function]
A (dis)similarity function is a pairwise function $K:\mathcal{X}\times\mathcal{X}\to [-1,1]$. We say that $K$ is a symmetric similarity function if $\forall x,x'\in\mathcal{X}$, $K(x,x')=K(x',x)$.
\end{definition}
A similarity function should return a high score for similar inputs and a low score for dissimilar ones (the other way around for a dissimilarity function). Note that (normalized) distance functions are dissimilarity functions.

Finally, a kernel is a special type of similarity function, as formalized by the following definition.

\begin{definition}[Kernel function]
\label{def:kernel}
A symmetric similarity function $K$ is a kernel if there exists a (possibly implicit) mapping function $\phi:\mathcal{X}\to\mathbb{H}$ from the instance space $\mathcal{X}$ to a Hilbert space $\mathbb{H}$ such that $K$ can be written as an inner product in $\mathbb{H}$:
$$K(x,x') = \innerp{\phi(x),\phi(x')}.$$
Equivalently, $K$ is a kernel if it is positive semi-definite (PSD), i.e.,
$$\displaystyle\sum_{i=1}^n\sum_{j=1}^nc_ic_jK(x_i,x_j) \geq 0$$
for all finite sequences of $x_1,\dots,x_n\in\mathcal{X}$ and $c_1,\dots,c_n\in\mathbb{R}$.
\end{definition}

Kernel functions are a key component of kernel methods such as SVM, because they can implicitly allow cheap inner product computations in very high-dimensional spaces (this is known as the ``kernel trick'') and bring an elegant theory based on Reproducing Kernel Hilbert Spaces (RKHS). Note that these advantages disappear when using an arbitrary non-PSD similarity function instead of a kernel, and the convergence of the kernel-based algorithm may not even be guaranteed in this case.\footnote{Some research has gone into training SVM with indefinite kernels, mostly based on building a PSD kernel from the indefinite one while learning the SVM classifier. The interested reader may refer to the work of \citet{Ong2004,Luss2007,Chen2008,Chen2009a} and references therein.}

\subsection{Some Metrics between Feature Vectors}
\label{sec:metricvect}

\paragraph{Minkowski distances}
Minkowski distances are a family of distances induced by $L_p$ norms. For $p\geq 1$,
\begin{equation}
\label{eq:mink}
d_p(\mathbf{x},\mathbf{x'}) = \|\mathbf{x}-\mathbf{x'}\|_p = \left(\displaystyle\sum_{i=1}^d |x_i-x'_i|^p\right)^{1/p}.
\end{equation}
From \eqref{eq:mink} we can recover three widely used distances:
\begin{itemize}
\item When $p=1$, we get the Manhattan distance:
$$d_{man}(\mathbf{x},\mathbf{x'}) = \|\mathbf{x}-\mathbf{x'}\|_1 = \displaystyle\sum_{i=1}^d |x_i-x'_i|.$$
\item When $p=2$, we get the ``ordinary'' Euclidean distance:
$$d_{euc}(\mathbf{x},\mathbf{x'}) = \|\mathbf{x}-\mathbf{x'}\|_2 = \left(\displaystyle\sum_{i=1}^d |x_i-x'_i|^2\right)^{1/2} = \sqrt{(\mathbf{x}-\mathbf{x'})^T(\mathbf{x}-\mathbf{x'})}.$$
\item When $p\to\infty$, we get the Chebyshev distance:
$$d_{che}(\mathbf{x},\mathbf{x'}) = \|\mathbf{x}-\mathbf{x'}\|_\infty = \displaystyle\max_i |x_i-x'_i|.$$
\end{itemize}
Note that when $0<p<1$, $d_p$ is not a proper distance (it violates the triangle inequality) and the corresponding (pseudo) norm is nonconvex. \fref{fig:mink} shows the corresponding unit circles for several values of $p$.

\begin{figure}[t]
\begin{center}
\psfrag{p=0}[][][0.8]{$p\to0$}
\psfrag{p=0.3}[][][0.8]{$p=0.3$}
\psfrag{p=0.5}[][][0.8]{$p=0.5$}
\psfrag{p=1}[][][0.8]{$p=1$}
\psfrag{p=1.5}[][][0.8]{$p=1.5$}
\psfrag{p=2}[][][0.8]{$p=2$}
\psfrag{p=infty}[][][0.8]{$p\to\infty$}
\includegraphics[width=0.98\textwidth]{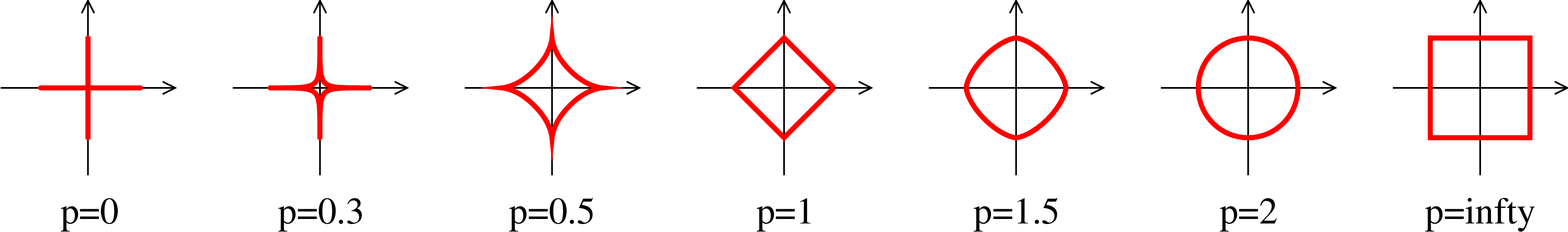}
\caption[Minkowski distances: unit circles for various values of $p$]{Minkowski distances: unit circles for various values of $p$.}
\label{fig:mink}
\end{center}
\end{figure}

\paragraph{Mahalanobis distances} The Mahalanobis distance, which incorporates knowledge about the correlation between features, is defined by
$$d_{\boldsymbol{\Sigma}^{-1}}(\mathbf{x},\mathbf{x'}) = \sqrt{(\mathbf{x}-\mathbf{x'})^T\boldsymbol{\Sigma}^{-1}(\mathbf{x}-\mathbf{x'})},$$
where $\mathbf{x}$ and $\mathbf{x'}$ are random vectors from the same distribution with covariance matrix $\boldsymbol{\Sigma}$. The term Mahalanobis distance is also used to refer to the following generalization of the original definition, sometimes referred to as generalized quadratic distances \citep{Nielsen2009}:
$$d_{\mathbf{M}}(\mathbf{x},\mathbf{x'}) = \sqrt{(\mathbf{x}-\mathbf{x'})^T\mathbf{M}(\mathbf{x}-\mathbf{x'})},$$
where $\mathbf{M}\in \mathbb{S}^{d}_+$. $\mathbb{S}^{d}_+$ denotes the cone of symmetric PSD $d\times d$ real-valued matrices. $\mathbf{M}\in \mathbb{S}^{d}_+$ ensures that $d_{\mathbf{M}}$ is a pseudo-distance. When $\mathbf{M}$ is the identity matrix, we recover the Euclidean distance. Otherwise, using Cholesky decomposition, one can rewrite $\mathbf{M}$ as $\mathbf{L}^T\mathbf{L}$, where $\mathbf{L}\in \mathbb{R}^{k\times d}$, where $k$ is the rank of $\mathbf{M}$. Hence:
\begin{eqnarray*}
d_\mathbf{M}(\mathbf{x},\mathbf{x'}) & = & \sqrt{(\mathbf{x}-\mathbf{x'})^T\mathbf{M}(\mathbf{x}-\mathbf{x'})}\\
& = & \sqrt{(\mathbf{x}-\mathbf{x'})^T\mathbf{L}^T\mathbf{L}(\mathbf{x}-\mathbf{x'})}\\
& = & \sqrt{(\mathbf{L}\mathbf{x}-\mathbf{L}\mathbf{x'})^T(\mathbf{L}\mathbf{x}-\mathbf{L}\mathbf{x'})}.
\end{eqnarray*}
Thus, a Mahalanobis distance implicitly corresponds to computing the Euclidean distance after the linear projection of the data defined by $\mathbf{L}$. Note that if $\mathbf{M}$ is low-rank, i.e., $\rank(\mathbf{M}) = r < d$, then it induces a linear projection of the data into a space of lower dimension $r$. It thus allows a more compact representation of the data and cheaper distance computations, especially when the original feature space is high-dimensional.

Because of these nice properties, learning Mahalanobis distance has attracted a lot of interest and is a major component of metric learning (see \sref{sec:mahalearning}).

\paragraph{Cosine similarity}
The cosine similarity measures the cosine of the angle between two instances, and can be computed as
$$K_{cos}(\mathbf{x},\mathbf{x'}) = \frac{\mathbf{x}^T\mathbf{x'}}{\|\mathbf{x}\|_2\|\mathbf{x'}\|_2}.$$
The cosine similarity is widely used in data mining, in particular in text retrieval \citep{Baeza-Yates1999} and more recently in image retrieval \citep[see for instance][]{Sivic2009} when data are represented as term vectors \citep{Salton1975}.

\paragraph{Bilinear similarity}
The bilinear similarity is related to the cosine similarity but does not include normalization by the norms of the inputs and is parameterized by a matrix $\mathbf{M}$:
$$K_{\mathbf{M}}(\mathbf{x},\mathbf{x'}) = \mathbf{x}^T\mathbf{M}\mathbf{x'},$$
where $\mathbf{M} \in \mathbb{R}^{d\times d}$ is not required to be PSD nor symmetric. The bilinear similarity has been used for instance in image retrieval \citep{Deng2011}.
When $\mathbf{M}$ is the identity matrix, $K_\mathbf{M}$ amounts to an unnormalized cosine similarity.
The bilinear similarity has two advantages. First, it is efficiently computable for sparse inputs: if $\mathbf{x}$ and $\mathbf{x'}$ have $k_1$ and $k_2$ nonzero features, $K_\mathbf{M}(\mathbf{x},\mathbf{x'})$ can be computed in $O(k_1k_2)$ time. Second, unlike Minkowski distance, Mahalanobis distances and the cosine similarity, it can be easily used as a similarity measure between instances of different dimension (for example, a document and a query) by choosing a nonsquare matrix $\mathbf{M}$. A major contribution of this thesis is to propose a novel method for learning a bilinear similarity (\cref{chap:icml}).

\paragraph{Linear kernel} The linear kernel is simply the inner product in the original space $\mathcal{X}$:
$$K_{lin}(\mathbf{x},\mathbf{x'}) = \innerp{\mathbf{x},\mathbf{x'}} = \mathbf{x}^T\mathbf{x'}.$$
In other words, the corresponding $\phi$ is an identity map: $\forall \mathbf{x}\in\mathcal{X}, \phi(\mathbf{x})=\mathbf{x}$. Note that $K_{lin}$ corresponds to the bilinear similarity with $\mathbf{M}=\mathbf{I}$.

\paragraph{Polynomial kernels} Polynomial kernels are defined as:
$$K_{deg}(\mathbf{x},\mathbf{x'}) = (\innerp{\mathbf{x},\mathbf{x'}}+1)^{deg},$$
where $deg\in\mathbb{N}$. It can be shown that $K_{deg}$ implicitly projects an instance into the nonlinear space $\mathbb{H}$ of all monomials of degree up to $deg$.

\paragraph{Gaussian kernel} The Gaussian kernel, also known as the RBF kernel, is a widely used kernel defined by
$$K_{gaus}(\mathbf{x},\mathbf{x'}) = \exp\left(-\frac{\|\mathbf{x}-\mathbf{x'}\|_2^2}{2\sigma^2}\right),$$
where $\sigma^2>0$ is a width parameter. For this kernel, it can be shown that the corresponding implicit nonlinear projection space $\mathbb{H}$ is infinite-dimensional.


\subsection{Some Metrics between Structured Data}
\label{sec:metricstruct}

\paragraph{Hamming distance}
The Hamming distance is a distance between strings of identical length and is equal to the number of positions at which the symbols differ. It has been used mostly for binary strings and is defined by
$$d_{ham}(\mathsf{x},\mathsf{x'}) = |\{i : \mathsf{x_i}\neq \mathsf{x'_i}\}|.$$

\paragraph{String edit distance}

The string edit distance \citep{Levenshtein1966} is a distance between strings of possibly different length built from an alphabet $\Sigma$. It is based on three elementary edit operations: insertion, deletion and substitution of a symbol. In the more general version, each operation has a specific cost, gathered in a nonnegative $(|\Sigma|+1)\times (|\Sigma|+1)$ matrix $\mathbf{C}$ (the additional row and column account for insertion and deletion costs respectively). A sequence of operations transforming a string $\mathsf{x}$ into a string $\mathsf{x'}$ is called an edit script. The edit distance between $\mathsf{x}$ and $\mathsf{x'}$ is defined as the cost of the cheapest edit script that turns $\mathsf{x}$ into $\mathsf{x'}$ and can be computed in $O(|\mathsf{x}|\cdot|\mathsf{x'}|)$ time by dynamic programming.\footnote{Note that in the case of strings of equal length, the edit distance is upper bounded by the Hamming distance.}

The classic edit distance, known as the Levenshtein distance, uses a unit cost matrix and thus corresponds to the minimum number of operations turning one string into another. For instance, the Levenshtein distance between \texttt{abb} and \texttt{aa} is equal to 2, since turning \texttt{abb} into \texttt{aa} requires at least 2 operations (e.g., substitution of \texttt{b} with \texttt{a} and deletion of \texttt{b}). On the other hand, using the cost matrix given in \tref{tab:costmatrix}, the edit distance between \texttt{abb} and \texttt{aa} is equal to 10 (deletion of \texttt{a} and two substitutions of \texttt{b} with \texttt{a} is the cheapest edit script).

\begin{table}[t]
\begin{center}
\begin{small}
\begin{tabular}{|c|ccc|}
\hline
$\mathbf{C}$ & $\$$ & \texttt{a} & \texttt{b}\\
\hline
$\$$ & 0 & 2 & 10\\
\texttt{a} & 2 & 0 & 4\\
\texttt{b} & 10 & 4 & 0\\
\hline
\end{tabular}
\end{small}
\caption[Example of an edit cost matrix]{Example of edit cost matrix $\mathbf{C}$. Here, $\Sigma=\{\mathtt{a},\mathtt{b}\}$.}
\label{tab:costmatrix}
\end{center}
\end{table}

Using task-specific costs is a key ingredient to the success of the edit distance in many applications. For some problems such as handwritten character recognition \citep{Mico1998} or protein alignment \citep{Dayhoff1978,Henikoff1992}, relevant cost matrices may be available. But a more general solution consists in automatically learning the cost matrix from data, as we shall see in \sref{sec:stringeditlearn}. One of the contributions of this thesis is to propose a new edit cost learning method (\cref{chap:ecml}).

\paragraph{Sequence alignment} Sequence alignment is a way of computing the similarity between two strings, mostly used in bioinformatics to identify regions of similarity in DNA or protein sequences \citep{Mount2004}. It corresponds to the score of the best alignment. The score of an alignment is based on the same elementary operations as the edit distance and on a score matrix for substitutions, but uses a (linear or affine) gap penalty function instead of insertion and deletion costs. The most prominent sequence alignment measures are the Needleman-Wunsch score \citep{Needleman1970} for global alignments and the Smith-Waterman score \citep{Smith1981} for local alignments. They can be computed by dynamic programming.

\paragraph{Tree edit distance}

Because of the growing interest in applications that naturally involve tree-structured data (such as the secondary structure of RNA in biology, XML documents on the web or parse trees in natural language processing), several works have extended the string edit distance to trees, resorting to the same elementary edit operations \citep[see][for a survey on the matter]{Bille2005}.
There exist two main variants of the tree edit distance that differ in the way the deletion of a node is handled. In \citet{Zhang1989}, when a node is deleted all its children are connected to its father. The best algorithms for computing this distance have an $O(n^3)$ worst-case complexity, where $n$ is the number of nodes of the largest tree \citep[see][for an empirical evaluation of several algorithms]{Pawlik2011}.
Another variant is due to \citet{Selkow1977}, where insertions and deletions are restricted to the leaves of the tree. Such a distance is relevant to specific applications. For instance, deleting a {\tt <UL>} tag (i.e., a nonleaf node) of an unordered list in an HTML document would require the iterative deletion of the {\tt <LI>} items (i.e., the subtree) first, which is a sensible thing to do in this context (see \fref{fig:tree}). This version can be computed in quadratic time.
Note that tree edit distance computations can be made significantly faster (especially for large trees) by exploiting lower bounds on the distance between two trees that are cheap to obtain \citep[see for instance][]{Yang2005}. A study on the expressiveness of similarities and distances on trees was proposed by \citet{Emms2012a}.

Like in the string case, there exists a few methods for learning the cost matrix of the tree edit distance (see \sref{sec:treeeditlearn}). Note that our edit similarity learning method, presented in \cref{chap:ecml}, can be used for both strings and trees.

\begin{figure}[t]
\begin{center}
\subfigure[]{\label{fig:tree_a}
\includegraphics[scale=0.8]{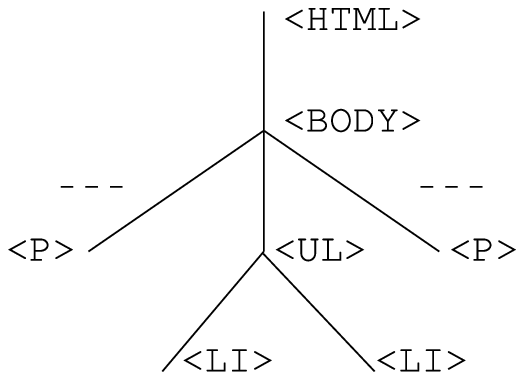}
}
\subfigure[]{\label{fig:tree_b}
\includegraphics[scale=0.8]{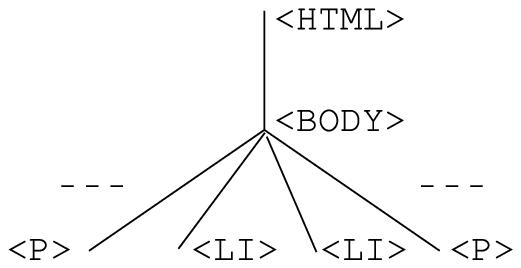}
}
\subfigure[]{\label{fig:tree_c}
\includegraphics[scale=0.8]{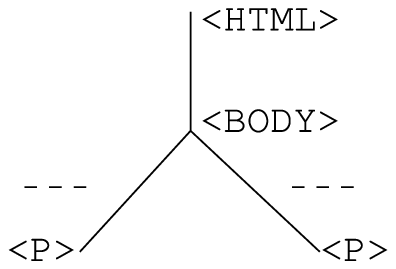}
}\\
\caption[Strategies to delete a node within a tree]{Strategies to delete a node within a tree: (a) original tree, (b) after deletion of the {\tt <UL>} node as defined by Zhang \& Shasha, and (c) after deletion of the {\tt <UL>} node as defined by Selkow.}
\label{fig:tree}
\end{center}
\end{figure}

\paragraph{Graph edit distance} Note that there also exist extensions of the edit distance to general graphs \citep{Gao2010}, but like many problems on graphs, computing a graph edit distance is NP-hard, making it impractical for real-world tasks.

\paragraph{Spectrum, subsequence and mismatch kernels} These string kernels represent strings by fixed-length feature vectors and rely on explicit mapping functions $\phi$. The spectrum kernel \citep{Leslie2002} maps each string to a vector of frequencies of all contiguous subsequences of length $p$ and computes the inner product between these vectors. The subsequence kernel \citep{Lodhi2002} and the mismatch kernel \citep{Leslie2002a} extend the spectrum kernel to inexact subsequence matching: the former considers all (possibly noncontiguous) subsequences of length $p$ while the latter allows a number of mismatches in the subsequences.


\paragraph{String edit kernels} String edit kernels are derived from the string edit distance (or related measures). The classic edit kernel \citep{Li2004} has the following form:
$$K_{L\&J}(\mathsf{x},\mathsf{x'}) = e^{-t\cdot d_{lev}(\mathsf{x},\mathsf{x'})},$$
where $d_{lev}$ is the Levenshtein distance and $t>0$ is a parameter. However, \citet{Cortes2004} have shown that this function is not PSD (and thus is not a valid kernel) in the general case for nontrivial alphabets. Thus, one has to tune $t$, hoping to make $K$ PSD. Moreover, it suffers from the so-called ``diagonal dominance'' problem (i.e., the kernel value decreases exponentially fast with the distance), and SVM is known not to perform well in this case \citep{Scholkopf2002}. A different string edit kernel was proposed by \citet{Neuhaus2006} and is defined as follows:
$$K_{N\&B}(\mathsf{x},\mathsf{x'})=\frac{1}{2} (d_{lev}(\mathsf{x},\mathsf{x_0})^2+d_{lev}(\mathsf{x_0},\mathsf{x'})^2-d_{lev}(\mathsf{x},\mathsf{x'})^2),$$
where $\mathsf{x_0}$ is called the ``zero string'' and must be picked by hand. They also propose combinations of such kernels with different zero strings. However, the validity of such kernels is not guaranteed either. \citet{Saigo2004} build a kernel from the sum of scores over all possible Smith-Waterman local alignments between two strings instead of the alignment of highest score only. They show that if the score matrix is PSD, then the kernel is valid in general. However, like $K_{L\&J}$, it suffers from the diagonal dominance problem. In practice, the authors take the logarithm of the kernel and add a sufficiently large diagonal term to ensure the validity of the kernel.

\paragraph{Convolution kernels} The framework of convolution kernels \citep{Haussler1999} can be used to derive many kernels for structured data. Roughly speaking, if structured instances can be seen as a collection of subparts, then Haussler's convolution kernel between two instances is defined as the sum of the return values of a predefined kernel over all possible pairs of subparts, and is guaranteed to be PSD. Mapping kernels \citep{Shin2008} are a generalization of convolution kernels as they allow the sum to be computed only over a predefined subset of the subpart pairs. These frameworks have been used to design several kernels between structured data \citep{Collins2001,Shin2008,Shin2011}. However, building such kernels is often not straightforward since they suppose the existence of a kernel between subparts of the structured instances.

\paragraph{Marginalized kernels} When one has access to a probabilistic model encoding for instance the probability that a string (or a tree) is turned into another one, marginalized kernels \citep{Tsuda2002,Kashima2003}, of which the Fisher kernel \citep{Jaakkola1998} is a special case, are a way of building a kernel from the output of such models. Since our string kernel proposed in \cref{chap:pr} belongs to this family, we postpone the details of the framework to \sref{sec:oureditkernel}.

\section{Conclusion}

In this chapter, we introduced the setting of supervised learning, presented analytical frameworks that allow the derivation of generalization bounds for learning algorithms, and reviewed different forms of metrics.

The contributions of this thesis can be cast as supervised metric learning methods, i.e., learning the parameters of a metric from labeled data. Because the performance of many learning algorithms using metrics critically depends on the relevance of the metric to the problem at hand, supervised metric learning has attracted a lot of interest in recent years. \cref{chap:metriclearning} is a large review of the literature on the subject.

\chapter{A Review of Supervised Metric Learning}
\label{chap:metriclearning}

\begin{chapabstract}
In this chapter, we review the literature on supervised metric learning. We start by introducing the main concepts of this research topic. Then, we cover metric learning from feature vectors (in particular, Mahalanobis distance learning) as well as metric learning from structured data such as strings and trees, with an emphasis on the pros and cons of each method. Finally, we conclude by discussing the general limitations of the current literature that motivate our work.
\end{chapabstract}

\section{Introduction}

As discussed in \sref{sec:metrics}, using an appropriate metric is key to the performance of many learning algorithms. Since manually tuning metrics (when they allow some parameterization) for a given real-world problem is often difficult and tedious, a lot of work has gone into automatically learning them from labeled data, leading to the emergence of metric learning. This chapter is devoted to a large survey of supervised metric learning techniques.

Generally speaking, supervised metric learning approaches rely on the reasonable intuition that a good similarity function should assign a large (resp. small) score to pairs of points of the same class (resp. different class), and conversely for a distance function. Following this idea, they aim at finding the parameters (usually a matrix) of the metric such that it best satisfies local constraints built from the training sample $\mathcal{T}$. They are typically pair or triplet-based constraints of the following form:
\begin{eqnarray*}
\mathcal{S} & = & \{(z_i,z_j)\in \mathcal{T}\times \mathcal{T} : x_i\text{ and }x_j\text{ should be similar}\},\\
\mathcal{D} & = & \{(z_i,z_j)\in \mathcal{T}\times \mathcal{T} : x_i\text{ and }x_j\text{ should be dissimilar}\},\\
\mathcal{R} & = & \{(z_i,z_j,z_k)\in \mathcal{T}\times \mathcal{T}\times \mathcal{T} : x_i\text{ should be more similar to }x_j\text{ than to }x_k\},
\end{eqnarray*}
where $\mathcal{S}$ and $\mathcal{D}$ are often referred to as the positive and negative training pairs respectively, and $\mathcal{R}$ as the training triplets.
These constraints are usually derived from the labels of the training instances. One may consider for instance all possible pairs/triplets or use only a subset of these, for instance based on random selection or a notion of neighborhood.

Metric learning often has a geometric interpretation: it can be seen as finding a new feature space for the data where the local constraints are better satisfied (see \fref{fig:ml} for an example). Learned metrics are typically used to improve the performance of learning algorithms based on local neighborhoods such as $k$-NN. 

\begin{figure}[t]
\begin{center}
\includegraphics[width=0.9\textwidth]{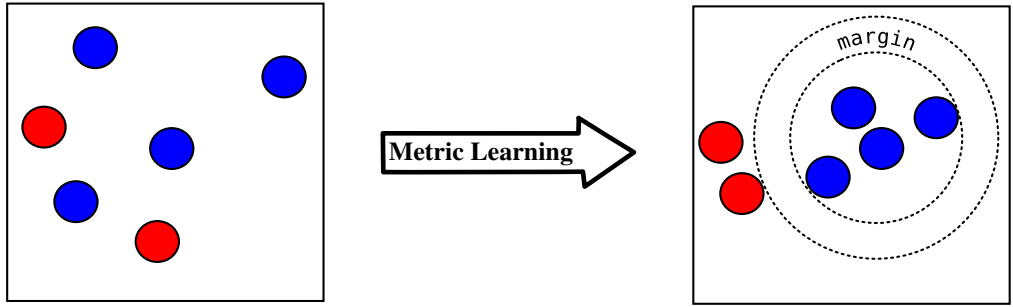}
\caption[Intuition behind metric learning]{Intuition behind metric learning. Before learning (left pane), red and blue points are not well-separated. After learning (right pane), red and blues points are separated by a certain margin.}
\label{fig:ml}
\end{center}
\end{figure}

The rest of this chapter is organized as follows. \sref{sec:mlvect} reviews metric learning approaches where data consist of feature vectors while \sref{sec:mlstruct} deals with metric learning from structured data. We conclude with a summary of the main features of the studied approaches and a discussion on some of their limitations in \sref{sec:mlchapconclu}.

\section{Metric Learning from Feature Vectors}
\label{sec:mlvect}

In this section, we focus on metric learning methods for data lying in some feature space $\mathcal{X}\subseteq\mathbb{R}^d$. In \sref{sec:mahalearning}, we review Mahalanobis distance learning, which has attracted most of the interest, as well as similarity learning in \sref{sec:simlearning} and nonlinear metric learning in \sref{sec:nonlinearml}. Finally, we list a few approaches designed for other settings in \sref{sec:otherml}.

\subsection{Mahalanobis Distance Learning}
\label{sec:mahalearning}

A great deal of work has focused on learning a (squared) Mahalanobis distance $d_\mathbf{M}^2$ parameterized by $\mathbf{M}\in \mathbb{S}^{d}_+$.
Maintaining $\mathbf{M}\in \mathbb{S}^{d}_+$ in an efficient way during the optimization process is a key challenge in Mahalanobis distance learning. Indeed, general Semi-Definite Programming (SDP) techniques \citep{Vandenberghe1996}, i.e., optimization over the PSD cone, consists in repeatedly performing a gradient step on the objective function followed by a projection step onto the PSD cone (which is done by setting the negative eigenvalues to zero). This is slow in practice because it requires eigenvalue decomposition, which scales in $O(d^3)$. Another interesting challenge is to learn a low-rank matrix (which implies a low-dimensional projection space, as noted earlier) instead of a full-rank one, since optimizing $\mathbf{M}$ subject to a rank constraint or regularization is NP-hard and thus cannot be carried out efficiently.

In this section, we review the main supervised Mahalanobis distance learning methods of the literature. We first present two early approaches that deal with the PSD constraint in a rudimentary way (\sref{sec:earlyml}). We then discuss approaches that are specific to $k$-nearest neighbors (\sref{sec:knnml}), inspired from information theory (\sref{sec:itml}), online learning methods (\sref{sec:onlineml}), approaches with generalization guarantees (\sref{sec:genml}) and a few more that do not fit any of the previous categories (\sref{sec:othermaha}).

\subsubsection{Early Approaches}
\label{sec:earlyml}

\paragraph{MMC (Xing et al.)}

The pioneering work of \citet{Xing2002} is the first Mahalanobis distance learning method. It relies on a convex SDP formulation with no regularization, which aims at maximizing the sum of distances between dissimilar points while keeping the sum of distances between similar examples small:
\begin{equation}
\label{eq:xing}
\begin{aligned}
\max_{\mathbf{M}\in \mathbb{S}^{d}_+} &&& \displaystyle\sum_{(z_i,z_j)\in \mathcal{D}}d_\mathbf{M}(\mathbf{x_i},\mathbf{x_j})\\
\text{s.t.} &&& \displaystyle\sum_{(z_i,z_j)\in \mathcal{S}}d_\mathbf{M}^2(\mathbf{x_i},\mathbf{x_j}) \leq 1.
\end{aligned}
\end{equation}
The algorithm for solving \eqref{eq:xing} is a basic SDP approach based on eigenvalue decomposition. This makes it intractable for medium and high-dimensional problems.

\paragraph{Schultz \& Joachims}

The method proposed by \citet{Schultz2003} relies on the assumption that $\mathbf{M} = \mathbf{A}^T\mathbf{W}\mathbf{A}$, where $\mathbf{A}$ is fixed and known and $\mathbf{W}$ is diagonal. We get:
$$d_\mathbf{M}^2(\mathbf{x_i},\mathbf{x_j}) = (\mathbf{A}\mathbf{x_i}-\mathbf{A}\mathbf{x_j})^T\mathbf{W}(\mathbf{A}\mathbf{x_i}-\mathbf{A}\mathbf{x_j}).$$
By definition, $\mathbf{M}$ is PSD and thus one can optimize over the diagonal matrix $\mathbf{W}$ and avoid the need for SDP. They propose a formulation based on triplet constraints:
\begin{equation}
\label{eq:schultz}
\begin{aligned}
 \min_{\mathbf{W}} &&& \|\mathbf{M}\|_{\mathcal{F}}^2\\
 \text{s.t.} &&& d_\mathbf{M}^2(\mathbf{x_i},\mathbf{x_k}) - d_\mathbf{M}^2(\mathbf{x_i},\mathbf{x_j}) \geq 1 && \forall (z_i,z_j,z_k)\in \mathcal{R},
\end{aligned}
\end{equation}
where $\|\cdot\|_{\mathcal{F}}^2$ is the squared Frobenius norm. Slack variables are introduced to allow soft constraints. Problem \eqref{eq:schultz} is convex and can be solved efficiently. The main drawback of this approach is that it is less general than full Mahalanobis distance learning: one only learns a weighting $\mathbf{W}$ of the features. Furthermore, $\mathbf{A}$ must be chosen manually.

\subsubsection{Approaches driven by Nearest Neighbors}
\label{sec:knnml}

The objective functions of the methods presented in this section are related to a nearest neighbor prediction rule.

\paragraph{NCA (Goldberger et al.)}

The idea of Neighborhood Component Analysis (NCA), introduced by \citet{Goldberger2004}, is to optimize the expected leave-one-out error of a stochastic nearest neighbor classifier in the projection space induced by $d_\mathbf{M}$. They use the decomposition $\mathbf{M} = \mathbf{L}^T\mathbf{L}$ and they define the probability that $\mathbf{x_i}$ is the neighbor of $\mathbf{x_j}$ by
\begin{eqnarray*}
p_{ij} = \frac{\exp(-\|\mathbf{L}\mathbf{x_i}-\mathbf{L}\mathbf{x_j}\|^2)}{\sum_{l\neq i}\exp(-\|\mathbf{L}\mathbf{x_i}-\mathbf{L}\mathbf{x_l}\|^2)}, & p_{ii}=0.
\end{eqnarray*}
Then, the probability that $\mathbf{x_i}$ is correctly classified is:
$$p_i = \displaystyle\sum_{j : y_j = y_i}p_{ij}.$$
They learn the distance by solving:
\begin{equation}
\label{eq:nca}
\displaystyle\max_L \sum_i p_i.
\end{equation}
Note that the matrix $\mathbf{L}$ can be chosen nonsquare, inducing a low-rank $\mathbf{M}$.
The main limitation of \eqref{eq:nca} is that it is nonconvex and thus subject to local maxima.

\paragraph{MCML (Globerson \& Roweis)}

Later on, \citet{Globerson2005} proposed an alternative convex formulation based on minimizing a KL divergence between $p_{ij}$ and an ideal distribution. Unlike NCA, this is done with respect to the matrix $\mathbf{M}$. However, like MMC, MCML requires costly projections onto the PSD cone.

\paragraph{LMNN (Weinberger et al.)} 

Large Margin Nearest Neighbors (LMNN), introduced by Weinberger et al. \citeyearpar{Weinberger2005,Weinberger2008a,Weinberger2009}, is one of the most popular Mahalanobis distance learning methods. The idea is to learn the distance such that the $k$ nearest neighbors belong to the correct class while keeping away instances of other classes. The Euclidean distance is used to determine these ``target neighbors''. Formally, the constraints are defined in the following way:
\begin{eqnarray*}
\mathcal{S} & = & \{(z_i,z_j)\in \mathcal{T}\times \mathcal{T} : \ell_i=\ell_j\text{ and }\mathbf{x_j}\text{ belongs to the $k$-neighborhood of }\mathbf{x_i}\}, \\
\mathcal{R} & = & \{(z_i,z_j,z_k)\in \mathcal{T}\times \mathcal{T}\times \mathcal{T} : (z_i,z_j)\in \mathcal{S}, \ell_i\neq\ell_k\}.
\end{eqnarray*}

The distance is learned using the following convex program:
\begin{equation}
\label{eq:lmnn}
\begin{aligned}
\min_{\mathbf{M}\in \mathbb{S}^{d}_+} &&& \displaystyle\sum_{(z_i,z_j)\in \mathcal{S}}d_\mathbf{M}^2(\mathbf{x_i},\mathbf{x_j})\\
\text{s.t.} &&& d_\mathbf{M}^2(\mathbf{x_i},\mathbf{x_k}) - d_\mathbf{M}^2(\mathbf{x_i},\mathbf{x_j}) \geq 1 && \forall (z_i,z_j,z_k)\in \mathcal{R}.
\end{aligned}
\end{equation}
Slack variables are added to get soft constraints. The authors developed a special-purpose solver (based on subgradient descent and careful book-keeping) that is able to deal with billions of constraints. In practice, LMNN is one of the best performing methods, although it is sometimes prone to overfitting due to the absence of regularization, as we will see in \cref{chap:icml}. Note that \citet{Park2011} developed an alternative algorithm for solving \eqref{eq:lmnn} based on column generation while \citet{Do2012} highlighted a relation between LMNN and Support Vector Machines.

\subsubsection{Information-Theoretic Approaches}
\label{sec:itml}

\paragraph{ITML (Davis et al.)}

Information-Theoretical Metric Learning (ITML), proposed by \citet{Davis2007}, is an important work because it introduces LogDet divergence regularization that will later be used in several other Mahalanobis distance learning methods \citep[e.g.,][]{Jain2008,Qi2009}. This Bregman divergence on PSD matrices is defined as:
$$D_{ld}(\mathbf{M},\mathbf{M_0})=\tr(\mathbf{M}\mathbf{M_0}^{-1})-\log\det(\mathbf{M}\mathbf{M_0}^{-1})-d,$$
where $d$ is the dimension of the input space and $\mathbf{M_0}$ is some PSD matrix we want to remain close to. In practice, $\mathbf{M_0}$ is often set to $\mathbf{I}$ (the identity matrix) and thus the regularization aims at keeping the learned distance close to the Euclidean distance. The key feature of the LogDet divergence is that it is finite if and only if $\mathbf{M}$ is PSD. Therefore, minimizing $D_{ld}(\mathbf{M},\mathbf{M_0})$ provides an automatic and cheap way of preserving the positive semi-definiteness of $\mathbf{M}$. The LogDet divergence is also rank-preserving: if the initial matrix $\mathbf{M_0}$ has rank $r$, the learned matrix will also have rank $r$.

ITML is formulated as follows:
\begin{equation}
\label{eq:itml}
\begin{aligned}
\min_{\mathbf{M}\in \mathbb{S}^{d}_+} &&& D_{ld}(\mathbf{M},\mathbf{M_0})\\
\text{s.t.} &&& d_\mathbf{M}^2(\mathbf{x_i},\mathbf{x_j}) \leq u && \forall (z_i,z_j)\in \mathcal{S}\\
 &&& d_\mathbf{M}^2(\mathbf{x_i},\mathbf{x_j}) \geq v && \forall (z_i,z_j)\in \mathcal{D},
\end{aligned}
\end{equation}
where $u,v\in\mathbb{R}$ are threshold parameters (as usual, slack variables are added to get soft constraints). ITML thus aims at satisfying the similarity and dissimilarity constraints while staying as close as possible to the Euclidean distance (if $\mathbf{M_0} = \mathbf{I}$). More precisely, the information-theoretic interpretation behind minimizing $D_{ld}(\mathbf{M},\mathbf{M_0})$ is that it is equivalent to minimizing the KL divergence between two multivariate Gaussian distributions parameterized by $\mathbf{M}$ and $\mathbf{M_0}$.
The algorithm proposed to solve \eqref{eq:itml} is efficient, converges to the global minimum and the resulting distance performs well in practice. A limitation of ITML is that $\mathbf{M_0}$, that must be picked by hand, can have an important influence on the quality of the learned distance.

\paragraph{SDML (Qi et al.)}

With Sparse Distance Metric Learning (SDML), \citet{Qi2009} specifically deal with the case of high-dimensional data together with few training samples, i.e., $n \ll d$. To avoid overfitting, they use a double regularization: the LogDet divergence (using $\mathbf{M_0}=\mathbf{I}$ or $\mathbf{M_0}=\boldsymbol{\Sigma}^{-1}$) and $L_1$-regularization on the off-diagonal elements of $\mathbf{M}$.
The justification for using this $L_1$-regularization is two-fold: (i) a practical one is that in high-dimensional spaces, the off-diagonal elements of $\boldsymbol{\Sigma}^{-1}$ are often very small, and (ii) a theoretical one suggested by a consistency result from a previous work in covariance matrix estimation that applies to SDML.
They use a fast algorithm based on block-coordinate descent (the optimization is done over each row of $\mathbf{M^{-1}}$) and obtain very good performance for the specific case $n \ll d$.

\subsubsection{Online Approaches}
\label{sec:onlineml}

In online learning \citep{Littlestone1988}, the algorithm receives training instances one at a time and updates at each step the current hypothesis. Although the performance of online algorithms is typically inferior to batch algorithms, they are very useful to tackle large-scale problems that batch methods fail to address due to complexity and memory issues.
Online learning methods often come with guarantees in the form of regret bounds, stating that the accumulated loss suffered along the way is not much worse than that of the best hypothesis chosen in hindsight.\footnote{A regret bound has the following general form: $\sum_{t=1}^T \ell(h,z_t) - \sum_{t=1}^T \ell(h^*,z_t) \leq O(T)$, where $T$ is the number of steps and $h^*$ is the best batch hypothesis.} However these results assume that the training pairs/triplets are generated i.i.d. (which is hardly the case in metric learning, as we will discuss later) and do not say anything about the generalization to unseen data.

\paragraph{POLA (Shalev-Shwartz et al.)}

POLA \citep{Shalev-Shwartz2004} is the first online Mahalanobis distance learning approach and learns the matrix $\mathbf{M}$ as well as a threshold $b\geq 1$. At each step, when receiving the pair $(\mathbf{z_i},\mathbf{z_j})$, POLA performs two successive orthogonal projections:
\begin{enumerate}
\item Projection of the current solution $(\mathbf{M^{i-1}},b^{i-1})$ onto $C_1 = \{(\mathbf{M},b)\in \mathbb{R}^{d^2+1}:\mathbf{M},b=[y_iy_j(d_\mathbf{M}^2(\mathbf{x_i},\mathbf{x_j})-b)+1]_+=0\}$, which is done efficiently (closed-form solution). The constraint basically requires that the distance between two instances of same (resp. different) labels be below (resp. above) the threshold $b$ with a margin 1. We get an intermediate solution $(\mathbf{M^{i-\frac{1}{2}}},b^{i-\frac{1}{2}})$ that satisfies this constraint while staying as close as possible to the previous solution.
\item Projection of $(\mathbf{M^{i-\frac{1}{2}}},b^{i-\frac{1}{2}})$ onto $C_2 = \{(\mathbf{M},b)\in \mathbb{R}^{d^2+1}:\mathbf{M}\in \mathbb{S}^{d}_+,b\geq1\}$, which is done rather efficiently (in the worst case, only needs to compute the minimal eigenvalue). This projects the matrix back onto the PSD cone. We thus get a new solution $(\mathbf{M^{i}},b^{i})$ that yields a valid Mahalanobis distance.
\end{enumerate}

A regret bound for the algorithm is provided. However, POLA relies on the unrealistic assumption that there exists $(\mathbf{M^*},b^*)$ such that $[y_iy_j(d_\mathbf{M^*}^2(\mathbf{x_i},\mathbf{x_j})-b^*)+1]_+=0$ for all training pairs (i.e., there exists a matrix and a threshold value that perfectly separate them with margin 1), and is not competitive in practice.

\paragraph{LEGO (Jain et al.)}

LEGO, developed by \citet{Jain2008}, is an improved version of POLA based on LogDet divergence regularization. It features tighter regret bounds, more efficient updates and better practical performance.

\paragraph{ITML (David et al.)} ITML, presented in \sref{sec:itml}, also has an online version with bounded regret. At each step, the algorithm minimizes a tradeoff between LogDet regularization with respect to the previous matrix and a square loss. The resulting distance generally performs slightly worse than the batch version but the algorithm can be faster.

\paragraph{MDML (Kunapuli \& Shavlik)}

The work of \citet{Kunapuli2012} is an attempt of proposing a general framework for online Mahalanobis distance learning. It is based on composite mirror descent \citep{Duchi2010a}, which allows online optimization of many regularized problems. It can accommodate a large class of loss functions and regularizers for which efficient updates are derived, and the algorithm comes with a regret bound. In the experiments, they focus on trace norm regularization, which is the best convex relaxation of the rank and thus induces low-rank matrices.
In practice, the approach has performance comparable to LMNN and ITML, is fast and sometimes induces low-rank solutions, but surprisingly the algorithm was not evaluated on large-scale datasets.

\subsubsection{Metric Learning with Generalization Guarantees}
\label{sec:genml}

As in the classic supervised learning setting (where training data consist of individual labeled instances), generalization guarantees may be derived for supervised metric learning (where training data consist of pairs or triplets). Indeed, most of supervised metric learning methods can be seen as minimizing a (regularized) loss function $\ell$ based on the training pairs/triplets. In this context, the pair-based true risk can be defined as
$$R^\ell(d^2_\mathbf{M}) = \mathbb{E}_{z,z'\sim P}\left[\ell(d^2_\mathbf{M},z,z')\right],$$
the pair-based empirical risk as
$$R^\ell_{\mathcal{S},\mathcal{D}}(d^2_\mathbf{M}) = \frac{1}{|\mathcal{S}|+|\mathcal{D}|}\displaystyle\sum_{(z_i,z_j)\in\mathcal{S}\cup\mathcal{D}}\ell(d^2_\mathbf{M},z_i,z_j),$$
and likewise for the triplet-based setting.

However, although individual training instances are assumed to be drawn i.i.d. from $P$, one cannot make the same assumption regarding the pairs or triplets themselves since they are built from the training sample. For this reason, establishing generalization guarantees for the learned metric is challenging and has so far received very little attention. To the best of our knowledge, only two approaches have tried to address this question explicitly.

\paragraph{Jin et al.}

In their paper, \citet{Jin2009} study the following general Mahalanobis distance learning formulation:
\begin{eqnarray}
\label{eq:jinform}
\displaystyle\min_{\mathbf{M} \succeq 0} & \frac{1}{n^2}\displaystyle\sum_{(z_i,z_j)\in \mathcal{T}\times \mathcal{T}}\ell(d_\mathbf{M}^2,z_i,z_j) \quad+\quad C\|\mathbf{M}\|_\mathcal{F},
\end{eqnarray}
where $C>0$ is the regularization parameter. The loss function $\ell$ is assumed to be of the form
$$\ell(d_\mathbf{M}^2,z_i,z_j) = g(y_iy_j[1-d_\mathbf{M}^2(\mathbf{x_i},\mathbf{x_j})]),$$
where $g$ is convex and Lipschitz continuous.

Relying on a definition of uniform stability adapted to the case of distance learning (where training data is made of pairs), they show that one can derive generalization bounds for the learned distance. Unfortunately, their framework is limited to Frobenius norm regularization: in particular, since it is based on uniform stability, it cannot accommodate sparsity-inducing regularizers. Note that they also propose an online algorithm that is efficient and competitive in practice.

The work of Jin et al. is related to the contributions of this thesis in two ways. First, in \cref{chap:ecml}, we make use of the same uniform stability arguments to derive learning guarantees for a learned edit similarity function, but we go a step further by deriving guarantees in terms of the error of the classifier built from this similarity. Second, in \cref{chap:nips}, we propose an alternative framework for deriving learning guarantees for metric learning based on algorithmic robustness, and we show that this framework can tackle a wider variety of problems.

\paragraph{Bian \& Tao}

The work of Bian \& Tao \citeyearpar{Bian2011,Bian2012} is another attempt of developing metric learning algorithms with generalization guarantees. They consider a class of loss functions similar to that of \citet{Jin2009}:
$$\ell(d_\mathbf{M}^2,z_i,z_j) = g(y_{ij}[c-d_\mathbf{M}^2(\mathbf{x_i},\mathbf{x_j})]),$$
where $c>0$ is a decision threshold variable and $g$ is convex and Lipschitz continuous.

The formulation they study is the following:
\begin{eqnarray}
\displaystyle\min_{(\mathbf{M},c)\in\mathcal{Q}} & \frac{1}{n^2}\displaystyle\sum_{(z_i,z_j)\in \mathcal{T}\times \mathcal{T}}\ell(d_\mathbf{M}^2,z_i,z_j),
\end{eqnarray}
where $\mathcal{Q} = \{(\mathbf{M},c) : 0\preceq \mathbf{M} \preceq \alpha\mathbf{I}, 0\leq c\leq\alpha\}$, with $\alpha$ a positive constant. This ensures that the learned metric and decision threshold are bounded.

They use a statistical analysis to derive risk bounds as well as consistency bounds (the learned distance asymptotically converges to the optimal distance). However, they rely on strong assumptions on the distribution of the examples and cannot accommodate any regularization.

\subsubsection{Other approaches}
\label{sec:othermaha}

In this section, we describe a few approaches that are outside the scope of the previous categories.

\paragraph{Rosales \& Fung} The method of \citet{Rosales2006} aims at learning matrices with entire columns/rows set to zero, thus making $\mathbf{M}$ low-rank. For this purpose, they use $L_1$ norm regularization and, restricting their framework to diagonal dominant matrices, they are able to formulate the problem as a linear program that can be solved efficiently. However, $L_1$ norm regularization favors sparsity at the entry level only, not specifically at the row/column level, even though in practice the learned matrix is sometimes low-rank. Furthermore, the approach is less general than Mahalanobis distances due to the restriction to diagonal dominant matrices.

\paragraph{SML (Ying et al.)}

SML \citep{Ying2009} is a Mahalanobis distance learning approach that regularizes $\mathbf{M}$ with the $L_{2,1}$ norm, which tends to zero out entire rows of $\mathbf{M}$ (as opposed to the $L_1$ norm used in the previous method). They essentially want to solve the following problem:
\begin{equation*}
\begin{aligned}
 \min_{\mathbf{M}\in \mathbb{S}^{d}_+} &&& \|\mathbf{M}\|_{2,1}\\
 \text{s.t.} &&& d_\mathbf{M}^2(\mathbf{x_i},\mathbf{x_k}) - d_\mathbf{M}^2(\mathbf{x_i},\mathbf{x_j}) \geq 1 && \forall (z_i,z_j,z_k)\in \mathcal{R},
\end{aligned}
\end{equation*}
where slack variables are added to get soft constraints. However, $L_{2,1}$ norm regularization is typically difficult to optimize. Using smoothing techniques the authors manage to derive an algorithm that scales in $O(d^3)$ per iteration. The method performs well in practice while inducing a lower-dimensional projection space than full-rank methods and the method of \citet{Rosales2006}. However, it cannot be applied to high-dimensional problems due to the complexity of the algorithm.

\paragraph{BoostMetric (Shen et al.)} BoostMetric \citep{Shen2009,Shen2012} adapts to Mahalanobis distance learning the ideas of boosting, where a good hypothesis is obtained through a weighted combination of so-called ``weak learners'' \citep[see the recent book on this matter by]{Schapire2012}. The method is based on the property that any PSD matrix can be decomposed into a positive linear combination of trace-one rank-one matrices. This kind of matrices is thus used as weak learner and the authors adapt the popular boosting algorithm Adaboost \citep{Freund1995} to this setting. The resulting algorithm is quite efficient since it does not require full eigenvalue decomposition but only the computation of the largest eigenvalue. In practice, BoostMetric achieves competitive performance but hardly scales to large-scale or high-dimensional datasets.

\paragraph{DML (Ying et al.)} The work of \citet{Ying2012} revisit MMC, the original approach of \citet{Xing2002}, by investigating the following formulation, called DML-eig:
\begin{equation}
\label{eq:dml-eig}
\begin{aligned}
\max_{\mathbf{M}\in \mathbb{S}^{d}_+} &&& \displaystyle\min_{(z_i,z_j)\in \mathcal{D}}d^2_\mathbf{M}(\mathbf{x_i},\mathbf{x_j})\\
\text{s.t.} &&& \displaystyle\sum_{(z_i,z_j)\in \mathcal{S}}d_\mathbf{M}^2(\mathbf{x_i},\mathbf{x_j}) \leq 1.
\end{aligned}
\end{equation}
The slight difference is that DML-eig \eqref{eq:dml-eig} maximizes the \emph{minimum} (square) distance between negative pairs while MMC \eqref{eq:xing} maximizes the sum of distances. Ying \& Li avoid the costly full eigen-decomposition used by Xing et al. by showing that \eqref{eq:dml-eig} can be cast as a well-known eigenvalue optimization problem called ``minimizing the maximal eigenvalue of a symmetric matrix''. They further show that it can be solved efficiently using a first-order algorithm that only requires the computation of the largest eigenvalue at each iteration, and that LMNN can also be cast as a similar problem. Experiments show competitive results and low computational complexity, although it might be subject to overfitting due to the absence of regularization.

\citet{Cao2012} generalize \eqref{eq:dml-eig} by studying the following formulation, called DML-$p$:
\begin{equation}
\label{eq:dml-p}
\begin{aligned}
\max_{\mathbf{M}\in \mathbb{S}^{d}_+} &&& \left(\frac{1}{|\mathcal{D}|}\displaystyle\sum_{(z_i,z_j)\in \mathcal{D}}[d_\mathbf{M}(\mathbf{x_i},\mathbf{x_j})]^{2p}\right)^{1/p}\\
\text{s.t.} &&& \displaystyle\sum_{(z_i,z_j)\in \mathcal{S}}d_\mathbf{M}^2(\mathbf{x_i},\mathbf{x_j}) \leq 1.
\end{aligned}
\end{equation}
They show that for $p\in(-\infty,1)$, \eqref{eq:dml-p} is convex and can be solved efficiently in an analogous manner as DML-eig. For $p=0.5$ we recover MMC \eqref{eq:xing} and for $p\to-\infty$ we recover DML-eig \eqref{eq:dml-eig}. Experiments show that tuning $p$ can lead to better performance than MMC or DML-eig.

\paragraph{LNML (Wang et al.)} The idea of LNML \citep{Wang2012} is to enhance metric learning methods by also learning the neighborhood (i.e., the pairs or triplets) according to which the metric is optimized. They propose an iterative approach that alternates between a neighbor assignment step (where the current metric is used to determine the neighbors according to some quality measure) and a metric learning step (where the metric is optimized with respect to the current neighborhood). Experiments are conducted on MCML and LMNN and show that more accurate metrics can be learned using their framework. Of course, this is achieved at the expense of higher computational complexity, since the metric learning algorithms must be run several times (5-10 times in their experiments).

\subsection{Similarity Learning}
\label{sec:simlearning}

Although most of the work in metric learning has focused on the Mahalanobis distance, learning similarity functions has also attracted some interest, motivated by the perspective of more scalable algorithms due to the absence of PSD constraint.

\paragraph{SiLA (Qamar et al.)}

SiLA \citep{Qamar2008} is an approach for learning similarity functions of the following form:
$$\frac{\mathbf{x}^T\mathbf{M}\mathbf{x'}}{N(\mathbf{x},\mathbf{x'})},$$
where $\mathbf{M}\in\mathbb{R}^{d\times d}$ and $N(\mathbf{x},\mathbf{x'})$ is a normalization term which depends on $\mathbf{x}$ and $\mathbf{x'}$. This similarity function can be seen as a generalization of the cosine and the bilinear similarities. The authors build on the same idea of ``target neighbors'' that was introduced in LMNN, but optimize the similarity in an online manner with an algorithm based on voted perceptron. At each step, the algorithm goes through the training set, updating the matrix when an example does not satisfy a criterion of separation. The authors present theoretical results that follow from the voted perceptron theory in the form of regret bounds for the separable and nonseparable cases. SiLA is compared to Mahalanobis metric learning approaches on three datasets. It seems to perform fine but has a rather slow convergence rate and may suffer of its lack of regularization. In subsequent work, \citet{Qamar2012} study the relationship between SiLA and RELIEF, an online feature reweighting algorithm.

\paragraph{gCosLA (Qamar \& Gaussier)}

gCosLA \citep{Qamar2009} learns generalized cosine similarities of the form
$$\frac{\mathbf{x}^T\mathbf{M}\mathbf{x'}}{\sqrt{\mathbf{x}^T\mathbf{M}\mathbf{x}}\sqrt{\mathbf{x'}^T\mathbf{M}\mathbf{x'}}},$$
where $\mathbf{M}\in\mathbb{S}_+^d$. It corresponds to a cosine similarity in the projection space implied by $\mathbf{M}$. The algorithm itself, an online procedure, is very similar to that of POLA (presented in \sref{sec:onlineml}). Indeed, they essentially use the same loss function and also have a two-step approach: a projection onto the set of arbitrary matrices that achieve zero loss on the current example pair, followed by a projection back onto the PSD cone. The first projection is different from POLA (since the generalized cosine has a normalization factor that depends on $\mathbf{M}$) but the authors manage to derive a closed-form solution. The second projection is based on a full eigenvalue decomposition of $\mathbf{M}$, making the approach costly as dimensionality grows. A regret bound for the algorithm is provided and it is shown experimentally that gCosLA converges in fewer iterations than SiLA and is generally more accurate. Its performance seems competitive with LMNN and ITML.

\paragraph{OASIS (Chechik et al.)}

The similarity learning method OASIS \citep{Chechik2009,Chechik2010} learns a bilinear similarity $K_\mathbf{M}$ (see \sref{sec:metricvect}) for large-scale problems. Since $\mathbf{M}\in\mathbb{R}^{d\times d}$ is not required to be PSD, they can optimize the similarity in an online manner using a simple and efficient algorithm, which belongs to the family of Passive-Aggressive algorithms \citep{Crammer2006}. The initialization is $\mathbf{M}=\mathbf{I}$, then at each step $t$, the algorithm draws a triplet $(z_i,z_j,z_k)\in \mathcal{R}$ and solves the following convex problem:
\begin{equation}
\label{eq:oasis}
\begin{aligned}
\mathbf{M^{\boldsymbol{t}}} & = & \displaystyle\argmin_{\mathbf{M},\xi} &&& \frac{1}{2}\|\mathbf{M}-\mathbf{M^{\boldsymbol{t}-1}}\|_{\mathcal{F}}^2+C\xi\\
&& \text{s.t.} &&& 1-d^2_\mathbf{M}(\mathbf{x_i},\mathbf{x_j})+d^2_\mathbf{M}(\mathbf{x_i},\mathbf{x_k})\leq \xi\\
&& &&& \xi \geq 0,
\end{aligned}
\end{equation}
where $C$ is the trade-off parameter between minimizing the loss and staying close from the matrix obtained at the previous step, and $\xi$ is a slack variable. Clearly, if $1-d^2_\mathbf{M}(\mathbf{x_i},\mathbf{x_j})+d^2_\mathbf{M}(\mathbf{x_i},\mathbf{x_k} \leq 0$, then $\mathbf{M^{\boldsymbol{t}}}=\mathbf{M^{\boldsymbol{t}-1}}$ is the solution of \eqref{eq:oasis}. Otherwise, the solution is obtained from a simple closed-form update. In practice, OASIS achieves competitive results on medium-scale problems and unlike most other methods, is scalable to problems with millions of training instances. However, it cannot incorporate complex regularizers and does not have generalization guarantees.

Note that the same authors derived two more algorithms for learning bilinear similarities as applications of more general frameworks. The first one is based on online learning in the manifold of low-rank matrices \citep{Shalit2010,Shalit2012} and the second one on adaptive regularization of weight matrices \citep{Crammer2012}.

\subsection{Nonlinear Metric Learning}
\label{sec:nonlinearml}

We have seen that the work in supervised metric learning from feature vectors has focused on linear metrics because they are more convenient to optimize (in particular, it is easier to derive convex formulations with the guarantee of finding the global optimum) and less prone to overfitting.
However, a drawback of linear metric learning is that it will fail to capture nonlinear patterns in the data.

An example of nonlinear metric learning is kernel learning, but existing approaches are very expensive and/or subject to local minima \citep[see for instance][]{Ong2002,Ong2005,Xu2012b}, cannot be applied to unseen data \citep{Lanckriet2002,Lanckriet2004,Tsuda2005,Kulis2006,Kulis2009} or limited to learning a combination of existing kernels such as in Multiple Kernel Learning \citep[see][for a recent survey]{Gonen2011}.

So far, the most satisfactory solution to the problem of nonlinear metric learning is probably the kernelization of linear metric learning methods, in the spirit of what is done in SVM, i.e., learn a linear metric in the nonlinear feature space induced by a kernel function and thereby combine the best of both worlds. Some metric learning approaches have been shown to be kernelizable \citep[for instance][]{Schultz2003,Shalev-Shwartz2004,Davis2007} using specific arguments, but in general kernelizing a particular metric algorithm is not trivial: a new formulation of the problem has to be derived, where interface to the data is limited to inner products, and sometimes a different implementation is necessary. Moreover, when kernelization is possible, one must learn a $n_\mathcal{T}\times n_\mathcal{T}$ matrix. As $n_\mathcal{T}$ gets large, the problem becomes intractable unless dimensionality reduction is applied. Recently though, several authors \citep{Chatpatanasiri2010,Zhang2010a} have proposed general kernelization methods based on Kernel Principal Component Analysis \citep{Scholkopf1998}. They can be used to kernelize nearly any metric learning algorithm and perform dimensionality reduction simultaneously in a very simple manner, referred to as the ``KPCA trick''. Since our bilinear similarity learning approach introduced in \cref{chap:icml} is kernelized using this trick, we postpone the details to \sref{sec:nonlinear}.

Note that kernelizing a metric learning algorithm may drastically improve the quality of the learned metric on highly nonlinear problems, but may also favor overfitting (because local pair or triplet-based constraints become much easier to satisfy in a nonlinear, high-dimensional kernel space), leading to poor generalization ability.

\subsection{Approaches for Other Settings}
\label{sec:otherml}

In this review, we discussed metric learning approaches for the general supervised learning setting. Note that there also exist methods for the semi-supervised setting \citep{Zha2009,Baghshah2009,Liu2010,Dai2012}, domain adaptation \citep{Cao2011,Geng2011,Kulis2011} and multi-task/view learning \citep{Parameswaran2010,Wang2011,Yang2012}. There also exists specific literature on metric learning for computer vision tasks such as object recognition \citep{Frome2007,Verma2012}, face recognition \citep{Guillaumin2009} or tracking \citep{Li2012}.

\section{Metric Learning from Structured data}
\label{sec:mlstruct}

As pointed out earlier, metrics have a special importance in the context of structured data: they can be used as a proxy to access data without having to manipulate these complex objects. As a consequence, given an appropriate structured metric, one can use $k$-NN, SVM, $K$-Means or any other metric-based algorithm as if the data consisted of feature vectors.

Unfortunately, for the same reasons, metric learning from structured data is challenging because most of structured metrics are combinatorial by nature, which explains why it has received less attention than metric learning from feature vectors. Most of the available literature on the matter focuses on learning metrics based on the edit distance. Clearly, for the edit distance to be meaningful, one needs costs that reflect the reality of the considered task. To take a simple example, in typographical error correction, the probability that a user hits the Q key instead of W on a QWERTY keyboard is much higher than the probability that he hits Q instead of Y. For some applications, such as protein alignment or handwritten digit recognition, well-tailored cost matrices may be available \citep{Dayhoff1978,Henikoff1992,Mico1998}. Otherwise, there is a need for automatically learning a nonnegative $(|\Sigma|+1)\times (|\Sigma|+1)$ cost matrix $\mathbf{C}$ for the task at hand.

What makes the cost matrix difficult to optimize is the fact that the edit distance is based on an optimal script which depends on the edit costs themselves. Most general-purpose approaches get round this problem by considering a stochastic variant of the edit distance, where the cost matrix defines a probability distribution over the edit operations. One can then define an edit similarity equal to the posterior probability $p_e(\mathsf{x'}|\mathsf{x})$ that an input string $\mathsf{x}$ is turned into an output string $\mathsf{x'}$. This corresponds to summing over all possible edit scripts that turn $\mathsf{x}$ into $\mathsf{x'}$ instead of only considering the optimal script. Such a stochastic edit process can be represented as a probabilistic model and one can estimate the parameters (i.e., the cost matrix) of the model that maximize the expected log-likelihood of positive pairs. This is done via an iterative Expectation-Maximization (EM) algorithm \citep{Dempster1977}, a procedure that alternates between two steps: an Expectation step (which essentially computes the function of the expected log-likelihood of the pairs with respect to the current parameters of the model) and a Maximization step (computing the updated edit costs that maximize this expected log-likelihood). Note that unlike the classic edit distance, the obtained edit similarity does not usually satisfy the properties of a distance (in fact, it is often not symmetric).

In the following, we review methods for learning string edit metrics (\sref{sec:stringeditlearn}) and tree edit metrics (\sref{sec:treeeditlearn}).

\subsection{String Edit Metric Learning}
\label{sec:stringeditlearn}

\paragraph{Generative models} The first method for learning a string edit metric was proposed by \citet{Ristad1998}. They use a memoryless stochastic transducer which models the joint probability of a pair $p_e(\mathsf{x},\mathsf{x'})$ from which $p_e(\mathsf{x'}|\mathsf{x})$ can be estimated. Parameter estimation is performed with EM and the learned edit probability is applied to the problem of learning word pronunciation in conversational speech. \citet{Bilenko2003} extended this approach to the Needleman-Wunsch Score with affine gap penalty and applied it to duplicate detection. To deal with the tendency of Maximum Likelihood estimators to overfit when the number of parameters is large (in this case, when the alphabet size is large), \citet{Takasu2009} proposes a Bayesian parameter estimation of pair-HMM providing a way to smooth the estimation. Experiments are conducted on approximate text searching in a digital library of Japanese and English documents.

\paragraph{Discriminative models} The work of \citet{Oncina2006} describes three levels of bias induced by the use of generative models: (i) dependence between edit operations, (ii) dependence between the costs and the prior distribution of strings $p_e(\mathsf{x})$, and (iii) the fact that to obtain the posterior probability one must divide by the empirical estimate of $p_e(\mathsf{x})$. These biases are highlighted by empirical experiments conducted with the method of \citet{Ristad1998}. To address these limitations, they propose the use of a conditional transducer that directly models the posterior probability $p_e(\mathsf{x'}|\mathsf{x})$ that an input string $\mathsf{x}$ is turned into an output string $\mathsf{x'}$ using edit operations. Parameter estimation is also done with EM and the paper features an application to handwritten digit recognition, where digits are represented as sequences of Freeman codes \citep{Freeman1974}. In order to allow the use of negative pairs, \citet{McCallum2005} consider another discriminative model, conditional random fields, that can deal with positive and negative pairs in specific states, still using EM for parameter estimation.

\paragraph{Methods based on gradient descent} The use of EM has two main drawbacks: (i) it may converge to a local optimum, and (ii) parameter estimation and distance calculations must be done at each iteration, which can be very costly if the size of the alphabet and/or the length of the strings are large.

\citet{Saigo2006} manage to avoid the need for an iterative procedure like EM in the context of detecting remote homology in protein sequences. They learn the parameters of the Smith-Waterman score which is plugged in their local alignment kernel \citep{Saigo2004}. Unlike the Smith-Waterman score, the local alignment kernel, which is based on the sum over all possible alignments, is differentiable and can be optimized by a gradient descent procedure. The objective function that they optimize is meant to favor the discrimination between positive and negative examples, but this is done by only using positive pairs of distant homologs. The approach has two additional drawbacks: (i) the objective function is nonconvex and it thus subject to local minima, and (ii) the kernel's validity is not guaranteed in general and is subject to the value of a parameter that must be tuned. Therefore, the authors use this learned function as a similarity measure and not as a kernel.

\subsection{Tree Edit Metric Learning}
\label{sec:treeeditlearn}

\paragraph{Bernard et al.} Extending the work of \citet{Ristad1998} and \citet{Oncina2006} on string edit similarity learning, \citet{Bernard2006,Bernard2008} propose both a generative and a discriminative model for learning tree edit costs. They rely on the tree edit distance by \citet{Selkow1977} --- which is cheaper to compute than that of \citet{Zhang1989} --- and adapt the updates of EM to this case. An application to handwritten digit recognition is proposed, where digits are represented by trees of Freeman codes.

\paragraph{Boyer et al.} The work of \citet{Boyer2007} tackles the more complex variant of the tree edit distance \citep{Zhang1989}, which allows the insertion and deletion of single nodes instead of entire subtrees only. Parameter estimation in the generative model is also based on EM, and the usefulness of the approach is illustrated on an image recognition task.

\paragraph{Neuhaus \& Bunke} In their paper, \citet{Neuhaus2007} learn a (more general) graph edit similarity, where each edit operation is modeled by a Gaussian mixture density. Parameter estimation is done using an EM-like algorithm. Unfortunately, the approach is intractable: the complexity of the EM procedure is exponential in the number of nodes (and so is the computation of the distance).

\paragraph{Dalvi et al.} The work of \citet{Dalvi2009} points out a limitation of the approach of \citet{Bernard2006,Bernard2008}: they model a distribution over tree edit scripts rather than over the trees themselves, and unlike the case of strings, there is no bijection between the edit scripts and the trees. Recovering the correct conditional probability with respect to trees requires a careful and costly procedure. They propose a more complex conditional transducer that models the conditional probability over trees and use EM for parameter estimation. They apply their method to the problem of creating robust wrappers for webpages.

\paragraph{Emms} The work of \citet{Emms2012} points out a theoretical limitation of the approach of \citet{Boyer2007}: the authors use a factorization that turns out to be incorrect in some cases. Emms shows that a correct factorization exists when only considering the edit script of highest probability instead of all possible scripts, and derives the corresponding EM updates. An obvious drawback is that the output of the model is not the probability $p_e(\mathsf{x'}|\mathsf{x})$. Moreover, experiments on a question answering task highlight that the approach is prone to overfitting, and requires smoothing and other heuristics (such as a final step of zeroing-out the diagonal of the cost matrix).

\section{Conclusion}
\label{sec:mlchapconclu}

In this chapter, we reviewed a large body of work in supervised metric learning. \tref{tab:mlvectsum} and \tref{tab:mlstructsum}  summarize the main features of the studied approaches for feature vectors and structured data respectively. 

\begin{table}[t]
\begin{center}
\begin{scriptsize}
\begin{tabular}{rccccccc}
\toprule
\textbf{Method} & \textbf{Convex} & \textbf{Scalable} & \textbf{Competitive} & \textbf{Reg.} & \textbf{Low-rank} & \textbf{Online} & \textbf{Gen.}\\
\midrule
MMC & \tickYes & \tickNo & \tickNo & \tickNo & \tickNo & \tickNo & \tickNo\\
Schultz \& Joachims & \tickYes & \tickYes & \tickNo & \tickYes & \tickNo & \tickNo & \tickNo\\
NCA & \tickNo & \tickYes & \tickNo & \tickNo & \tickYes & \tickNo & \tickNo\\
MCML & \tickYes & \tickNo &\tickNo & \tickNo & \tickNo & \tickNo & \tickNo\\
LMNN & \tickYes & \tickYes\tickYes & \tickYes & \tickNo & \tickNo & \tickNo & \tickNo\\
ITML & \tickYes & \tickYes\tickYes & \tickYes & \tickYes & \tickYes & \tickYes & \tickNo\\
SDML & \tickYes & \tickYes\tickYes & \tickYes & \tickYes & \tickNo & \tickNo & \tickNo\\
POLA & \tickYes & \tickYes & \tickNo & \tickNo & \tickNo & \tickYes & \tickNo\\
LEGO & \tickYes & \tickYes\tickYes & \tickYes & \tickYes & \tickYes & \tickYes & \tickNo\\
MDML & \tickYes & \tickYes\tickYes & \tickYes & \tickYes & \tickYes & \tickYes & \tickNo\\
Jin et al. & \tickYes & \tickYes\tickYes & \tickYes & \tickYes & \tickNo & \tickYes & \tickYes\\
Bian \& Tao & \tickYes & \tickYes & \tickYes & \tickNo & \tickNo & \tickNo & \tickYes\\
Rosales \& Fung & \tickYes & \tickYes & \tickNo & \tickYes & \tickYes & \tickNo & \tickNo\\
SML & \tickYes & \tickNo & \tickYes & \tickYes & \tickYes & \tickNo & \tickNo\\
BoostMetric & \tickYes & \tickYes & \tickYes & \tickYes & \tickNo & \tickNo & \tickNo\\
DML & \tickYes & \tickYes\tickYes & \tickYes & \tickNo & \tickNo & \tickNo & \tickNo\\
SiLA & --- & \tickYes & ? & \tickNo & \tickNo & \tickYes & \tickNo\\
gCosLA & \tickYes & \tickYes & \tickYes & \tickNo & \tickNo & \tickYes & \tickNo\\
OASIS & \tickYes & \tickYes\tickYes\tickYes & \tickYes & \tickYes & \tickNo & \tickYes & \tickNo\\
\bottomrule
\end{tabular}
\end{scriptsize}
\caption[Metric learning from feature vectors: main features of the methods]{Summary of the main features of the reviewed approaches (``Reg.'' and ``Gen.'' respectively stand for ``Regularized'' and ``Generalization guarantees'').}
\label{tab:mlvectsum}
\end{center}
\end{table}

\begin{table}[t]
\begin{center}
\begin{scriptsize}
\begin{tabular}{rccccccc}
\toprule
\textbf{Method} & \textbf{Data} & \textbf{Model} & \textbf{Scripts} & \textbf{Opt.} & \textbf{Global sol.} & \textbf{Neg. pairs} & \textbf{Gen.}\\
\midrule
Ristad \& Yianilos & Strings & Generative & All & EM & \tickNo & \tickNo & \tickNo\\
Bilenko \& Mooney & Strings & Generative & All & EM & \tickNo & \tickNo & \tickNo\\
Takasu & Strings & Generative & All & EM & \tickNo & \tickNo & \tickNo\\
Oncina \& Sebban & Strings & Discriminative & All & EM & \tickNo & \tickNo & \tickNo\\
McCallum et al. & Strings & Discriminative & All & EM & \tickNo & \tickYes & \tickNo\\
Saigo et al. & Strings & --- & All & GD & \tickNo & \tickNo  & \tickNo\\
Bernard et al. & Trees & Both & All & EM & \tickNo & \tickNo & \tickNo\\
Boyer et al. & Trees & Generative & All & EM & \tickNo & \tickNo & \tickNo\\
Neuhaus \& Bunke & Graphs & Generative & All & EM & \tickNo & \tickNo & \tickNo\\
Dalvi et al. & Trees & Discriminative & All & EM & \tickNo & \tickNo & \tickNo\\
Emms & Trees & Discriminative & Optimal & EM & \tickNo & \tickNo & \tickNo\\
\bottomrule
\end{tabular}
\end{scriptsize}
\caption[Metric learning from structured data: main features of the methods]{Summary of the main features of the reviewed approaches (``Opt.'', ``Global sol.'', ``Neg. pairs'' and ``Gen.'' respectively stand for ``Optimization'', ``Global solution'', ``Negative pairs'' and ``Generalization guarantees'').}
\label{tab:mlstructsum}
\end{center}
\end{table}

This review raises three observations:
\begin{enumerate}
\item Research efforts on metric learning from feature vectors have been mainly oriented towards deriving tractable formulations and algorithms. Boosted by some advances in batch and online numerical optimization, these efforts have been successful: recent methods are scalable and can even accommodate complex regularizers in an efficient way. However, there is an obvious lack of theoretical understanding of metric learning. First, few frameworks capable of establishing the consistency of the learned metric on unseen data have been proposed, and existing ones lack generality. Second, using a learned metric often improves the empirical performance of metric-based algorithms, but this has never been studied from a theoretical standpoint. In particular, can we relate the empirical risk of the learned metric to the true risk of the classifier that uses it?
\item There is a relatively small body of work on metric learning from structured data, presumably due to the higher complexity of the learning procedures. Almost all existing methods are based on probabilistic models: they are trained using an expensive iterative algorithm and cannot accommodate negative pairs. Furthermore, no approach is guaranteed to converge to the global optimum of the optimized quantity and again, there is a lack of theoretical study.
\item The use of learned metrics is typically restricted to algorithms based on local neighborhoods, in particular $k$-NN classifiers. Since the learned metrics are typically optimized over local constraints, it seems unclear whether they can be successfully used in more global classifiers such as SVM and other linear separators, or if new metric learning algorithms should be designed for this global setting. Furthermore, building a PSD kernel from the learned metrics is often difficult, especially for structured data (e.g., string edit kernels).
\end{enumerate}

The contributions of this thesis address these limitations. \pref{part:struct} is devoted to metric learning from structured data and consists of two main contributions. In \cref{chap:pr}, we introduce a new string kernel built from learned edit probabilities. Unlike other string edit kernels, it is guaranteed to be PSD and parameter-free. In \cref{chap:ecml}, we propose a novel string and tree edit similarity learning method based on numerical optimization, that can handle positive and negative pairs and is guaranteed to converge to the optimal solution. We are able to derive a generalization bound for our method, and this bound can be related to the generalization error of a linear classifier built from the learned similarity. \pref{part:vect} is devoted to metric learning from feature vectors and consists of two contributions. In \cref{chap:icml}, we propose a bilinear similarity learning method tailored to linear classification. The similarity is not optimized over local pair or triplet-based constraints: it directly minimizes a global quantity that upper bounds the true risk of the linear classifier built from the learned similarity. Lastly, in \cref{chap:nips}, we adapt the notion of algorithmic robustness (\sref{sec:robustness}) to the metric learning setting, which allows us to derive generalization guarantees for a large class of metric learning problems with various loss functions and regularizers.

\part{Contributions in Metric Learning from Structured Data}\label{part:struct}
\chapter{A String Kernel Based on Learned Edit Similarities}
\label{chap:pr}

\begin{chapabstract}
With the success of kernel methods, there is a growing interest in designing powerful kernels between sequences. In this chapter, we propose a new string kernel based on edit probabilities learned with a conditional transducer. Unlike other string edit kernels, it is parameter-free and guaranteed to be valid since it corresponds to a dot product in an infinite-dimensional space. While the naive computation of the kernel involves an intractable sum over an infinite number of strings, we show that it can actually be computed exactly using the intersection of probabilistic automata and a matrix inversion. Experimental results on a handwritten character recognition task show that our new kernel outperforms state-of-the-art string kernels as well as standard and learned edit distance used in a $k$-NN classifier.\\

The material of this chapter is based on the following international publication:\\

\bibentry{Bellet2010}.
\end{chapabstract}

\section{Introduction}

In recent years, with the emergence of kernel-based learning, a lot of research has gone into designing powerful kernels for structured data such as strings. A natural way of building string kernels consists in representing each sequence by a fixed-length feature vector. Many of the early string kernels, such as the spectrum, the subsequence or the mismatch kernels (presented in \sref{sec:metricstruct}), belong to this family. They sometimes perform well, but they are not very flexible and imply a significant loss of structural information.

On the other hand, measures based on (or related to) the string edit distance can capture more structural distortions and are adaptable by nature, since they are based on a cost matrix that can be used to incorporate background knowledge on the problem of interest. When a domain expertise is not available, one may learn these costs automatically from data (we have reviewed these methods in \sref{sec:mlstruct}). Unfortunately, their use is mostly restricted to $k$-NN classifiers since efforts to design string kernels from the edit distance (the so-called edit kernels) have not been satisfactory: their validity (i.e., positive semi-definiteness) is subject to the value of a parameter (that must be tuned) and/or they suffer from the ``diagonal dominance'' problem \citep{Li2004,Cortes2004,Saigo2004,Neuhaus2006}. Another drawback of these approaches is that they use the standard version of the edit distance. Adapting them to make use of learned edit similarities (that are not proper distances and sometimes not even symmetric) is often not straightforward.

In this work, we propose a new string edit kernel that makes use of conditional edit probabilities learned in the form of probabilistic models. Our kernel belongs to the family of marginalized kernels \citep{Tsuda2002,Kashima2003}, is parameter-free and guaranteed to be PSD. It also has the unusual feature of being based on a sum over an infinite number of strings. This sum may seem intractable at first glance, but drawing our inspiration from rational kernels \citep{Cortes2004}, we show that it can be computed exactly by means of the intersection of two probabilistic automata and a matrix inversion. We conduct experiments on a handwritten digit recognition task that show that our kernel outperforms state-of-the-art string kernels, as well as $k$-NN classifiers based on standard and learned edit distance measures.

The rest of this chapter is organized as follows. \sref{sec:oureditkernel} introduces our new string edit kernel. \sref{sec:editkernelcomp} is devoted to the computation of the kernel based on intersection of probabilistic automata and matrix inversion. Experimental results are presented in \sref{sec:prexpes} and we conclude in \sref{sec:prconclu}.

\section{A New Marginalized String Edit Kernel}
\label{sec:oureditkernel}

Our new string edit kernel belongs to the family of marginalized kernels \citep{Tsuda2002,Kashima2003}. Let $p(\mathsf{x},\mathsf{x'},v)$ be the probability of observing jointly a hidden variable $v \in {\cal V}$ and two observable strings $\mathsf{x},\mathsf{x'} \in \Sigma^*$. The probability $p(\mathsf{x},\mathsf{x'})$ can be obtained by marginalizing, {\it i.e.} summing over all variables $v \in {\cal V}$, the probability $p(\mathsf{x},\mathsf{x}',v)$, such that:
$$p(\mathsf{x},\mathsf{x'})=\sum_{v \in {\cal V}}p(\mathsf{x},\mathsf{x'},v)=\sum_{v \in {\cal V}}p(\mathsf{x},\mathsf{x'}|v) \cdot p(v).$$
A marginalized kernel computes this probability making the assumption that $\mathsf{x}$ and $\mathsf{x'}$ are conditionally independent given $v$, i.e.,
\begin{equation}
\label{eq:marginkernel1}
K(\mathsf{x},\mathsf{x'})=\sum_{v \in {\cal V}}p(\mathsf{x}|v) \cdot p(\mathsf{x'}|v) \cdot p(v).
\end{equation}
Note that the computation of this kernel is possible since it is assumed that ${\cal V}$ is a {\it finite set}. Let us now suppose that $p(v|\mathsf{x})$ is known instead of $p(\mathsf{x}|v)$. Then, as described in \citep{Tsuda2002}, we can use the following marginalized kernel:
\begin{equation}
\label{eq:marginkernel2}
K(\mathsf{x},\mathsf{x'})=\sum_{v \in  {\cal V}}p(v|\mathsf{x}) \cdot p(v|\mathsf{x'}) \cdot K_c(c,c'),
\end{equation}
where $K_c(c,c')$ is the joint kernel depending on combined variables $c=(\mathsf{x},v)$ and $c'=(\mathsf{x'},v)$.
An interesting way to exploit the kernel in \eref{eq:marginkernel2} as a string edit kernel is to do the following:
\begin{itemize}
\item replace the finite set ${\cal V}$ of variables $v$ by the \emph{infinite} set of strings $\mathsf{s} \in \Sigma^*$,
\item use $p(\mathsf{s}|\mathsf{x}) = p_e(\mathsf{s}|\mathsf{x})$, which is the conditional probability that a string $\mathsf{x}$ is turned into a string $\mathsf{s}$ through edit operations,
\item and take $K_c(c,c')$ to be the constant kernel that returns $1$ for all $c,c'$.
\end{itemize}
We then obtain the following new string edit kernel:
\begin{equation}
\label{eq:oureditkernel}
K_e(\mathsf{x},\mathsf{x'})=\sum_{\mathsf{s} \in  \Sigma^*}p_e(\mathsf{s}|\mathsf{x}) \cdot p_e(\mathsf{s}|\mathsf{x'}),
\end{equation}
which is PSD as it corresponds to the inner product in the Hilbert space defined by the mapping $\phi(\mathsf{x})= [p_e(\mathsf{s}|\mathsf{x})]_{\mathsf{s} \in {\Sigma^*}}$. Like the popular Gaussian kernel for feature vectors, $K_e$ projects the data into an infinite-dimensional space. Intuitively, $K_e(\mathsf{x},\mathsf{x'})$ is large when $\mathsf{x}$ and $\mathsf{x'}$ have a high probability to be turned into the same strings $\mathsf{s} \in \Sigma^*$ using edit operations.

We have already seen in \sref{sec:stringeditlearn} that there exist methods in the literature for learning $p_e(\mathsf{x'}|\mathsf{x})$ for all $\mathsf{x},\mathsf{x'}$. However, our new kernel is intractable in its current form since it involves the computation of an infinite sum over $\Sigma^*$. In the next section, we present a way of computing this infinite sum exactly and in an efficient way.

\section{Computing the Edit Kernel}
\label{sec:editkernelcomp}

While the original marginalized kernel \eqref{eq:marginkernel1} assumes that ${\cal V}$ is a finite set of variables \citep{Tsuda2002}, our string edit kernel includes an infinite sum over $\Sigma^*$. In this section, we show that (i) given two strings $\mathsf{x}$ and $\mathsf{x'}$, $p_e(\mathsf{s}|\mathsf{x})$ and $p_e(\mathsf{s}|\mathsf{x'})$ can be represented in the form of two probabilistic automata, (ii) the product $p_e(\mathsf{s}|\mathsf{x}) \cdot p_e(\mathsf{s}|\mathsf{x'})$ can be performed by intersecting the languages represented by those automata and (iii) the infinite sum over $\Sigma^*$ can then be computed by algebraic methods.

\subsection{Definitions and Notations}

We first introduce some definitions and notations regarding probabilistic transducers.

\begin{definition}
A weighted finite-state transducer (WFT) is an 8-tuple $T=(\Sigma,\Delta, \mathcal{Q},$ $\mathcal{I}, \mathcal{F}, w, \tau, \rho)$ where $\Sigma$ is the input alphabet, $\Delta$ the output alphabet, $\mathcal{Q}$ a finite set of states, $\mathcal{I} \subseteq \mathcal{Q}$ the set of initial states, $\mathcal{F} \subseteq \mathcal{Q}$ the set of final states, $w : \mathcal{Q} \times \mathcal{Q} \times (\Sigma \cup \{ \$\}) \times (\Delta \cup \{ \$\}) \to \mathbb{R}$ the transition weight function, $\tau:\mathcal{I} \to \mathbb{R}$ the initial weight function, and $\rho: \mathcal{F} \to \mathbb{R}$ the final weight function. For notational convenience, we denote $w(q_1, q_2, \mathsf{a}, \mathsf{b})$ by $w_{q_1 \to q_2}(\mathsf{a}, \mathsf{b})$ for any $q_1,q_2\in\mathcal{Q}$, $\mathsf{a}\in\Sigma$ and $\mathsf{b}\in\Delta$.
\end{definition}

\begin{definition}
A joint probabilistic finite-state transducer (jPFT) is a WFT $J=(\Sigma, \Delta, \mathcal{Q}, \mathcal{S}, \mathcal{F}, w, \tau, \rho)$ which defines a joint probability distribution over pairs of strings $\{(\mathsf{x},\mathsf{x'}) \in \Sigma^*\times \Delta^* \}$. A jPFT must satisfy the following four constraints:
\begin{enumerate}
\item The initial, final and transition weights have nonnegative values.
\item $\sum_{i \in \mathcal{S}}\tau (i) = 1$,
\item $\sum_{f \in \mathcal{F}}\rho (f)= 1$,
\item $\forall q_1 \in \mathcal{Q} : \displaystyle\sum_{q_2 \in \mathcal{Q}, \mathsf{a} \in \Sigma \cup \{\$ \}, \mathsf{b} \in \Delta \cup \{ \$ \}} w_{q_1 \to q_2}(\mathsf{a}, \mathsf{b}) = 1.$
\end{enumerate}
\end{definition}

\begin{definition}
A conditional probabilistic finite-state transducer (cPFT) is a WFT $C=(\Sigma, \Delta, Q, S, F, w, \tau, \rho)$ which defines a conditional probability distribution over the output strings $\mathsf{x'} \in \Delta^*$ given an input string $\mathsf{x}\in \Sigma^*$. For $q_1,q_2\in\mathcal{Q}$, $\mathsf{a}\in\Sigma$ and $\mathsf{b}\in\Delta$, we denote the transition $w_{q_1 \to q_2}(a,b)$ in the conditional form $w_{q_1 \to q_2}(b|a)$.
A cPFT must satisfy the same first two constraints as those of a jPFT and the following third constraint \citep[see][for a proof]{Oncina2006}:
$$\forall q_1 \in \mathcal{Q}, \forall \mathsf{a} \in \Sigma : \sum_{q_2 \in \mathcal{Q}, \mathsf{b} \in \Delta \cup \{ \$ \}} w_{q_1 \to q_2}(\mathsf{b}|\mathsf{a}) + w_{q_1 \to q_2}(\mathsf{b}|\$) = 1.$$
\end{definition}

An example of memoryless cPFT (i.e., with only one state) is shown in \fref{fig:memoryless}, where $\Sigma=\Delta=\{a,b\}$ and $\mathcal{Q}$ is composed of only one state labeled by $0$. Initial states are designated by an inward arrow that has no source state, while final states are denoted by a double circle. In \fref{fig:memoryless}, state $0$ is both initial and final.

In the following, since our string edit kernel is based on conditional edit probabilities, we will assume that a cPFT has already been learned by one of the previously mentioned methods, for instance that of \citet{Oncina2006} that we describe in more details in \aref{app:pr} for the sake of completeness. Note that the cPFT is learned only once and is then used to compute our edit kernel for any pair of strings. If a generative model is used to learn the edit parameters \citep[e.g., that of][]{Ristad1998}, the resulting jPFT can be renormalized into a cPFT a posteriori.

\begin{figure}[t]
  \begin{center}
  \includegraphics[width=0.4\textwidth]{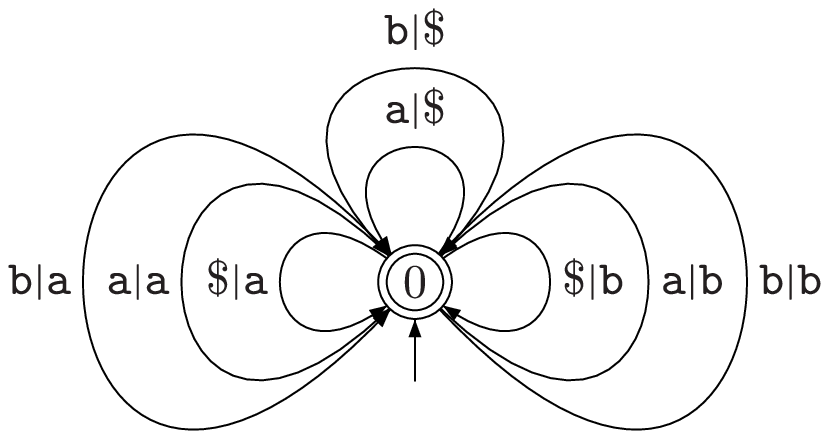}
  \caption[An example of memoryless cPFT]{A memoryless cPFT that can be used to compute the edit conditional probability of any pair of strings. Edit probabilities assigned to each transition are not shown here for the sake of readability.}
  \label{fig:memoryless}
  \end{center}
\end{figure}

\subsection{Modeling $p_e(\mathsf{s}|\mathsf{x})$ and $p_e(\mathsf{s}|\mathsf{x'})$ with Probabilistic Automata}

Since our edit kernel $K_e(\mathsf{x},\mathsf{x'})$ depends on two observable strings $\mathsf{x}$ and $\mathsf{x'}$, it is possible to represent the  distributions $p_e(\mathsf{s}|\mathsf{x})$ and $p_e(\mathsf{s}|\mathsf{x'})$ in the form of probabilistic state machines, where only $\mathsf{s}$ is a hidden variable. Given a cPFT $T$ modeling the edit probabilities and a string $\mathsf{x}$, we can define a new cPFT driven by $\mathsf{x}$, denoted by $T|\mathsf{x}$, that models $p_e(\mathsf{s}|\mathsf{x})$.

\begin{definition}
  Let $T=(\Sigma, \Delta, \mathcal{Q}, \mathcal{S}, \mathcal{F}, w, \tau, \rho)$ be a cPFT that models $p_e(\mathsf{x'}|\mathsf{x})$, {\bf $\forall \mathsf{x'} \in \Delta^*, \forall \mathsf{x} \in \Sigma^*$}. We define $T|\mathsf{x}$ as a cPFT that models $p_e(\mathsf{s}|\mathsf{x}), \forall \mathsf{s} \in \Delta^*$ and a specific observable $\mathsf{x} \in \Sigma^*$. $T|\mathsf{x}=(\Sigma, \Delta, \mathcal{Q}', \mathcal{S}', \mathcal{F}', w', \tau', \rho')$ with:
  \begin{itemize}
  \item $\mathcal{Q}' =  \{ [\mathsf{x}]_i \} \times \mathcal{Q}$ where $[\mathsf{x}]_i$ is the prefix of length $i$ of $\mathsf{x}$ (note that $[\mathsf{x}]_0 = \$$). In other words, $\mathcal{Q}'$ is a finite set of states labeled by the current prefix of $\mathsf{x}$ and its corresponding state during its parsing in $T$.\footnote{This specific notation is required to deal with nonmemoryless cPFT.}
  \item $\mathcal{S}' = \{ (\$, q) \}$ where $q \in \mathcal{S}$;
  \item $\forall q \in \mathcal{S}, \tau'((\$, q)) = \tau(q)$;
  \item $\mathcal{F}' = \{ (\mathsf{x}, q) \}$ where $q \in \mathcal{F}$;
  \item $\forall q \in \mathcal{F}, \rho'((\mathsf{x}, q)) = \rho(q)$;
  \item the  following two rules are used to define the transition weight function:
    \begin{itemize}
    \item $\forall \mathsf{b} \in \Delta \cup \{\$\}, \forall q_1,q_2 \in \mathcal{Q},  w'_{([\mathsf{x}]_i, q_1) \to ([\mathsf{x}]_{i+1}, q_2)} (\mathsf{b}|\mathsf{x_{i+1}}) = w_{q_1 \to q_2}(\mathsf{b}|\mathsf{x_{i+1}})$,
    \item $\forall \mathsf{b} \in \Delta, \forall q_1,q_2 \in \mathcal{Q}, w'_{([\mathsf{x}]_i, q_1) \to ([\mathsf{x}]_i, q_2)} (b|\$) = w_{q_1 \to q_2}(\mathsf{b}|\$)$.
    \end{itemize}
  \end{itemize}
\end{definition}

As an example, given two strings $\mathsf{x}=\mathtt{a}$ and $\mathsf{x'}=\mathtt{ab}$, \fref{fig:PDFA} shows the cPFT $T|\mathtt{a}$ and $T|\mathtt{ab}$ constructed from the memoryless transducer $T$ given in \fref{fig:memoryless}. Roughly speaking, $T|\mathtt{a}$ and $T|\mathtt{ab}$ model the output languages that can be generated through edit operations from $\mathsf{x}$ and $\mathsf{x'}$ respectively. Therefore, from these state machines, we can generate output strings and compute the conditional edit probabilities $p_e(\mathsf{s}|\mathsf{x})$ and $p_e(\mathsf{s}|\mathsf{x'})$ for any string $\mathsf{s} \in \Delta^*$. Note that the cycles outgoing from each state model the possible insertions before and after reading an input symbol.

Since the construction of $T|\mathsf{x}$ and $T|\mathsf{x'}$ is driven by the parsing of $\mathsf{x}$ and $\mathsf{x'}$ in $T$, we can omit the input alphabet $\Sigma$. 
Therefore, a transducer $T|\mathsf{x}=(\Sigma, \Delta, \mathcal{Q}, \mathcal{S}, \mathcal{F}, w, \tau, \rho)$ can be reduced to a finite-state automaton $A|\mathsf{x}=(\Delta, \mathcal{Q}, \mathcal{S}, \mathcal{F}, w', \tau, \rho)$. The transitions of $A|\mathsf{x}$ are derived from $w$ in the following way: $w'_{q_1 \to q_2}(\mathsf{b}) = w_{q_1 \to q_2}(\mathsf{b}|\mathsf{a}), \forall \mathsf{b} \in \Delta \cup \{\$\}, \forall \mathsf{a} \in \Sigma \cup \{\$\}, \forall q_1,q_2 \in \mathcal{Q}$. For example, \fref{fig:from_t_to_a} shows the resulting automata deduced from the cPFT $T|\mathtt{a}$ and $T|\mathtt{ab}$ depicted in \fref{fig:PDFA}.



\begin{figure}[t]
  \begin{center}
  \includegraphics[width=0.7\textwidth]{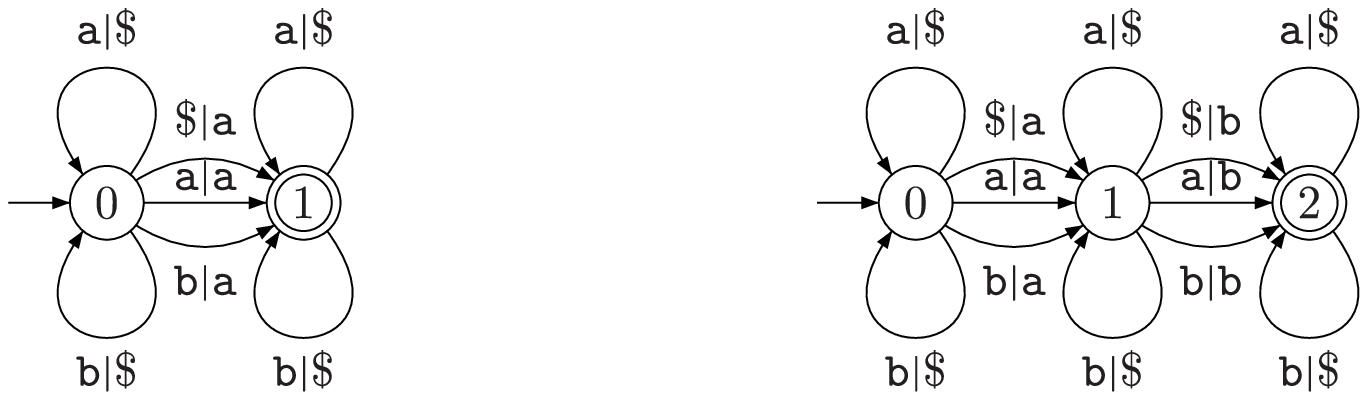}
  \caption[Two cPFT $T|\mathtt{a}$ and $T|\mathtt{ab}$ modeling $p_e(\mathsf{s}|\mathtt{a})$ and $p_e(\mathsf{s}|\mathtt{ab})$]{On the left: a cPFT $T|\mathtt{a}$ that models the output distribution conditionally to an input string $\mathsf{x}=\mathtt{a}$. For the sake of readability, state $0$ stands for $(\$,0)$, and state $1$ for $(\mathtt{a},0)$. On the right: a cPFT $T|\mathtt{ab}$ that models the output distribution given $\mathsf{x'}=\mathtt{ab}$. Again, $0$ stands for $(\$,0)$, $1$ for $(\mathtt{a},0)$, and $2$ for $(\mathtt{ab},0)$.}
  \label{fig:PDFA}
  \end{center}
\end{figure}

\begin{figure}[t]
  \begin{center}
  \includegraphics[width=0.7\textwidth]{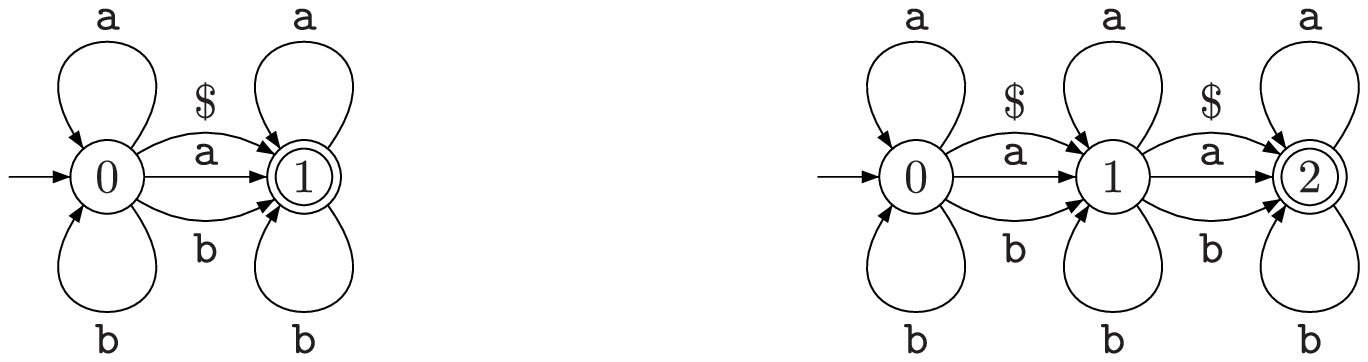}
  \caption[The cPFT $T|\mathtt{a}$ and $T|\mathtt{ab}$ represented in the form of automata]{The cPFT $T|\mathtt{a}$ and $T|\mathtt{ab}$ of \fref{fig:PDFA} represented in the form of automata.}
  \label{fig:from_t_to_a}
  \end{center}
\end{figure}

\subsection{Computing the Product $p_e(\mathsf{s}|\mathsf{x}) \cdot p_e(\mathsf{s}|\mathsf{x'})$}

The next step for computing our kernel $K_e(\mathsf{x},\mathsf{x'})$ is to compute the product $p_e(\mathsf{s}|\mathsf{x}) \cdot p_e(\mathsf{s}|\mathsf{x'})$. This can be performed by modeling the language that describes the intersection of the automata corresponding to $p_e(\mathsf{s}|\mathsf{x})$ and $p_e(\mathsf{s}|\mathsf{x'})$. This intersection can be obtained by performing a composition of transducers \citep{Cortes2004}. As mentioned by the authors, composition is a fundamental operation on weighted transducers that can be used to create complex weighted transducers from simpler ones. In this context, note that the intersection of two probabilistic automata (such as those of \fref{fig:from_t_to_a}) is a special case of composition where the input and output transition labels are identical. This intersection takes the form of a probabilistic automaton as defined below.

\begin{definition}
Let $T$ be a cPFT modeling conditional edit probabilities. Let $\mathsf{x}$ and $\mathsf{x'}$ be two strings of $\Sigma^*$. Let $A|\mathsf{x}=(\Delta, \mathcal{Q}, \mathcal{S}, \mathcal{F}, w, \tau, \rho)$ and $A|\mathsf{x'}=(\Delta, \mathcal{Q}', \mathcal{S}', \mathcal{F}', w', \tau', \rho')$ be the automata deduced from $T$ given the observable strings $\mathsf{x}$ and $\mathsf{x'}$. We define the intersection of $A|\mathsf{x}$ and $A|\mathsf{x'}$ as the automaton $A|\mathsf{x},\mathsf{x'}=(\Delta, \mathcal{Q}^A, \mathcal{S}^A, \mathcal{F}^A, w^A, \tau^A, \rho^A)$ such that:
\begin{itemize}
\item $\mathcal{Q}^A=\mathcal{Q} \times \mathcal{Q}'$,
\item $\mathcal{S}^A=\{(q,q')\}$ with $q \in \mathcal{S}$ and $q' \in \mathcal{S}'$,
\item $\mathcal{F}^A=\{(q,q')\}$ with $q \in \mathcal{F}$ and $q' \in \mathcal{F}'$,
\item $w^A_{(q_1,q'_1) \to (q_2,q'_2)}(\mathsf{b}) = w_{q_1 \to q_2}(\mathsf{b}) \cdot w'_{q'_1 \to q'_2}(\mathsf{b})$,
\item $\tau^A((q,q')) = \tau(q) \cdot \tau(q')$,
\item $\rho^A((q,q')) = \rho(q) \cdot \rho(q')$.
\end{itemize}
\end{definition}

\fref{fig:interPDFA} shows the intersection automaton of the two automata from \fref{fig:from_t_to_a}.
Let us now describe how this intersection automaton can be used to compute the infinite sum over $\Sigma^*$.

\begin{figure}[t]
  \begin{center}
  \includegraphics[width=0.6\textwidth]{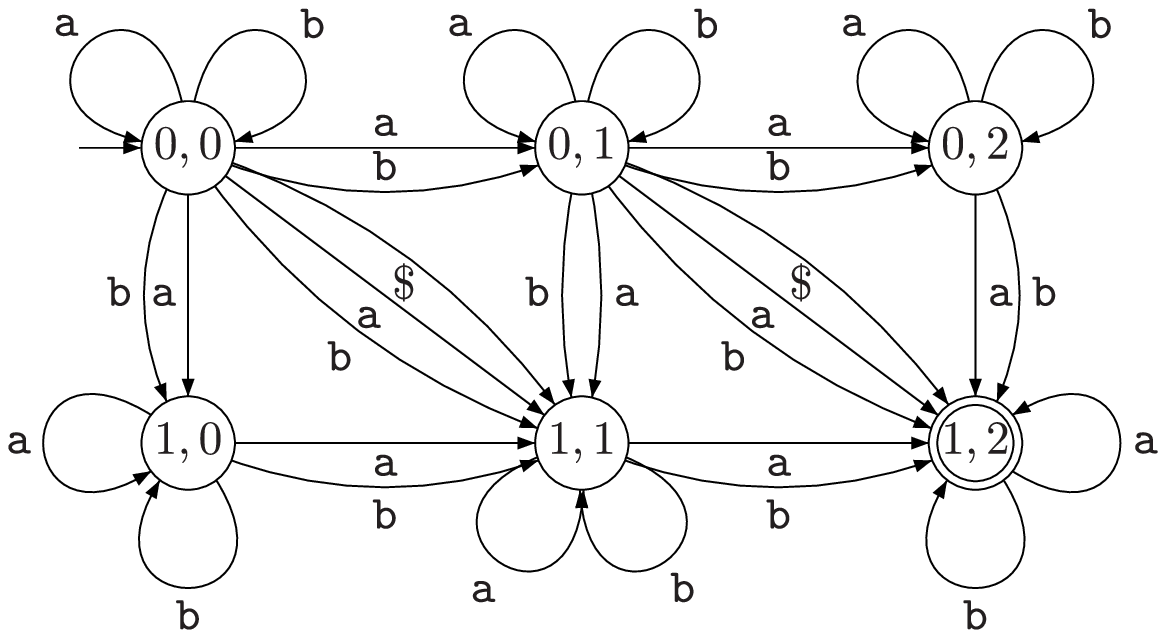}
  \caption[Automaton modeling the intersection of the automata of Figure~\ref*{fig:from_t_to_a}]{Automaton modeling the intersection of the automata of \fref{fig:from_t_to_a}.}
  \label{fig:interPDFA}
  \end{center}
\end{figure}

\subsection{Computing the Sum over $\Sigma^*$}

To simplify the notations, let $p(\mathsf{s})=p_e(\mathsf{s}|\mathsf{x}) \cdot p_e(\mathsf{s}|\mathsf{x'})$ be the probability that a string $\mathsf{s}$ is generated by an intersection automaton $A=\{\Sigma, \mathcal{Q}, \mathcal{S}, \mathcal{F}, w,\tau, \rho\}$ and $\Sigma=\{\mathsf{a_1},\dots,\mathsf{a}_{|\Sigma|}\}$ be the alphabet.

For each $\mathsf{a_k} \in \Sigma$, let $\mathbf{M_{\mathsf{a_k}}}$ be the $|\mathcal{Q}| \times |\mathcal{Q}|$ matrix gathering the probabilities $M_{\mathsf{a_k},q_i,q_j}=w_{q_i \to q_j}(\mathsf{a_k})$ that the transition going from state $q_i$ to state $q_j$ in $A$ outputs the symbol $\mathsf{a_k}$. For notational convenience, we denote this probability by $M_{\mathsf{a_k}}(q_i,q_j)$.

Now, given a string $\mathsf{s}=\mathsf{s_1}\dots\mathsf{s_t}$, $p(\mathsf{s})$ can be rewritten as follows:
\begin{equation}
\label{eq:pz}
p(\mathsf{s})=p(\mathsf{s_1}\dots\mathsf{s_t})=\boldsymbol{\tau}^T \mathbf{M_{\mathsf{s_1}}}\cdots\mathbf{M_{\mathsf{s_t}}}\boldsymbol{\rho}=\boldsymbol{\tau}^T\mathbf{M_\mathsf{s}}\boldsymbol{\rho},
\end{equation}
where $\boldsymbol{\tau}$ and $\boldsymbol{\rho}$ are two vectors of dimension $|\mathcal{Q}|$ whose components are the values returned by the weight function $\tau$ ($\forall q \in S$) and $\rho$ ($\forall q \in F$) respectively, and $\mathbf{M_\mathsf{s}}=\mathbf{M_{\mathsf{s_1}}}\cdots\mathbf{M_{\mathsf{s_t}}}$.

From \eref{eq:pz}, we get:
\begin{equation}
\label{eq:spz}
\sum_{\mathsf{s} \in \Sigma^*}p(\mathsf{s})=\sum_{\mathsf{s} \in \Sigma^*}\boldsymbol{\tau}^T\mathbf{M_\mathsf{s}}\boldsymbol{\rho}.
\end{equation}

To take into account all possible strings $\mathsf{s} \in \Sigma^*$, \eref{eq:spz} can be rewritten according to the size of the string $\mathsf{s}$:
\begin{equation}
\label{eq:eq1}
\sum_{\mathsf{s} \in \Sigma^*}p(\mathsf{s})=\sum_{i=0}^{\infty}\boldsymbol{\tau}^T(\mathbf{M_{\mathsf{a_1}}}+\mathbf{M_{\mathsf{a_2}}}+\dots+\mathbf{M_{\mathsf{a}_{|\Sigma|}}})^i\boldsymbol{\rho}=\boldsymbol{\tau}^T\sum_{i=0}^{\infty}\mathbf{M^i}\boldsymbol{\rho},
\end{equation}
where $\mathbf{M}=\mathbf{M_{\mathsf{a_1}}}+\mathbf{M_{\mathsf{a_2}}}+\dots+\mathbf{M_{\mathsf{a}_{|\Sigma|}}}$.
Denoting $\sum_{i=0}^{\infty}\mathbf{M^i}$ by $\mathbf{B}$, note that
\begin{equation}
\label{eq:eqA}
\mathbf{B}=\mathbf{I}+\mathbf{M}+\mathbf{M^2}+\mathbf{M^3}+\dots,
\end{equation}
where $\mathbf{I}$ is the identity matrix. Multiplying $\mathbf{B}$ by $\mathbf{M}$ we get
\begin{equation}
\label{eq:eqMA}
\mathbf{MB}=\mathbf{M}+\mathbf{M^2}+\mathbf{M^3}+\dots,
\end{equation}
and subtracting \eref{eq:eqA} from \eref{eq:eqMA}, we get:
\begin{equation}
\label{eq:eqMAA}
\mathbf{B}-\mathbf{MB}=\mathbf{I} \Leftrightarrow \mathbf{B}=(\mathbf{I}-\mathbf{M})^{-1}.
\end{equation}
Finally, plugging \eref{eq:eqMAA} in \eref{eq:eq1}, we get a tractable expression for our kernel:
\begin{equation}
\label{eq:eq2}
K_e(\mathsf{x},\mathsf{x'})=\sum_{\mathsf{s} \in \Sigma^*}p_e(\mathsf{s}|\mathsf{x}) \cdot p_e(\mathsf{s}|\mathsf{x'})=\boldsymbol{\tau}^T\sum_{i=0}^{\infty}\mathbf{M^i}\boldsymbol{\rho}=\boldsymbol{\tau}^T(\mathbf{I}-\mathbf{M})^{-1}\boldsymbol{\rho}.
\end{equation}

\subsection{Tractability}

In this section, we investigate the complexity of computing $K_e(\mathsf{x},\mathsf{x'})$ using \eref{eq:eq2} given two strings $\mathsf{x}$ and $\mathsf{x'}$. As we have seen in the previous section, this is essentially done by inverting a matrix. 
Let $T$ be the cPFT modeling the edit probabilities and $t$ its number of states.

The weighted automaton $T|\mathsf{x}$ describing $p_e(\mathsf{s}|\mathsf{x})$ has $t\cdot (|\mathsf{x}| + 1)$ states, and $T|\mathsf{x'}$ describing $p_e(\mathsf{s}|\mathsf{x'})$ has $t\cdot (|\mathsf{x'}| + 1)$ states (see \fref{fig:from_t_to_a} for an example). Thus, the matrix $(\mathbf{I}-\mathbf{M})$ has dimension $t^2\cdot (|\mathsf{x}| +1)\cdot (|\mathsf{x'}|+1)$.
The computational cost of each element of this matrix linearly depends on the alphabet size $|\Sigma|$. Therefore, the complexity of computing the entire matrix is $O(t^4\cdot|\mathsf{x}|^2\cdot|\mathsf{x'}|^2\cdot|\Sigma|)$. Since $\mathbf{M}$ is triangular by construction of $A|\mathsf{x},\mathsf{x'}$ (the probability of going back to a previous state is zero), the matrix inversion $(\mathbf{I}-\mathbf{M})^{-1}$ can be performed by back substitution, avoiding the complications of general Gaussian elimination. The cost of the inversion is in order of the square of the matrix dimension, that is $O(t^4\cdot|\mathsf{x}|^2\cdot|\mathsf{x'}|^2)$. This leads to an overall cost of
$$O(t^4\cdot |\mathsf{x}|^2\cdot |\mathsf{x'}|^2\cdot|\Sigma|).$$

Recall that $t$ stands for the size of the model $T$. In the case of memoryless models such as that of \citet{Oncina2006} used in the experiments, $t=1$ and thus the complexity is reduced to
$$O(|\mathsf{x}|^2\cdot |\mathsf{x'}|^2\cdot |\Sigma|).$$
Therefore, in the case of a memoryless transducer, and for small alphabet sizes, the computational cost of our edit kernel is ``only'' the square of that of the standard edit distance.

Despite the fact that $\mathbf{M}$ is triangular, the algorithmic complexity remains high when strings are long and/or when the alphabet size is large. In this case, we may approximate our kernel $K_e(\mathsf{x},\mathsf{x'})$ by computing a finite sum over the training strings $\mathsf{s}\in\mathcal{T}$. Therefore, we get
$$\hat{K_e}(\mathsf{x},\mathsf{x'})=\sum_{\mathsf{s} \in \mathcal{T}}p_e(\mathsf{s}|\mathsf{x}) \cdot p_e(\mathsf{s}|\mathsf{x'}).$$
Since the computational complexity of each probability $p_e(\mathsf{s}|\mathsf{x})$ scales in $|\mathsf{x}| + |\mathsf{s}|$, the average cost of a kernel evaluation is
$$(|\mathsf{x}| + |\mathsf{x'}| + \overline{|\mathsf{s}|})\cdot|\mathcal{T}|,$$
where $\overline{|\mathsf{s}|}$ is the average length of the training strings.

In conclusion, even if our kernel is rather costly from a complexity point of view, it can be derived from any transducer modeling edit probabilities, and may be approximated if needed. In the next section, we provide experimental evidence that our kernel outperforms standard and learned edit distances plugged in $k$-NN as well as standard string kernels.

\section{Experimental Validation}
\label{sec:prexpes}

\subsection{Setup}

To assess the relevance of our string edit kernel, we carry out experiments on the well-known NIST Special Database 3 of the National Institute of Standards and Technology, which is a handwritten character dataset.

We focus on the set of 10,000 handwritten digits given as $128 \times 128$ bitmap images. We use a training sample of about $8,000$ instances and a test sample of $2,000$ instances. Each instance is represented by a string of Freeman codes \citep{Freeman1974}. To encode a digit, the algorithm scans the bitmap from left to right, starting from the top until reaching the first pixel of the digit. It then follows the contour of the digit until it returns to the starting pixel. The string coding the digit is the sequence of Freeman codes representing the successive directions of the contour. \fref{fig:freeman} shows an example of this encoding procedure.

\begin{figure}[t]
\begin{center}
\includegraphics[width=0.7\textwidth]{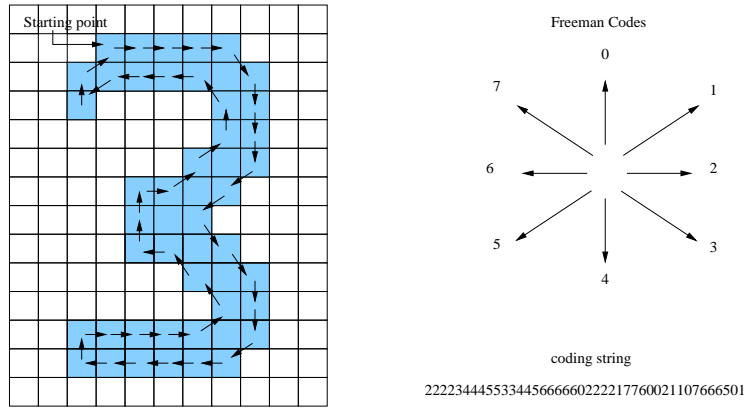}
\caption[A handwritten digit and its string representation]{A handwritten digit and its string representation.}
\label{fig:freeman}
\end{center}
\end{figure}

We use SVM-Light\footnote{\url{http://svmlight.joachims.org/}} as the SVM implementation to compare our approach with other string kernels and adopt a one-versus-all approach to deal with the multi-class setting. This consists in learning a model $M_i$ for each class, where $M_i$ is learned from a positive class made of digits labeled $i$ and a negative class made of differently labeled digits.
Then, the class of a test instance $\mathsf{x}$ is determined as follows: we compute the margin $M_i(\mathsf{x})$ for each model $M_i$. A high positive value of $M_i(\mathsf{x})$ represents a high probability for $\mathsf{x}$ to be of class $i$. The predicted class of $\mathsf{x}$ is given by $\argmax_i M_i(\mathsf{x})$.

\subsection{Comparison with Edit Distances}

As done by \citet{Neuhaus2006}, our first objective is to compare our edit kernel $K_e$ with edit distances used in a $k$-NN algorithm. We use two edit distances: (i) the standard Levenshtein edit distance $d_{lev}$ with all costs set to 1, and (ii) a stochastic edit dissimilarity $d_e(\mathsf{x},\mathsf{x'})=-\log p_e(\mathsf{x'}|\mathsf{x})$ learned with {\sc SEDiL} \citep{Boyer2008}, a software that implements (among others) the method of \citet{Oncina2006}.

We assess the performance of a 1-nearest neighbor algorithm using $d_{lev}$ and $d_e$, and compare them with our string edit kernel plugged in a SVM classifier. Note that the conditional edit probabilities $p_e(\mathsf{x'}|\mathsf{x})$ used in our edit kernel are the same as those used in $d_e(\mathsf{x},\mathsf{x'})$. Results are shown in \fref{fig:expe_1NN} with respect to an increasing number of training instances (from $100$ to $8,000$).

\begin{figure}[t]
\begin{center}
\psfrag{Size of training sample}[][][0.8]{\textbf{Size of training sample}}
\psfrag{Accuracy}[][][0.8]{\textbf{Accuracy on 2,000 test strings}}
\psfrag{DEDEDEDEDE}[][][0.8]{1-NN with $d_{lev}$}
\psfrag{PEPEPEPEPE}[][][0.8]{1-NN with $d_e$}
\psfrag{KEKEKEKEKE}[][][0.8]{SVM with $K_e$}
\includegraphics[width=0.7\textwidth]{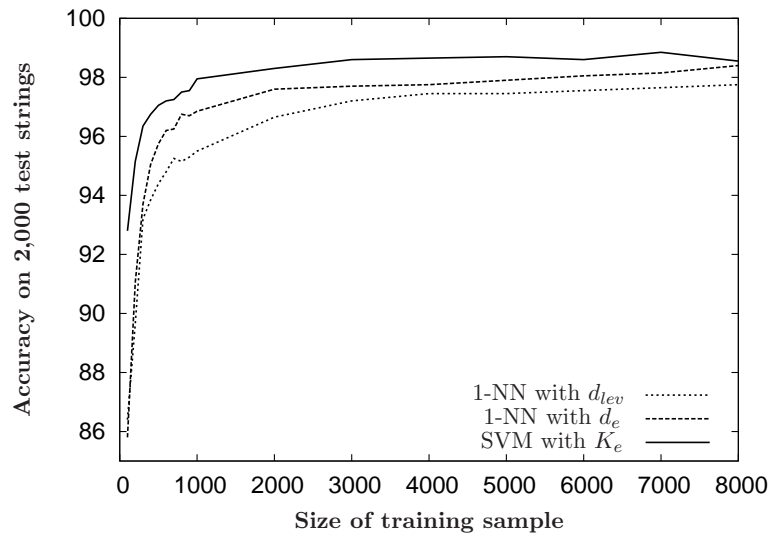}
\caption[Comparison of our edit kernel with edit distances]{Comparison of our edit kernel with edit distances on a handwritten digit recognition task.}
\label{fig:expe_1NN}
\end{center}
\end{figure}

We can make the following remarks:
\begin{itemize}
\item First, learning an edit distance $d_e$ on this classification task leads to better results than using the standard edit distance $d_{lev}$. Indeed, the accuracy of $d_e$ is always higher than that of $d_{lev}$ regardless of the size of the training sample.
\item Second, $K_e$ outperforms both the standard edit distance $d_{lev}$ and the learned edit distance $d_e$ for all training sample sizes. This highlights the usefulness of our kernel.
\end{itemize}

We estimate the statistical significance of these results using a Student's paired $t$-test. \tref{tab:test} contains the $p$-values obtained when 
comparing our kernel with $d_{lev}$ and $d_e$. Using a risk of 5\%, the difference is almost always significant in favor of our kernel (shown in boldface in the table).

\begin{table}[t]
\begin{center}
\begin{scriptsize}
\begin{tabular}{rcccccccc}
\toprule
  \textbf{Training sample size} & \textbf{1,000} & \textbf{2,000} & \textbf{3,000} & \textbf{4,000} & \textbf{5,000} & \textbf{6,000} & \textbf{7,000} & \textbf{8,000}\\
\midrule
  $K_e$ vs $d_{lev}$ & {\bf 6E-06} & {\bf 4E-04} & {\bf 1E-03} & {\bf 3E-03} & {\bf 2E-03} & {\bf 8E-03} & {\bf 2E-03} & {\bf 3E-02}\\
  $K_e$ vs $d_e$ & {\bf 1E-02} & 6E-02 & {\bf 2E-02} & {\bf 2E-02} & {\bf 2E-02} & 9E-02 & {\bf 3E-02} & 3E-01\\
\bottomrule
\end{tabular}
\end{scriptsize}
\caption[Statistical comparison of our edit kernel with edit distances]{Statistical comparison of our edit kernel with standard and learned edit distances ($p$-values of a Student's paired $t$-test). Boldface indicates that the difference is significant in favor of our kernel using a risk of 5\%.}
\label{tab:test}
\end{center}
\end{table}

These results are positive but not quite fair since our edit kernel is plugged into a SVM classifier while the edit distances are plugged into a $k$-NN classifier. In the next section, we compare $K_e$ with other string kernels of the literature.

\subsection{Comparison with Other String Kernels}

In this second series of experiments, we compare $K_e$ with:
\begin{itemize}
\item two classic string kernels, the spectrum kernel \citep{Leslie2002} and the subsequence kernel \citep{Lodhi2002},
\item a variant of the edit kernel of \citet{Li2004} based on learned edit probabilities:\footnote{$K_{L\&J}$ is made symmetric by adding $p_e(\mathsf{x'}|\mathsf{x})$ and $p_e(\mathsf{x}|\mathsf{x'})$.}
$$K_{L\&J}(\mathsf{x'},\mathsf{x}) = e^{\frac{1}{2}t(\log p_e(\mathsf{x'}|\mathsf{x}) + \log p_e(\mathsf{x}|\mathsf{x'}))},$$
\item and the edit kernel $K_{N\&B}$ \citep{Neuhaus2006} in its original version since it cannot accommodate $p_e$ in a straightforward way.
\end{itemize}
Recall that these kernels were presented in \sref{sec:metricstruct}. We did not include the local alignment kernel \citep{Saigo2004} in this experimental study since it is based on local alignments and is specific to finding remote homologies in protein sequences.

The parameter $p$ specifying the length of the considered subsequences in the spectrum and the subsequence kernel was set to 2. The subsequence kernel also has a parameter $\lambda$ which is used to give less importance to subsequences with large gaps. We set $\lambda$ to 2. The parameter $t$ of $K_{L\&J}$ was set to $0.02$. These parameter values give the best results on the dataset.

\fref{fig:expe_all} shows the results we obtain with the considered kernels. We first note that the best results are obtained with edit kernels. As in the previous experiment, \tref{tab:test2} gives the $p$-values of the Student's $t$-test. Our edit kernel significantly outperforms all other string kernels except $K_{L\&J}$: both kernels perform comparably, and the difference for a given training sample size is not significant. However, $K_e$ gives slightly better results for most training sample sizes (12 times out of 17): if a sign test is used, this yields a $p$-value of 0.07, indicating that the difference is significant with a risk of 7\%.

It is also important to keep in mind that $K_{L\&J}$ is not guaranteed to be a valid kernel and thus the parameter $t$ must be tuned with care. \fref{fig:tuning} demonstrates that this kernel can perform poorly if $t$ is not tuned properly. Unlike $K_{L\&J}$, our edit kernel is guaranteed to be valid and is parameter-free.

\begin{figure}[t]
\begin{center}
\psfrag{Size of training sample}[][][0.8]{\textbf{Size of training sample}}
\psfrag{Accuracy}[][][0.8]{\textbf{Accuracy on 2,000 test strings}}
\psfrag{SPECTRUMMM}[][][0.8]{Spectrum\hspace*{-1.1cm}}
\psfrag{SUBSEQUENC}[][][0.8]{Subsequence\hspace*{-0.5cm}}
\psfrag{LIJIANGLIJ}[][][0.8]{$K_{L\&J}$\hspace*{-0.9cm}}
\psfrag{NEUHAUSBUN}[][][0.8]{$K_{N\&B}$\hspace*{-1.6cm}}
\psfrag{KEKEKEKEKE}[][][0.8]{$K_e$\hspace*{-2cm}}
\includegraphics[width=0.7\textwidth]{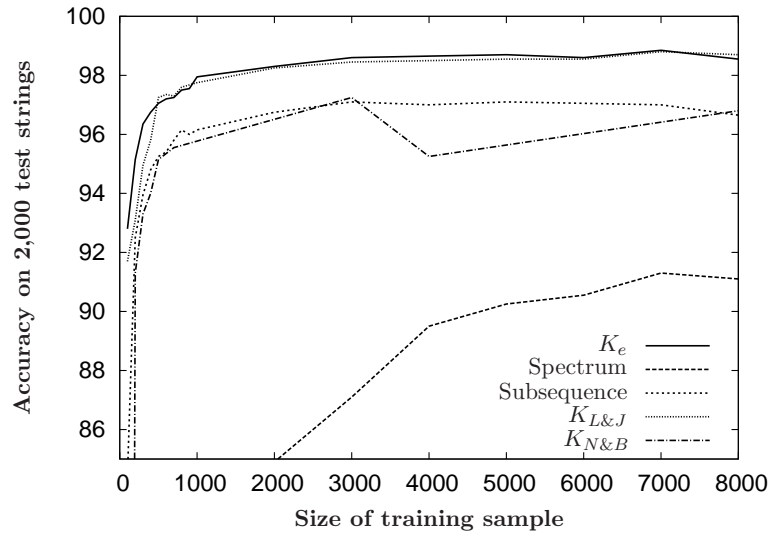}
\caption[Comparison of our edit kernel with other string kernels]{Comparison of our edit kernel with other string kernels on a handwritten digit recognition task.}
\label{fig:expe_all}
\end{center}
\end{figure}

\begin{table}[t]
\begin{center}
\begin{scriptsize}
\begin{tabular}{rcccccccc}
\toprule
  \textbf{Training sample size} & \textbf{1,000} & \textbf{2,000} & \textbf{3,000} & \textbf{4,000} & \textbf{5,000} & \textbf{6,000} & \textbf{7,000} & \textbf{8,000}\\
\midrule
  $K_e$ vs spectrum & {\bf  0} & {\bf  0} & {\bf  0} & {\bf  0} & {\bf  0} & {\bf  0} & {\bf  0}  & {\bf  0}\\
  $K_e$ vs subsequence & {\bf 4E-04} & {\bf 8E-04} & {\bf 5E-04} & {\bf 2E-04} & {\bf 2E-04} & {\bf 4E-04} & {\bf 2E-05} & {\bf 4E-05}\\
  $K_e$ vs $K_{L\&J}$ & 3E-01 & 4E-01 & 3E-01 & 3E-01 & 3E-01 & 4E-01 & 4E-01 & 7E-01\\
  $K_e$ vs $K_{N\&B}$ & {\bf 6E-06} & {\bf 6E-06} & {\bf 1E-03} & {\bf 2E-10} & {\bf 6E-09} & {\bf 4E-08} & {\bf 4E-07} & {\bf 1E-04}\\
\bottomrule
\end{tabular}
\end{scriptsize}
\caption[Statistical comparison of our edit kernel with other string kernels]{Statistical comparison of our edit kernel with other string kernels ($p$-values of a Student's paired $t$-test). Boldface indicates that the difference is significant in favor of our kernel using a risk of 5\%.}
\label{tab:test2}
\end{center}
\end{table}

\begin{figure}[t]
\begin{center}
\psfrag{Value of t}[][][0.8]{\textbf{Value of $t$}}
\psfrag{Accuracy}[][][0.8]{\textbf{Accuracy on 2,000 test strings}}
\psfrag{LIJIANGLIJ}[][][0.8]{$K_{L\&J}$\hspace*{-0.9cm}}
\includegraphics[width=0.7\textwidth]{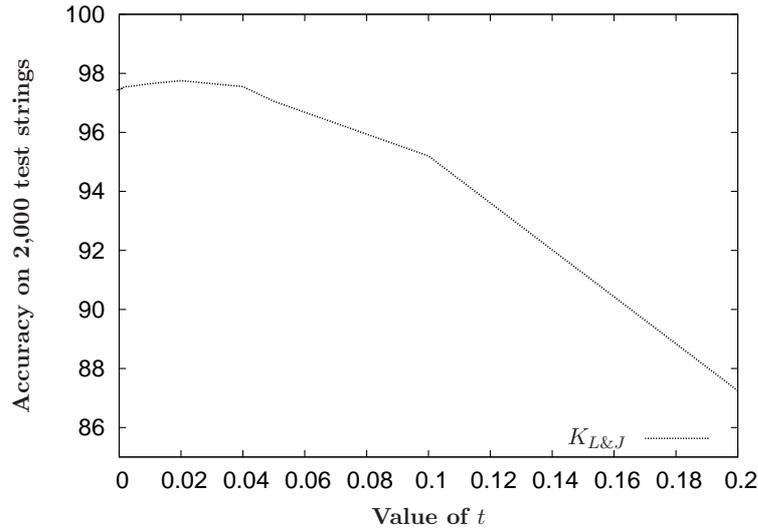}
\caption[Influence of the parameter $t$ of $K_{L\&J}$]{Influence of the parameter $t$ of $K_{L\&J}$ (1,000 training strings).}
\label{fig:tuning}
\end{center}
\end{figure}

\section{Conclusion}
\label{sec:prconclu}

In this chapter, we designed a new string edit kernel that can make use of edit probabilities learned with generative or discriminative probabilistic models while enjoying the classification performance brought by SVM. We showed that although it involves an infinite sum over an entire language, our kernel can be computed exactly through the intersection of probabilistic automata built from the edit probability model and a matrix inversion. Experiments on a handwritten digit recognition task have shown that our edit kernel outperforms standard and learned edit distance within a $k$-NN framework as well as state-of-the-art string kernels.

An interesting perspective is to improve the algorithmic complexity of our kernel. The main bottleneck in its calculation is the size of the intersection automaton that allows the computation of $p_e(\mathsf{s}|\mathsf{x})p_e(\mathsf{s}|\mathsf{x'})$. A way of reducing its size could consist in simplifying the conditional transducers $T|\mathsf{x}$ and $T|\mathsf{x'}$ from which it is built by only considering the most likely transitions and states. A simplification of these automata would have a direct impact on the dimension of the matrix that has to be inverted, and thus on the evaluation cost of the kernel.

A second perspective is to extend this work to the design of tree edit kernels. Indeed, as seen in \sref{sec:treeeditlearn}, generative and discriminative models for learning tree edit probabilities have been proposed and could be used to derive powerful tree edit kernels, based on the same ideas as in the string case.

While one of the advantages of the presented approach is to incorporate a lot of structural information by comparing inputs strings to an infinite number of strings, it makes it difficult to establish generalization guarantees. In the next chapter, we overcome this limitation by proposing a novel edit similarity learning approach that is not subject to many classic limitations of previous edit metric learning methods (in particular, those based on probabilistic models that our kernel uses) and for which we can derive a generalization bound. The idea is to relax the structural constraint on edit scripts to get an edit similarity that has a simpler form and can thus be learned through numerical optimization. The resulting (potentially non-PSD) similarity can then be used directly to build a linear classifier (that has bounded true risk), avoiding the computational cost of transforming it into a kernel. Furthermore, the linear classifiers are sparser than SVM models, speeding up prediction.

\chapter{Learning Good Edit Similarities from Local Constraints}
\label{chap:ecml}

\begin{chapabstract}
Metrics based on the edit distance are widely used to tackle problems involving string or tree-structured data. Unfortunately, as seen in \cref{chap:pr}, using them in kernel methods is often difficult and/or costly. On the other hand, the recently-proposed theory of $(\epsilon,\gamma,\tau)$-good similarity functions bridges the gap between the properties of a non-PSD similarity function and its performance in linear classification.
In this chapter, we show that this framework is well-suited to edit similarities. Furthermore, we make use of a relaxation of $(\epsilon,\gamma,\tau)$-goodness to propose a novel edit similarity learning method, GESL, that avoids the classic drawbacks of previous approaches. Using uniform stability, we derive generalization bounds that hold for a large class of loss functions and show that they can be related to the error of a linear classifier built from the similarity. We also provide experimental results on two real-world datasets highlighting that edit similarities learned with GESL induce more accurate and sparser classifiers than other (standard or learned) edit similarities.\\

The material of this chapter is based on the following international publications:\\

\bibentry{Bellet2011a}.\\

\bibentry{Bellet2011}.\\

\bibentry{Bellet2012}.
\end{chapabstract}

\section{Introduction}

As mentioned in the previous chapter, metrics based on the edit distance are widely used by practitioners when dealing with string or tree-structured data. Although they involve complex procedures, there exist a few methods (reviewed in \sref{sec:mlstruct}) for learning edit metrics for a given task. These edit metrics are typically used in a $k$-NN setting. As we have seen in \cref{chap:pr}, using them in kernel methods such as SVM requires the design of a positive semi-definite edit kernel. However, existing edit kernels are either not guaranteed to be PSD, or involve rather costly procedures \citep{Li2004,Neuhaus2006,Bellet2010}. Furthermore, there is a lack of theoretical understanding of how arbitrary similarity functions can be used to learn accurate linear classifiers.

Recently, Balcan et al. \citeyearpar{Balcan2006,Balcan2008,Balcan2008a} introduced a theory of learning with so-called $(\epsilon,\gamma,\tau)$-good similarity functions that gives intuitive, sufficient conditions for a similarity function to allow one to learn well. Essentially, a similarity function $K$ is $(\epsilon,\gamma,\tau)$-good if a $1-\epsilon$ proportion of examples are on average more similar to \emph{reasonable} examples of the same class than to \emph{reasonable} examples of the opposite class by a margin $\gamma$, where a $\tau$ proportion of examples must be \emph{reasonable}. $K$ does not have to be a metric nor positive semi-definite (PSD). They show that if $K$ is $(\epsilon,\gamma,\tau)$-good, then it can be used to build a linear separator in an explicit projection space that has margin $\gamma$ and error arbitrarily close to $\epsilon$. This separator can be learned efficiently using a linear program and tends to be sparse thanks to $L_1$ norm regularization.

The first contribution of this work is to experimentally show that this theory is well-suited to edit similarity functions and is competitive with SVM in terms of accuracy, while inducing sparser models.
Furthermore, we show that we can make use of this framework to propose a new approach to learning string and tree edit similarities which addresses the classic drawbacks of other methods in the literature, i.e., lack of generalization guarantees, high computational cost, convergence to suboptimal solution and inability to use the information brought by negative pairs.
Our approach (GESL, for Good Edit Similarity Learning) is driven by the idea of $(\epsilon,\gamma,\tau)$-goodness: we learn the edit costs so as to optimize a relaxation of the goodness of the resulting similarity function. It is based on regularized risk minimization (formulated as an efficient convex program) over some positive and negative training pairs: the similarity is thus optimized with respect to \emph{local} constraints but plugged in a \emph{global} linear classifier.
We provide an extensive theoretical study of the properties of GESL based on a notion of uniform stability adapted to metric learning \citep{Jin2009}, leading to the derivation of a generalization bound that holds for a large class of loss functions. This bound can be related to the generalization error of the linear classifier built from the similarity and is independent of the size of the alphabet, making GESL suitable for handling problems with large alphabet. To the best of our knowledge, this is the first edit metric learning method with generalization guarantees, and the first attempt to establish a theoretical relationship between a learned metric and the risk of a classifier using it. We show in a comparative experimental study that GESL has fast convergence and leads to more accurate and sparser classifiers than other (standard or learned) edit similarities.

The rest of this chapter is organized as follows. In \sref{sec:balcan}, we introduce the theory of $(\epsilon,\gamma,\tau)$-goodness. \sref{sec:ictai} features a preliminary study that provides experimental evidence that this theory is well-suited to edit similarity functions and leads to classifiers that are competitive with SVM classifiers. \sref{sec:mljlearncosts} presents GESL, our approach to learning $(\epsilon,\gamma,\tau)$-good edit similarities. We show that it is a suitable way to deal not only with strings but also with tree-structured data. We propose in \sref{sec:mljanalysis} a theoretical analysis of GESL based on uniform stability, leading to the derivation of a generalization bound. We also provide a discussion on that bound and its implications, as well as a way of deriving a bound for the case where instances have unbounded size. A wide experimental evaluation of our approach on two real-world string datasets from the natural language processing and image classification domains is provided in \sref{sec:mljexperiments}. Finally, we conclude this work in \sref{sec:mljconclu}.

\section{The Theory of $(\epsilon,\gamma,\tau)$-Good Similarity Functions}
\label{sec:balcan}

In recent work, Balcan et al. \citeyearpar{Balcan2006,Balcan2008,Balcan2008a} introduced a new theory of learning with good similarity functions. Their motivation was to overcome two major limitations of kernel theory. First, a good kernel is essentially a good similarity function, but the theory talks in terms of margin in an implicit, possibly unknown projection space, which can be a problem for intuition and design. Second, the PSD and symmetry requirement often rules out natural similarity functions for the problem at hand. As a consequence, \citet{Balcan2008a} proposed the following definition of good similarity function. 

\begin{definition}[\citeauthor{Balcan2008a}, \citeyear{Balcan2008a}]
A similarity function $K$ is an $(\epsilon,\gamma,\tau)$-good similarity function for a learning problem $P$ if there exists a (random) indicator function $R(x)$ defining a (probabilistic) set of ``reasonable points'' such that the following conditions hold:
\begin{enumerate}
\item A $1-\epsilon$ probability mass of examples $(x,y)$ satisfy
\begin{equation}
\label{eq:goodnessprop}
\mathbb{E}_{(x',y')\sim P}[yy'K(x,x')|R(x')]\geq \gamma,
\end{equation}
\item $\mathrm{Pr}_{x'}[R(x')] \geq \tau$.
\end{enumerate}
\label{def:defgoodsim}
\end{definition}

The first condition is essentially requiring that \emph{a $1-\epsilon$ proportion of examples $x$ are on average more similar to reasonable examples of the same class than to reasonable examples of the opposite class by a margin $\gamma$} and the second condition that \emph{at least a $\tau$ proportion of the examples are reasonable}.\footnote{For now, we assume that the set of reasonable points is given. The question of finding such a set is addressed later in this section.} \fref{fig:figdef} illustrates the definition on a toy example.
\begin{figure}[t]
\begin{center}
\includegraphics[width=0.6\columnwidth]{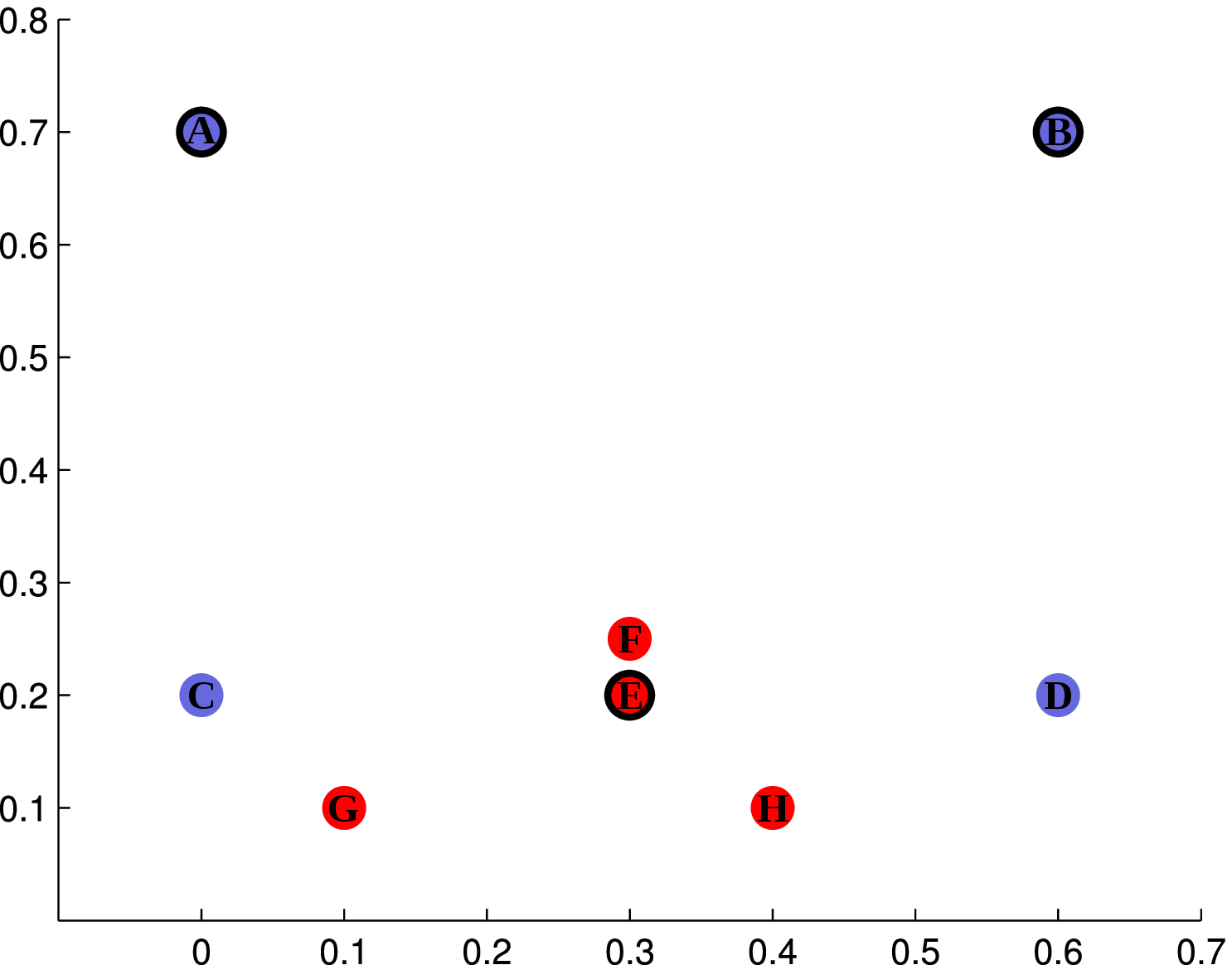}\\
\vspace{0.2cm}
\begin{tabular}{ccccccccc}
\toprule
& A & B & C & D & E & F & G & H\\
\midrule
A & 1 & 0.40 & 0.50 & 0.22 & 0.42 & 0.46 & 0.39 & 0.28\\
B & 0.40 & 1 & 0.22 & 0.50 & 0.42 & 0.46 & 0.22 & 0.37\\
E & 0.42 & 0.42 & 0.70 & 0.70 & 1 & 0.95 & 0.78 & 0.86\\
\midrule
\textbf{Margin} & 0.3277 & 0.3277 & 0.0063 & 0.0063 & 0.0554 & 0.0106 & 0.0552 & 0.0707\\
\bottomrule
\end{tabular}
\caption[A graphical insight into $(\epsilon,\gamma,\tau)$-goodness]{A graphical insight into \defref{def:defgoodsim}. Let us consider 8 points as shown above (blue represents the positive class, red the negative class) and use the similarity function $K(\mathbf{x},\mathbf{x'}) = 1-\|\mathbf{x}-\mathbf{x'}\|_2$. We picked 3 reasonable points (A, B and E, circled in black), thus we can set $\tau=3/8$. Similarity scores to the reasonable points as well as the margin achieved by each point (as given by \eref{eq:goodnessprop}) are shown in the array. There exists an infinite number of valid instantiations of $\epsilon$ and $\gamma$ since there is a trade-off between the margin $\gamma$ and proportion of margin violations $\epsilon$.
For example, $K$ is $(0,0.006,3/8)$-good because all points ($\epsilon=0$) are on average more similar to reasonable examples of the same class than to reasonable examples of the other class by a margin $\gamma=0.006$. One can also say that $K$ is $(2/8,0.01,3/8)$-good ($\epsilon=2/8$ because examples C and D violate the margin $\gamma=0.01$).}
\label{fig:figdef}
\end{center}
\end{figure}
Note that other definitions are possible, like those proposed by \citet{Wang2007,Wang2009} for unbounded dissimilarity functions. Yet \defref{def:defgoodsim} is very interesting in three respects. First, it is a strict generalization of the notion of good kernel \citep{Balcan2008a} but does not impose positive semi-definiteness nor symmetry. Second, as opposed to pair and triplet-based criteria used in metric learning, \defref{def:defgoodsim} is based on an average over some points. In other words, it relaxes the notion of local constraints, opening the door to metric learning for global algorithms. Third, these conditions are sufficient to learn well, i.e., to induce a classifier with low true risk, as we show in the following.

Let $K$ be an $(\epsilon,\gamma,\tau)$-good similarity function. If the set of reasonable points $R = \{(x_1',y_1'),(x_2',y_2'),\dots,(x_{|R|}',y_{|R|}')\}$ is known, it follows directly  from \eref{eq:goodnessprop} that the following classifier achieves true risk at most $\epsilon$ at margin $\gamma$:
$$h(x) = \sign\left[\frac{1}{|R|}\sum_{i=1}^{|R|}y_i'K(x,x_i')\right].$$
Note that $h$ is a linear classifier in the space of the similarity scores to the reasonable points. In other words, $K$ is used to project the data into a new space using the mapping $\phi:\mathcal{X}\rightarrow \mathbb{R}^{|R|}$ defined as:
$$\phi_i(x) = K(x,x_i'),\quad i\in \{1,\dots,|R|\}.$$
The projection and linear classifier corresponding to the toy example of \fref{fig:figdef} is shown in \fref{fig:phispace}.

\begin{figure}[t]
\begin{center}
\psfrag{K(x,A)}[][][0.7]{$K(\mathbf{x},\text{A})$}
\psfrag{K(x,B)}[][][0.7]{$K(\mathbf{x},\text{B})$}
\psfrag{K(x,E)}[][][0.7]{$K(\mathbf{x},\text{E})$}
\includegraphics[width=0.6\columnwidth]{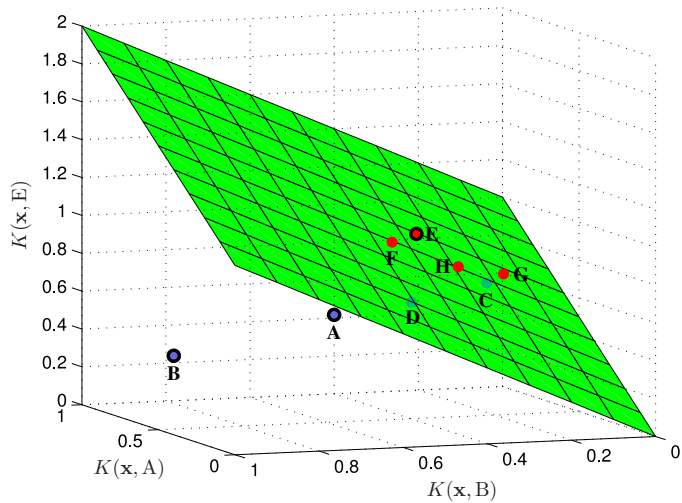}
\caption[Projection space implied by the toy example of Figure~\ref*{fig:figdef}]{Projection space ($\phi$-space) implied by the toy example of \fref{fig:figdef}: similarity scores to the reasonable points (A, B and E) are used as new features. Since $K$ is $(0,\gamma,3/8)$ for some $\gamma>0$, the linear separator of equation $K(\mathbf{x},\text{A})+K(\mathbf{x},\text{B})-K(\mathbf{x},\text{E})=0$ (shown as a green grid) achieves perfect classification, although the data were not linearly separable in the original space.}
\label{fig:phispace}
\end{center}
\end{figure}

However, in practice the set of reasonable points is unknown. We can get around this problem by sampling points (called \emph{landmarks}) and use them to project the data into a new space (using the same strategy as before).\footnote{Note that the landmark points need not to be labeled, although we do not make use of this feature in our contributions.} If we sample enough landmarks (this depends in particular on $\tau$, which defines how likely it is to draw a reasonable point), then with high probability there exists a linear classifier in that space that achieves true risk close to $\epsilon$. This is formalized in \thref{thm:thmsiml}.

\begin{theorem}[\citeauthor{Balcan2008a}, \citeyear{Balcan2008a}]
Let $K$ be an $(\epsilon,\gamma,\tau)$-good similarity function for a learning problem $P$. Let $\mathcal{L} = \{x_1',x_2',\dots,x_{n_\mathcal{L}}'\}$ be a sample of $n_\mathcal{L}=\frac{2}{\tau}\left(\log(2/\delta)+8\frac{\log(2/\delta)}{\gamma^2}\right)$ landmarks drawn from $P$. Consider the mapping $\phi^\mathcal{L}:\mathcal{X}\rightarrow \mathbb{R}^{n_\mathcal{L}}$ defined as follows: $\phi^\mathcal{L}_i(x) = K(x,x_i')$, $i\in \{1,\dots,n_\mathcal{L}\}$. Then, with probability at least $1-\delta$ over the random sample $\mathcal{L}$, the induced distribution $\phi^\mathcal{L}(P)$ in $\mathbb{R}^{n_\mathcal{L}}$ has a linear separator of error at most $\epsilon+\delta$ relative to $L_1$ margin at least $\gamma/2$.
\label{thm:thmsiml}
\end{theorem}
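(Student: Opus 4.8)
The plan is to exhibit an explicit linear separator in the landmark space and control its error through two layers of concentration. Let $R$ be the (probabilistic) set of reasonable points from \defref{def:defgoodsim}, and draw the landmarks $x_1',\dots,x_{n_\mathcal{L}}'$ i.i.d.\ from $P$ together with their labels $y_i'$ and the internal randomness that decides, for each $i$, whether $R(x_i')=1$. Let $N=|\{i : R(x_i')=1\}|$ and define $w\in\mathbb{R}^{n_\mathcal{L}}$ by $w_i = y_i'/N$ when $R(x_i')=1$ and $w_i=0$ otherwise, so that $\|w\|_1=1$ whenever $N\ge 1$. Since every feature satisfies $|\phi^\mathcal{L}_i(x)| = |K(x,x_i')|\le 1$, the $L_1$ margin of a point $x$ with label $y$ under $w$ equals $y\innerp{w,\phi^\mathcal{L}(x)} = \frac{1}{N}\sum_{i:R(x_i')=1} yy_i'K(x,x_i')$; it therefore suffices to show that, with probability at least $1-\delta$ over $\mathcal{L}$, this quantity is at least $\gamma/2$ for all but an $\epsilon+\delta$ probability mass of $x$.

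First I would bound $N$ from below. The indicators $\mathbf{1}[R(x_i')]$ are i.i.d.\ Bernoulli with mean $\Pr_{x'}[R(x')]\ge\tau$, so $\mathbb{E}[N]\ge\tau n_\mathcal{L}$ and a multiplicative Chernoff bound gives $N\ge\tau n_\mathcal{L}/2$ except with probability at most $\exp(-\tau n_\mathcal{L}/8)$. Second, I would fix an arbitrary example $x$ satisfying condition~1 of \defref{def:defgoodsim}, i.e.\ $\mathbb{E}_{(x',y')\sim P}[yy'K(x,x')\mid R(x')]\ge\gamma$. Conditioned on which indices are reasonable, the corresponding landmarks are i.i.d.\ draws from $P$ restricted to $R$, so $\frac{1}{N}\sum_{i:R(x_i')=1} yy_i'K(x,x_i')$ is an average of $N$ i.i.d.\ random variables lying in $[-1,1]$ with mean at least $\gamma$; Hoeffding's inequality bounds the probability that it drops below $\gamma/2$ by $\exp(-N\gamma^2/8)$, which on the event $N\ge\tau n_\mathcal{L}/2$ is at most $\exp(-\tau n_\mathcal{L}\gamma^2/16)$. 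Hence for every fixed ``good'' $x$ the margin of $x$ under $w$ is at least $\gamma/2$ except with some probability $\eta$ over $\mathcal{L}$ that decays exponentially both in $\tau n_\mathcal{L}$ and in $\tau n_\mathcal{L}\gamma^2$.

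It remains to pass from this per-point guarantee to a statement that holds for most $x$ simultaneously. Let $\mathrm{err}(\mathcal{L})$ denote the $P$-mass of points whose margin under $w$ is below $\gamma/2$. Since the points that fail condition~1 carry at most $\epsilon$ mass, $\mathrm{err}(\mathcal{L})\le\epsilon+\Pr_x[x\text{ is good and has margin}<\gamma/2]$. Taking expectation over $\mathcal{L}$ and swapping the two expectations, $\mathbb{E}_\mathcal{L}\big[\Pr_x[x\text{ good and margin}<\gamma/2]\big] = \mathbb{E}_x\big[\mathbf{1}[x\text{ good}]\,\Pr_\mathcal{L}[\text{margin}<\gamma/2]\big] \le \eta$, and Markov's inequality then yields $\Pr_\mathcal{L}[\mathrm{err}(\mathcal{L})\ge\epsilon+\delta]\le\eta/\delta$. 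Choosing $n_\mathcal{L}$ of the stated form, whose two additive terms are tuned so that the ``$N$ is large enough'' event and the ``empirical margin is at least $\gamma/2$'' event each fail with probability $O(\delta^2)$, makes $\eta/\delta\le\delta$, which completes the argument.

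I expect the main obstacle to be exactly this last passage, namely the order-of-quantifiers issue: a naive union bound over $x$ is impossible since $P$ may be continuous, so one must swap expectations and use a Markov-type ``on average the separator is good, hence it is good with high probability'' argument, and one must split the admissible failure probability carefully between the two concentration events (the count of reasonable landmarks, and the Hoeffding deviation for a fixed good point), which is precisely what dictates the two-term shape of $n_\mathcal{L}$. The two concentration steps themselves (Chernoff for $N$, Hoeffding for the conditional average) are routine, and the verifications that $w$ has unit $L_1$ norm and that the $\phi^\mathcal{L}$-features lie in $[-1,1]$ are immediate.
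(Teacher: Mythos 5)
You should first note that the thesis does not actually prove this statement: it is quoted from \citet{Balcan2008a} without proof, so the relevant comparison is with the original argument, which your proposal essentially reconstructs. The explicit separator $w_i=y_i'/N$ supported on the reasonable landmarks (zero elsewhere, so $\|w\|_1=1$ and the $L_1$ margin is the conditional average of $yy_i'K(x,x_i')$), the Chernoff bound on the number $N$ of reasonable landmarks, the conditional Hoeffding bound for a fixed good point $x$, and the Fubini-plus-Markov device to convert a per-point guarantee into a statement holding for all but a small mass of $x$ (you are right that a union bound over $x$ is impossible and that this quantifier swap is the crux) — all of this is exactly the standard route, and each of these steps is correct as you state it.

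The genuine gap is in the final accounting, i.e., precisely in the sentence claiming that both failure events can be made $O(\delta^2)$ with the stated $n_\mathcal{L}$. Your own bounds give, for a fixed good $x$, a failure probability $\eta\le \exp(-\tau n_\mathcal{L}/8)+\exp(-\tau n_\mathcal{L}\gamma^2/16)$. With $\tau n_\mathcal{L}=2\log(2/\delta)+16\log(2/\delta)/\gamma^2$ the first term is indeed of order $\delta^2$, but $\tau n_\mathcal{L}\gamma^2/16=\log(2/\delta)+\gamma^2\log(2/\delta)/8$, so the second term is essentially $\delta/2$ (in particular for small $\gamma$), not $O(\delta^2)$. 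Consequently $\eta/\delta\approx 1/2$, and Markov at threshold $\delta$ only yields the conclusion with probability about $1/2$ rather than $1-\delta$. For your scheme to close, the Hoeffding failure for a fixed good point must be of order $\delta^2$, which requires roughly $N\ge (16/\gamma^2)\log(1/\delta)$ reasonable landmarks, whereas the Chernoff step applied to the stated $n_\mathcal{L}$ only guarantees $N\ge \log(2/\delta)+(8/\gamma^2)\log(2/\delta)$. So you should either redo the bookkeeping with the $\gamma$-dependent part of the sample size enlarged by a constant factor (together with an explicit split of $\delta$ between the ``enough reasonable landmarks'' event and the Markov step), or state explicitly that you obtain the theorem with $n_\mathcal{L}$ read up to a universal constant; matching the exact constants of the quoted statement requires following the original paper's allocation of the confidence parameter, which your last step currently does not substantiate.
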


Unfortunately, finding this separator is NP-hard (even to approximate) because minimizing the number of $L_1$ margin violations is NP-hard. To overcome this limitation, the authors considered the hinge loss as a surrogate for the 0/1 loss (which counts the number of margin violations) in the following reformulation of \defref{def:defgoodsim}.

\begin{definition}[\citeauthor{Balcan2008a}, \citeyear{Balcan2008a}]
A similarity function $K$ is an \emph{$(\epsilon,\gamma,\tau)$-good similarity function in hinge loss} for a learning problem $P$ if there exists a (random) indicator function $R(x)$ defining a (probabilistic) set of ``reasonable points'' such that the following conditions hold:
\begin{enumerate}
\item $\mathbb{E}_{(x,y)\sim P}\left[[1-y g(x)/\gamma]_+\right]\leq \epsilon$, 
  where $g(x)=\mathbb{E}_{(x',y')\sim P}[y'K(x,x')|R(x')]$,
\item $\mathrm{Pr}_{x'}[R(x')] \geq \tau$.
\end{enumerate}
\label{def:defgoodsim2}
\end{definition}

This leads to the following theorem, similar to \thref{thm:thmsiml}.

\begin{theorem} [\citeauthor{Balcan2008a}, \citeyear{Balcan2008a}]
Let $K$ be an $(\epsilon,\gamma,\tau)$-good similarity function in hinge loss for a learning problem $P$. For any $\epsilon_1>0$ and $0\leq \delta\leq \gamma\epsilon_1/4$, let $\mathcal{L} = \{x_1',x_2',\dots,x_{n_\mathcal{L}}'\}$ be a sample of $n_\mathcal{L}=\frac{2}{\tau}\left(\log(2/\delta)+16\frac{\log(2/\delta)}{\epsilon_1\gamma^2}\right)$ landmarks drawn from $P$. Consider the mapping $\phi^\mathcal{L}:X\rightarrow \mathbb{R}^{n_\mathcal{L}}$ defined as follows: $\phi^\mathcal{L}_i(x) = K(x,x_i')$, $i\in \{1,\dots,n_\mathcal{L}\}$. Then, with probability at least $1-\delta$ over the random sample $\mathcal{L}$, the induced distribution $\phi^\mathcal{L}(P)$ in $\mathbb{R}^{n_\mathcal{L}}$ has a linear separator of error at most $\epsilon+\epsilon_1$ at margin $\gamma$.
\label{thm:thmsim}
\end{theorem}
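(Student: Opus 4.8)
The plan is to mimic the proof of \thref{thm:thmsiml} but track the hinge loss instead of the $0/1$ loss, which is what makes a finite-sample concentration argument go through cleanly. First I would introduce the population quantity $g(x)=\mathbb{E}_{(x',y')\sim P}[y'K(x,x')\mid R(x')]$ together with its empirical estimate $\tilde g(x)=\frac{1}{\sum_i \mathbb{1}[R(x_i')]}\sum_{i:\,R(x_i')}y_i'K(x,x_i')$ over the sampled landmarks $\mathcal{L}=\{x_1',\dots,x_{n_\mathcal{L}}'\}$, restricted to those landmarks that happen to be reasonable. The target linear separator is the weight vector $\mathbf{w}$ in $\phi^\mathcal{L}$-space with $w_i = y_i'\,\mathbb{1}[R(x_i')]/(\sum_j \mathbb{1}[R(x_j')])$, so that $\langle \mathbf{w},\phi^\mathcal{L}(x)\rangle = \tilde g(x)$; its $L_1$ norm is $1$. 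Since $K$ maps into $[-1,1]$, each coordinate of $\phi^\mathcal{L}$ lies in $[-1,1]$, so it suffices to control the $L_1$-margin error of this explicit $\mathbf{w}$.

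The argument then splits into two events, each of which must fail with probability at most $\delta/2$ so that a union bound gives overall failure probability $\delta$. (i) By condition 2 of \defref{def:defgoodsim2}, each landmark is reasonable with probability $\geq\tau$; a Chernoff/Hoeffding bound on the number $N=\sum_i\mathbb{1}[R(x_i')]$ of reasonable landmarks shows that with $n_\mathcal{L}$ as in the statement, $N\geq \tau n_\mathcal{L}/2$ except with probability $\leq\delta/2$ — this is where the $\frac{2}{\tau}\log(2/\delta)$ term in $n_\mathcal{L}$ is used. (ii) Conditioned on the reasonable landmarks, for a fixed $x$, $\tilde g(x)$ is an average of $N$ i.i.d. bounded ($[-1,1]$) random variables with mean $g(x)$; Hoeffding gives $|\tilde g(x)-g(x)|$ small with high probability for that $x$, hence $\mathbb{E}_x|\tilde g(x)-g(x)|$ is small in expectation over $\mathcal{L}$, and a Markov step (or a direct Fubini computation of $\mathbb{E}_{\mathcal{L}}\mathbb{E}_x[\,\cdot\,]$) turns this into a statement that holds with probability $\geq 1-\delta/2$ over $\mathcal{L}$. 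The number of reasonable landmarks needed to push the deviation below roughly $\epsilon_1\gamma/4$ is what forces the $\frac{16\log(2/\delta)}{\epsilon_1\gamma^2}$ term.

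Finally I would assemble the pieces using the $1$-Lipschitz property of the (scaled) hinge function $c\mapsto[1-c/\gamma]_+$, which has Lipschitz constant $1/\gamma$ in its argument. Writing $\mathrm{err}_\gamma(\mathbf{w})=\mathbb{E}_{(x,y)\sim P}[\,[1-y\tilde g(x)/\gamma]_+\,]$, we bound
\begin{equation*}
\mathrm{err}_\gamma(\mathbf{w}) \leq \mathbb{E}_{(x,y)\sim P}[\,[1-y g(x)/\gamma]_+\,] + \tfrac{1}{\gamma}\,\mathbb{E}_{x}\big|\tilde g(x)-g(x)\big| \leq \epsilon + \tfrac{1}{\gamma}\cdot O(\epsilon_1\gamma) \leq \epsilon+\epsilon_1,
\end{equation*}
where the first term is $\leq\epsilon$ by condition 1 of \defref{def:defgoodsim2} and the second is controlled by step (ii); the hypothesis $\delta\leq\gamma\epsilon_1/4$ is exactly what is needed to absorb the lower-order contributions into $\epsilon_1$. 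Since the $0/1$ margin error is dominated by the hinge loss, this shows $\phi^\mathcal{L}(P)$ admits a linear separator of error at most $\epsilon+\epsilon_1$ at margin $\gamma$, as claimed. The main obstacle, and the step deserving the most care, is step (ii): one must handle the fact that the normalization $1/N$ of $\tilde g$ is itself random and correlated with the reasonable-landmark indicators, and correctly interchange the expectation over $x$ with the probability over $\mathcal{L}$ (via Fubini together with Markov's inequality) so that a per-$x$ Hoeffding bound yields a single high-probability event over the draw of $\mathcal{L}$ that is uniform enough to integrate against $P$ in $x$.
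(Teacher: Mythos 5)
The thesis never proves this statement---it is quoted verbatim from Balcan et al.\ (2008a)---so there is no in-paper proof to compare against; the relevant benchmark is the original argument. Your overall architecture does match that argument's skeleton: the explicit plug-in separator $w_i=y_i'/N$ over the reasonable landmarks (so $\langle \mathbf{w},\phi^{\mathcal{L}}(x)\rangle=\tilde g(x)$ and $\|\mathbf{w}\|_1=1$), a Chernoff bound giving $N\geq \tau n_{\mathcal{L}}/2$ except with probability $\delta/2$, concentration of $\tilde g(x)$ around $g(x)$, the $1/\gamma$-Lipschitzness of the hinge, and a union bound.

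The genuine gap is in your step (ii), exactly where you flag the difficulty, and it is quantitative rather than cosmetic. With $N\approx 16\log(2/\delta)/(\epsilon_1\gamma^2)$ reasonable landmarks, Hoeffding controls the per-$x$ deviation $|\tilde g(x)-g(x)|$ only at scale $\gamma\sqrt{\epsilon_1}$ (up to $\sqrt{\log(2/\delta)}$ factors); pushing it below $\epsilon_1\gamma/4$, as you claim the $16\log(2/\delta)/(\epsilon_1\gamma^2)$ term "forces", would require $N$ of order $\log(2/\delta)/(\epsilon_1^2\gamma^2)$. Similarly, the first-moment route gives $\mathbb{E}_{\mathcal{L},x}|\tilde g(x)-g(x)|\leq 1/\sqrt{N}$, which is of order $\gamma\sqrt{\epsilon_1}/\sqrt{\log(2/\delta)}$, and turning this into a statement holding with probability $1-\delta/2$ over $\mathcal{L}$ by Markov multiplies it by $2/\delta$---a factor that the merely logarithmic dependence of $n_{\mathcal{L}}$ on $1/\delta$ cannot pay for. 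So the chain $\mathrm{err}_\gamma(\mathbf{w})\leq \epsilon+\tfrac{1}{\gamma}O(\epsilon_1\gamma)\leq\epsilon+\epsilon_1$ is not established by your sketch: as written it yields either an in-expectation bound of order $\epsilon+\sqrt{\epsilon_1}$ (with no high-probability statement over $\mathcal{L}$), or a high-probability bound inflated by $1/\delta$, and it would only give $\epsilon+\epsilon_1$ if $n_{\mathcal{L}}$ scaled like $1/(\epsilon_1^2\gamma^2)$. Relatedly, you never actually use the hypothesis $\delta\leq\gamma\epsilon_1/4$ in a quantitative way; its natural role is that a failure event of probability $\delta$ costs at most $\delta\cdot(1+1/\gamma)\leq 2\delta/\gamma\leq\epsilon_1/2$ in expected hinge loss, so the probabilistic bookkeeping must be arranged (as in the source proof, and as in the proof of the margin-violation counterpart, where the per-$x$ tail at deviation scale $\Theta(\gamma)$ is exponentially small, i.e.\ of size $\mathrm{poly}(\delta)$, before Markov is applied) so that the $1/\delta$ from Markov/Fubini multiplies quantities that already carry a factor of order $\delta$---not the raw first moment of $|\tilde g-g|$. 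Until step (ii) is reworked along those lines, the stated sample size $n_{\mathcal{L}}=\frac{2}{\tau}\bigl(\log(2/\delta)+16\log(2/\delta)/(\epsilon_1\gamma^2)\bigr)$ is not justified by your argument.
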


The objective is now to find a linear separator $\boldsymbol{\alpha}\in \mathbb{R}^{n_\mathcal{L}}$ that has low true risk based on the expected hinge loss relative to $L_1$ margin $\gamma$:
$$
\mathbb{E}_{(x,y)\sim P}\left[\left[1 - y \innerp{
\boldsymbol{\alpha},\phi^\mathcal{L}(x)}/\gamma\right]_+\right]. 
$$
Using a landmark sample $\mathcal{L} = \{x_1',x_2',\dots,x_{n_\mathcal{L}}'\}$ and a training sample $\mathcal{T} = \{(x_1,y_1),$ $(x_2,y_2),\dots,(x_n,y_n)\}$, one can find this separator $\boldsymbol{\alpha}$ efficiently by solving the following linear program (LP):\footnote{The original formulation \citep{Balcan2008a} was actually $L_1$-constrained. We provide here an equivalent, more practical $L_1$-regularized form.}
\begin{equation}
\displaystyle\min_{\boldsymbol{\alpha}}  \displaystyle\sum_{i=1}^{n}\left[1-\sum_{j=1}^{n_\mathcal{L}}\alpha_jy_iK(x_i,x'_j)\right]_+ + \lambda \|\boldsymbol{\alpha}\|_1. 
\label{eq:lp1}
\end{equation}
In practice, we simply use the training examples as landmarks. In this case, learning rule \eqref{eq:lp1} --- referred to as ``Balcan's learning rule'' in the rest of this document --- is reminiscent of the standard SVM formulation, with three important differences. First, recall that $K$ is not required to be PSD nor symmetric. Second, the linear classifier lies in an explicit projection space built from $K$ (called an empirical similarity map) rather than in a possibly implicit Hilbert Space induced by a kernel. Third, it uses $L_1$ regularization, inducing sparsity in $\boldsymbol{\alpha}$ and thus reducing the number of landmarks the classifier is based on, which speeds up prediction.\footnote{Note that $L_1$ regularization has also been used in the context of standard SVM formulations, leading to the $1$-norm SVM \citep{Zhu2003}. While these classifiers may work well in practice, most of the handy SVM theory fall apart in this case. Conversely, the use of \eqref{eq:lp1} is justified by the theory presented in this section.} This regularization can be interpreted as a way to select (or approximate) the set of reasonable points among the landmarks: in a sense, $R$ is automatically worked out while learning $\boldsymbol{\alpha}$.\footnote{The problem of finding the reasonable points is not as simple if we first want to \emph{learn} the similarity function, as we will see later in this chapter.}
Note that we can control the degree of sparsity of the linear classifier: the larger $\lambda$, the sparser $\boldsymbol{\alpha}$.

To sum up, the performance of the linear classifier theoretically depends on how well the similarity function satisfies \defref{def:defgoodsim}. In this chapter, we first conduct a preliminary experimental study to investigate the level of $(\epsilon,\gamma,\tau)$-goodness of some edit similarities and their performance in classification when used in Balcan's learning rule (\sref{sec:ictai}). The rest of the chapter is the main contribution and is devoted to learning $(\epsilon,\gamma,\tau)$-good edit similarities from data.

\section{Preliminary Experimental Study}
\label{sec:ictai}

In this section, we experimentally show that the framework of $(\epsilon,\gamma,\tau)$-goodness is well-suited to edit similarities. We first investigate the goodness of edit similarities on the previously-studied handwritten digit recognition task (\sref{sec:goodnesseval}). Then, in \sref{sec:expeictai}, we compare the performance of linear classifiers learned with Balcan's rule using edit similarities with the performance of SVM using standard edit kernels.

\subsection{Are Edit Similarities Really $(\epsilon,\gamma,\tau)$-Good?}
\label{sec:goodnesseval}

In this experimental evaluation of the $(\epsilon,\gamma,\tau)$-goodness of edit similarities, we will consider the standard Levenshtein distance $d_{lev}$ and edit probabilities $p_e$ learned with the method of \citet{Oncina2006}. We actually use $-d_{lev}$ so that both similarities express a measure of closeness, making the comparison easier. We also normalized them so that they lie in $[-1,1]$.\footnote{We normalized them to zero mean and unit variance, then brought back to $1$ and $-1$ the values greater than 1 and smaller than -1 respectively. We are aware that there may be better normalizations but this is outside the scope of this work.} In the following, they are referred to as $\tilde{d}_{lev}$ and $\tilde{p}_e$. Looking at \defref{def:defgoodsim}, we can easily estimate $\epsilon$, $\gamma$ and $\tau$ using a randomly selected set of points. We illustrate this on the NIST Special Database 3, the handwritten digit recognition task already used in \cref{chap:pr}, where the digits are represented as strings of Freeman codes.

Since we do not know the set of reasonable points before learning the linear classifier, we fix $\tau = 1$ (i.e., all points are considered reasonable) and plot $\epsilon$ as a function of $\gamma$.
In order to analyze the results in different contexts, we randomly selected 500 instances of each class and estimated the goodness of the similarities for each binary problem. For brevity, we only discuss the goodness curves for two representative problems: ``0 vs. 1''  and ``0 vs. 8'', shown in \fref{fig:goodness}.
The interpretation (given by \defref{def:defgoodsim}) is that a margin $\gamma$ leads to an $\epsilon$ proportion of examples violating the margin.
For the ``0 vs. 1'' problem, shown in \fref{fig:good:01}, both similarities achieve good margin while keeping the number of violations small. The learned similarity $\tilde{p}_e$ behaves slightly better.
The ``0 vs. 8'' problem is a harder task, since the representation of an eight is often similar to that of a zero (because Freeman codes only encode the contour of the digits). \fref{fig:good:08} reflects the difficulty of the task, since margin violations are almost always higher for a given $\gamma$ than in the ``0 vs. 1'' case. For the ``0 vs. 8'' task, the learned similarity provides an important improvement over the standard edit distance: for small margin values, it achieves few margin violations.

\begin{figure}[t]
\begin{center}
\subfigure[0 vs. 1]{\label{fig:good:01}
\psfrag{gamma}[][][0.7]{\textbf{Value of $\gamma$}}
\psfrag{epsilon}[][][0.7]{\textbf{Value of $\epsilon$}}
\psfrag{lev}[][][0.7]{$\tilde{d}_{lev}$\hspace{0.25cm}}
\psfrag{prob}[][][0.7]{\hspace{0.2cm}$\tilde{p}_e$}
\includegraphics[width=0.47\textwidth]{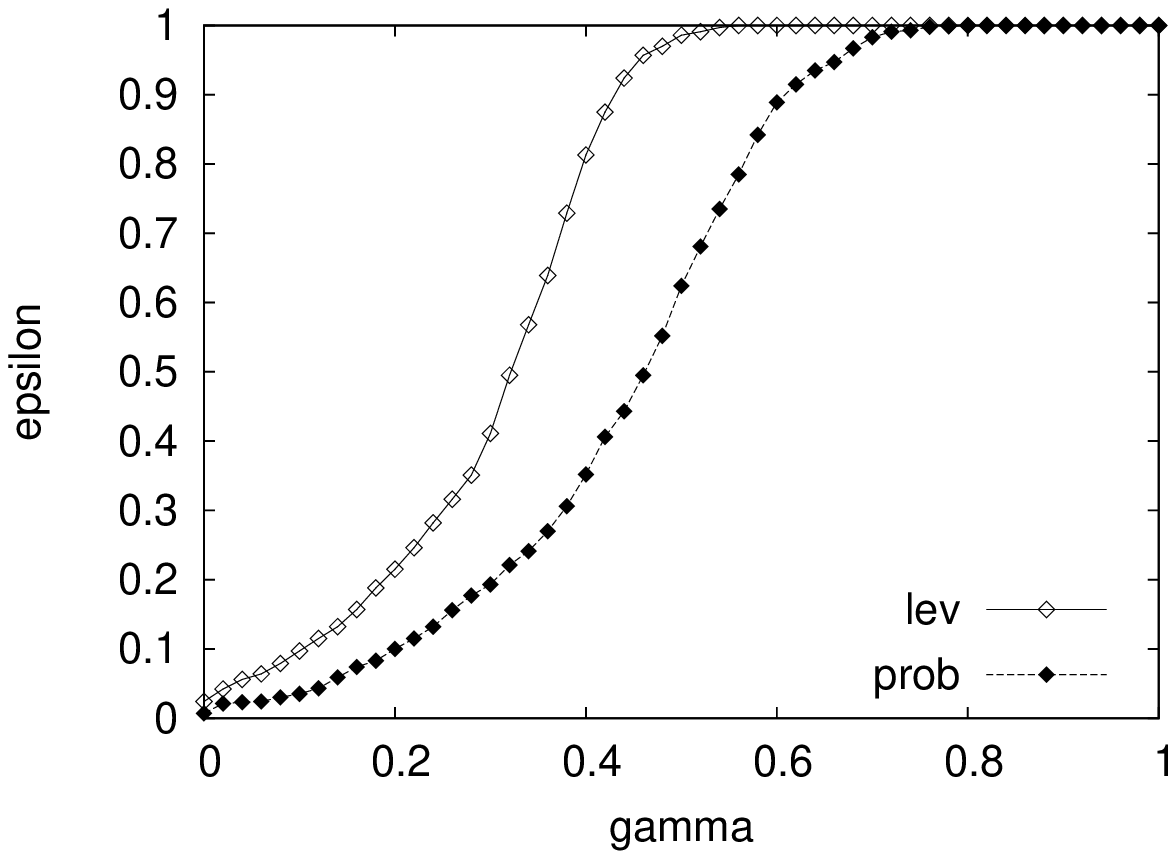}
}
\subfigure[0 vs. 8]{\label{fig:good:08}
\psfrag{gamma}[][][0.7]{\textbf{Value of $\gamma$}}
\psfrag{epsilon}[][][0.7]{\textbf{Value of $\epsilon$}}
\psfrag{lev}[][][0.7]{$\tilde{d}_{lev}$\hspace{0.25cm}}
\psfrag{prob}[][][0.7]{\hspace{0.2cm}$\tilde{p}_e$}
\includegraphics[width=0.47\textwidth]{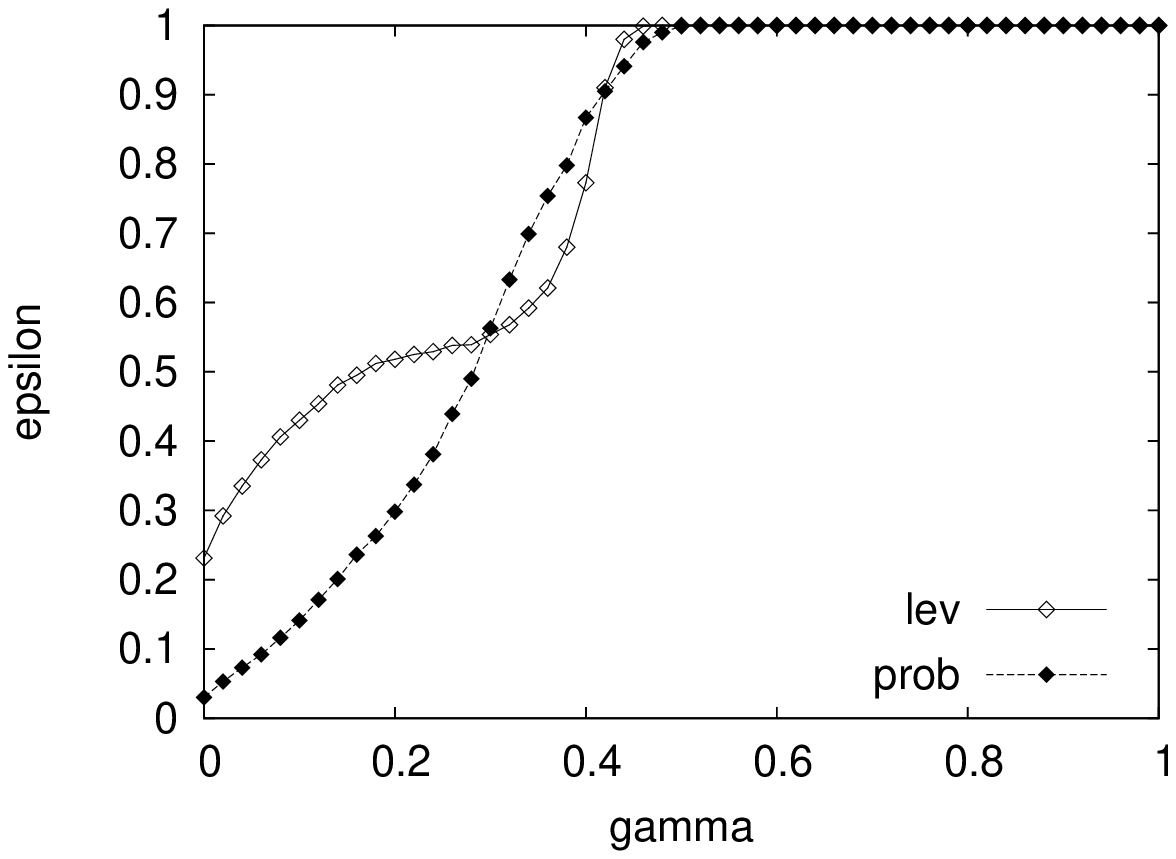}
}
\caption[Estimation of the goodness of edit similarities]{Estimation of $\epsilon$ as a function of $\gamma$ for $\tilde{d}_{lev}$ and $\tilde{p}_e$ on two handwritten digits binary classification tasks.}
\label{fig:goodness}
\end{center}
\end{figure}

To sum up, we see that decent values for $\gamma$ and $\epsilon$ are achieved even without selecting an appropriate subset $\mathcal{R}$ of reasonable points. Note that we observe a similar behavior for all binary problems in the dataset. Therefore, edit similarities satisfy \defref{def:defgoodsim} rather well, thus \thref{thm:thmsim} is meaningful and we can expect good accuracy in linear classification on this dataset. Moreover, $\tilde{p}_e$ seems to be ``$(\epsilon,\gamma,\tau)$-better'' than $\tilde{d}_{lev}$, which suggests that it could achieve better generalization performance. We will see that it is indeed the case in the next section.

\subsection{Experiments}
\label{sec:expeictai}

In this section, we provide experimental evidence that learning with Balcan's learning rule using edit similarities outperforms a $k$-NN approach and is competitive with a standard SVM approach, while inducing much sparser models. As noted earlier, standard SVM and Balcan's learning rule are similar but use different regularizers ($L_2$ norm and $L_1$ norm respectively). The comparative performance of $L_2$ and $L_1$ regularized learning rules has been the subject of previous experimental studies \citep[see for instance][]{Zhu2003} but, to the best of our knowledge, never in the context of edit similarities. Furthermore, $(\epsilon,\gamma,\tau)$-goodness provides a theoretical justification of Balcan's learning rule and casts an interesting light on this comparison.\footnote{Note that we did not include 1-norm SVM in this experimental study because the learning rule itself is very similar to Balcan's learning rule while having no grounds in SVM theory.}

We compare the following approaches: (i) Balcan's learning rule \eqref{eq:lp1} using $K(\mathsf{x},\mathsf{x'}) = \tilde{d}_{lev}(\mathsf{x},\mathsf{x'})$, (ii) Balcan's learning rule using $K(\mathsf{x},\mathsf{x'}) = \tilde{p}_e(\mathsf{x'}|\mathsf{x})$, (iii) SVM learning using $K(\mathsf{x},\mathsf{x'})=e^{-t\cdot d_{lev}(\mathsf{x},\mathsf{x'})}$, the kernel of \citet{Li2004} based on $d_{lev}$, (iv) SVM learning using $K(\mathsf{x},\mathsf{x'})=e^{\frac{1}{2}t(\log p_e(\mathsf{x'}|\mathsf{x}) + \log p_e(\mathsf{x}|\mathsf{x'}))}$, the kernel of \citet{Li2004} based on $p_e$, (v) 1-NN using $d_{lev}(\mathsf{x},\mathsf{x'})$, and (vi) 1-NN using $-p_e(\mathsf{x'}|\mathsf{x})$.
We choose \textsc{Libsvm}\footnote{\url{http://www.csie.ntu.edu.tw/~cjlin/libsvm/}} as the SVM implementation, which takes a one-versus-one approach for multi-class classification. We thus use the same strategy for multi-class classification with Balcan's learning rule.
Note that we take the training examples to be the landmarks. Therefore, all learning algorithms have access to strictly the same information (that is, similarity measurements between training examples), allowing a fair comparison.

In the following, we present results on the multi-class handwritten digit classification task and on a dataset of English and French words.

\subsubsection{Handwritten digit classification}
\label{sec:expeictaidigits}

Using the handwritten digit classification dataset, we first aim at evaluating the performance of the models obtained with different methods. We use 40 to 6,000 training examples, reporting the results under 5-fold cross-validation. The parameters of the models, such as $\lambda$ for approaches (i-ii) or $C$ and $t$ for approaches (iii-iv), are tuned by cross-validation on an independent set of examples, always selecting the value that offers the best classification accuracy.

\paragraph{Accuracy and sparsity}
Classification accuracy is reported in \fref{fig:res:acc}.
All methods perform essentially the same, except for 1-NN that is somewhat weaker.
Note that the methods based on the learned edit probabilities are, as expected, more accurate than those based on the standard edit distance.
\fref{fig:res:sparsity} shows the average size of a binary model for approaches (i-iv), i.e., the number of training examples (reasonable points or support vectors) involved in the classification of new examples. Approaches (i-ii) are 5 to 6 times sparser than (iii-iv), which confirms that learning with Balcan leads to much sparser models than standard SVM learning.

\begin{figure}[t]
\begin{center}
\subfigure[0 vs. 1]{\label{fig:res:acc}
\psfrag{Classification accuracy}[][][0.65]{\textbf{Classification accuracy}}
\psfrag{Size of training set}[][][0.65]{\textbf{Size of training sample}}
\psfrag{LP with eL}[][][0.53]{LP with $\tilde{d}_{lev}$\hspace{0.4cm}}
\psfrag{LP with pe}[][][0.53]{LP with $\tilde{p}_e$\hspace{0.2cm}}
\psfrag{SVM with eL}[][][0.53]{SVM with $d_{lev}$\hspace{0.4cm}}
\psfrag{SVM with pe}[][][0.53]{SVM with $p_e$\hspace{0.2cm}}
\psfrag{1-NN with eL}[][][0.53]{1-NN with $d_{lev}$\hspace{0.4cm}}
\psfrag{1-NN with pe}[][][0.53]{1-NN with $p_e$\hspace{0.2cm}}
\includegraphics[width=0.47\textwidth]{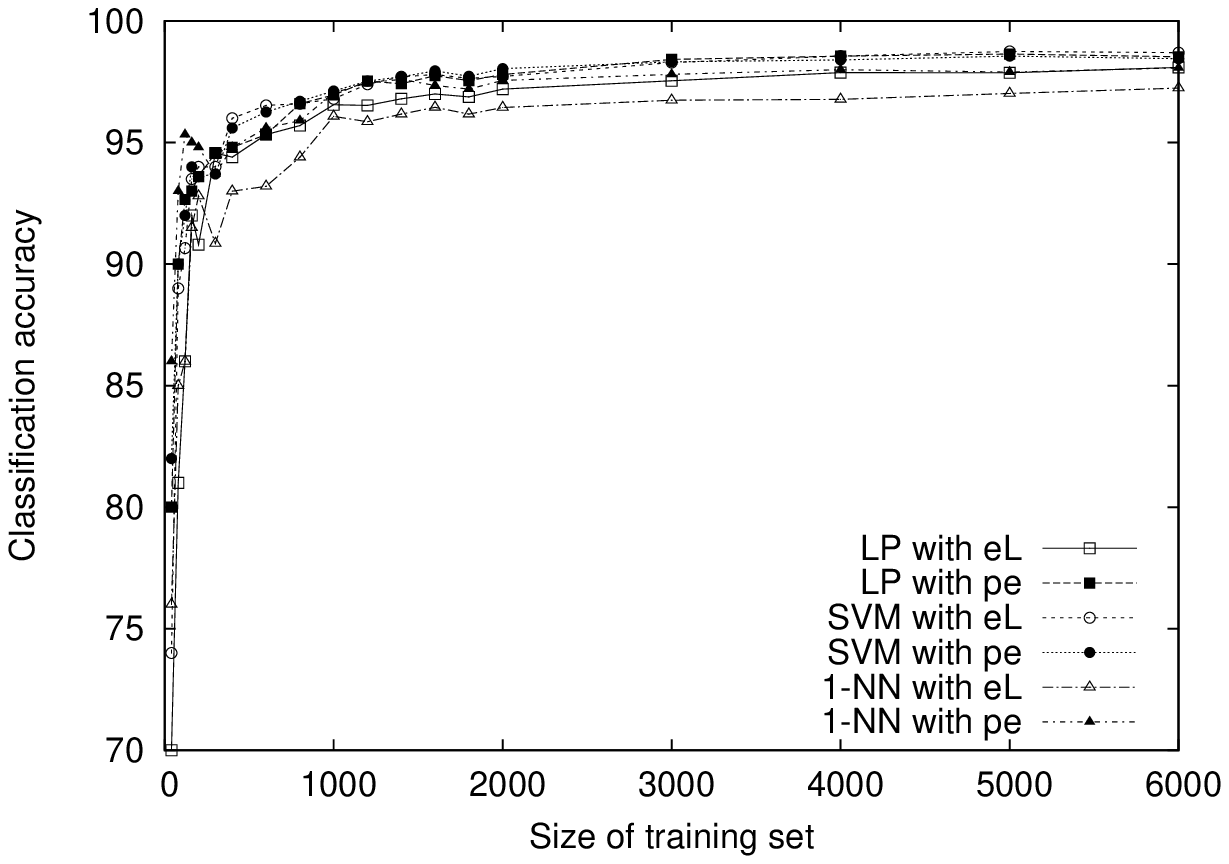}
}
\subfigure[0 vs. 8]{\label{fig:res:sparsity}
\psfrag{Size of training set}[][][0.65]{\textbf{Size of training sample}}
\psfrag{Average size of binary model}[][][0.65]{\textbf{Average size of binary model}}
\psfrag{LP with eL}[][][0.53]{LP with $\tilde{d}_{lev}$\hspace{0.4cm}}
\psfrag{LP with pe}[][][0.53]{LP with $\tilde{p}_e$\hspace{0.2cm}}
\psfrag{SVM with eL}[][][0.53]{SVM with $d_{lev}$\hspace{0.4cm}}
\psfrag{SVM with pe}[][][0.53]{SVM with $p_e$\hspace{0.2cm}}
\includegraphics[width=0.47\textwidth]{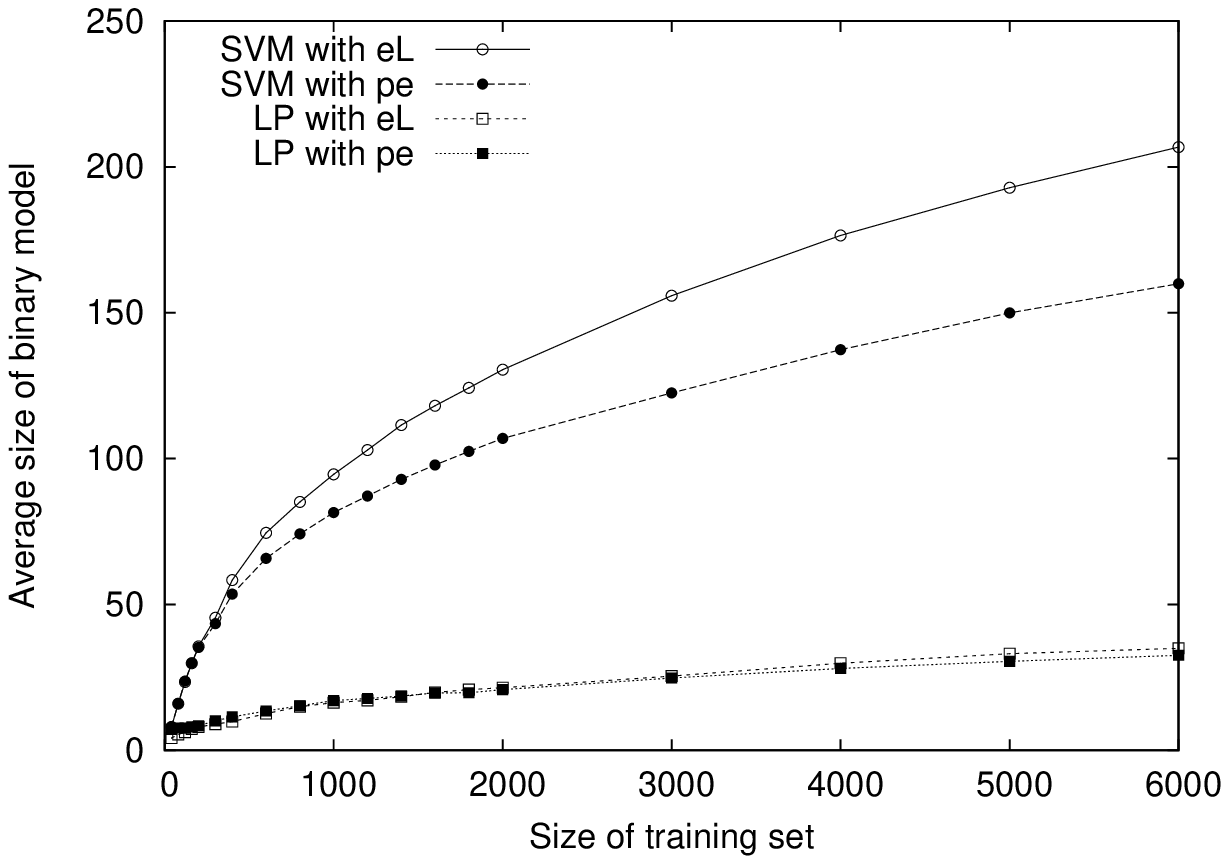}
}
\caption[Classification accuracy and sparsity (Digit dataset)]{Classification accuracy and sparsity results for methods (i-vi) over a range of training set sizes (Digit dataset).}
\label{fig:res}
\end{center}
\end{figure}

\paragraph{Influence of the parameters}
We now study the influence of parameters on the accuracy and sparsity of the models. Results are obtained on 4,000 training examples.
The influence of $\lambda$ on the models learned with Balcan is shown in \fref{fig:lambda}. The results confirm that $\lambda$ can be conveniently used to control the sparsity of the models thanks to $L_1$ regularization. It is worth noting that while the best accuracy is obtained with relatively small values ($\lambda \in [1;10]$), one can get even sparser but still very accurate models with larger values ($\lambda \in [10;200]$). This is especially true when using $\tilde{p}_e$. Therefore, one can learn a model with Balcan that is just slightly less accurate than the corresponding SVM model while being 10 to 18 times sparser. This can be a useful feature, in particular in applications where data storage is limited and/or high classification speed is required.

\begin{figure}[t]
\begin{center}
\psfrag{Accuracy}[][][0.55]{\textbf{Classification accuracy}}
\psfrag{Model size}[][][0.55]{\textbf{Average size of binary model}}
\psfrag{Value of lambda}[][][0.60]{\textbf{Value of} $\lambda$}
\psfrag{LP lev}[][][0.53]{LP with $\tilde{d}_{lev}$\hspace{0.6cm}}
\psfrag{LP prob}[][][0.53]{LP with $\tilde{p}_e$}
\includegraphics[width=0.6\textwidth]{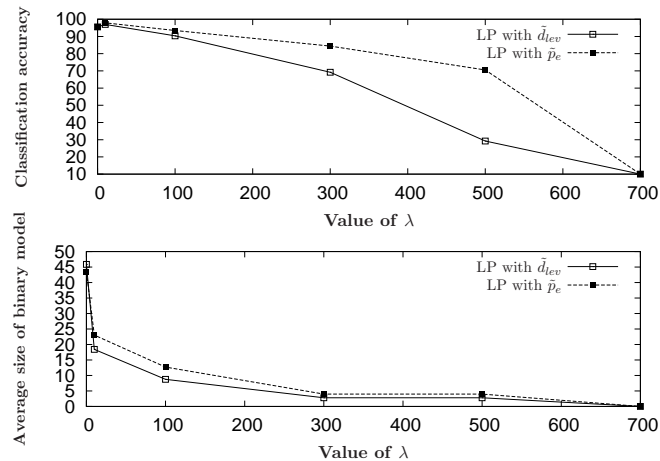}
\caption[Classification accuracy and sparsity with respect to $\lambda$]{Classification accuracy and sparsity results with respect to the value of $\lambda$ (Digit dataset).}
\label{fig:lambda}
\end{center}
\end{figure}

We also investigate the influence of parameter $t$ on the performance of the SVM models. Results are shown in \fref{fig:t} (a log-scale is used to allow a better appreciation of the variations). Both the accuracy and the sparsity of the SVM models are heavily dependent on $t$: only a narrow range of $t$ values (probably those achieving positive semi-definiteness) allows for accurate and acceptably-sized models. Furthermore, this range appears to be specific to the edit similarity used. Therefore, $t$ must be tuned very carefully, which represents a waste of time and data.

\begin{figure}[t]
\begin{center}
\psfrag{Accuracy}[][][0.55]{\textbf{Classification accuracy}}
\psfrag{Model size}[][][0.55]{\textbf{Average size of binary model}}
\psfrag{Value of t}[][][0.60]{\textbf{Value of} $t$ \textbf{(log-scale)}}
\psfrag{SVM lev}[][][0.53]{SVM with $d_{lev}$\hspace{0.6cm}}
\psfrag{SVM prob}[][][0.53]{SVM with $p_e$}
\includegraphics[width=0.6\columnwidth]{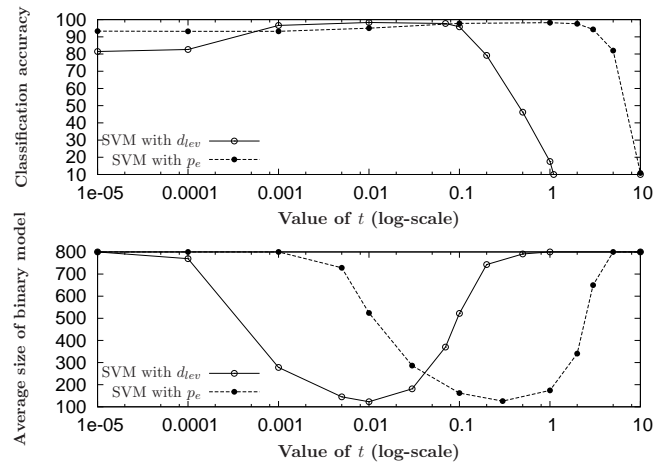}
\caption[Classification accuracy and sparsity with respect to $t$]{Classification accuracy and sparsity results with respect to the value of $t$ in log-scale (Digit dataset).}
\label{fig:t}
\end{center}
\end{figure}

Lastly, one might wonder whether the SVM parameter $C$ can also be used to improve the sparsity of the models in the same way as $\lambda$.
In order to assess this, we try a wide range of $C$ values and record the average sparsity of the models. SVM could not match the sparsity of the models learned with Balcan. The best average size for a binary model was greater than 100, i.e., more than 2 times bigger than the worst model size obtained with Balcan. This results from the tendency of $L_2$ regularization to select models that put small weights on many coordinates.

\subsubsection{English and French words classification}
\label{expeictaimots}

In this second series of experiments, we choose a different and harder task: classifying words as either English or French. We use the 2,000 top words lists from Wiktionary.\footnote{\url{http://en.wiktionary.org/wiki/Wiktionary:Frequency_lists}} We only consider unique words (i.e., not appearing in both lists) of length at least 4, and we also get rid of accent and punctuation marks. We end up with about 2,600 words. We keep 600 words aside for cross-validation of parameters, 400 words to test the models and use the remaining words to learn the models.

Classification accuracy is reported in \fref{fig:resmots:acc}.
Note that this binary task is significantly harder than the one presented in the previous section, and that once again, models based on $p_e$ perform better than those based on $d_{lev}$.
Models learned with Balcan clearly outperform $k$-NN, while SVM models are the most accurate.
Sparsity results are shown in \fref{fig:resmots:sparsity}. The gap in sparsity between models learned with Balcan and SVM models is even greater on this dataset: the number of support vectors grows almost linearly with the number of training examples. This is consistent with the theoretical rate established by \citet{Steinwart2003}.

\begin{figure}[t]
\begin{center}
\subfigure[Accuracy]{\label{fig:resmots:acc}
\psfrag{Classification accuracy}[][][0.65]{\textbf{Classification accuracy}}
\psfrag{Size of training set}[][][0.65]{\textbf{Size of training sample}}
\psfrag{LP with eL}[][][0.53]{LP with $\tilde{d}_{lev}$\hspace{0.4cm}}
\psfrag{LP with pe}[][][0.53]{LP with $\tilde{p}_e$\hspace{0.2cm}}
\psfrag{SVM with eL}[][][0.53]{SVM with $d_{lev}$\hspace{0.4cm}}
\psfrag{SVM with pe}[][][0.53]{SVM with $p_e$\hspace{0.2cm}}
\psfrag{1-NN with eL}[][][0.53]{1-NN with $d_{lev}$\hspace{0.4cm}}
\psfrag{1-NN with pe}[][][0.53]{1-NN with $p_e$\hspace{0.2cm}}
\includegraphics[width=0.47\textwidth]{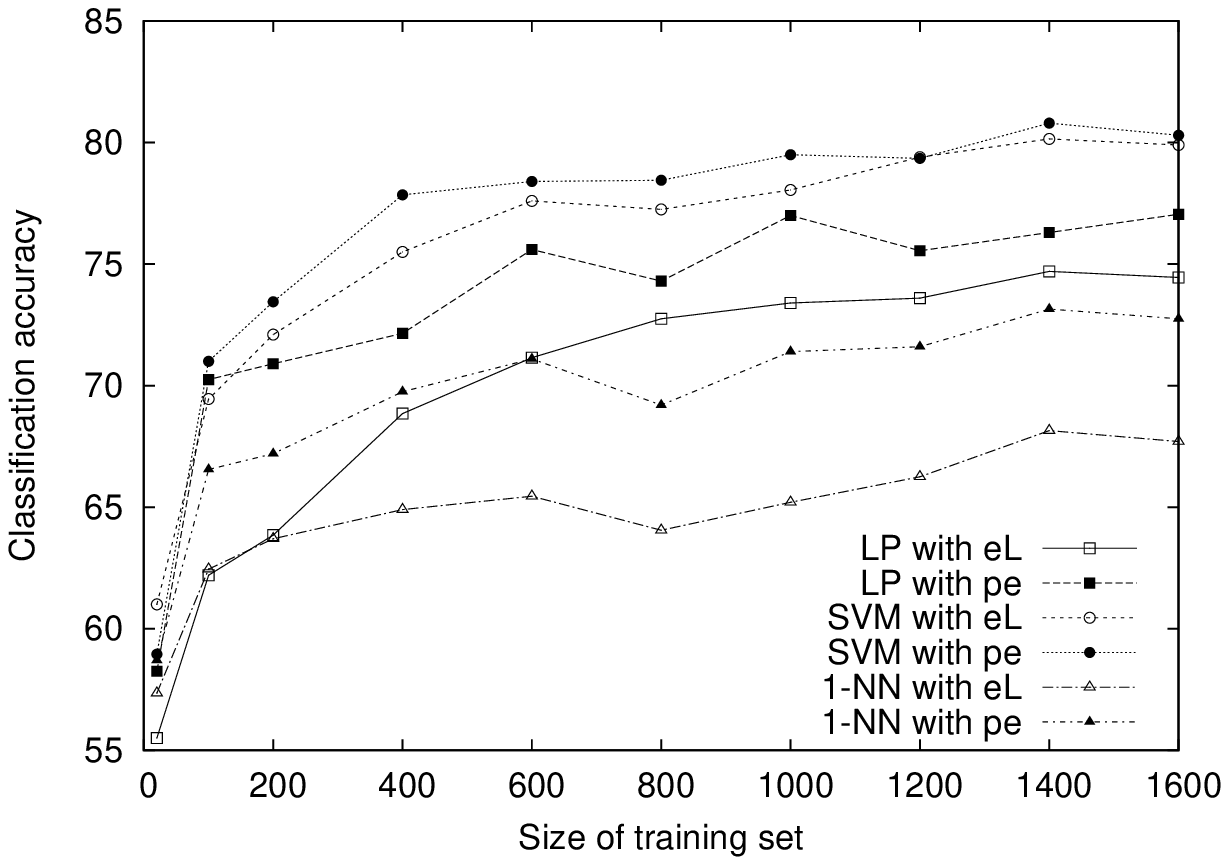}
}
\subfigure[Sparsity]{\label{fig:resmots:sparsity}
\psfrag{Size of training set}[][][0.65]{\textbf{Size of training sample}}
\psfrag{Model size}[][][0.65]{\textbf{Average size of binary model}}
\psfrag{LP with eL}[][][0.53]{LP with $\tilde{d}_{lev}$\hspace{0.4cm}}
\psfrag{LP with pe}[][][0.53]{LP with $\tilde{p}_e$\hspace{0.2cm}}
\psfrag{SVM with eL}[][][0.53]{SVM with $d_{lev}$\hspace{0.4cm}}
\psfrag{SVM with pe}[][][0.53]{SVM with $p_e$\hspace{0.2cm}}
\includegraphics[width=0.47\textwidth]{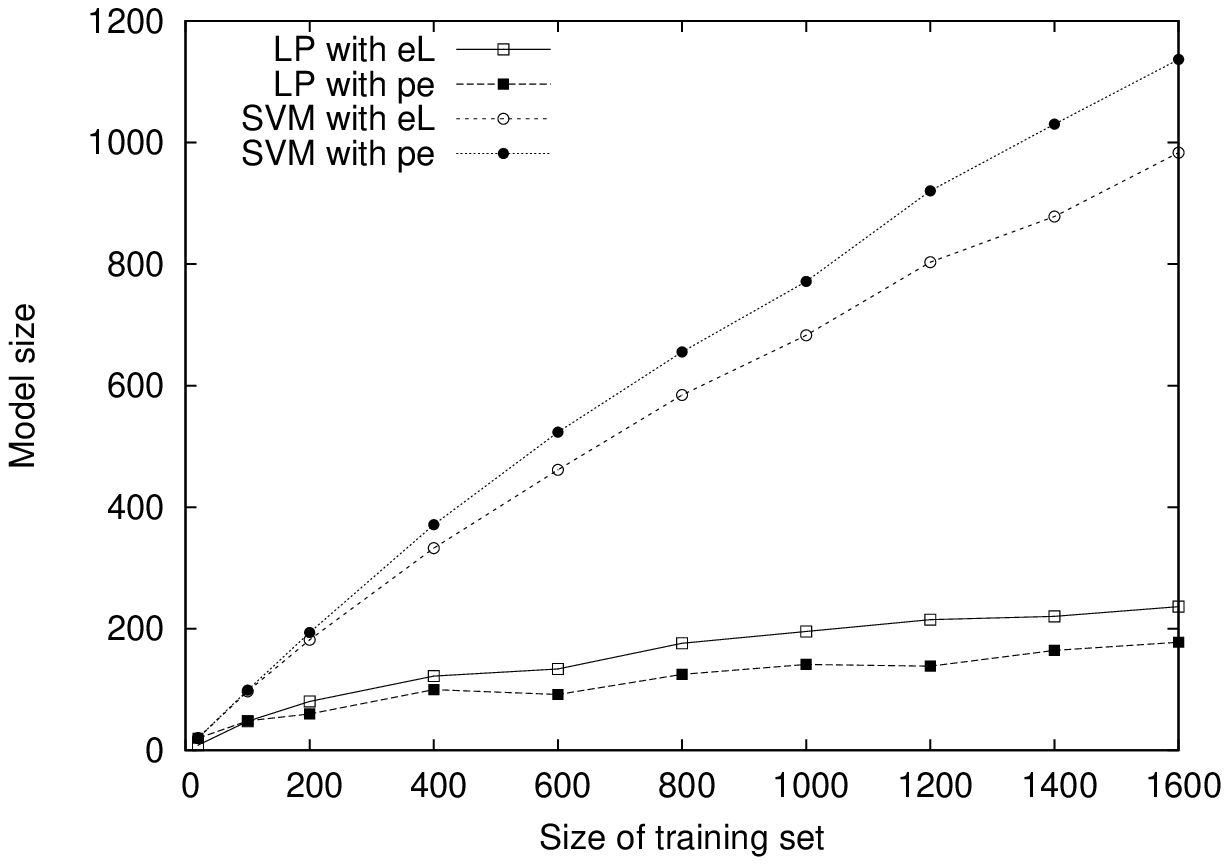}
}
\caption[Classification accuracy and sparsity (Word dataset)]{Word dataset: classification accuracy and sparsity results for methods (i-vi) over a range of training set sizes.}
\label{fig:resmots}
\end{center}
\end{figure}

\subsection{Conclusion}

In this section, we have shown that edit similarities fit the framework of $(\epsilon,\gamma,\tau)$-goodness and that the performance is competitive with standard SVM, with the additional advantages that arbitrary (in particular, non-PSD) similarities can be used and that the classifiers are sparser. We have also seen that this series of experiments confirms the theoretical dependence between the $(\epsilon,\gamma,\tau)$-goodness of the edit similarity function and its performance in classification.

However, for some tasks, standard edit similarities may satisfy the definition of $(\epsilon,\gamma,\tau)$-goodness poorly. Furthermore, existing methods for learning edit similarities rely on maximum likelihood and may not lead to an improved similarity function from an $(\epsilon,\gamma,\tau)$-goodness point of view. \citet{Kar2011} propose to automatically adapt the goodness criterion to the problem at hand. In the rest of this chapter, we take a different approach: we see the $(\epsilon,\gamma,\tau)$-goodness as a novel, theoretically well-founded criterion to optimize an edit similarity.

\section{Learning $(\epsilon,\gamma,\tau)$-Good Edit Similarity Functions}
\label{sec:mljlearncosts}


In this section, we propose a novel convex programming approach based on the theory of \citet{Balcan2008a} to learn $(\epsilon,\gamma,\tau)$-good edit similarity functions from both positive and negative pairs without requiring a costly iterative procedure. We will see in \sref{sec:mljanalysis} that this framework allows us to derive generalization bounds establishing the consistency of our method and a relationship between the learned similarities and the generalization error of the linear classifier using it.

We begin this section by introducing an exponential-based edit similarity function that can be optimized in a direct way. Then, we present our convex
programming approach to the problem of learning $(\epsilon,\gamma,\tau)$-good edit similarity functions, followed by a discussion on building relevant training pairs in this context. Finally, we end this section by showing that our approach can be straightforwardly adapted to tree edit similarity learning.

\subsection{An Exponential-based Edit Similarity Function}
\label{sec:eG}

In order to avoid the drawbacks of using iterative approaches such as EM for edit similarity learning, we propose to define an edit similarity for which the edit script does not depend on the edit costs.
 
Let $\mathbf{C}\in\mathbb{R}_+^{(|\Sigma|+1)\times(|\Sigma|+1)}$ be the edit cost matrix and for any $\mathsf{x},\mathsf{x'}\in\Sigma^*$, let $\boldsymbol{\#}(\mathsf{x},\mathsf{x'})$ be a $(|\Sigma|+1)\times (|\Sigma|+1)$ matrix whose elements $\#_{i,j}(\mathsf{x},\mathsf{x'})$ correspond to the number of times each edit operation $i\to j$ is used to turn $\mathsf{x}$ into $\mathsf{x'}$ in the Levenshtein script, $0 \leq i,j \leq |\Sigma|$. We define the following edit function: 
$$e_\mathbf{C}(\mathsf{x},\mathsf{x'}) = \displaystyle\sum_{0 \leq i,j \leq |\Sigma|}C_{i,j}\#_{i,j}(\mathsf{x},\mathsf{x'}).$$
To compute $e_\mathbf{C}$, we do not extract the optimal script with respect to $\mathbf{C}$: we use the Levenshtein script\footnote{In practice, one could use another type of script. We picked the Levenshtein script because it is a ``reasonable'' edit script, since it corresponds to a shortest script transforming $\mathsf{x}$ into $\mathsf{x'}$.} and apply custom costs $\mathbf{C}$ to it.
Therefore, since the edit script defined by $\boldsymbol{\#}(\mathsf{x},\mathsf{x'})$ is fixed, $e_\mathbf{C}(\mathsf{x},\mathsf{x'})$ is nothing more than a closed-form linear function of the edit costs and can be optimized directly.

Recall that a similarity function is assumed to be in $[-1,1]$.
To respect this requirement, we define our similarity function to be: 
$$K_\mathbf{C}(\mathsf{x},\mathsf{x'}) = 2e^{-e_\mathbf{C}(\mathsf{x},\mathsf{x'})}-1.$$
Beyond this normalization requirement, the motivation for this exponential form is related to the one for using exponential kernels in SVM classifiers: it can be seen as a way to introduce nonlinearity to further separate examples of opposite class while moving closer those of the same class.
Note that $K_\mathbf{C}$ may not be PSD nor symmetric. However, as we have seen earlier and unlike kernel theory, the theory of \citet{Balcan2008a} does not require these properties. This allows us to consider a broader type of edit similarity functions.



\subsection{Learning the Edit Costs}
\label{sec:lec}

We aim at learning the edit cost matrix $\mathbf{C}$ so as to optimize the $(\epsilon,\gamma,\tau)$-goodness of $K_\mathbf{C}$.
We first focus on optimizing the goodness based on a relaxation of \defref{def:defgoodsim2}, leading to a formulation based on the hinge loss (GESL$_{HL}$).
Then, we introduce a more general version that can accommodate other loss functions (GESL$_L$).

\subsubsection{Hinge Loss Formulation}

Here, we want to learn $K_\mathbf{C}$ so that its hinge loss-based goodness (\defref{def:defgoodsim2}) is optimized. More precisely, given a set of reasonable points and a margin $\gamma$, we want to optimize the amount of margin violation $\epsilon$.
Ideally, we would like to directly optimize \defref{def:defgoodsim2}. Unfortunately, this would result in a nonconvex formulation (summing and subtracting up exponential terms) subject to local minima.  Instead, we propose to optimize the following criterion:
\begin{equation}\label{eq:newgoodsim}
\mathbb{E}_{(\mathsf{x},y)}\left[\mathbb{E}_{(\mathsf{x'},y')}\left[\left[1-yy'K_\mathbf{C}(\mathsf{x},\mathsf{x'})/\gamma\right]_+|R(\mathsf{x'})\right]\right]\leq \epsilon'.
\end{equation}
Criterion \eqref{eq:newgoodsim} bounds that of \defref{def:defgoodsim2} due to the convexity of the hinge loss: clearly, if $K_\mathbf{C}$ satisfies \eqref{eq:newgoodsim}, then it is $(\epsilon,\gamma,\tau)$-good in hinge loss with $\epsilon \leq \epsilon'$. Indeed, it is harder to satisfy since the ``goodness'' is required with respect to each reasonable point instead of considering the average similarity to these points. Therefore, optimizing $K_\mathbf{C}$ according to \eqref{eq:newgoodsim} implies the use of pair-based constraints.

Let us now consider a training sample $\mathcal{T} = \{z_i=(\mathsf{x_i},y_i)\}_{i=1}^{n_\mathcal{T}}$ of $n_\mathcal{T}$ labeled instances. Recall that we do not know the set of reasonable points at this stage: they are inferred while learning the separator, that is, after the similarity is learned. For this reason, as in most metric learning methods, we will suppose that we are given pairs of examples. Formally, we suppose the existence of an indicator pairing function $f_{land} : \mathcal{T}\times \mathcal{T} \rightarrow \{0,1\}$ which takes as input two training examples in $\mathcal{T}$ and returns 1 if they are paired and 0 otherwise. We assume that  $f_{land}$ associates to each element $z \in \mathcal{T}$ exactly $n_\mathcal{L}$ examples (called, with a slight abuse of language, the landmarks for $z$), leading to a total of $n_\mathcal{T} n_\mathcal{L}$ pairs.
We discuss this matter further in \sref{sec:mljmatching}.

Our formulation aims at fulfilling \eqref{eq:newgoodsim} for each $(z_i,z_j)$ such that $f_{land}(z_i,z_j) = 1$. Therefore, we want $\left[1-y_iy_j K_\mathbf{C}(\mathsf{x_i},\mathsf{x_j})/\gamma\right]_+ = 0$, hence $y_iy_j K_\mathbf{C}(\mathsf{x_i},\mathsf{x_j}) \geq \gamma$.
A benefit from using this constraint is that it can easily be turned into an equivalent linear one, considering the following two cases.
\begin{enumerate}
\item If $y_i \neq y_j$, we get:
$$-K_\mathbf{C}(\mathsf{x_i},\mathsf{x_j}) \geq \gamma \Longleftrightarrow e^{-e_\mathbf{C}(\mathsf{x_i},\mathsf{x_j})} \leq \frac{1-\gamma}{2} \Longleftrightarrow e_\mathbf{C}(\mathsf{x_i},\mathsf{x_j}) \geq -\log(\frac{1-\gamma}{2}).$$
We can use a variable $B_1 \geq 0$ and write the constraint as $e_\mathbf{C}(\mathsf{x_i},\mathsf{x_j}) \geq B_1$, with the interpretation that $B_1 = -\log(\frac{1-\gamma}{2})$. In fact, $B_1 \geq -\log(\frac{1}{2})$.
\item Likewise, if $y_i = y_j$, we get $e_\mathbf{C}(\mathsf{x_i},\mathsf{x_j}) \leq -\log(\frac{1+\gamma}{2})$.
We can use a variable $B_2 \geq 0$ and write the constraint as $e_\mathbf{C}(\mathsf{x_i},\mathsf{x_j}) \leq B_2$, with the interpretation that $B_2 = -\log(\frac{1+\gamma}{2})$. In fact, $B_2 \in \left[0,-\log(\frac{1}{2})\right]$.
\end{enumerate}
The optimization problem GESL$_{HL}$ can then be expressed as follows:

\begin{equation*}
\begin{array}{lrl}
(\text{GESL}_{HL})\quad& \displaystyle\min_{\mathbf{C},B_1,B_2} & \frac{1}{n_\mathcal{T} n_\mathcal{L}}\displaystyle\sum_{\substack{1\leq
    i\leq n_\mathcal{T},\\j : f_{land}(z_i,z_j)=1}}\ell_{HL}(\mathbf{C},z_i,z_j)+\beta\|\mathbf{C}\|^2_{\cal{F}} \label{GESL_hinge}\\
& \text{s.t.} & B_1 \geq -\log(\frac{1}{2}),\quad 0 \leq B_2 \leq -\log(\frac{1}{2}),\quad B_1 - B_2 = \eta_\gamma\nonumber\\
& & C_{i,j} \geq 0,\quad 0 \leq i,j \leq |\Sigma|,\nonumber
\end{array}
\end{equation*}
where $\beta \geq 0$ is a regularization parameter on edit costs, $\eta_\gamma \geq 0$ a parameter corresponding to the desired ``margin'' and
$$\ell_{HL}(\mathbf{C},z_i,z_j)=
\left\{\begin{array}{l}
{[} B1 - e_\mathbf{C}(\mathsf{x_i},\mathsf{x_j}) {]}_{+} \textrm{ if } y_i\neq y_j\\
{[} e_\mathbf{C}(\mathsf{x_i},\mathsf{x_j}) - B2 {]}_{+} \textrm{ if } y_i=y_j 
\end{array}\right..$$
The relationship between the margin $\gamma$ and $\eta_\gamma$ is given by $\gamma=\frac{e^{\eta_\gamma}-1}{e^{\eta_\gamma}+1}$. We chose Frobenius norm regularization because (i) it is simple, smooth and thus easier to optimize, and (ii) it allows us to derive generalization guarantees using uniform stability, as we will see in \sref{sec:mljanalysis}.

GESL$_{HL}$ is a convex program, thus one can efficiently find its global optimum. Using $n_\mathcal{T} n_\mathcal{L}$ slack variables to express each hinge loss, it has $O(n_\mathcal{T} n_\mathcal{L}+|\Sigma|^2)$ variables and $O(n_\mathcal{T} n_\mathcal{L})$ constraints. Note that GESL$_{HL}$ is a sparse convex program: each constraint involves at most one string pair and a limited number of edit cost variables, making the problem faster to solve.
It is also worth noting that our approach is very flexible. First, it is general enough to be used with any definition of $e_\mathbf{C}$ that is based on an edit script (or even a convex combination of edit scripts). Second, one can incorporate additional convex constraints, for instance to include background knowledge or desired requirements on $\mathbf{C}$ (e.g., symmetry). Third, it can be easily adapted to the multi-class case. Finally, it can be generalized to a larger class of loss functions, as we show in the following section.

\subsubsection{General Formulation}


In the previous section, we made use of the hinge loss-based \defref{def:defgoodsim2} to propose GESL$_{HL}$.
Yet, other reformulations of \defref{def:defgoodsim} are possible using any convex loss function that can be used to efficiently penalize the amount of violation $\epsilon$ with respect to margin $\gamma$. For instance, the logistic loss or the exponential loss could be used. This would also allow the derivation of learning guarantees (similar to \thref{thm:thmsim}) and an efficient learning rule.

Therefore, it is useful to be able to optimize a definition of $(\epsilon,\gamma,\tau)$-goodness based on a loss other than the hinge.
Let $\ell(\mathbf{C},z,z')$ be a convex loss function with respect to an edit cost matrix $\mathbf{C}$ and a pair of examples $(z,z')$.
Our optimization problem  can then be expressed in a more general form  as follows:

\begin{equation*}
\begin{array}{lcl}
(\text{GESL}_L)\quad& \displaystyle\min_{\mathbf{C}} & \frac{1}{n_\mathcal{T} n_\mathcal{L}}\displaystyle\sum_{\substack{1\leq
    i\leq n_\mathcal{T},\\j : f_{land}(z_i,z_j)=1}}\ell(\mathbf{C},z_i,z_j)+\beta\|\mathbf{C}\|^2_{\cal{F}}.\label{GESL_general}
\end{array}
\end{equation*}

In the rest of the paper, we will use GESL to refer to our approach in general, GESL$_L$ when using an arbitrary loss function $\ell$ and GESL$_{HL}$ for the specific case of the hinge loss.

\subsection{Pairing strategy}
\label{sec:mljmatching}

The question of how one should define the pairing function $f_{land}$  relates to the open question of building training pairs in many metric learning problems. In some applications, the answer may be trivial: for instance, a misspelled word and its correction. Otherwise, popular choices are to pair each example with its nearest neighbor, random pairing or simply to consider all possible pairs.

On the other hand, the $(\epsilon,\gamma,\tau)$-goodness of the similarity should be improved with respect to the reasonable points, a subset of examples of probability $\tau$ that allows low error and large margin. However, this set depends on the similarity function itself and is thus unknown beforehand. Yet, a relevant strategy in the context of $(\epsilon,\gamma,\tau)$-goodness may be to improve the similarity with respect to carefully selected examples rather than considering all possible pairs. Consequently, we consider two pairing strategies that will be compared in our experiments (\sref{sec:mljexperiments}):
\begin{enumerate}
\item \emph{Levenshtein pairing}: we pair each $z \in \mathcal{T}$ with its $n_\mathcal{T}$ nearest neighbors of the same class and its $n_\mathcal{T}$ farthest neighbors of the opposite class, using the Levenshtein distance. This pairing strategy is meant to capture the essence of \defref{def:defgoodsim} and in particular the idea that reasonable points ``represent'' the data well. Essentially, we pair $z$ with a few points that are already good representatives of $z$ and optimize the edit costs so that they become even better representatives. Note that the choice of the Levenshtein distance to pair examples is consistent with our choice to define $e_\mathbf{C}$ according to the Levenshtein script.
\item \emph{Random pairing}: we pair each $z \in \mathcal{T}$ with a number $n_\mathcal{T}$ of randomly chosen examples of the same class and $n_\mathcal{T}$ randomly chosen examples of the opposite class.
\end{enumerate}
In either case, we have $n_\mathcal{L}=2N =\alpha n_\mathcal{T}$ with $0<\alpha\leq 1$.
Taking $\alpha=1$ corresponds to considering all possible pairs. In a sense, $\alpha$ can be seen as playing the role of $\tau$ (which gives the proportion of points that are reasonable in the definition of goodness) at the pair level, even though no direct relation can be made between the two.


\subsection{Adaptation to trees}
\label{sec:trees}

So far, we have implicitly considered that the data are strings. In this section, before presenting a theoretical analysis of GESL, we show that it may be used in a simple and efficient way to learn tree edit similarities.
As mentioned in \sref{sec:eG}, our edit function, defined as
$$e_\mathbf{C}(\mathsf{x},\mathsf{x'}) = \sum_{0 \leq i,j \leq |\Sigma|}C_{i,j}\#_{i,j}(\mathsf{x},\mathsf{x'}),$$
is nothing more than a linear combination of the edit costs, where $\#_{i,j}(\mathsf{x},\mathsf{x'})$ is the number of times the edit operation $i\to j$ occurs in the Levenshstein script turning $\mathsf{x}$ into $\mathsf{x'}$. This opens the door to a straightforward generalization of GESL to tree edit distance: instead of a string edit script, we can use a tree edit script according to either variant of the tree edit distance \citep{Zhang1989,Selkow1977} and solve the (otherwise unchanged) optimization problem presented in \sref{sec:lec}. This allows us, once again, to avoid using a costly iterative procedure. We only have to compute the edit script between two trees once, which dramatically reduces the algorithmic complexity of the learning algorithm.
Moreover, we will see that the theoretical analysis of GESL presented in the following section holds for tree edit similarity learning.


\section{Theoretical Analysis}
\label{sec:mljanalysis}

This section presents a theoretical analysis of GESL.
In \sref{sec:mljguarantees}, we derive a generalization bound guaranteeing its consistency and relating to the $(\epsilon,\gamma,\tau)$-goodness in generalization of the learned similarity function, and thus to the true risk of the linear classifier. This theoretical study is performed for a large class of loss functions. In \sref{sec:spec_hinge}, we instantiate this generalization bound for the specific case of the hinge loss (GESL$_{HL}$). Finally, \sref{sec:stoch_lang} is devoted to a discussion about the main features of the bounds, and to the presentation of a way to get rid of the assumption that the length of the strings (or the size of the trees) is bounded.

\subsection{Generalization Bound for General Loss Functions}
\label{sec:mljguarantees}

As pointed out in \cref{chap:metriclearning}, the training pairs used in metric learning are not i.i.d. and therefore the classic results of statistical learning theory do not directly hold. To derive a generalization bound for GESL$_L$, we build upon the adaptation of uniform stability to the metric learning case \citep{Jin2009} and extend it to edit similarity learning. We first prove that GESL$_L$ has a uniform stability: this is established in \thref{thm:stability}, using \lref{lem:convexN2} and the assumption of $k$-lipschitzness (\defref{def:k-lipsC}). The stability property allows us to derive our generalization bound (\thref{thm:bound}) using the McDiarmid inequality (\thref{thm:McDiarmid}) and the assumption of $(\sigma,m)$-admissibility (\defref{def:s-m-adm}).

We denote the objective function of GESL$_L$ by:
$$
F_\mathcal{T}(\mathbf{C})= \frac{1}{n_\mathcal{T}}\sum_{k=1}^{n_\mathcal{T}} \frac{1}{n_\mathcal{L}}\sum_{j=1}^{n_\mathcal{L}} \ell(\mathbf{C},z_k,z'_{k_j})+\beta \|\mathbf{C}\|^2_{\cal{F}}, 
$$
where $z'_{k_j}$ denotes the $j^{th}$ landmark associated to 
$z_k$ and $\ell(\mathbf{C},z_k,z'_{k_j})$ the loss for a pair of examples with respect to an edit cost matrix $\mathbf{C}$.

The first term of $F_\mathcal{T}(\mathbf{C})$ is the empirical risk $R^\ell_\mathcal{T}(\mathbf{C})$ over the training sample $\mathcal{T}$. 
The true risk $R^\ell(\mathbf{C})$ is given by:
$$R^\ell(\mathbf{C})=\mathbb{E}_{(z,z')\sim P}[\ell(\mathbf{C},z,z')].$$
Recall that our empirical risk is not defined over all possible training pairs, unlike most metric learning algorithms, but according to some particular landmark examples. On the other hand, the true risk is defined over any pair of instances.
For notational convenience, we also introduce the estimation error $D_\mathcal{T}$, which is the deviation between the true risk and the empirical risk:
$$D_\mathcal{T}=R^\ell(\mathbf{C_\mathcal{T}})-R^\ell_\mathcal{T}(\mathbf{C_\mathcal{T}}),$$
where $\mathbf{C_\mathcal{T}}$ denotes the edit cost matrix learned by GESL$_L$ from $\mathcal{T}$. 

In this section, we propose an analysis that holds for a large class of loss functions. We consider loss functions $\ell$ that fulfill the $k$-lipschitz property with respect to the first argument $\mathbf{C}$ (\defref{def:k-lipsC}) and the definition of $(\sigma,m)$-admissibility (\defref{def:s-m-adm}).

\begin{definition}\label{def:k-lipsC}
A loss function $\ell(\mathbf{C},z_1,z_2)$ is $k$-lipschitz with respect to its first argument if for any matrices $\mathbf{C}, \mathbf{C'}$ and any pair of labeled examples $(z_1, z_2)$:
$$
 |\ell(\mathbf{C},z_1,z_2)-\ell(\mathbf{C'},z_1,z_2)|\leq k\|\mathbf{C}-\mathbf{C'}\|_{\cal{F}}.
$$
\end{definition}

\begin{definition}\label{def:s-m-adm}
A loss function $\ell(\mathbf{C},z_1,z_2)$ is \textit{$(\sigma,m)$-admissible}, with respect to $\mathbf{C}$, if (i) it is convex with respect to its first argument and (ii) the following condition holds:
$$
 \forall z_1, z_2, z_3, z_4, |\ell(\mathbf{C},z_1,z_2)-\ell(\mathbf{C},z_3,z_4)|\leq \sigma |y_1y_2-y_3y_4|+m
$$
with $z_i=(x_i,y_i)$, for $i=1, 2, 3, 4$, are labeled examples.
\end{definition}

\defref{def:s-m-adm} requires the deviation of the losses between two pairs of examples to be bounded by a value that depends only on the labels and on some constants independent from the examples and the cost matrix $\mathbf{C}$. It follows that the labels must be bounded, which is not a strong assumption in the classification setting we are interesting in. In our case, we have binary labels ($y_i \in \{-1,1\}$), which implies that the quantity $|y_1y_2-y_3y_4|$ is either $0$ or $2$. We will see in \sref{sec:spec_hinge} that the hinge loss of GESL$_{HL}$ satisfies \defref{def:k-lipsC} and \defref{def:s-m-adm}. This can also be shown for other popular loss functions, such as the logistic loss or the exponential loss.\footnote{To satisfy \defref{def:k-lipsC}, their domain must be bounded \citep{Rosasco2004}.}

Note that from the convexity of $\ell$ with respect to its first argument, it follows that $R^\ell$, $R^\ell_\mathcal{T}$ and $F_\mathcal{T}$ are convex functions.

Our objective is to derive an upper bound on the true risk $R^\ell(C_\mathcal{T})$ with respect to the empirical risk $R^\ell_\mathcal{T}(C_\mathcal{T})$ using uniform stability (\defref{def:stability}) adapted to the case where training data consist of pairs \citep{Jin2009}.
\begin{definition}[\citeauthor{Jin2009}, \citeyear{Jin2009}]
A learning algorithm has a uniform stability in $\frac{\kappa}{n_\mathcal{T}}$, where $\kappa$ is a positive constant, if
$$
\forall(\mathcal{T},z),  \forall i, \displaystyle \sup_{z_1,z_2} |\ell(\mathbf{C_\mathcal{T}},z_1,z_2)-\ell(\mathbf{C}_{\mathcal{T}^{i,z}},z_1,z_2)|\leq 
 \frac{\kappa}{n_\mathcal{T}},
$$
where $\mathcal{T}^{i,z}$ is the new set obtained by replacing $z_i\in \mathcal{T}$ by a new example $z$. 
\end{definition}
To prove that GESL$_L$ has the property of uniform stability, we need the following lemma and the $k$-lipschitz property of $\ell$.

\begin{lemma}
\label{lem:convexN2}
Let $F_\mathcal{T}$ and $F_{\mathcal{T}^{i,z}}$ be the functions to optimize, $\mathbf{C}_\mathcal{T}$ and $\mathbf{C}_{\mathcal{T}^{i,z}}$ their corresponding minimizers, and $\beta$ the regularization parameter used in GESL$_L$. Let $\Delta \mathbf{C}=(\mathbf{C}_\mathcal{T}-\mathbf{C}_{\mathcal{T}^{i,z}})$. For any $t\in[0,1]$:
$$
 \|\mathbf{C}_\mathcal{T}\|^2_{\cal{F}}-\|\mathbf{C}_\mathcal{T} -t\Delta \mathbf{C}\|^2_{\cal{F}}+\|\mathbf{C}_{\mathcal{T}^{i,z}}\|^2_{\cal{F}}-\|\mathbf{C}_{\mathcal{T}^{i,z}} +t\Delta \mathbf{C}\|^2_{\cal{F}}  \leq \frac{(2n_\mathcal{T}+n_\mathcal{L})t2k}{\beta n_\mathcal{T}n_\mathcal{L}}\|\Delta \mathbf{C}\|_{\cal{F}}.
$$
\end{lemma}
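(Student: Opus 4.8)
The plan is to reproduce the classical Bousquet--Elisseeff stability argument, adapted to the pair-based empirical risk of GESL$_L$. Write $F_\mathcal{T}(\mathbf{C}) = R^\ell_\mathcal{T}(\mathbf{C}) + \beta\|\mathbf{C}\|^2_{\cal{F}}$, splitting the objective into the loss part $R^\ell_\mathcal{T}$ (which is convex by part (i) of \defref{def:s-m-adm}) and the regularizer, and likewise $F_{\mathcal{T}^{i,z}}(\mathbf{C}) = R^\ell_{\mathcal{T}^{i,z}}(\mathbf{C}) + \beta\|\mathbf{C}\|^2_{\cal{F}}$. The first observation is purely algebraic: $\mathbf{C}_\mathcal{T} - t\Delta\mathbf{C} = (1-t)\mathbf{C}_\mathcal{T} + t\mathbf{C}_{\mathcal{T}^{i,z}}$ and $\mathbf{C}_{\mathcal{T}^{i,z}} + t\Delta\mathbf{C} = (1-t)\mathbf{C}_{\mathcal{T}^{i,z}} + t\mathbf{C}_\mathcal{T}$ are convex combinations of the two minimizers. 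Hence convexity of $R^\ell_\mathcal{T}$ gives $R^\ell_\mathcal{T}(\mathbf{C}_\mathcal{T} - t\Delta\mathbf{C}) - R^\ell_\mathcal{T}(\mathbf{C}_\mathcal{T}) \le t\big(R^\ell_\mathcal{T}(\mathbf{C}_{\mathcal{T}^{i,z}}) - R^\ell_\mathcal{T}(\mathbf{C}_\mathcal{T})\big)$, and symmetrically for $R^\ell_{\mathcal{T}^{i,z}}$; adding the two inequalities, the right-hand side collapses to $t\big[(R^\ell_\mathcal{T} - R^\ell_{\mathcal{T}^{i,z}})(\mathbf{C}_{\mathcal{T}^{i,z}}) - (R^\ell_\mathcal{T} - R^\ell_{\mathcal{T}^{i,z}})(\mathbf{C}_\mathcal{T})\big]$.

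Next I would bound this quantity using the fact that $\mathcal{T}$ and $\mathcal{T}^{i,z}$ differ in a single instance. In $n_\mathcal{T} n_\mathcal{L}\,(R^\ell_\mathcal{T} - R^\ell_{\mathcal{T}^{i,z}})$, the only surviving summands are the pairs $(z_k,z'_{k_j})$ that involve $z_i$ (under $\mathcal{T}$) or $z$ (under $\mathcal{T}^{i,z}$): the anchor block $k=i$ contributes $n_\mathcal{L}$ pairs, and for each of the remaining $n_\mathcal{T}-1$ anchors at most two of its landmark pairs are affected (the one dropping $z_i$ and the one possibly adding $z$), which holds both for the Levenshtein and for the random pairing strategy of \sref{sec:mljmatching}; so at most $2n_\mathcal{T}+n_\mathcal{L}$ summands change. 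For each affected pair, $k$-lipschitzness (\defref{def:k-lipsC}) bounds $|\ell(\mathbf{C}_{\mathcal{T}^{i,z}},\cdot,\cdot) - \ell(\mathbf{C}_\mathcal{T},\cdot,\cdot)| \le k\|\Delta\mathbf{C}\|_{\cal{F}}$, and after regrouping the (at most four) losses contributed by one summand the net contribution is $\le 2k\|\Delta\mathbf{C}\|_{\cal{F}}$. Summing over the affected summands yields $(R^\ell_\mathcal{T} - R^\ell_{\mathcal{T}^{i,z}})(\mathbf{C}_{\mathcal{T}^{i,z}}) - (R^\ell_\mathcal{T} - R^\ell_{\mathcal{T}^{i,z}})(\mathbf{C}_\mathcal{T}) \le \frac{(2n_\mathcal{T}+n_\mathcal{L})\,2k}{n_\mathcal{T} n_\mathcal{L}}\|\Delta\mathbf{C}\|_{\cal{F}}$.

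Finally I would invoke optimality of the two minimizers to turn the left-hand side of the convexity inequality into the regularizer difference appearing in the lemma. Since $\mathbf{C}_\mathcal{T}$ minimizes $F_\mathcal{T}$, $F_\mathcal{T}(\mathbf{C}_\mathcal{T}) \le F_\mathcal{T}(\mathbf{C}_\mathcal{T}-t\Delta\mathbf{C})$, that is $\beta\big(\|\mathbf{C}_\mathcal{T}\|^2_{\cal{F}} - \|\mathbf{C}_\mathcal{T}-t\Delta\mathbf{C}\|^2_{\cal{F}}\big) \le R^\ell_\mathcal{T}(\mathbf{C}_\mathcal{T}-t\Delta\mathbf{C}) - R^\ell_\mathcal{T}(\mathbf{C}_\mathcal{T})$, and similarly $\beta\big(\|\mathbf{C}_{\mathcal{T}^{i,z}}\|^2_{\cal{F}} - \|\mathbf{C}_{\mathcal{T}^{i,z}}+t\Delta\mathbf{C}\|^2_{\cal{F}}\big) \le R^\ell_{\mathcal{T}^{i,z}}(\mathbf{C}_{\mathcal{T}^{i,z}}+t\Delta\mathbf{C}) - R^\ell_{\mathcal{T}^{i,z}}(\mathbf{C}_{\mathcal{T}^{i,z}})$. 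Adding these, the left-hand side is exactly $\beta$ times the quantity in the statement, and the right-hand side is precisely the sum bounded in the first step, hence by the second step it is at most $\frac{(2n_\mathcal{T}+n_\mathcal{L})\,t\,2k}{n_\mathcal{T} n_\mathcal{L}}\|\Delta\mathbf{C}\|_{\cal{F}}$; dividing by $\beta$ gives the claim.

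\textbf{Main obstacle.} The only delicate point is the combinatorial book-keeping in the second step: arguing uniformly (over the pairing rule $f_{land}$) that swapping one training example alters at most $2n_\mathcal{T}+n_\mathcal{L}$ of the loss terms, since changing $z_i$ can also change which examples $f_{land}$ selects as landmarks for the other anchors. The convexity manipulations and the optimality inequalities are routine and require only that $R^\ell_\mathcal{T}$, $R^\ell_{\mathcal{T}^{i,z}}$ are convex and that $\mathbf{C}_\mathcal{T}$, $\mathbf{C}_{\mathcal{T}^{i,z}}$ are their respective regularized minimizers.
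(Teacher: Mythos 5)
Your proof is correct and follows essentially the same route as the paper's: the Bousquet--Elisseeff argument combining convexity of the empirical risks with optimality of the two minimizers, then the single-replacement book-keeping showing at most $2n_\mathcal{T}+n_\mathcal{L}$ loss terms are affected, and two applications of the $k$-lipschitz property. The only (cosmetic) difference is that you obtain the factor $t$ from the convexity inequalities applied to both $R^\ell_\mathcal{T}$ and $R^\ell_{\mathcal{T}^{i,z}}$ and then apply the Lipschitz bound at $\mathbf{C}_\mathcal{T}$ and $\mathbf{C}_{\mathcal{T}^{i,z}}$, whereas the paper applies convexity only to $R^\ell_{\mathcal{T}^{i,z}}$ and recovers $t$ from the Lipschitz bound at the $t$-perturbed matrices; both give exactly the stated bound.
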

\begin{proof}
See \aref{app:appendix2}.
\end{proof}
We can now prove the stability of GESL$_L$.

\begin{theorem}[Stability of $\mathbf{GESL_L}$]
\label{thm:stability}
Let $n_\mathcal{T}$ and $n_\mathcal{L}$ be respectively the number of training examples and landmark points. Assuming that $n_\mathcal{L}=\alpha n_\mathcal{T}$, $\alpha \in \left]0,1\right]$, and that the loss function used in GESL$_L$ is $k$-lipschitz, then GESL$_L$ has a uniform stability in $\frac{\kappa}{n_\mathcal{T}}$, where $\kappa = \frac{2(2+\alpha)k^2}{\beta \alpha}$.
\end{theorem}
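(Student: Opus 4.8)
The plan is to follow the classical Bousquet--Elisseeff stability argument, in the pair-based form used by \citet{Jin2009}, reducing everything to a bound on the Frobenius norm of the perturbation of the learned cost matrix. Write $\mathbf{C}_\mathcal{T}$ and $\mathbf{C}_{\mathcal{T}^{i,z}}$ for the (unique, since $\beta>0$ and $\ell$ is convex) minimizers of $F_\mathcal{T}$ and $F_{\mathcal{T}^{i,z}}$, and set $\Delta\mathbf{C}=\mathbf{C}_\mathcal{T}-\mathbf{C}_{\mathcal{T}^{i,z}}$. Because $\ell$ is $k$-lipschitz in its first argument (\defref{def:k-lipsC}), for every pair $(z_1,z_2)$ we have $|\ell(\mathbf{C}_\mathcal{T},z_1,z_2)-\ell(\mathbf{C}_{\mathcal{T}^{i,z}},z_1,z_2)|\leq k\|\Delta\mathbf{C}\|_{\mathcal{F}}$, so it suffices to prove $\|\Delta\mathbf{C}\|_{\mathcal{F}}\leq \frac{2(2+\alpha)k}{\beta\alpha n_\mathcal{T}}$ and then multiply by $k$.

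To get this bound I would start from \lref{lem:convexN2} and identify its left-hand side explicitly: expanding the Frobenius norms with the inner product gives, for every $t\in[0,1]$, $\|\mathbf{C}_\mathcal{T}\|^2_{\mathcal{F}}-\|\mathbf{C}_\mathcal{T}-t\Delta\mathbf{C}\|^2_{\mathcal{F}}+\|\mathbf{C}_{\mathcal{T}^{i,z}}\|^2_{\mathcal{F}}-\|\mathbf{C}_{\mathcal{T}^{i,z}}+t\Delta\mathbf{C}\|^2_{\mathcal{F}}=2t(1-t)\|\Delta\mathbf{C}\|^2_{\mathcal{F}}$. Combined with the inequality of \lref{lem:convexN2} this reads $2t(1-t)\|\Delta\mathbf{C}\|^2_{\mathcal{F}}\leq \frac{2(2n_\mathcal{T}+n_\mathcal{L})tk}{\beta n_\mathcal{T}n_\mathcal{L}}\|\Delta\mathbf{C}\|_{\mathcal{F}}$. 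If $\Delta\mathbf{C}=0$ the stability claim is immediate; otherwise divide by $\|\Delta\mathbf{C}\|_{\mathcal{F}}$ and by $t$, and set $t=\tfrac12$, which yields $\|\Delta\mathbf{C}\|_{\mathcal{F}}\leq \frac{2(2n_\mathcal{T}+n_\mathcal{L})k}{\beta n_\mathcal{T}n_\mathcal{L}}$. Substituting the hypothesis $n_\mathcal{L}=\alpha n_\mathcal{T}$ collapses this to $\|\Delta\mathbf{C}\|_{\mathcal{F}}\leq \frac{2(2+\alpha)k}{\beta\alpha n_\mathcal{T}}$, and feeding it back into the $k$-lipschitz inequality gives uniform stability with $\kappa=\frac{2(2+\alpha)k^2}{\beta\alpha}$, exactly as stated.

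The part that actually does the work is \lref{lem:convexN2}, and its proof (deferred to \aref{app:appendix2}) is where the real obstacle lies: one has to write the two optimality conditions $F_\mathcal{T}(\mathbf{C}_\mathcal{T})\leq F_\mathcal{T}(\mathbf{C}_\mathcal{T}-t\Delta\mathbf{C})$ and $F_{\mathcal{T}^{i,z}}(\mathbf{C}_{\mathcal{T}^{i,z}})\leq F_{\mathcal{T}^{i,z}}(\mathbf{C}_{\mathcal{T}^{i,z}}+t\Delta\mathbf{C})$, add them, and then carefully bound the resulting difference of empirical risks. This requires (a) using convexity of $\ell$ in $\mathbf{C}$ so that the pairs common to $\mathcal{T}$ and $\mathcal{T}^{i,z}$ contribute nonpositively, and (b) a careful count of the pairs that actually change when $z_i$ is replaced by $z$ — the $n_\mathcal{L}$ pairs in which $z_i$ (resp.\ $z$) is the anchor, plus the pairs in which $z_i$ (resp.\ $z$) appears as a landmark of some other anchor, bounded in the worst case by $n_\mathcal{T}$ — each such term being controlled by $k$-lipschitzness; this book-keeping is what produces the coefficient $2n_\mathcal{T}+n_\mathcal{L}$. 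Once the lemma is granted, as it is in the excerpt, the derivation of \thref{thm:stability} is the elementary algebra above. Note that only the $k$-lipschitz property and (through the lemma) the convexity of $\ell$ are used; the $(\sigma,m)$-admissibility of \defref{def:s-m-adm} is not needed here and will only enter later, when stability is turned into the generalization bound via McDiarmid's inequality.
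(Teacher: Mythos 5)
Your proposal is correct and follows essentially the same route as the paper's proof: apply \lref{lem:convexN2}, evaluate the left-hand side at $t=\tfrac12$ (your general identity $2t(1-t)\|\Delta\mathbf{C}\|^2_{\mathcal{F}}$ reduces to the paper's $\tfrac12\|\Delta\mathbf{C}\|^2_{\mathcal{F}}$), deduce $\|\Delta\mathbf{C}\|_{\mathcal{F}}\leq \frac{2(2n_\mathcal{T}+n_\mathcal{L})k}{\beta n_\mathcal{T}n_\mathcal{L}}$, and convert it to the loss deviation via $k$-lipschitzness before substituting $n_\mathcal{L}=\alpha n_\mathcal{T}$. Your remarks on where the work in \lref{lem:convexN2} lies and on the non-use of $(\sigma,m)$-admissibility at this stage are also consistent with the paper.
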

\begin{proof}
Using $t=1/2$ on the left-hand side of \lref{lem:convexN2}, we get
$$
\|\mathbf{C}_\mathcal{T}\|^2_{\cal{F}}-\|\mathbf{C}_\mathcal{T} -\frac{1}{2}\Delta \mathbf{C}\|^2_{\cal{F}}+\|\mathbf{C}_{\mathcal{T}^{i,z}}\|^2_{\cal{F}}-\|\mathbf{C}_{\mathcal{T}^{i,z}} +\frac{1}{2}\Delta \mathbf{C}\|^2_{\cal{F}}=\frac{1}{2}\|\Delta \mathbf{C}\|^2_{\cal{F}}.
$$
Then, applying \lref{lem:convexN2}, we get
$$
\frac{1}{2}\|\Delta \mathbf{C}\|^2_{\cal{F}}\leq \frac{(2n_\mathcal{T}+n_\mathcal{L})k}{\beta n_\mathcal{T}n_\mathcal{L}}\|\Delta \mathbf{C}\|_{\cal{F}} \Rightarrow \|\Delta \mathbf{C}\|_{\cal{F}}\leq \frac{2(2n_\mathcal{T}+n_\mathcal{L})k}{\beta n_\mathcal{T} n_\mathcal{L}}.
$$
Now, from the $k$-lipschitz property of $\ell$, we have for any $z,z'$ 
$$
|\ell(\mathbf{C}_\mathcal{T},z,z')-\ell(\mathbf{C}_{\mathcal{T}^{i,z}},z,z')|\leq k\|\Delta \mathbf{C}\|_{\cal{F}} \leq \frac{2(2n_\mathcal{T}+n_\mathcal{L})k^2}{\beta n_\mathcal{T} n_\mathcal{L}}.
$$
Replacing $n_\mathcal{L}$ by $\alpha n_\mathcal{T}$ completes the proof.
\end{proof}
Now, using the property of stability, we can derive our generalization  bound over $R^\ell(\mathbf{C}_\mathcal{T})$. This is done by using the McDiarmid inequality \citep{McDiarmid1989}.

\begin{theorem}[McDiarmid inequality] \label{thm:McDiarmid}
Let $X_1, \ldots, X_n$ be $n$ independent random variables taking values in ${X}$ and let $Z=f(X_1, \ldots, X_n)$. If for each $1\leq i \leq n$, there exists a constant $c_i$ such that
\begin{eqnarray*}
\lefteqn{\sup_{x_1, \ldots, x_n, x'_i\in \mathcal{X}}|f(x_1, \ldots, x_n) - f(x_1, \ldots,
 x'_i, \ldots, x_n)|\leq c_i, \forall 1\leq i\leq n,}\\
&&\text{then for any } \epsilon>0,\quad\quad\quad \mathrm{Pr}[|Z-{\mathbb E}[Z]|\geq \epsilon]\leq
2\exp\left(\frac{-2\epsilon^2}{\sum_{i=1}^n c_i^2}\right).
\end{eqnarray*}
\end{theorem}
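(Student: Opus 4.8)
The plan is to prove McDiarmid's inequality by the classical method of bounded differences, i.e., by associating to $Z = f(X_1,\ldots,X_n)$ a suitable martingale and then running an Azuma--Hoeffding-type argument.

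First I would introduce the Doob martingale of $Z$. For $i = 0,1,\ldots,n$, set $V_i = \mathbb{E}[Z \mid X_1,\ldots,X_i]$, so that $V_0 = \mathbb{E}[Z]$ and $V_n = Z$. Writing $D_i = V_i - V_{i-1}$, the sequence $(D_i)_{i=1}^n$ is a martingale difference sequence with respect to the filtration $(\mathcal{F}_i)$ generated by $(X_1,\ldots,X_i)$, and $Z - \mathbb{E}[Z] = \sum_{i=1}^n D_i$.

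The key step --- and the one I expect to be the main obstacle --- is to show that, conditionally on $\mathcal{F}_{i-1}$, each increment $D_i$ lies in an interval of length at most $c_i$. Using the independence of the $X_j$, one can write $V_i$ as a function $g_i(X_1,\ldots,X_i)$ obtained by integrating out $X_{i+1},\ldots,X_n$, and then
$$L_i := \inf_{x}\, g_i(X_1,\ldots,X_{i-1},x) - V_{i-1} \;\le\; D_i \;\le\; \sup_{x}\, g_i(X_1,\ldots,X_{i-1},x) - V_{i-1} =: U_i,$$
where $L_i$ and $U_i$ are $\mathcal{F}_{i-1}$-measurable. The bounded-differences hypothesis on $f$ then gives $U_i - L_i \le c_i$; this is where the hypothesis enters, and the delicate point is justifying the interchange of the conditional expectation with the infimum/supremum over $x$, which again relies on independence.

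Finally I would apply Hoeffding's lemma to the conditionally bounded, mean-zero increment: since $\mathbb{E}[D_i \mid \mathcal{F}_{i-1}] = 0$ and $D_i \in [L_i, U_i]$ with $U_i - L_i \le c_i$, we get $\mathbb{E}[e^{s D_i} \mid \mathcal{F}_{i-1}] \le \exp(s^2 c_i^2 / 8)$ for every $s \in \mathbb{R}$. Iterating this bound through the tower property (conditioning successively on $\mathcal{F}_{n-1}, \mathcal{F}_{n-2}, \ldots, \mathcal{F}_0$) yields $\mathbb{E}[e^{s(Z - \mathbb{E}Z)}] \le \exp\!\big(s^2 \sum_{i=1}^n c_i^2 / 8\big)$. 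A Chernoff bound $\Pr[Z - \mathbb{E}Z \ge \epsilon] \le e^{-s\epsilon}\,\mathbb{E}[e^{s(Z-\mathbb{E}Z)}]$, optimized at $s = 4\epsilon / \sum_i c_i^2$, then gives $\Pr[Z - \mathbb{E}Z \ge \epsilon] \le \exp(-2\epsilon^2 / \sum_{i=1}^n c_i^2)$. Applying the same reasoning to $-f$ controls the lower tail, and a union bound over the two one-sided events produces the stated factor of $2$.
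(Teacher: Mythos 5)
Your proposal is the standard bounded-differences proof of McDiarmid's inequality (Doob martingale, conditional increments confined to $\mathcal{F}_{i-1}$-measurable intervals of length at most $c_i$, Hoeffding's lemma plus the tower property, Chernoff optimization at $s=4\epsilon/\sum_i c_i^2$, and a union bound over the two tails for the factor $2$), and it is correct, including the exponent $-2\epsilon^2/\sum_i c_i^2$. The paper itself offers no proof of this statement: it is quoted as a classical tool with a citation to McDiarmid (1989), so there is nothing to compare against beyond noting that your argument is essentially the one in that reference.
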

To derive our bound on $R^\ell(\mathbf{C}_\mathcal{T})$, we just need to replace $Z$ by $D_\mathcal{T}$ in \thref{thm:McDiarmid} and to bound $\mathbb{E}_\mathcal{T}[D_\mathcal{T}]$ and $|D_\mathcal{T}-D_{\mathcal{T}^{i,z}}|$, which is shown by the following lemmas.
\begin{lemma}\label{lem:espD}
For any learning method of estimation error $D_\mathcal{T}$ and satisfying a uniform stability in $\frac{\kappa}{n_\mathcal{T}}$, we have 
$
\mathbb{E}_\mathcal{T}[D_\mathcal{T}]\leq \frac{2\kappa}{n_\mathcal{T}}.
$
\end{lemma}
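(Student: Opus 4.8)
The plan is to follow the classical stability-implies-expected-generalization argument of \citet{Bousquet2002}, adapted to the pair-based setting of \citet{Jin2009}. Write
$$
\mathbb{E}_\mathcal{T}[D_\mathcal{T}]=\mathbb{E}_\mathcal{T}\big[R^\ell(\mathbf{C}_\mathcal{T})\big]-\mathbb{E}_\mathcal{T}\big[R^\ell_\mathcal{T}(\mathbf{C}_\mathcal{T})\big]
$$
and rewrite both terms so that they are compared pair by pair. Since $R^\ell(\mathbf{C}_\mathcal{T})=\mathbb{E}_{z,z'\sim P}[\ell(\mathbf{C}_\mathcal{T},z,z')]$ with $z,z'$ drawn independently of $\mathcal{T}$, the first term equals $\mathbb{E}_{\mathcal{T},z,z'}[\ell(\mathbf{C}_\mathcal{T},z,z')]$. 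The key is a ghost-sample relabeling: for any pair position $(z_k,z'_{k_j})$ occurring in the empirical sum, let $\mathcal{T}^{k,z;k_j,z'}$ denote the sample obtained from $\mathcal{T}$ by substituting the fresh i.i.d.\ examples $z$ for $z_k$ and $z'$ for $z'_{k_j}$. Because all $z_i$ are i.i.d.\ and the learning rule --- including the pairing function $f_{land}$ --- is invariant under permutations of the training indices, the triple $\big(\mathcal{T}^{k,z;k_j,z'},z_k,z'_{k_j}\big)$ has exactly the same joint distribution as $(\mathcal{T},z,z')$, whence $\mathbb{E}_{\mathcal{T},z,z'}[\ell(\mathbf{C}_\mathcal{T},z,z')]=\mathbb{E}_{\mathcal{T},z,z'}\big[\ell(\mathbf{C}_{\mathcal{T}^{k,z;k_j,z'}},z_k,z'_{k_j})\big]$ for every such $(k,k_j)$. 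Averaging this identity over the $n_\mathcal{T} n_\mathcal{L}$ pairs of the empirical sum and subtracting $\mathbb{E}_\mathcal{T}[R^\ell_\mathcal{T}(\mathbf{C}_\mathcal{T})]=\frac{1}{n_\mathcal{T} n_\mathcal{L}}\sum_{k,j}\mathbb{E}_\mathcal{T}[\ell(\mathbf{C}_\mathcal{T},z_k,z'_{k_j})]$ gives
$$
\mathbb{E}_\mathcal{T}[D_\mathcal{T}]=\frac{1}{n_\mathcal{T} n_\mathcal{L}}\sum_{k,j}\mathbb{E}_{\mathcal{T},z,z'}\big[\ell(\mathbf{C}_{\mathcal{T}^{k,z;k_j,z'}},z_k,z'_{k_j})-\ell(\mathbf{C}_\mathcal{T},z_k,z'_{k_j})\big],
$$
an average of differences of losses evaluated on the \emph{same} pair but with cost matrices learned from samples that differ in exactly two positions.

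To finish, fix a pair position and introduce the intermediate sample $\mathcal{T}^{k,z}$ obtained from $\mathcal{T}$ by replacing only $z_k$ with $z$. Then $\mathcal{T}^{k,z}$ differs from $\mathcal{T}$ in one position and $\mathcal{T}^{k,z;k_j,z'}$ differs from $\mathcal{T}^{k,z}$ in one position, so applying the uniform-stability hypothesis to each single-example change and using the triangle inequality yields, pointwise in every realization,
$$
\big|\ell(\mathbf{C}_{\mathcal{T}^{k,z;k_j,z'}},z_k,z'_{k_j})-\ell(\mathbf{C}_\mathcal{T},z_k,z'_{k_j})\big|\leq \frac{\kappa}{n_\mathcal{T}}+\frac{\kappa}{n_\mathcal{T}}=\frac{2\kappa}{n_\mathcal{T}}.
$$
Taking expectations and averaging over the pair positions gives $\mathbb{E}_\mathcal{T}[D_\mathcal{T}]\leq \frac{2\kappa}{n_\mathcal{T}}$, which is the claim.

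The step I expect to require the most care is the ghost-sample relabeling: because the empirical risk of GESL$_L$ runs over landmark pairs selected by a possibly data-dependent rule (e.g., Levenshtein pairing) rather than over all pairs, one must verify that $f_{land}$ is permutation-equivariant so that, after substituting two held-out i.i.d.\ examples, $\big(\mathcal{T}^{k,z;k_j,z'},z_k,z'_{k_j}\big)$ is genuinely an exact copy in law of $(\mathcal{T},z,z')$. Once this exchangeability is pinned down, the remainder is the routine two-fold application of uniform stability.
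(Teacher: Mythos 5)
Your proof is correct and follows essentially the same route as the paper's: both rely on the i.i.d.\ renaming (ghost-sample) argument to rewrite the true-risk term as a loss on a training pair learned from a sample modified in two positions, and then apply uniform stability twice (via the intermediate sample $\mathcal{T}^{k,z}$) to get $2\kappa/n_\mathcal{T}$. The only difference is cosmetic ordering --- the paper first inserts the intermediate loss term and bounds one difference by stability before doing the two renamings, whereas you perform the double renaming first and then telescope --- and your remark about the permutation-equivariance of $f_{land}$ is the same implicit assumption the paper makes.
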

\begin{proof}
See \aref{app:appendixespD}.
\end{proof}

\begin{lemma}\label{lem:diffD}
For any edit cost matrix learned by GESL$_L$ using $n_\mathcal{T}$ training examples and $n_\mathcal{L}$ landmarks, and any loss function $\ell$ satisfying $(\sigma,m)$-admissibility, we have the following bound:
$$
\forall i,1\leq i\leq n_\mathcal{T},\quad\forall z,\quad |D_\mathcal{T} - D_{\mathcal{T}^{i,z}}|\leq \frac{2\kappa}{n_\mathcal{T}} +  \frac{(2n_\mathcal{T}+n_\mathcal{L})(2\sigma+m)}{n_\mathcal{T} n_\mathcal{L}}.
$$
\end{lemma}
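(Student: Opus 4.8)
The plan is to split $|D_\mathcal{T}-D_{\mathcal{T}^{i,z}}|$ into a true-risk contribution, controlled by the uniform stability of $\text{GESL}_L$ established in \thref{thm:stability}, and an empirical-risk contribution, controlled by the $(\sigma,m)$-admissibility of $\ell$ (\defref{def:s-m-adm}). Expanding the definition $D_\mathcal{T}=R^\ell(\mathbf{C_\mathcal{T}})-R^\ell_\mathcal{T}(\mathbf{C_\mathcal{T}})$ and using the triangle inequality,
$$|D_\mathcal{T}-D_{\mathcal{T}^{i,z}}| \leq \bigl|R^\ell(\mathbf{C_\mathcal{T}})-R^\ell(\mathbf{C_{\mathcal{T}^{i,z}}})\bigr| + \bigl|R^\ell_\mathcal{T}(\mathbf{C_\mathcal{T}})-R^\ell_{\mathcal{T}^{i,z}}(\mathbf{C_{\mathcal{T}^{i,z}}})\bigr|,$$
so it suffices to bound the two terms separately. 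For the first one, recall that $R^\ell(\mathbf{C})=\mathbb{E}_{(z,z')\sim P}[\ell(\mathbf{C},z,z')]$; since \thref{thm:stability} gives $\sup_{z_1,z_2}|\ell(\mathbf{C_\mathcal{T}},z_1,z_2)-\ell(\mathbf{C_{\mathcal{T}^{i,z}}},z_1,z_2)|\leq\kappa/n_\mathcal{T}$, taking the expectation over $(z,z')\sim P$ preserves this bound, and the first term is at most $\kappa/n_\mathcal{T}$.

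For the second term, I would insert $\pm R^\ell_\mathcal{T}(\mathbf{C_{\mathcal{T}^{i,z}}})$, yielding
$$\bigl|R^\ell_\mathcal{T}(\mathbf{C_\mathcal{T}})-R^\ell_{\mathcal{T}^{i,z}}(\mathbf{C_{\mathcal{T}^{i,z}}})\bigr| \leq \bigl|R^\ell_\mathcal{T}(\mathbf{C_\mathcal{T}})-R^\ell_\mathcal{T}(\mathbf{C_{\mathcal{T}^{i,z}}})\bigr| + \bigl|R^\ell_\mathcal{T}(\mathbf{C_{\mathcal{T}^{i,z}}})-R^\ell_{\mathcal{T}^{i,z}}(\mathbf{C_{\mathcal{T}^{i,z}}})\bigr|.$$
The first term on the right is an average over the $n_\mathcal{T}n_\mathcal{L}$ fixed training pairs of quantities bounded by $\kappa/n_\mathcal{T}$ (again by \thref{thm:stability}), hence at most $\kappa/n_\mathcal{T}$. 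For the second term, $R^\ell_\mathcal{T}$ and $R^\ell_{\mathcal{T}^{i,z}}$ are evaluated at the same matrix $\mathbf{C_{\mathcal{T}^{i,z}}}$ and their defining sums coincide on every pair that does not involve the $i$-th example slot; the difference therefore reduces to $\tfrac{1}{n_\mathcal{T}n_\mathcal{L}}$ times a sum of single-pair differences of the form $\ell(\mathbf{C},z_1,z_2)-\ell(\mathbf{C},z_3,z_4)$, each bounded by $\sigma|y_1y_2-y_3y_4|+m\leq 2\sigma+m$ since the labels are binary. A counting argument — the $i$-th slot is the left element of exactly $n_\mathcal{L}$ pairs, and figures in at most $2n_\mathcal{T}$ further pairs through its occurrences as a landmark, counted across the two samples $\mathcal{T}$ and $\mathcal{T}^{i,z}$ — bounds the number of affected pairs by $2n_\mathcal{T}+n_\mathcal{L}$, so this term is at most $\tfrac{(2n_\mathcal{T}+n_\mathcal{L})(2\sigma+m)}{n_\mathcal{T}n_\mathcal{L}}$.

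Summing the three contributions gives the claimed bound $\tfrac{2\kappa}{n_\mathcal{T}}+\tfrac{(2n_\mathcal{T}+n_\mathcal{L})(2\sigma+m)}{n_\mathcal{T}n_\mathcal{L}}$. The step that requires genuine care is the combinatorial bookkeeping of exactly which pairs change when $z_i$ is replaced by $z$: one must track both the $n_\mathcal{L}$ pairs having slot $i$ as their left element and the pairs in which slot $i$ appears as a landmark, counting in both $\mathcal{T}$ and $\mathcal{T}^{i,z}$ and using that each training example serves as a landmark for only a controlled number of others (a consequence of the structure of $f_{land}$, which associates exactly $n_\mathcal{L}$ landmarks to each example). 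Once the affected pairs are enumerated, applying \defref{def:s-m-adm} pointwise to obtain the per-pair bound $2\sigma+m$ is routine, as is assembling the final estimate.
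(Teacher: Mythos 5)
Your proof is correct and follows essentially the same route as the paper: the identical three-term decomposition (you simply apply the triangle inequality in two stages), uniform stability invoked twice to bound $|R^\ell(\mathbf{C}_\mathcal{T})-R^\ell(\mathbf{C}_{\mathcal{T}^{i,z}})|$ and $|R^\ell_\mathcal{T}(\mathbf{C}_\mathcal{T})-R^\ell_\mathcal{T}(\mathbf{C}_{\mathcal{T}^{i,z}})|$ by $\kappa/n_\mathcal{T}$ each, and the same counting of at most $2n_\mathcal{T}+n_\mathcal{L}$ affected pairs combined with $(\sigma,m)$-admissibility (each difference bounded by $2\sigma+m$) for $|R^\ell_\mathcal{T}(\mathbf{C}_{\mathcal{T}^{i,z}})-R^\ell_{\mathcal{T}^{i,z}}(\mathbf{C}_{\mathcal{T}^{i,z}})|$. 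The paper performs that bookkeeping by reusing the second part of the proof of \lref{lem:convexN2} (the $n_\mathcal{L}$ pairs of the replaced example plus at most two landmark changes for each $z_k$, $k\neq i$), which is exactly the enumeration you describe.
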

\begin{proof}
See \aref{app:appendixdiffD}.
\end{proof}
We are now able to derive our generalization bound over $R^\ell(\mathbf{C}_\mathcal{T})$.

\begin{theorem}[Generalization bound for GESL$_L$]
\label{thm:bound}
Let $\mathcal{T}$ be a sample of $n_\mathcal{T}$ randomly selected training examples and
let $\mathbf{C}_\mathcal{T}$ be the edit cost matrix learned by GESL$_L$ with stability $\frac{\kappa}{n_\mathcal{T}}$. Assuming that $\ell(\mathbf{C}_\mathcal{T},z,z')$ is $k$-lipschitz and $(\sigma,m)$-admissible,  
and using $n_\mathcal{L}=\alpha n_\mathcal{T}$ landmark points, with probability $1-{\delta}$, we have
the following bound for $R^\ell(\mathbf{C}_\mathcal{T})$:
$$
R^\ell(\mathbf{C}_\mathcal{T})\leq R^\ell_\mathcal{T}(\mathbf{C}_\mathcal{T}) + 2\frac{\kappa}{n_\mathcal{T}}+ \left(2\kappa +\frac{2+\alpha}{\alpha}\left(2\sigma+m \right) \right)\sqrt{\frac{\ln(2/\delta)}{2n_\mathcal{T}}}
$$
with $\kappa=\frac{2(2+\alpha)k^2}{\alpha \beta }$.
\end{theorem}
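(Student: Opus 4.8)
The plan is to assemble the generalization bound from the three ingredients already in place: the uniform stability of GESL$_L$ (\thref{thm:stability}), the bound on the expected estimation error (\lref{lem:espD}), and the bounded-differences estimate for $D_\mathcal{T}$ (\lref{lem:diffD}). The observation that makes the non-i.i.d. nature of the training pairs harmless is that the estimation error $D_\mathcal{T} = R^\ell(\mathbf{C}_\mathcal{T}) - R^\ell_\mathcal{T}(\mathbf{C}_\mathcal{T})$ is a function of the $n_\mathcal{T}$ \emph{single} examples $z_1,\dots,z_{n_\mathcal{T}}$, which \emph{are} i.i.d.; the pairing device $f_{land}$ is deterministic given the sample, so McDiarmid's inequality (\thref{thm:McDiarmid}) can legitimately be applied at the level of examples rather than of pairs.

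First I would substitute $n_\mathcal{L}=\alpha n_\mathcal{T}$ into \lref{lem:diffD}, which yields, for every $i$ and every replacement example $z$,
$$|D_\mathcal{T}-D_{\mathcal{T}^{i,z}}|\le \frac{2\kappa}{n_\mathcal{T}}+\frac{2+\alpha}{\alpha}\cdot\frac{2\sigma+m}{n_\mathcal{T}}=:c,$$
so one may take $c_i=c$ for all $i$ and hence $\sum_{i=1}^{n_\mathcal{T}}c_i^2=n_\mathcal{T}c^2$. Applying \thref{thm:McDiarmid} to $Z=D_\mathcal{T}$ gives
$$\mathrm{Pr}\left[|D_\mathcal{T}-\mathbb{E}_\mathcal{T}[D_\mathcal{T}]|\ge \epsilon\right]\le 2\exp\left(\frac{-2\epsilon^2}{n_\mathcal{T}c^2}\right).$$
Setting the right-hand side equal to $\delta$ and solving for $\epsilon$ yields $\epsilon=c\sqrt{\frac{n_\mathcal{T}\ln(2/\delta)}{2}}=\left(2\kappa+\frac{2+\alpha}{\alpha}(2\sigma+m)\right)\sqrt{\frac{\ln(2/\delta)}{2n_\mathcal{T}}}$, where the $n_\mathcal{T}$ inside the square root cancels the $1/n_\mathcal{T}$ in $c$. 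Then, with probability at least $1-\delta$, $D_\mathcal{T}\le \mathbb{E}_\mathcal{T}[D_\mathcal{T}]+\epsilon$; bounding $\mathbb{E}_\mathcal{T}[D_\mathcal{T}]\le 2\kappa/n_\mathcal{T}$ via \lref{lem:espD} and unfolding $D_\mathcal{T}=R^\ell(\mathbf{C}_\mathcal{T})-R^\ell_\mathcal{T}(\mathbf{C}_\mathcal{T})$ gives exactly the claimed inequality, with $\kappa=2(2+\alpha)k^2/(\alpha\beta)$ inherited from \thref{thm:stability}.

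Since the lemmas are assumed, the theorem itself is a short assembly, and the only care it demands is the bookkeeping in McDiarmid — verifying the bounded-differences hypothesis with the uniform constant $c$ and inverting the tail bound correctly, keeping the factor $2$ that produces $\ln(2/\delta)$. The genuine difficulty sits in the two lemmas feeding this argument: \lref{lem:espD} needs a renaming/symmetrization argument that accommodates the fact that $R^\ell_\mathcal{T}$ averages the loss over the sample-dependent landmark pairs rather than over i.i.d. pairs, and \lref{lem:diffD} must account for the replaced example $z_i$ appearing both as a reference point and as a landmark of up to $O(n_\mathcal{L})$ other examples, which is precisely why the bounded-differences constant carries the $(2n_\mathcal{T}+n_\mathcal{L})$ factor rather than being $O(\kappa/n_\mathcal{T})$ alone. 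I would therefore expect the stability estimate of \lref{lem:convexN2}/\thref{thm:stability} and the double role of $z_i$ in \lref{lem:diffD} to be the real obstacles, with the present theorem being essentially mechanical once they are granted.
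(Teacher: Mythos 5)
Your proposal is correct and matches the paper's own proof essentially step for step: McDiarmid applied to $D_\mathcal{T}$ as a function of the i.i.d.\ single examples with the uniform bounded-differences constant from \lref{lem:diffD}, inversion of the tail bound to get $\epsilon=(2\kappa+\frac{2+\alpha}{\alpha}(2\sigma+m))\sqrt{\ln(2/\delta)/(2n_\mathcal{T})}$, and then \lref{lem:espD} to bound $\mathbb{E}_\mathcal{T}[D_\mathcal{T}]$ by $2\kappa/n_\mathcal{T}$, with $\kappa$ supplied by \thref{thm:stability}. Your added remark that the pairing function is deterministic given the sample, so McDiarmid operates at the level of the i.i.d.\ examples rather than the pairs, is exactly the (implicit) justification the paper relies on.
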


\begin{proof}
Recall that $D_\mathcal{T}=R^\ell(\mathbf{C}_\mathcal{T})-R^\ell_\mathcal{T}(\mathbf{C}_\mathcal{T})$ and $n_\mathcal{L}=\alpha n_\mathcal{T}$. From \lref{lem:diffD}, we get
$$
|D_\mathcal{T} - D_{\mathcal{T}^{i,z}}|\leq \sup_{\mathcal{T},z'} |D_\mathcal{T} - D_{\mathcal{T}^{i,z'}}|\leq \frac{2\kappa+B}{n_\mathcal{T}}
\text{ with } B=\frac{(2+\alpha)}{\alpha}(2\sigma+m).
$$
Then by applying the McDiarmid inequality, we have 
\begin{equation}
\mathrm{Pr}[|D_\mathcal{T} -\mathbb{E}_T[D_\mathcal{T}]|\geq\epsilon]\leq2\exp\left(-\frac{2\epsilon^2}{\sum_{i=1}^{n_\mathcal{T}}\frac{(2\kappa+B)^2}{n_\mathcal{T}^2}}\right)\leq2\exp\left(-\frac{2\epsilon^2}{\frac{(2\kappa+B)^2}{n_\mathcal{T}}}\right).
\label{ineqdiarmid}
\end{equation}
By fixing
${\delta}=2\exp\left(-\frac{2\epsilon^2}{({2\kappa+B})^2/n_\mathcal{T}}\right)$, we get
  $\epsilon=({2\kappa+B})\sqrt{\frac{\ln(2/\delta)}{2n_\mathcal{T}}}$.
Finally, from \eqref{ineqdiarmid}, \lref{lem:espD} and the definition of $D_\mathcal{T}$, we have with probability at least $1-\delta$:
\begin{equation*}
D_\mathcal{T} < \mathbb{E}_T[D_\mathcal{T}]+\epsilon\Rightarrow 
R^\ell(\mathbf{C}_\mathcal{T}) < R^\ell_\mathcal{T}(\mathbf{C}_\mathcal{T}) + 2\frac{\kappa}{n_\mathcal{T}}+(2\kappa+B)\sqrt{\frac{\ln(2/\delta)}{2n_\mathcal{T}}},
\end{equation*}
which gives the theorem.
\label{general_bound}
\end{proof}

\subsection{Generalization Bound for the Hinge Loss}
\label{sec:spec_hinge}

\thref{thm:bound} holds for any loss function $\ell(\mathbf{C}_\mathcal{T},z,z')$ that is $k$-lipschitz and ($\sigma,m$)-admissible with respect to $\mathbf{C}_\mathcal{T}$. Let us now rewrite this bound when $\ell$ is the hinge loss-based function $\ell_{HL}$ used in GESL$_{HL}$. We first have to prove that $\ell_{HL}$ is $k$-lipschitz (\lref{lem:k-lips-V}) and $(\sigma,m)$-admissible (\lref{lem:s-m-adm}). Then, we derive the generalization bound for GESL$_{HL}$.

In order to fulfill the $k$-lipschitz and $(\sigma,m)$-admissibility properties, we suppose every string length bounded by a constant $W > 0$. Since the Levenshtein script between two strings $\mathsf{x}$ and $\mathsf{x'}$ contains at most $\operatorname{max}(|\mathsf{x}|,|\mathsf{x'}|)$ operations, we have
$$\|\boldsymbol{\#}(\mathsf{x},\mathsf{x'})\|_{\cal{F}}=\sqrt{\sum_{l,c}\boldsymbol{\#}_{l,c}(\mathsf{x},\mathsf{x'})^2}\leq \sqrt{\left(\sum_{l,c}\boldsymbol{\#}_{l,c}(\mathsf{x},\mathsf{x'})\right)^2}\ \leq W.$$  
When dealing with labeled instances, we will sometimes denote $\|\boldsymbol{\#}(\mathsf{x},\mathsf{x'})\|_{\cal{F}}\leq W$ by  $\|\boldsymbol{\#}(z_1,z_2)\|_{\cal{F}}\leq W$ for the sake of convenience.



\begin{lemma}\label{lem:k-lips-V}
The function $\ell_{HL}$ is $k$-lipschitz with $k=W$.
\end{lemma}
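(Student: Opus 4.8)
The plan is to reduce the statement to two elementary facts: the hinge function $[\cdot]_+$ is $1$-Lipschitz, and $e_\mathbf{C}(\mathsf{x},\mathsf{x'})$ is a \emph{linear} function of $\mathbf{C}$ whose coefficient matrix is exactly the count matrix $\boldsymbol{\#}(\mathsf{x},\mathsf{x'})$. First I would rewrite $e_\mathbf{C}(\mathsf{x},\mathsf{x'})=\langle\mathbf{C},\boldsymbol{\#}(\mathsf{x},\mathsf{x'})\rangle_{\mathcal{F}}$ as a Frobenius inner product, so that for any two cost matrices $\mathbf{C},\mathbf{C'}$ the Cauchy--Schwarz inequality yields $|e_\mathbf{C}(\mathsf{x},\mathsf{x'})-e_{\mathbf{C'}}(\mathsf{x},\mathsf{x'})|\leq\|\mathbf{C}-\mathbf{C'}\|_{\mathcal{F}}\,\|\boldsymbol{\#}(\mathsf{x},\mathsf{x'})\|_{\mathcal{F}}$, and then invoke the bound $\|\boldsymbol{\#}(\mathsf{x},\mathsf{x'})\|_{\mathcal{F}}\leq W$ established just above to conclude $|e_\mathbf{C}(\mathsf{x},\mathsf{x'})-e_{\mathbf{C'}}(\mathsf{x},\mathsf{x'})|\leq W\|\mathbf{C}-\mathbf{C'}\|_{\mathcal{F}}$.

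Next, fixing a pair of labeled examples $(z_1,z_2)$ and treating $B_1,B_2$ as constants (they are held fixed when one varies the first argument $\mathbf{C}$), I would split on the two cases in the definition of $\ell_{HL}$. If $y_1\neq y_2$ then $\ell_{HL}(\mathbf{C},z_1,z_2)=[B_1-e_\mathbf{C}(\mathsf{x_1},\mathsf{x_2})]_+$, and the $1$-Lipschitzness of $[\cdot]_+$ gives $|\ell_{HL}(\mathbf{C},z_1,z_2)-\ell_{HL}(\mathbf{C'},z_1,z_2)|\leq|e_\mathbf{C}(\mathsf{x_1},\mathsf{x_2})-e_{\mathbf{C'}}(\mathsf{x_1},\mathsf{x_2})|$; the case $y_1=y_2$ with $\ell_{HL}(\mathbf{C},z_1,z_2)=[e_\mathbf{C}(\mathsf{x_1},\mathsf{x_2})-B_2]_+$ is handled identically. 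Combining with the first step, both cases give $|\ell_{HL}(\mathbf{C},z_1,z_2)-\ell_{HL}(\mathbf{C'},z_1,z_2)|\leq W\|\mathbf{C}-\mathbf{C'}\|_{\mathcal{F}}$, which is precisely \defref{def:k-lipsC} with $k=W$.

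I do not anticipate a genuine obstacle: the argument is a short chain of inequalities. The only points deserving attention are (i) ensuring the uniform bound $\|\boldsymbol{\#}(\mathsf{x},\mathsf{x'})\|_{\mathcal{F}}\leq W$ is in force, which is exactly why the assumption that every string has length at most $W$ is imposed, and (ii) noting that $B_1$ and $B_2$ are not part of the first argument of $\ell_{HL}$ and hence are not varied — otherwise the Lipschitz constant would also have to absorb their contribution.
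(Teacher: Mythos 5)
Your proof is correct and is essentially the paper's own argument: the paper likewise combines the 1-Lipschitzness of the hinge (so that $B_1$ or $B_2$ cancels), Cauchy--Schwarz on $\sum_{l,c}(C_{l,c}-C'_{l,c})\#_{l,c}(\mathsf{x},\mathsf{x'})$, and the bound $\|\boldsymbol{\#}(\mathsf{x},\mathsf{x'})\|_{\mathcal{F}}\leq W$, treating the two label cases symmetrically. The only difference is cosmetic (you bound $|e_\mathbf{C}-e_{\mathbf{C'}}|$ first and then apply the hinge Lipschitzness, while the paper applies them in the opposite order).
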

\begin{proof}
See \aref{app:appendix1}.
\end{proof}

We will now prove that $\ell_{HL}$ is $(\sigma,m)$-admissible for any optimal solution $\mathbf{C}_\mathcal{T}$ learned by GESL$_{HL}$ (\lref{lem:s-m-adm}). To be able to do this, we must show that the norm of $\mathbf{C}_\mathcal{T}$ is bounded (\lref{lem:boundC}).

\begin{lemma}\label{lem:boundC}
Let $(\mathbf{C}_\mathcal{T},B_1,B_2)$ an optimal solution learned by GESL$_{HL}$ from a training sample $\mathcal{T}$, and let $B_\gamma=max(\eta_\gamma,-log(1/2))$. Then 
$
\|\mathbf{C}_\mathcal{T}\|_{\cal{F}}\leq \sqrt{\frac{B_\gamma}{\beta}}.
$
\end{lemma}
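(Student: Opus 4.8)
The plan is to exploit the optimality of $(\mathbf{C}_\mathcal{T}, B_1, B_2)$ for the convex program GESL$_{HL}$ by comparing its objective value against that of a cleverly chosen feasible point. First I would observe that the objective function of GESL$_{HL}$ is
$$
F_\mathcal{T}(\mathbf{C},B_1,B_2) = \frac{1}{n_\mathcal{T} n_\mathcal{L}}\sum_{\substack{1\leq i\leq n_\mathcal{T},\\ j:f_{land}(z_i,z_j)=1}}\ell_{HL}(\mathbf{C},z_i,z_j) + \beta\|\mathbf{C}\|^2_{\cal{F}},
$$
and that since $(\mathbf{C}_\mathcal{T},B_1,B_2)$ is a minimizer, we have $F_\mathcal{T}(\mathbf{C}_\mathcal{T},B_1,B_2) \leq F_\mathcal{T}(\mathbf{C}',B_1',B_2')$ for \emph{any} feasible triple $(\mathbf{C}',B_1',B_2')$. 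The natural choice of competitor is the ``trivial'' solution $\mathbf{C}'=\mathbf{0}$, with $B_1',B_2'$ chosen to satisfy the linear constraints $B_1'\geq\log 2$, $0\leq B_2'\leq\log 2$, and $B_1'-B_2'=\eta_\gamma$; e.g.\ $B_1'=\max(\eta_\gamma,\log 2)=B_\gamma$ and $B_2'=B_1'-\eta_\gamma$, which is feasible since $\eta_\gamma\geq 0$ forces $B_2'\in[0,\log 2]$ (using $B_\gamma\leq \eta_\gamma+\log 2$ when $\eta_\gamma<\log 2$, and $B_2'=0$ when $\eta_\gamma\geq\log 2$). With $\mathbf{C}'=\mathbf{0}$ we have $e_{\mathbf{0}}(\mathsf{x_i},\mathsf{x_j})=0$ for all pairs.

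Next I would bound the empirical loss term at this competitor point. For same-label pairs, $\ell_{HL}(\mathbf{0},z_i,z_j) = [0 - B_2']_+ = 0$ since $B_2'\geq 0$; for different-label pairs, $\ell_{HL}(\mathbf{0},z_i,z_j) = [B_1' - 0]_+ = B_1' = B_\gamma$. Hence the average empirical loss at $(\mathbf{0},B_1',B_2')$ is at most $B_\gamma$ (in fact $B_\gamma$ times the fraction of negative pairs, but $B_\gamma$ suffices). Therefore
$$
F_\mathcal{T}(\mathbf{0},B_1',B_2') \leq B_\gamma + \beta\cdot 0 = B_\gamma.
$$
On the other hand, since all $\ell_{HL}\geq 0$, we have $F_\mathcal{T}(\mathbf{C}_\mathcal{T},B_1,B_2)\geq \beta\|\mathbf{C}_\mathcal{T}\|^2_{\cal{F}}$. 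Combining the two inequalities gives $\beta\|\mathbf{C}_\mathcal{T}\|^2_{\cal{F}}\leq B_\gamma$, i.e.\ $\|\mathbf{C}_\mathcal{T}\|_{\cal{F}}\leq\sqrt{B_\gamma/\beta}$, which is exactly the claim.

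The only genuinely delicate point — and the step I'd be most careful about — is verifying that the competitor triple $(\mathbf{0},B_1',B_2')$ is actually feasible for GESL$_{HL}$, i.e.\ that one can pick $B_1',B_2'$ satisfying $B_1'\geq\log 2$, $0\leq B_2'\leq\log 2$ and $B_1'-B_2'=\eta_\gamma$ simultaneously. This requires $\eta_\gamma\leq\log 2 + B_1'$ and, more restrictively, checking the two regimes $\eta_\gamma\leq\log 2$ (take $B_1'=\log 2$, $B_2'=\log 2-\eta_\gamma\in[0,\log 2]$) and $\eta_\gamma>\log 2$ (take $B_2'=0$, $B_1'=\eta_\gamma>\log 2$); in both cases $B_1'=\max(\eta_\gamma,\log 2)=B_\gamma$, which explains the appearance of $B_\gamma$ in the statement. (Note the paper writes $-\log(1/2)=\log 2$.) Everything else is a one-line consequence of nonnegativity of the loss and minimality of the objective; no calculus, no properties of $e_\mathbf{C}$ beyond $e_{\mathbf{0}}\equiv 0$, are needed.
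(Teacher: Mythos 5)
Your proof is correct and is essentially the paper's own argument: compare the optimal objective value with that of the zero cost matrix paired with admissible thresholds, bound the hinge losses at this competitor by $B_\gamma$, and use nonnegativity of the loss to conclude $\beta\|\mathbf{C}_\mathcal{T}\|_{\mathcal{F}}^2\leq B_\gamma$. Your explicit verification that one can take $B_1'=B_\gamma$ and $B_2'=B_\gamma-\eta_\gamma$ so that the constraint $B_1'-B_2'=\eta_\gamma$ holds is in fact slightly more careful than the paper, which compares against $(\mathbf{0},B_\gamma,0)$ without addressing that equality constraint in the regime $\eta_\gamma<-\log(1/2)$.
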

\begin{proof}
See \aref{app:appendixboundC}.
\end{proof}


\begin{lemma}\label{lem:s-m-adm}
For any optimal solution $(\mathbf{C}_\mathcal{T},B_1,B_2)$, $\ell_{HL}$ is  $(\sigma,m)$-admissible with $\sigma=\frac{\sqrt{\frac{B_\gamma}{\beta}}W+3B_\gamma}{2}$ and $m=\sqrt{\frac{B_\gamma}{\beta}}W$, with $B_\gamma=\max(\eta_\gamma,-\log(1/2))$.
\end{lemma}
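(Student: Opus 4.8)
The goal is to establish $(\sigma,m)$-admissibility of $\ell_{HL}$ (Definition~\ref{def:s-m-adm}) for any optimal solution $(\mathbf{C}_\mathcal{T},B_1,B_2)$ produced by GESL$_{HL}$. Convexity of $\ell_{HL}$ in $\mathbf{C}$ is immediate since it is a pointwise maximum of two affine functions of $\mathbf{C}$ (the hinge of a linear function), so the work is entirely in verifying condition (ii): bounding $|\ell_{HL}(\mathbf{C}_\mathcal{T},z_1,z_2)-\ell_{HL}(\mathbf{C}_\mathcal{T},z_3,z_4)|$ by $\sigma|y_1y_2-y_3y_4|+m$. The plan is to split into the two cases $y_1y_2=y_3y_4$ and $y_1y_2\neq y_3y_4$, using in both the bound $\|\mathbf{C}_\mathcal{T}\|_{\cal F}\leq\sqrt{B_\gamma/\beta}$ from Lemma~\ref{lem:boundC}, the bound $\|\boldsymbol{\#}(z,z')\|_{\cal F}\leq W$ on the Levenshtein edit-count matrix, and the constraints $B_1,B_2\in[0,B_\gamma]$ (which follow from the feasibility constraints $B_1\geq-\log(1/2)$, $0\leq B_2\leq-\log(1/2)$, $B_1-B_2=\eta_\gamma$, giving $B_1,B_2\le\max(\eta_\gamma,-\log(1/2))=B_\gamma$).

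First I would record the elementary estimate that for any pair $(z,z')$, $|e_{\mathbf{C}_\mathcal{T}}(\mathsf{x},\mathsf{x'})|=|\langle\mathbf{C}_\mathcal{T},\boldsymbol{\#}(\mathsf{x},\mathsf{x'})\rangle|\leq\|\mathbf{C}_\mathcal{T}\|_{\cal F}\|\boldsymbol{\#}(\mathsf{x},\mathsf{x'})\|_{\cal F}\leq\sqrt{B_\gamma/\beta}\,W$ by Cauchy--Schwarz, and hence each hinge term $[B_1-e_{\mathbf{C}_\mathcal{T}}]_+$ or $[e_{\mathbf{C}_\mathcal{T}}-B_2]_+$ lies in $[0,\sqrt{B_\gamma/\beta}\,W+B_\gamma]$ using $0\le B_1,B_2\le B_\gamma$. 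In the \textbf{case $y_1y_2=y_3y_4$}, both pairs use the \emph{same} branch of $\ell_{HL}$; then $|\ell_{HL}(\mathbf{C}_\mathcal{T},z_1,z_2)-\ell_{HL}(\mathbf{C}_\mathcal{T},z_3,z_4)|\leq|e_{\mathbf{C}_\mathcal{T}}(\mathsf{x_1},\mathsf{x_2})-e_{\mathbf{C}_\mathcal{T}}(\mathsf{x_3},\mathsf{x_4})|$ by $1$-Lipschitzness of $t\mapsto[t-B]_+$ (or $[B-t]_+$), and this is at most $|e_{\mathbf{C}_\mathcal{T}}(\mathsf{x_1},\mathsf{x_2})|+|e_{\mathbf{C}_\mathcal{T}}(\mathsf{x_3},\mathsf{x_4})|\leq 2\sqrt{B_\gamma/\beta}\,W$; so this case contributes to $m$. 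Since here $|y_1y_2-y_3y_4|=0$, I need $m\geq 2\sqrt{B_\gamma/\beta}\,W$ — but the claimed $m$ is $\sqrt{B_\gamma/\beta}\,W$, so I would instead bound each of $\ell_{HL}(\mathbf{C}_\mathcal{T},z_1,z_2)$ and $\ell_{HL}(\mathbf{C}_\mathcal{T},z_3,z_4)$ more carefully: in the matched-label branch one is bounded by $\sqrt{B_\gamma/\beta}\,W$ after absorbing the $B_\gamma$ offsets (using that $B_1\geq0$ forces $[B_1-e]_+\le B_1\le B_\gamma$ while $[e-B_2]_+\le e\le\sqrt{B_\gamma/\beta}W$ when $e\ge0$, and the two branches are handled symmetrically), yielding the stated $m$; the remaining slack of size $\le 3B_\gamma$ gets pushed into $\sigma$ so that it only appears multiplied by the factor $|y_1y_2-y_3y_4|=2$ in the mismatched case.

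In the \textbf{case $y_1y_2\neq y_3y_4$}, the two pairs use \emph{different} branches, and I would bound the difference crudely by the sum of the two individual losses, each of which is $\le\sqrt{B_\gamma/\beta}\,W+B_\gamma$ (or the tighter branch-specific bound), total at most $\sqrt{B_\gamma/\beta}\,W+3B_\gamma$ after combining the offsets; since $|y_1y_2-y_3y_4|=2$ here, this forces $\sigma\geq\tfrac{1}{2}(\sqrt{B_\gamma/\beta}\,W+3B_\gamma)$, matching the claimed $\sigma$. The main obstacle I anticipate is bookkeeping: getting the constants $\sigma$ and $m$ to come out \emph{exactly} as stated requires treating the four sub-cases of which branch each pair falls into, carefully exploiting $B_1\ge 0$, $B_2\le B_\gamma$, and $e_{\mathbf{C}_\mathcal{T}}\ge 0$ (edit costs are nonnegative so $e_{\mathbf{C}_\mathcal{T}}(\mathsf{x},\mathsf{x'})\ge 0$ always, which tightens several estimates), rather than any conceptual difficulty. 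The proof is deferred to the appendix.
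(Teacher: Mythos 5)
Your handling of the mismatched case ($y_1y_2\neq y_3y_4$) is sound and in fact slightly tighter than the paper's, which bounds the deviation by $\|\mathbf{C}_\mathcal{T}\|_{\mathcal{F}}\|\boldsymbol{\#}(z_1,z_2)+\boldsymbol{\#}(z_3,z_4)\|_{\mathcal{F}}+|B_1+B_2|\leq 2\sqrt{B_\gamma/\beta}\,W+3B_\gamma=2\sigma+m$. The genuine gap is in the matched case $y_1y_2=y_3y_4$, and specifically in the sub-case $y_1y_2=y_3y_4=-1$. There your repair — bounding each loss separately and ``pushing the slack into $\sigma$'' — cannot work: since $|y_1y_2-y_3y_4|=0$, the admissibility condition forces the \emph{entire} deviation to be absorbed by $m$ alone, so there is no $\sigma$ term available. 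And the individual bound you invoke, $[B_1-e_{\mathbf{C}_\mathcal{T}}]_+\leq B_1\leq B_\gamma$, is doubly problematic: first, the constraints only give $B_1=\eta_\gamma+B_2\leq \eta_\gamma+\log 2\leq 2B_\gamma$, not $B_1\leq B_\gamma$; second, and more importantly, a bound of order $B_\gamma$ is not dominated by $m=\sqrt{B_\gamma/\beta}\,W$ in general (take $\beta$ large or $W$ small), so the claimed constants do not follow from this route.

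The paper closes this case differently: because both pairs share the same threshold ($B_1$, resp.\ $B_2$), the threshold cancels under the $1$-Lipschitz hinge, leaving $|e_{\mathbf{C}_\mathcal{T}}(\mathsf{x_1},\mathsf{x_2})-e_{\mathbf{C}_\mathcal{T}}(\mathsf{x_3},\mathsf{x_4})|=|\langle\mathbf{C}_\mathcal{T},\boldsymbol{\#}(z_1,z_2)-\boldsymbol{\#}(z_3,z_4)\rangle|\leq\|\mathbf{C}_\mathcal{T}\|_{\mathcal{F}}\,\|\boldsymbol{\#}(z_1,z_2)-\boldsymbol{\#}(z_3,z_4)\|_{\mathcal{F}}\leq\sqrt{B_\gamma/\beta}\,W$, using Lemma~\ref{lem:boundC} and the bound $W$ on the edit-count matrices — i.e., it keeps the \emph{difference} of edit scores instead of splitting it. You actually started down this path but then used $|e_1-e_2|\leq|e_1|+|e_2|\leq 2\sqrt{B_\gamma/\beta}\,W$, which loses the needed factor $2$. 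Note that the nonnegativity you mention at the end would already suffice to repair your own argument: since $0\leq e_{\mathbf{C}_\mathcal{T}}\leq\sqrt{B_\gamma/\beta}\,W$ for every pair, one has $|e_1-e_2|\leq\max(e_1,e_2)\leq\sqrt{B_\gamma/\beta}\,W$ directly; but as written, your plan does not establish the stated value of $m$ in this sub-case.
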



\begin{proof}
  Let $\mathbf{C}_\mathcal{T}$ be an optimal solution learned by  GESL$_{HL}$ from a training sample T and let $ z_1, z_2, z_3, z_4$ be four labeled examples.
We study two cases:

\begin{enumerate}

\item If $y_1y_2=y_3y_4$, regardless of the label values, using the 1-lispschitz property of the hinge loss, $B_1$ (when $y_1y_2=y_3y_4=-1$) or $B_2$ ($y_1y_2=y_3y_4=1$) cancels out (in a similar way as in \aref{lem:k-lips-V})  
and thus :
\begin{eqnarray*}
|\ell_{HL}(\mathbf{C}_\mathcal{T},z_1,z_2) - \ell_{HL}(\mathbf{C}_\mathcal{T},z_3,z_4)|&\leq &\|\mathbf{C}_\mathcal{T}\|_{\cal{F}}\|\boldsymbol{\#}(z_1,z_2)-\boldsymbol{\#}(z_3,z_4)\|_{\cal{F}}\\
&\leq &\sqrt{\frac{B_\gamma}{\beta}}W \ \hspace*{2cm}\textrm{from \lref{lem:boundC}}.
\end{eqnarray*}

\item Otherwise, if $y_1y_2\neq y_3y_4$, note that $|B_1+B_2|=\eta_\gamma+2B_2\leq 3B_\gamma$ and $|y_1y_2-y_3y_4|=2$. Hence, whatever the labels of the examples compatible with this case, by using the 1-lipschitz property of hinge loss and application of the triangular inequality, we get
\begin{eqnarray*}
|\ell_{HL}(\mathbf{C}_\mathcal{T},z_1,z_2) - \ell_{HL}(\mathbf{C}_\mathcal{T},z_3,z_4)|&\leq&|\sum_{l,c}C_{\mathcal{T},l,c}(\boldsymbol{\#}_{l,c}(z_1,z_2)+\boldsymbol{\#}_{l,c}(z_3,z_4)) |+\\
&&\hspace*{.25cm}|B_1+B_2|\\
&\leq&\|\mathbf{C}_\mathcal{T}\|_{\cal{F}}\|\boldsymbol{\#}(z_1,z_2)+\boldsymbol{\#}(z_3,z_4)\|_{\cal{F}} +3B_\gamma\\
&\leq&\sqrt{\frac{B_\gamma}{\beta}}2W +3B_\gamma\\
&\leq& \frac{\sqrt{\frac{B_\gamma}{\beta}}W+3B_\gamma}{2}|y_1y_2-y_3y_4|+ \sqrt{\frac{B_\gamma}{\beta}}W.
\end{eqnarray*} 

\end{enumerate}
Then, by choosing $\sigma=\frac{\sqrt{\frac{B_\gamma}{\beta}}W+3B_\gamma}{2}$ and $m=\sqrt{\frac{B_\gamma}{\beta}}W$, we have that $\ell_{HL}$ is $(\sigma,m)$-admissible.
\end{proof}

We can now give the convergence bound for GESL$_{HL}$.

\begin{theorem}[Generalization bound for GESL$_{HL}$]
\label{thm:bound2}
Let $\mathcal{T}$ be a sample of $n_\mathcal{T}$ randomly selected training examples and
let $\mathbf{C}_\mathcal{T}$ be the edit cost matrix learned by GESL$_{HL}$ with stability $\frac{\kappa}{n_\mathcal{T}}$ 
using $n_\mathcal{L}=\alpha n_\mathcal{T}$ landmark points. With probability $1-{\delta}$, we have
the following bound for $R^\ell(\mathbf{C}_\mathcal{T})$:
$$
R^\ell(\mathbf{C}_\mathcal{T})\leq R^\ell_\mathcal{T}(\mathbf{C}_\mathcal{T}) + 2\frac{\kappa}{n_\mathcal{T}}+ \left(2\kappa +\frac{2+\alpha}{\alpha}\left(\frac{2W}{\sqrt{\beta B_\gamma}}+3\right)B_\gamma \right)\sqrt{\frac{\ln(2/\delta)}{2n_\mathcal{T}}}
$$
with $\kappa=\frac{2(2+\alpha)W^2}{\alpha \beta }$ and $B_\gamma=\operatorname{max}(\eta_\gamma,-log(1/2))$.
\end{theorem}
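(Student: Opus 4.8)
The plan is to obtain this statement as a direct specialization of the general bound of \thref{thm:bound}. That theorem already provides, for \emph{any} loss that is $k$-lipschitz in its first argument (\defref{def:k-lipsC}) and $(\sigma,m)$-admissible (\defref{def:s-m-adm}), a bound of exactly the announced shape, with stability constant $\kappa=\frac{2(2+\alpha)k^2}{\alpha\beta}$ and with the deviation term controlled by $2\kappa+\frac{2+\alpha}{\alpha}(2\sigma+m)$. Hence it suffices to check these two properties for the hinge-based loss $\ell_{HL}$ of GESL$_{HL}$, read off the corresponding constants $k$, $\sigma$ and $m$, and substitute.

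First I would establish the $k$-lipschitz property, which is \lref{lem:k-lips-V}. The point is that $e_\mathbf{C}(\mathsf{x},\mathsf{x'})=\innerp{\mathbf{C},\boldsymbol{\#}(\mathsf{x},\mathsf{x'})}$ is linear in $\mathbf{C}$, that the hinge function $[\,\cdot\,]_+$ is $1$-lipschitz, and that in both branches of $\ell_{HL}$ the threshold variable ($B_1$ or $B_2$) enters only as a constant offset. Consequently the difference $|\ell_{HL}(\mathbf{C},z_1,z_2)-\ell_{HL}(\mathbf{C'},z_1,z_2)|$ is bounded by $|\innerp{\mathbf{C}-\mathbf{C'},\boldsymbol{\#}(\mathsf{x_1},\mathsf{x_2})}|\le\|\mathbf{C}-\mathbf{C'}\|_{\cal F}\,\|\boldsymbol{\#}(\mathsf{x_1},\mathsf{x_2})\|_{\cal F}$ by Cauchy--Schwarz, and since the Levenshtein script between two strings of length at most $W$ uses at most $W$ operations one has $\|\boldsymbol{\#}(\mathsf{x_1},\mathsf{x_2})\|_{\cal F}\le W$. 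Thus $k=W$.

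Next I would bound the Frobenius norm of an optimizer $\mathbf{C}_\mathcal{T}$ of GESL$_{HL}$ (this is \lref{lem:boundC}). Comparing the objective value at $(\mathbf{C}_\mathcal{T},B_1,B_2)$ with the one obtained at the feasible point $\mathbf{C}=\mathbf{0}$ (with $B_1,B_2$ chosen to meet the linear constraints, which forces $B_1\le B_\gamma$), and using that the empirical loss term is nonnegative, gives $\beta\|\mathbf{C}_\mathcal{T}\|_{\cal F}^2\le B_\gamma$, hence $\|\mathbf{C}_\mathcal{T}\|_{\cal F}\le\sqrt{B_\gamma/\beta}$ with $B_\gamma=\max(\eta_\gamma,-\log(1/2))$. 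With this a priori bound in hand, $(\sigma,m)$-admissibility (\lref{lem:s-m-adm}) follows by a case split on whether $y_1y_2=y_3y_4$: when the products agree the threshold constants cancel and a single Cauchy--Schwarz step yields a deviation at most $\sqrt{B_\gamma/\beta}\,W$; when they differ one has $|y_1y_2-y_3y_4|=2$ and an extra term $|B_1+B_2|\le 3B_\gamma$ appears (here $B_1,B_2$ are themselves bounded by constants depending only on the fixed parameter $\eta_\gamma$), giving $\sigma=\frac{1}{2}\bigl(\sqrt{B_\gamma/\beta}\,W+3B_\gamma\bigr)$ and $m=\sqrt{B_\gamma/\beta}\,W$. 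I expect this step --- specifically the a priori norm bound of \lref{lem:boundC} --- to be the only genuine obstacle, since it is what makes $\ell_{HL}$ admissible at all and one must verify that the comparison with $\mathbf{C}=\mathbf{0}$ is legitimate and uniform over arbitrary training pairs; the rest is purely mechanical.

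Finally I would substitute $k=W$, $\sigma=\frac{1}{2}\bigl(\sqrt{B_\gamma/\beta}\,W+3B_\gamma\bigr)$ and $m=\sqrt{B_\gamma/\beta}\,W$ into \thref{thm:bound}. The stability constant becomes $\kappa=\frac{2(2+\alpha)W^2}{\alpha\beta}$, and the deviation constant simplifies through $2\sigma+m=2\sqrt{B_\gamma/\beta}\,W+3B_\gamma=\bigl(\frac{2W}{\sqrt{\beta B_\gamma}}+3\bigr)B_\gamma$, so that $\frac{2+\alpha}{\alpha}(2\sigma+m)=\frac{2+\alpha}{\alpha}\bigl(\frac{2W}{\sqrt{\beta B_\gamma}}+3\bigr)B_\gamma$. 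Plugging these into the bound of \thref{thm:bound} reproduces verbatim the inequality claimed here, valid with probability $1-\delta$.
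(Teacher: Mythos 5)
Your proposal is correct and follows essentially the same route as the paper: the paper proves \thref{thm:bound2} exactly by instantiating \thref{thm:bound} with \lref{lem:k-lips-V} ($k=W$), \lref{lem:boundC} and \lref{lem:s-m-adm} ($\sigma=\frac{1}{2}(\sqrt{B_\gamma/\beta}\,W+3B_\gamma)$, $m=\sqrt{B_\gamma/\beta}\,W$), and the simplification $2\sigma+m=\left(\frac{2W}{\sqrt{\beta B_\gamma}}+3\right)B_\gamma$. Your sketches of the supporting lemmas (Cauchy--Schwarz with $\|\boldsymbol{\#}\|_{\cal F}\leq W$, comparison of the optimum with the zero matrix, and the case split on $y_1y_2=y_3y_4$) also match the paper's arguments.
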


\begin{proof}
It directly follows from \thref{thm:bound}, \lref{lem:k-lips-V} and \lref{lem:s-m-adm} by noting that $2\sigma+m=\left(\frac{2W}{\sqrt{\beta B_\gamma}}+3\right)B_\gamma$.
\end{proof}

\subsection{Discussion}
\label{sec:stoch_lang}

The generalization bounds presented in \thref{thm:bound} and \thref{thm:bound2} outline three important features of our approach. To begin with, it has a classic $O(\sqrt{1/n_\mathcal{T}})$ convergence rate. Second, this rate of convergence is independent of the alphabet size, which means that our method should scale well to problems with large alphabets. We will see in \sref{sec:mljexperiments} that it is actually the case in practice. Finally, thanks to the relation between the optimized criterion and the definition of $(\epsilon,\gamma,\tau)$-goodness that we established earlier, these bounds also ensure the goodness in generalization of the learned similarity function. Therefore, they guarantee that the similarity will induce classifiers with small true risk for the classification task at hand.

Note that to derive \thref{thm:bound2}, we assumed the size of the strings was bounded by a constant $W$. Even though this is not a strong restriction, it would be interesting to get rid of this assumption and derive a bound that is independent of $W$. This is possible when the marginal distribution of $P$ over the set of strings follows a generative model ensuring that the probability of a string decreases exponentially fast with its length. In this case, we can use the fact that very long strings have a very small probability to occur. Then with high probability, we can bound the maximum string length in a sample and remove $W$ from the generalization bound. Indeed, one can show that for any string stochastic language $p$ defined by a probabilistic automaton \citep{Denis2006} or a stochastic context-free grammar \citep{Etessami2009}, there exist some constants $U>0$ and $0<\rho<1$ such that the sum of the probabilities of strings of length at least $k$ is bounded:
\begin{equation}\label{eq:lang_sto}
\sum_{x, |x|>=k} p(x)< U \rho^k. 
\end{equation}
To take into account this result in our framework, we need an estimation of the length of the examples used to derive the generalization bound, that is, a sample of $n_\mathcal{T}$ examples with two additional examples $z$ and $z'$. 
For any sample of $n_\mathcal{T}+2$ strings identically and independently drawn from $p$, we can bound the length of any string $x$ of this sample. With a confidence greater than $1-\delta/2(n_\mathcal{T}+2)$, we have:
$$
|x|<\frac{\log(U2(n_\mathcal{T}+2)/\delta)}{\log(1/\rho)},
$$
by fixing $\delta/2(n_\mathcal{T}+2)= U \rho^k$.

Applying this result to every string of the sample, we get that with probability at least $1-\delta/2$, any sample of $n_\mathcal{T}+2$ elements has only strings of size at most $\frac{\log(U2(n_\mathcal{T}+2)/\delta)}{\log(1/\rho)}$. 
Then, by using \thref{thm:bound2} with a confidence $\delta/2$ and replacing $W$ by $\frac{\log(2(n_\mathcal{T}+2)U/\delta)}{\log(1/\rho)}$, we obtain the following bound.

\begin{theorem}
\label{theor_bound3}
Let $\mathcal{T}$ be a sample of $n_\mathcal{T}$ randomly selected training examples drawn from a stochastic language $p$ and
let $\mathbf{C}_\mathcal{T}$ be the edit costs learned by GESL$_{HL}$ with stability $\frac{\kappa}{n_\mathcal{T}}$ 
using $n_\mathcal{L}=\alpha n_\mathcal{T}$ landmark points. Then there exists constants $U>0$ and $0<\rho<1$ such that with probability at least $1-{\delta}$, we have:
\begin{small}
 $$
 R^\ell(\mathbf{C}_\mathcal{T})\leq R^\ell_\mathcal{T}(\mathbf{C}_\mathcal{T}) + 2\frac{\kappa}{n_\mathcal{T}}+ \left(2\kappa +\frac{2+\alpha}{\alpha}\left(\frac{2\log(2(n_\mathcal{T}+2)U/\delta)}{\sqrt{\beta B_\gamma}\log(1/\rho)}+3\right)B_\gamma \right)\sqrt{\frac{\ln(4/\delta)}{2n_\mathcal{T}}}
 $$
\end{small}
 with $\kappa=\frac{2(2+\alpha)\log^2(2(n_\mathcal{T}+2)U/\delta)}{\alpha \beta\log^2(1/\rho) }$ and $B_\gamma=\operatorname{max}(\eta_\gamma,-\log(1/2))$.
 \end{theorem}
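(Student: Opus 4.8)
The plan is to bootstrap Theorem~\ref{thm:bound2} by replacing its \emph{a priori} length bound $W$ with a \emph{data-dependent} substitute obtained from the tail inequality \eqref{eq:lang_sto}. First I would analyse a single string $x$ drawn from the stochastic language $p$: by \eqref{eq:lang_sto}, $\mathrm{Pr}[|x|\geq k]<U\rho^k$, so equating the right-hand side with $\delta/(2(n_\mathcal{T}+2))$ and solving for $k$ shows that, with probability at least $1-\delta/(2(n_\mathcal{T}+2))$, we have $|x|<W^\star := \log\!\big(2(n_\mathcal{T}+2)U/\delta\big)/\log(1/\rho)$.

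Next I would take a union bound of this event over the $n_\mathcal{T}+2$ strings that actually enter the proof of Theorem~\ref{thm:bound2}: the $n_\mathcal{T}$ training strings, together with the two extra strings $z$ and $z'$ that appear when perturbing $\mathcal{T}$ into $\mathcal{T}^{i,z}$ in the stability argument and when writing the pairwise true risk $R^\ell(\mathbf{C}_\mathcal{T})=\mathbb{E}_{(z,z')\sim P}[\ell(\mathbf{C}_\mathcal{T},z,z')]$. This yields: with probability at least $1-\delta/2$, every relevant string has length at most $W^\star$. On this event, \lref{lem:boundC}, \lref{lem:k-lips-V} and \lref{lem:s-m-adm} (and the stability \thref{thm:stability} feeding them) hold verbatim with $W$ instantiated to $W^\star$, hence the constants $\kappa$ and $2\sigma+m$ controlling the McDiarmid step are under control. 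Applying \thref{thm:bound2} at confidence level $\delta/2$ with $W=W^\star$, its own failure event has probability at most $\delta/2$; a final union bound against the ``some relevant string too long'' event gives the claimed inequality with probability at least $1-\delta$. Substituting $W=W^\star$ into $\kappa=\frac{2(2+\alpha)W^2}{\alpha\beta}$ and into $\frac{2W}{\sqrt{\beta B_\gamma}}+3$ reproduces exactly the constants in the statement, and the $\ln(2/\delta)$ of \thref{thm:bound2} becomes $\ln(4/\delta)$ because it is invoked at level $\delta/2$.

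The delicate point — and essentially the only work beyond substitution — is that $W^\star$ depends on the sample, whereas the bounded-difference constants $c_i$ inside the McDiarmid step of \thref{thm:bound2} must hold \emph{uniformly}, not merely on a typical sample; moreover $R^\ell(\mathbf{C}_\mathcal{T})$ is an expectation over strings of unbounded length, so $\ell_{HL}$ is not globally $(\sigma,m)$-admissible. The clean way to make this rigorous is to pass to the truncated law obtained by conditioning $p$ on $\{|x|\leq W^\star\}$: a sample of $n_\mathcal{T}+2$ strings drawn from $p$ coincides with one drawn from this truncated law except on an event of probability $\leq\delta/2$, and — using that $\|\mathbf{C}_\mathcal{T}\|_{\cal F}\leq\sqrt{B_\gamma/\beta}$ by \lref{lem:boundC}, so $\ell_{HL}$ grows at most linearly in string length while $p$ decays exponentially — replacing the expectation defining $R^\ell$ by the one under the truncated law changes the true risk only by a quantity that is absorbed into the stated bound. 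Under the truncated law all hypotheses of \thref{thm:bound2} genuinely hold with the fixed constant $W^\star$, and the bookkeeping above then goes through. I expect the main obstacle to be precisely the organisation of these nested confidence levels and the truncation estimate so that everything sums to $\delta$; the remaining steps are direct substitution into \thref{thm:bound2}.
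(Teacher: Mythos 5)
Your argument is essentially the paper's own proof: set $U\rho^{k}=\delta/(2(n_\mathcal{T}+2))$ to obtain the length bound $W^{\star}=\log\bigl(2(n_\mathcal{T}+2)U/\delta\bigr)/\log(1/\rho)$, union-bound over the $n_\mathcal{T}+2$ relevant strings (the training sample plus the two additional examples $z,z'$), and invoke Theorem~\ref{thm:bound2} at confidence level $\delta/2$ with $W$ replaced by $W^{\star}$, which reproduces exactly the stated constants and turns $\ln(2/\delta)$ into $\ln(4/\delta)$. The paper's proof stops at that substitution, so your closing truncation argument (conditioning $p$ on short strings to keep the McDiarmid and admissibility constants uniform) is added rigor on top of the published argument rather than a different route.
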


Finally, let us conclude this section by discussing the adaptation of the entire theoretical analysis to tree edit similarity learning. The generalization bound for GESL$_L$ (\thref{thm:bound}) holds for trees since the arguments used in \sref{sec:mljguarantees} are not specific to strings. Regarding the bound for GESL$_{HL}$ (\thref{thm:bound2}), we used the assumption that the length of the strings is bounded by a constant $W$. This can be easily adapted to trees: if we assume that the size of each tree (in its number of nodes) is bounded by $W$, \thref{thm:bound2} also holds. Finally, the arguments for deriving a bound independent of the constant $W$ hold for trees since the property \eqref{eq:lang_sto} is also valid for rational stochastic tree languages \citep{Denis2008}.

\section{Experimental Validation}
\label{sec:mljexperiments}

In this section, we provide an experimental evaluation of GESL$_{HL}$.\footnote{An open-source implementation of our method is available at:\\\url{http://labh-curien.univ-st-etienne.fr/~bellet/}.} We are interested in evaluating the performance of different (standard or learned) edit similarities directly plugged into linear classifiers, as suggested by the theory of $(\epsilon,\gamma,\tau)$-goodness presented in \sref{sec:balcan}. Linear classifiers are learned using Balcan's learning rule \eqref{eq:lp1}.
We compare three edit similarity functions: (i) $K_\mathbf{C}$, learned by GESL$_{HL}$,\footnote{In this series of experiments, we constrained the cost matrices to be symmetric to be independent from the order in which the instances are paired.} (ii) the Levenshtein distance $d_{lev}$, which constitutes the baseline, and (iii) an edit similarity function $p_e$ learned with an EM-like algorithm \citep{Oncina2006}.
We show results on the same datasets as in the preliminary study (\sref{sec:ictai}): English and French words (\sref{sec:wiktionary}) and handwritten digits (\sref{sec:digits}).

\subsection{English and French Words}
\label{sec:wiktionary}

Recall that the task is to learn a model to classify words as either English or French. We use the 2,000 top words lists from Wiktionary.\footnote{These lists are available at \url{http://en.wiktionary.org/wiki/Wiktionary:Frequency_lists}. We only considered unique words (i.e., not appearing in both lists) of length at least 4, and we also got rid of accent and punctuation marks. We ended up with about 2,600 words over an alphabet of 26 symbols.}

\subsubsection{Convergence rate}
\label{sec:conv}

We first assess the convergence rate of the two considered edit cost learning methods (i and iii).
We keep aside 600 words as a validation set to tune the parameters, using 5-fold cross-validation and selecting the value offering the best classification accuracy.
We then build bootstrap samples $\mathcal{T}$ from the remaining 2,000 words to learn the edit costs (5 runs for each size $n_\mathcal{T}$), as well as 600 words to train the separator $\boldsymbol{\alpha}$ and 400 words to test its performance.

\fref{fig:figvarcost} shows the accuracy and sparsity results of each method with respect to $n_\mathcal{T}$, averaged over 5 runs.
We see that $K_\mathbf{C}$ leads to more accurate classifiers than $d_{lev}$ and $p_e$ for $n_\mathcal{T} > 20$. The difference is statistically significant: the Student's $t$-test yields a $p$-value $< 0.01$. At the same time, $K_\mathbf{C}$ requires 3 to 4 times less reasonable points, thus increasing classification speed by just as much.
The exact figures are as follows: $d_{lev}$ achieves 69.55\% accuracy with a model size of 197, $p_e$ achieves at best 74.80\% with a model size of 155, and $K_\mathbf{C}$ achieves at best 78.65\% with a model size of only 45. 
This clearly indicates that GESL$_{HL}$ leads to a better similarity than (ii) and (iii).
Moreover, the convergence rate of GESL$_{HL}$ is very fast, considering that $(26+1)^2 = 729$ costs must be learned: it needs very few examples (about 20) to outperform the Levenshtein distance, and about 200 examples to reach convergence.
This provides experimental evidence that our method scales well with the size of the alphabet, as suggested by the generalization bound derived in \sref{sec:spec_hinge}. On the other hand, (iii) seems to suffer from the large number of costs to estimate: it needs a lot more examples to outperform Levenshtein (about 200) and convergence seems to be only reached at 1,000.

\begin{figure}[t]
\begin{center}
\psfrag{Size of cost training set}[][][0.8]{\textbf{Size of the cost training sample}}
\psfrag{Classification accuracy}[][][0.8]{\textbf{Classification accuracy}}
\psfrag{Model size}[][][0.8]{\textbf{Model size}}
\psfrag{sim1}[][][0.8]{$d_{lev}$}
\psfrag{sim2}[][][0.8]{\hspace{0.2cm}$p_e$}
\psfrag{sim3}[][][0.8]{$K_\mathbf{C}$}
\includegraphics[width=0.49\columnwidth]{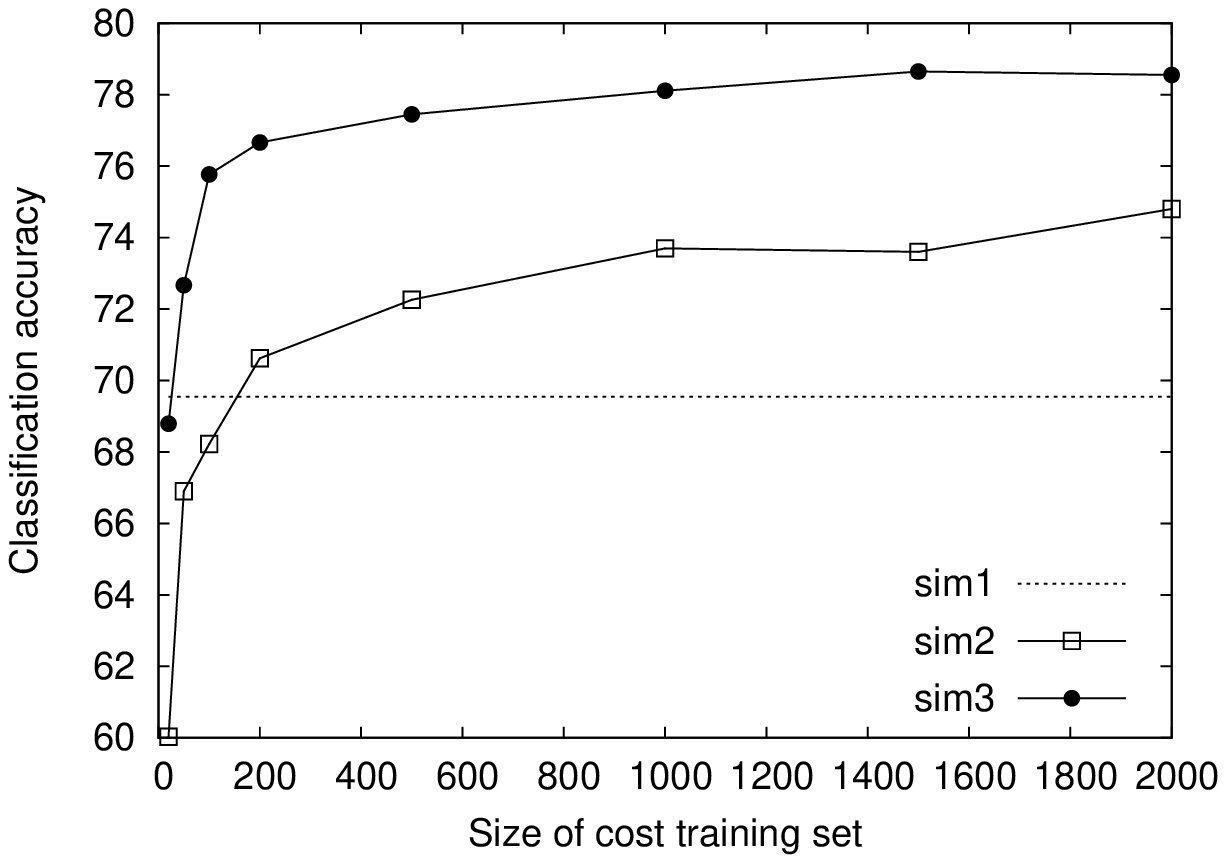}
\includegraphics[width=0.49\columnwidth]{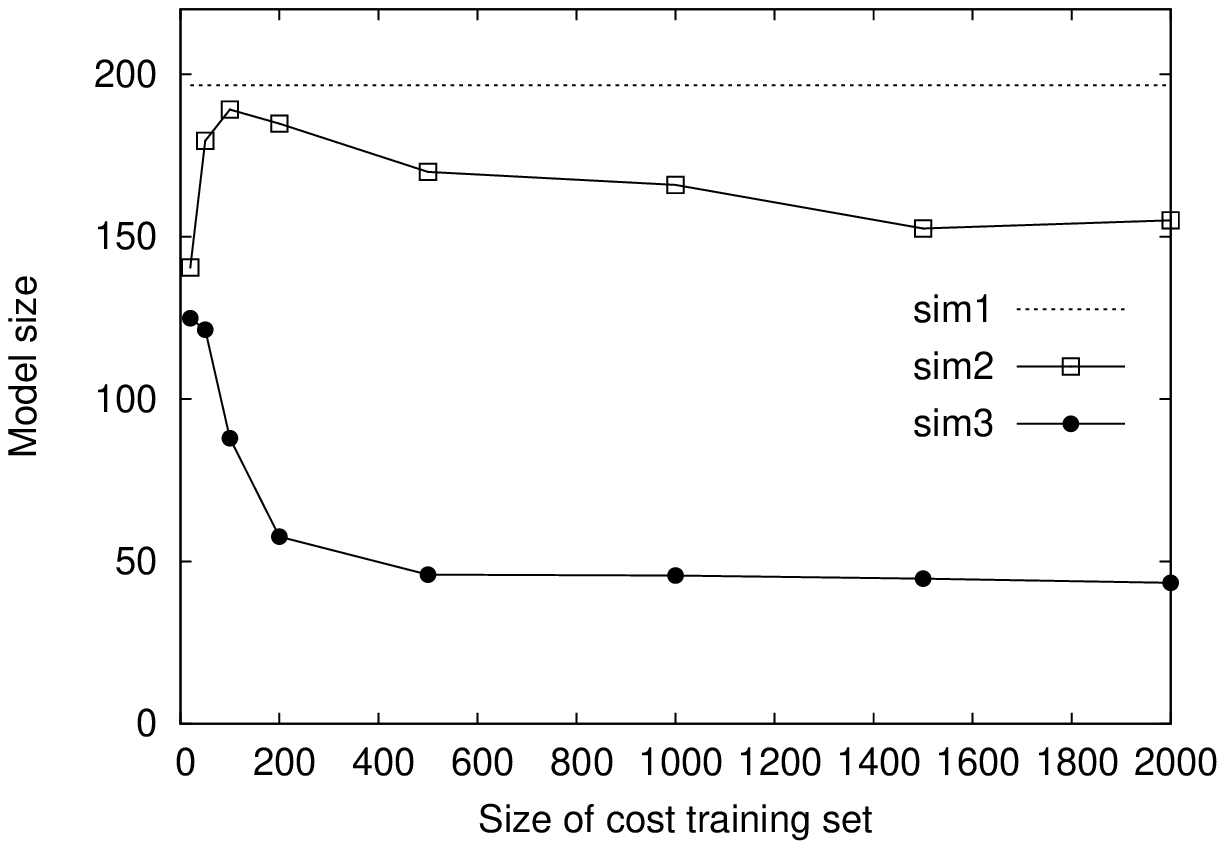}
\caption[Learning the edit costs: rate of convergence (Word dataset)]{Learning the edit costs: accuracy and sparsity results (Word dataset).}
\label{fig:figvarcost}
\end{center}
\end{figure}

\subsubsection{Pairing strategy and influence of $\alpha$}

In the previous experiment, the pairing strategy and the value of $\alpha$ was set by cross-validation. In this section, we compare the two pairing strategies (random pairing and Levenshtein pairing) presented in \sref{sec:mljmatching} as well as the influence of $\alpha$ (the proportion of landmarks associated with each training example).
\fref{fig:figalphawords} shows the accuracy and sparsity results obtained for $n_\mathcal{T}=1,500$ with respect to $\alpha$ and the pairing strategies.\footnote{We do not evaluate the pairing strategies on the whole data ($n_\mathcal{T}=2,000$) so that we can build 5 bootstrap samples and average the results over these.} The accuracy for $d_{lev}$ and $p_e$ is carried over from \fref{fig:figvarcost} for comparison (model sizes for $d_{lev}$ and $p_e$, which are not shown for scale reasons, are 197 and 152 respectively).

These results are very informative. Regardless of the pairing strategy, $K_\mathbf{C}$ outperforms $d_{lev}$ and $p_e$ even when making use of a very small proportion of the available pairs (1\%), which tremendously reduces the complexity of the similarity learning phase. Random pairing gives better results than Levenshtein pairing for $\alpha \leq 0.4$. When $\alpha \geq 0.6$, this trend is reversed. This means that for a small proportion of pairs, we learn better from pairing random landmarks than from pairing landmarks that are already good representatives of the training examples. On the other hand, when the proportion increases, Levenshtein pairing allows us to avoid pairing examples with the ``worst'' landmarks: best results are obtained with Levenshtein pairing and $\alpha=0.8$.

\begin{figure}[t]
\begin{center}
\psfrag{Value of alpha}[][][0.8]{\textbf{Value of} $\alpha$}
\psfrag{Classification accuracy}[][][0.8]{\textbf{Classification accuracy}}
\psfrag{Model size}[][][0.8]{\textbf{Model size}}
\psfrag{KG lev}[][][0.8]{\hspace{-2.95cm}$K_\mathbf{C}$ with Lev. pairing}
\psfrag{KG rand}[][][0.8]{\hspace{-2.90cm}$K_\mathbf{C}$ with rand. pairing}
\psfrag{lev}[][][0.8]{$d_{lev}$\hspace{0.3cm}}
\psfrag{proba}[][][0.8]{\hspace{0.30cm}$p_e$}
\includegraphics[width=0.49\columnwidth]{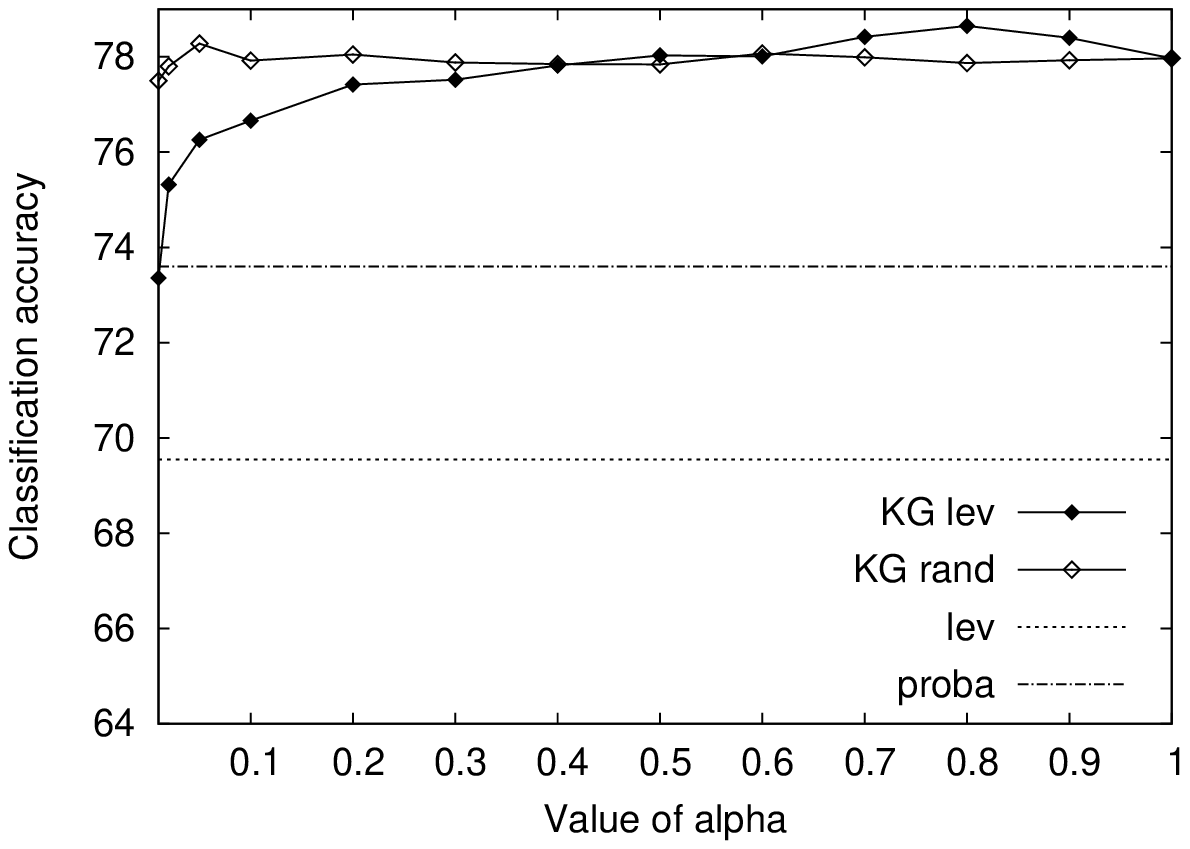}
\includegraphics[width=0.49\columnwidth]{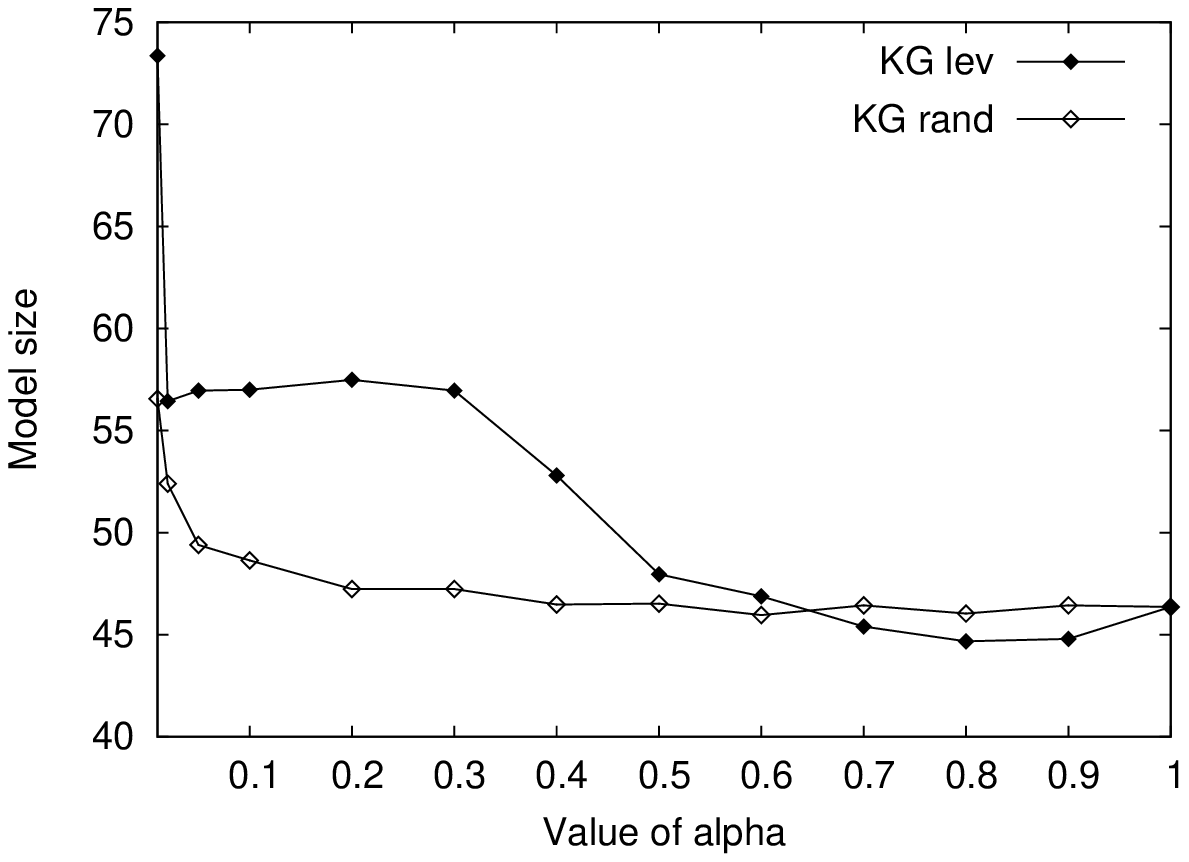}
\caption[Influence of the pairing strategies (Word dataset)]{Pairing strategies: accuracy and sparsity results w.r.t. $\alpha$ (Word dataset).}
\label{fig:figalphawords}
\end{center}
\end{figure}

\subsubsection{Learning the separator}

We now assess the performance of the three edit similarities with respect to the number of examples $n$ used to learn the separator $\boldsymbol{\alpha}$. For $K_\mathbf{C}$ and $p_e$, we use the edit cost matrix that performed best in \sref{sec:conv}. Taking our set of 2,000 words, we keep aside 400 examples to test the models and build bootstrap samples from the remaining 1,600 words to learn $\boldsymbol{\alpha}$.
\fref{fig:figvarsep} shows the accuracy and sparsity results of each method with respect to $n$, averaged over 5 runs.
Again, $K_\mathbf{C}$ outperforms $d_{lev}$ and $p_e$ for every size $n$ (the difference is statistically significant with a $p$-value $< 0.01$ using a Student's $t$-test) while always leading to (up to 5 times) sparser models.
Moreover, the size of the models induced by $K_\mathbf{C}$ stabilizes for $n \geq 400$ while the accuracy still increases. This is not the case for the models induced by $d_{lev}$ and $p_e$, whose size keeps growing. To sum up, the best similarity learned by GESL$_{HL}$ outperforms the best similarity learned with the method of \citet{Oncina2006}, which had been proven to outperform other state-of-the-art methods.

\begin{figure}[t]
\begin{center}
\psfrag{Size of separator training set}[][][0.8]{\textbf{Size of the separator training sample}}
\psfrag{Classification accuracy}[][][0.8]{\textbf{Classification accuracy}}
\psfrag{Model size}[][][0.8]{\textbf{Model size}}
\psfrag{sim1}[][][0.8]{$d_{lev}$}
\psfrag{sim2}[][][0.8]{\hspace{0.2cm}$p_e$}
\psfrag{sim3}[][][0.8]{$K_\mathbf{C}$}
\includegraphics[width=0.49\columnwidth]{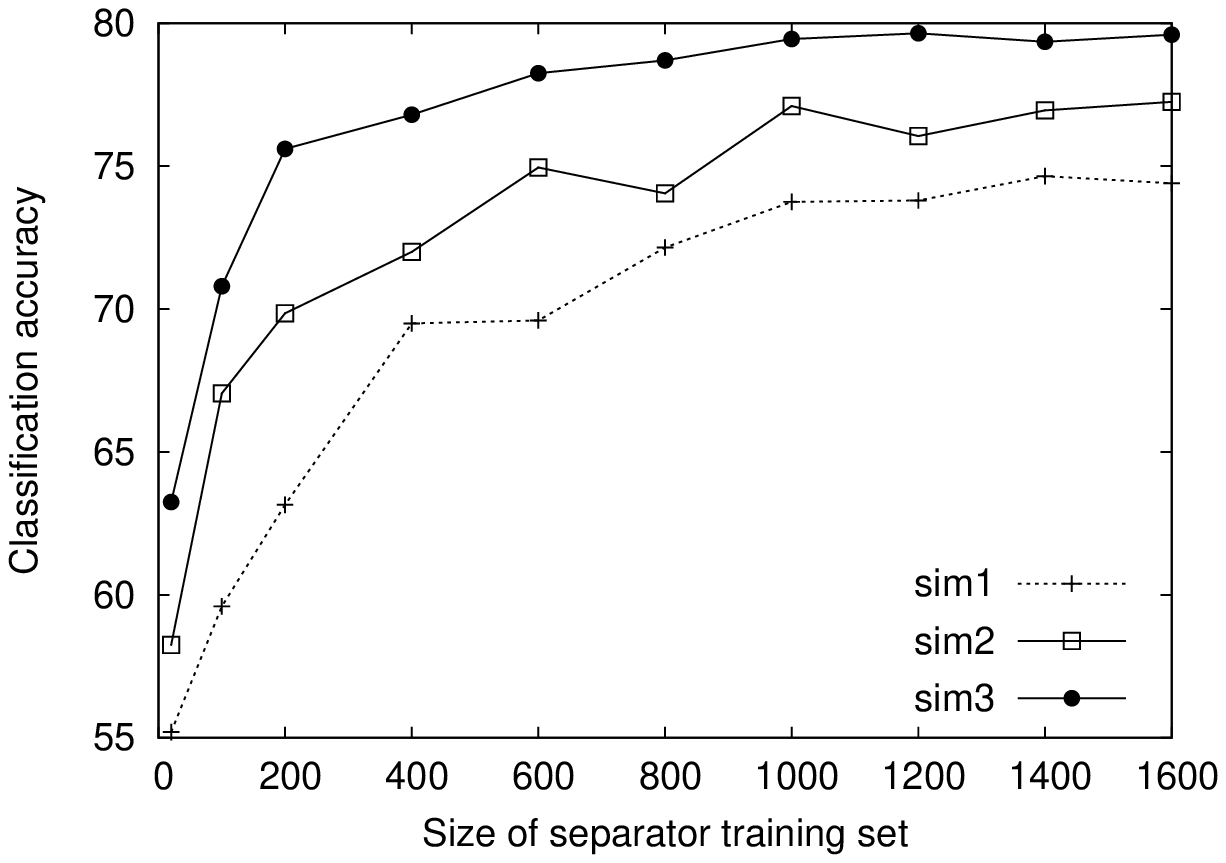}
\includegraphics[width=0.49\columnwidth]{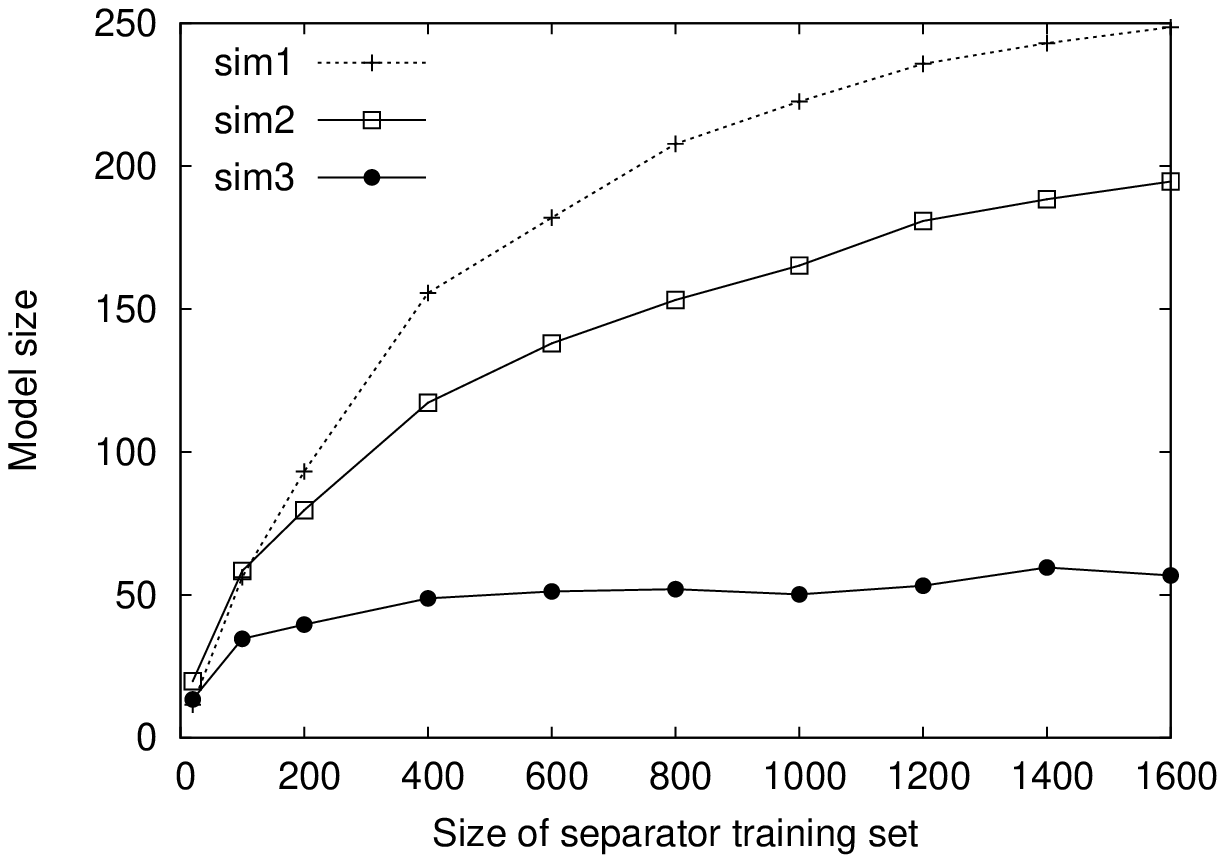}
\caption[Learning the separator: accuracy and sparsity results (Word dataset)]{Learning the separator: accuracy and sparsity results (Word dataset).}
\label{fig:figvarsep}
\end{center}
\end{figure}

\subsubsection{Reasonable points analysis}
\label{sec:reaswords}

Finally, one may wonder what kind of words are selected as reasonable points in
the models. The intuition is that they should be some sort of
``discriminative prototypes'' the classifier is based on.
To investigate this, using $K_\mathbf{C}$ and a training set of 1,200 examples, we learned a classifier $\boldsymbol{\alpha}$ with a high value of $\lambda$ to enforce a very sparse model, thus making the analysis easier. The set of 11 reasonable points automatically selected during the learning process is shown in \tref{tab:tabwords}. Our interpretation of why these particular words were chosen is that this small set actually carries a lot of discriminative patterns. \tref{tab:patterns} shows some of these patterns (extracted by hand from the reasonable points of \tref{tab:tabwords}) along with their number of occurrences in each class over the entire dataset. For
example, words ending with \textit{ly} correspond  to
English words, while those ending with \textit{que} characterize
French words.
Note that \tref{tab:tabwords} also reflects the fact that English words are shorter on average
(6.99) than French words (8.26) in the dataset, but the
English (resp. French) reasonable points are significantly shorter (resp. longer)
than the average (mean of 5.00 and 10.83 resp.), which allows
better discrimination.  
Note that we generated other sets of reasonable points from several training sets and observed the same patterns.

\begin{table}[t]
  \begin{center}
  \begin{small}
  \begin{tabular}{llllll}
    \toprule
    \multicolumn{3}{c}{\textbf{English}} & \multicolumn{3}{c}{\textbf{French}}\\
    \midrule
    \quad\texttt{high} &\texttt{showed} &\texttt{holy\quad} & \quad\texttt{economiques} & \texttt{americaines} & \texttt{decouverte\quad}\quad\\
    \quad\texttt{liked}& \texttt{hardly} & & \quad\texttt{britannique} & \texttt{informatique} & \texttt{couverture}\\
    \bottomrule
  \end{tabular}
  \end{small}
  \caption[Example of a set of reasonable points (Word dataset)]{Example of a set of 11 reasonable points (Word dataset).}
  \label{tab:tabwords}
  \end{center}
\end{table}

\begin{table}[t]
  \begin{center}
  \begin{scriptsize}
  \begin{tabular}{ccccccccccccc}
    \toprule
    \textbf{Patterns} & \verb+w+ & \verb+y+  & \verb+k+  & \verb+q+ & \verb+nn+ & \verb+gh+ & \verb+ai+ & \verb+ed$+ & \ \verb+ly$+ \ & \ \verb+es?$+ \ & \ \verb+ques?$+\ & \verb+^h+\\
    \midrule
    \textbf{English} & 146 & 144 & 83 & 14 & 5 & 34 & 39 & 151 & 51 & 265 & 0 & 62\\
    \textbf{French} & 7 & 19 & 5 & 72 & 35 & 0 & 114 & 51 & 0 & 630 & 43 & 14 \\
    \bottomrule
  \end{tabular}
  \end{scriptsize}
  \caption[Discriminative patterns extracted from the reasonable points of Table~\ref*{tab:tabwords}]{Some discriminative patterns extracted from the reasonable points of \tref{tab:tabwords} (\texttt{\^}: start of word, \texttt{\$}: end of word, \texttt{?}: 0 or 1 occurrence of preceding letter).}
  \label{tab:patterns}
  \end{center}
\end{table}

\subsection{Handwritten Digits}
\label{sec:digits}

We use the same NIST Special Database 3 as earlier in this manuscript. We have seen that classifying digits using a Freeman code representation and edit similarities yields close-to-perfect accuracy, even in the multi-class setting.
In order to make the comparison between the edit similarities (i-iii) easier, we evaluate them on the binary task of discriminating between even and odd digits.
This task is harder due to extreme within-class variability: each class is in fact a ``meta-class'' containing instances of 5 basic classes of digits. Therefore, every example is highly dissimilar to about 80\% of the examples of its own class (e.g., 1's are dissimilar to 5's and 0's are dissimilar to 4's, although they belong to the same class).

\subsubsection{Convergence rate}

Once again, we assess the convergence of the cost learning methods (i and iii).
We keep aside 2,000 words as a validation set to tune the parameters (using 5-fold cross-validation and selecting the value offering the best classification accuracy) as well as 2,000 words for testing the models. We build bootstrap samples $\mathcal{T}$ from the remaining 6,000 words to learn the edit costs (5 runs for each size $n_\mathcal{T}$), as well as 400 words to train the separator $\boldsymbol{\alpha}$.

\fref{fig:digitsres} shows the accuracy and sparsity results of each method with respect to $n_\mathcal{T}$, averaged over 5 runs.
First of all, we notice that the Levenshtein distance $d_{lev}$ performs nicely on this task (95.19\% with a model size of 70) and that $p_e$ is never able to match $d_{lev}$'s accuracy level (94.94\% at best with a model size of 78). In our opinion, this poor performance comes from the fact that $p_e$ does not take advantage of negative pairs. In a context of extreme within-class variability, moving closer examples of the same class without making sure that examples of different class are kept far from each others does not yield an appropriate similarity.
On the other hand, our method shows the same general behavior on this task as on the previous one. Indeed, convergence is fast despite the richness of the two classes (only 100 examples to match Levenshtein's accuracy and about 1,000 to reach convergence). Moreover, $K_\mathbf{C}$ achieves significantly better performance (95.63\% at best with a model size of 57) than both $d_{lev}$ ($p$-value $< 0.05$ for $n_\mathcal{T}\geq 250$ using a Student's $t$-test) and $p_e$ ($p$-value $<0.01$ for $n_\mathcal{T}>20$).

\begin{figure}[t]
\begin{center}
\psfrag{Size of cost training set}[][][0.8]{\textbf{Size of the cost training sample}}
\psfrag{Classification accuracy}[][][0.8]{\textbf{Classification accuracy}}
\psfrag{Model size}[][][0.8]{\textbf{Model size}}
\psfrag{sim1}[][][0.8]{$d_{lev}$}
\psfrag{sim2}[][][0.8]{\hspace{0.2cm}$p_e$}
\psfrag{sim3}[][][0.8]{$K_\mathbf{C}$}
\includegraphics[width=0.49\columnwidth]{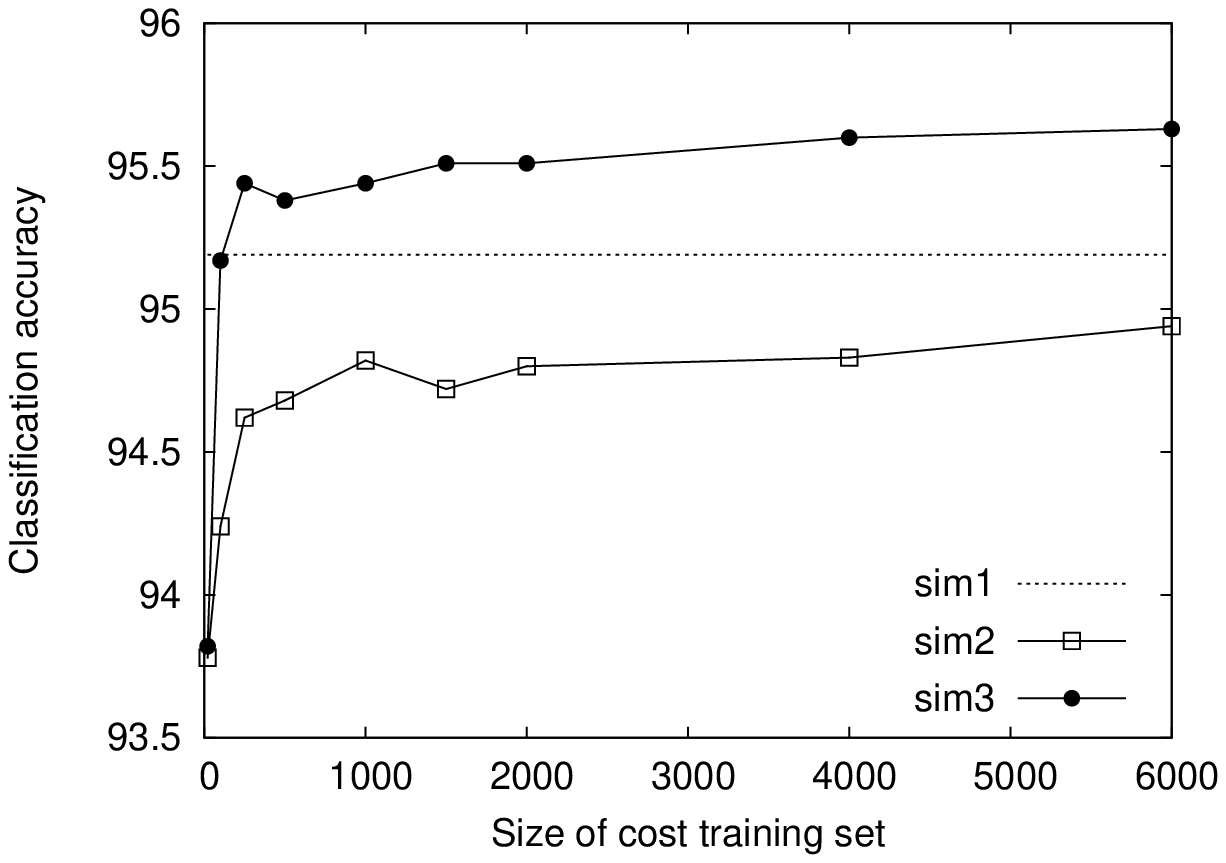}
\includegraphics[width=0.49\columnwidth]{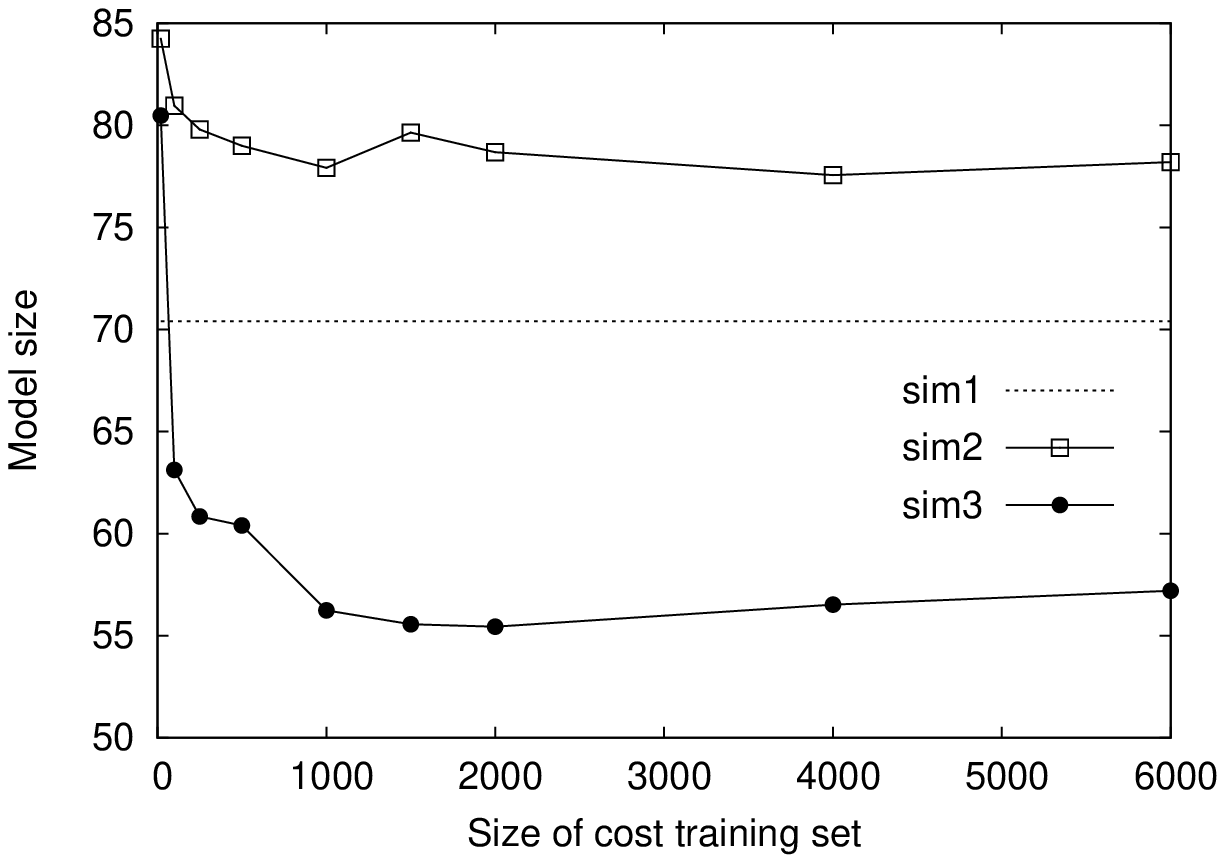}
\caption[Learning the edit costs: rate of convergence (Digit dataset)]{Learning the edit costs: accuracy and sparsity results (Digit dataset).}
\label{fig:digitsres}
\end{center}
\end{figure}

\subsubsection{Pairing strategy and influence of $\alpha$}

\fref{fig:figalphadigits} shows the accuracy and sparsity results obtained for $n_\mathcal{T}=2,000$ with respect to $\alpha$ and the pairing strategies. The performance for $d_{lev}$ and $p_e$ is carried over from \fref{fig:digitsres} for comparison.
Results are very different from those obtained on the previous dataset. Here, $K_\mathbf{C}$ with random pairing fails: it is always largely outperformed by both $d_{lev}$ and $p_e$. On the other hand, $K_\mathbf{C}$ with Levenshtein pairing performs better than every other approaches for $0.05 \leq \alpha \leq 0.4$.
This behavior can be explained by the meta-class structure of the dataset. When using random pairing, many training examples are paired with landmarks of the same class but yet very different (for instance, a 1 paired with a 5, or a 0 paired with a 4), and trying to ``move them closer'' is a fruitless effort. On the other hand, we have seen earlier that the Levenshtein distance is an appropriate measure to discriminate between handwritten digits. Therefore, when using Levenshtein pairing with $\alpha \leq 0.2$, the problematic situation explained above rarely occurs. When $\alpha > 0.2$, since each meta-class is made out of 5 basic classes in even proportions, more and more examples are paired with ``wrong'' landmarks and the performance drops dramatically.

This result yields a valuable conclusion: similarity learning should not always focus on optimizing over all possible pairs (although it is often the case in the literature), since it may lead to poor classification performance. In some situations, such as the presence of high within-class variability, it may be a better strategy to improve the similarity according to a few carefully selected pairs.

\begin{figure}[t]
\begin{center}
\psfrag{Value of alpha}[][][0.8]{\textbf{Value of} $\alpha$}
\psfrag{Classification accuracy}[][][0.8]{\textbf{Classification accuracy}}
\psfrag{Model size}[][][0.8]{\textbf{Model size}}
\psfrag{KG lev}[][][0.8]{\hspace{-2.95cm}$K_\mathbf{C}$ with Lev. pairing}
\psfrag{KG rand}[][][0.8]{\hspace{-2.90cm}$K_\mathbf{C}$ with rand. pairing}
\psfrag{lev}[][][0.8]{$d_{lev}$\hspace{0.3cm}}
\psfrag{proba}[][][0.8]{\hspace{0.30cm}$p_e$}
\includegraphics[width=0.49\columnwidth]{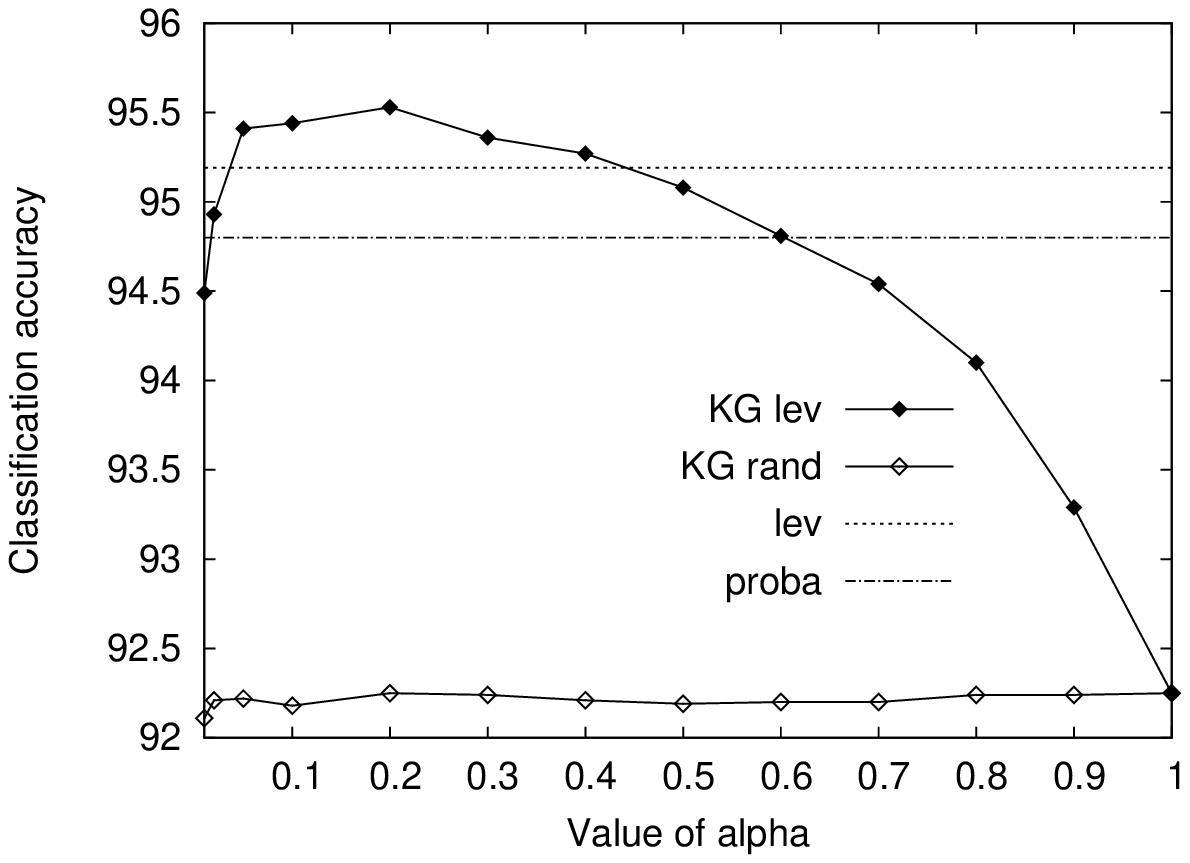}
\includegraphics[width=0.49\columnwidth]{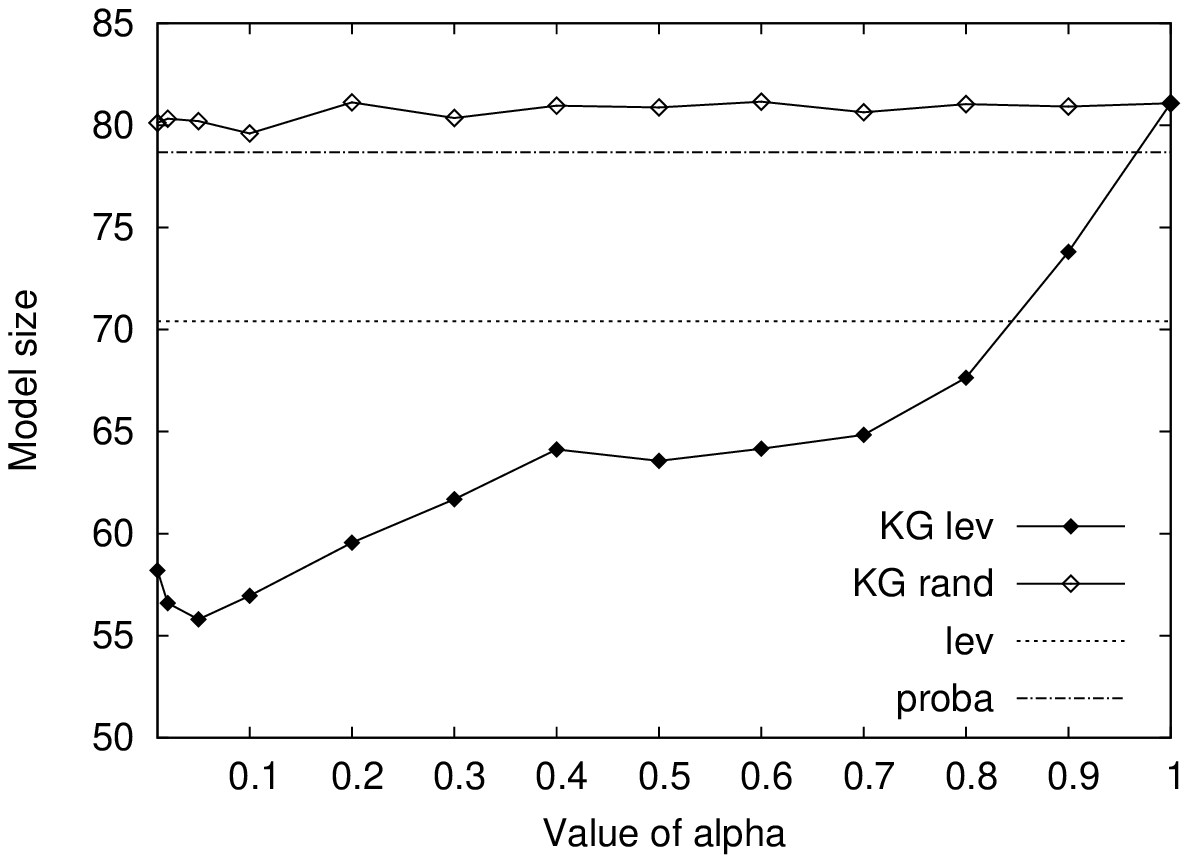}
\caption[Influence of the pairing strategies (Digit dataset)]{Pairing strategies: accuracy and sparsity results with respect to $\alpha$ (Digit dataset).}
\label{fig:figalphadigits}
\end{center}
\end{figure}

\subsubsection{Reasonable points analysis}

To provide an insight into the sort of digits that are selected as reasonable points, we follow the same procedure as in \sref{sec:reaswords} using a training set of 2,000 examples.
We end up with a set of 13 reasonable points. The corresponding digit contours are drawn in \fref{fig:reasdigit}, allowing a graphical interpretation of why these particular examples were chosen. Note that this set is representative of a general tendency: we experimented with several training sets and obtained similar results.
The most striking thing about this set is that 7's are over-represented (4 out of the 6 reasonable points of the odd class). This is explained by the fact that 7's (i) account for 1's and 9's (their contour is very similar), which also gives a reason for the absence of 1's and 9's in the set, and (ii) are not similar to any even digits. The same kind of reasoning applies to 6's (the lower part of 6's is shared by 0's and 8's, but not by any odd number) and 3's (lower part is the same as 5's). We can also notice the presence of 4's: they have a contour mostly made of straight lines, which is unique in the even class. There is also a 2 whose contour is somewhat similar to a 1. Lastly, another explanation for having several occurrences of the same digit may be to account for variations of size (the two 4's), shape or orientations (the three 6's).

\begin{figure}[t]
\begin{center}
\includegraphics[width=0.7\columnwidth]{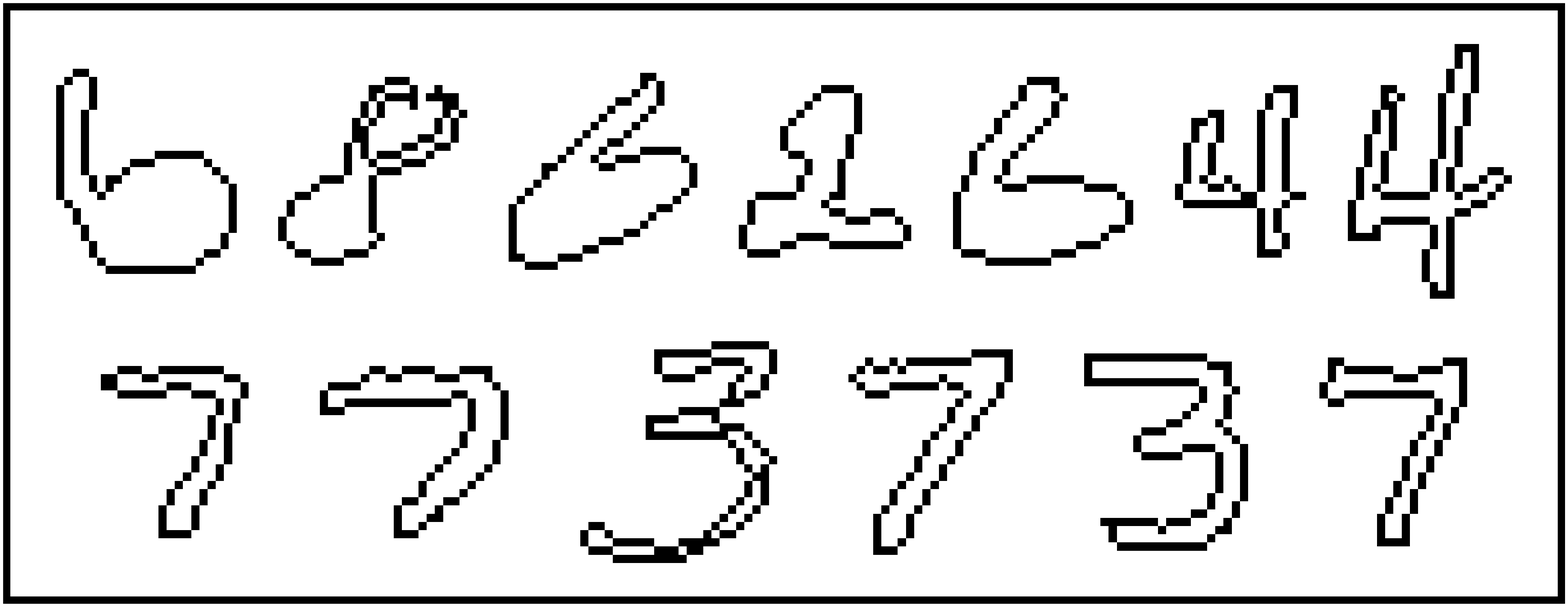}
\caption[Example of a set of reasonable points (Digit dataset)]{Example of a set of 13 reasonable points (Digit dataset).}
\label{fig:reasdigit}
\end{center}
\end{figure}

\section{Conclusion}
\label{sec:mljconclu}

In this chapter, we made use of the theory of $(\epsilon,\gamma,\tau)$-good similarity functions in the context of edit similarities. We first conducted a preliminary experimental study confirming that this framework is well-suited to edit similarities, leading to classification performance competitive with standard SVM but with a number of additional advantages, among which the absence of PSD constraint and the sparsity of the models.

We then went a step further and proposed a novel approach to the problem of learning edit similarities from data, called GESL, driven by the notion of $(\epsilon,\gamma,\tau)$-goodness. As opposed to most state-of-the-art approaches, GESL is not based on a costly iterative procedure but on solving an efficient convex program, and can accommodate both positive and negative training pairs. Furthermore, it is also a promising way to learn tree edit similarities, even though we did not perform any series of experiments in this case.
We provided a theoretical analysis of GESL, which holds for a large class of loss functions. A generalization bound in $O(\sqrt{1/n_\mathcal{T}})$ was derived using the notion of uniform stability. This bound is (i) related to the goodness of the resulting similarity, which gives guarantees that the similarity will induce accurate classifiers for the task at hand, and (ii) independent from the size of the alphabet, making GESL suitable for problems involving large vocabularies.
We conducted experiments on two string datasets that show that GESL has fast convergence and that the learned similarities perform very well in practice, inducing more accurate and sparser models than other (standard or learned) edit similarities. We also studied two pairing strategies and observed that Levenshtein pairing is more stable to high within-class variability, and that considering all possible pairs is not always a good approach. \tref{tab:geslsum} summarizes the main features of GESL using the same format as in the survey of \cref{chap:metriclearning} (\tref{tab:mlstructsum}).
\begin{table}[t]
\begin{center}
\begin{scriptsize}
\begin{tabular}{cccccccc}
\toprule
\textbf{Method} & \textbf{Data} & \textbf{Model} & \textbf{Scripts} & \textbf{Opt.} & \textbf{Global sol.} & \textbf{Neg. pairs} & \textbf{Gen.}\\
\midrule
GESL & Strings/Trees & --- & Optimal & CO & \tickYes & \tickYes & \tickYes\\
\bottomrule
\end{tabular}
\end{scriptsize}
\caption[Summary of the main features of GESL]{Summary of the main features of GESL (``Opt.'', ``Global sol.'', ``Neg. pairs'', ``Gen.'' and ``CO'' respectively stand for ``Optimization'', ``Global solution'', ``Negative pairs'', ``Generalization guarantees'' and ``Convex optimization'').}
\label{tab:geslsum}
\end{center}
\end{table}

An extension of this work would be to consider sparsity-inducing regularizers on the edit cost matrix. For instance, using an $L_1$ regularization would lead to more interpretable matrices: an edit cost set to zero during learning would suggest that the corresponding edit operation is not relevant to the task, which can be a valuable information in many real-world applications. This would however prevent the derivation of generalization guarantees using uniform stability, but the theoretical framework presented later in this thesis (\cref{chap:nips}) could be used instead.

Another interesting perspective is to assess the relevance of similarities learned with GESL when used in $k$-Nearest Neighbors classifiers. Indeed, when using Levenshtein pairing, GESL's objective is somewhat related to the $k$-NN prediction rule and to the objective of the metric learning method LMNN \citep{Weinberger2009}. This intuition is confirmed by preliminary results using a $1$-NN classifier (see \fref{fig:knn} and \tref{tab:knn}), where $K_\mathbf{C}$ outperforms $e_L$ and $p_e$ on both datasets. These first results open the door to a further theoretical analysis and might lead to $k$-NN generalization guarantees for GESL.

\begin{figure}[t]
\begin{center}
\psfrag{Size of training set}[][][0.8]{\textbf{Size of $k$-NN training sample}}
\psfrag{Classification accuracy}[][][0.8]{\textbf{Classification accuracy}}
\psfrag{sim1}[][][0.8]{$d_{lev}$}
\psfrag{sim2}[][][0.8]{\hspace{0.2cm}$p_e$}
\psfrag{sim3}[][][0.8]{$K_\mathbf{C}$}
\includegraphics[width=0.6\columnwidth]{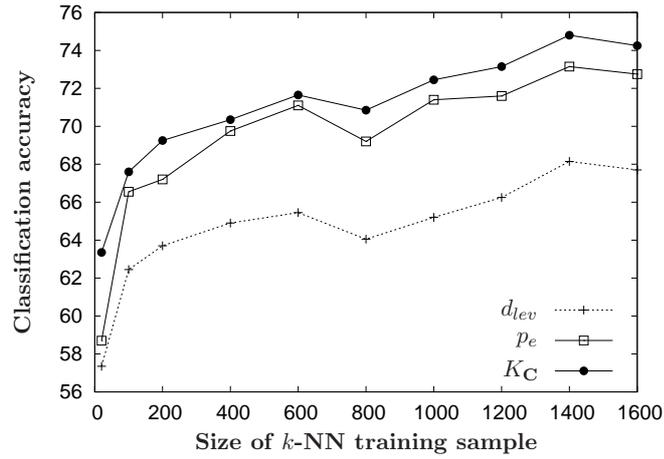}
\caption[1-Nearest Neighbor results (Word dataset)]{1-Nearest Neighbor accuracy results (Word dataset).}
\label{fig:knn}
\end{center}
\end{figure}

\begin{table}[t]
\begin{center}
\begin{small}
\begin{tabular}{cc}
  \toprule
  \textbf{Similarity} & \textbf{Accuracy}\\
  \midrule
  $e_L$ & 97.08\%\\
  $p_e$ & 96.44\%\\
  $K_\mathbf{C}$ & \textbf{97.50\%}\\
  \bottomrule
\end{tabular}
\end{small}
\caption[1-Nearest Neighbor results on the Digit dataset]{1-Nearest Neighbor accuracy results on the Digit dataset.}
\label{tab:knn}
\end{center}
\end{table}


After having dealt with structured data, in the next part of this thesis we will focus on data consisting of feature vectors. While in the context of strings or trees, we could only optimize a pair-based objective that is a loose bound on the empirical $(\epsilon,\gamma,\tau)$-goodness (see \eref{eq:newgoodsim}) due to the form of the edit similarity, we will see in the next chapter that using a simple bilinear similarity allows us to optimize the actual $(\epsilon,\gamma,\tau)$-goodness, relying on global constraints instead of pairs.

\part{Contributions in Metric Learning from Feature Vectors}\label{part:vect}
\chapter{Learning Good Bilinear Similarities from Global Constraints}
\label{chap:icml}

\begin{chapabstract}
In this chapter, we build upon GESL (proposed in \cref{chap:ecml}) to learn good similarities between feature vectors. We focus on the bilinear similarity, which is not PSD-constrained. Thanks to this simple form of similarity, we are able to efficiently optimize its empirical $(\epsilon,\gamma,\tau)$-goodness (instead of an upper bound as done in \cref{chap:ecml} for structured data) in a nonlinear feature space by formulating the approach as a convex minimization problem. Unlike other metric learning methods, this results in the similarity being optimized with respect to global constraints instead of local pairs or triplets. Then, relying on uniform stability arguments similar to those used in the previous chapter, we derive generalization guarantees directly in terms of the goodness in generalization of the learned similarity. As compared to GESL, our method minimizes a tighter bound on the true risk of the linear classifier built from the similarity. Experiments performed on various datasets confirm the effectiveness of our approach compared to state-of-the-art methods and provide evidence that (i) it is fast, (ii) robust to overfitting and (iii) produces very sparse classifiers.\\

The material of this chapter is based on the following international publication:\\

\bibentry{Bellet2012a}.
\end{chapabstract}

\section{Introduction}

In the previous chapter, we used a relaxation of the notion of $(\epsilon,\gamma,\tau)$-goodness to propose a pair-based edit similarity learning method and showed that, in this context, we could establish the consistency of the learned metric with respect to unseen pairs of examples, and a relation to the goodness in generalization of the metric.
In this chapter, we focus on metric learning from feature vectors and aim at optimizing the \emph{exact} criterion of $(\epsilon,\gamma,\tau)$-goodness. Thanks to the simple form of the bilinear similarity, we are able to do this in an efficient way, leading to a similarity optimized with respect to global constraints (rather than local pairs) and used to build a global linear classifier.
Our approach, called SLLC (Similarity Learning for Linear Classification), has several advantages: (i) it is tailored to linear classifiers, (ii) theoretically well-founded, (iii) does not require positive semi-definiteness, and (iv) is in a sense less restrictive than pair or triplet-based settings. We formulate the problem of learning a good similarity function as a convex minimization problem that can be efficiently solved in a batch or online way. Furthermore, by using the Kernel Principal Component Analysis (KPCA) trick \citep{Chatpatanasiri2010}, we are able to kernelize our algorithm and thereby learn more powerful similarity functions and classifiers in the nonlinear feature space induced by a kernel. From the theoretical standpoint, we show that our approach has uniform stability, which leads to generalization guarantees directly in terms of the $(\epsilon,\gamma,\tau)$-goodness in generalization of the learned similarity. In other words, our approach minimizes an upper bound on the true risk of the linear classifier built from the similarity, and this bound is tighter than that obtained for GESL in \cref{chap:ecml}. Lastly, we provide an experimental study on seven datasets of various domains and compare SLLC with two widely-used metric learning approaches: LMNN \citep{Weinberger2009} and ITML \citep{Davis2007}.
This study demonstrates the practical effectiveness of our method and shows that it is fast, robust to overfitting and induces very sparse classifiers, making it suitable for dealing with high-dimensional data.

The rest of the chapter is organized as follows. \sref{sec:icmlsimlearning} presents our approach, SLLC, and the KPCA trick used to kernelize it. In \sref{sec:icmltheo}, we provide a theoretical analysis of SLLC, leading to the derivation of generalization guarantees both in terms of the consistency of the learned similarity and the error of the linear classifier. Finally, \sref{sec:icmlexpes} features an experimental study on various datasets and we conclude in \sref{sec:icmlconclu}.

\section{Learning $(\epsilon,\gamma,\tau)$-Good Bilinear Similarity Functions}
\label{sec:icmlsimlearning}

We consider the bilinear similarity $K_\mathbf{M}$ defined by
$$K_\mathbf{M}(\mathbf{x},\mathbf{x'}) = \mathbf{x}^T\mathbf{M}\mathbf{x'}.$$
In order to satisfy $K_\mathbf{M} \in [-1,1]$, we assume that inputs are normalized such that $||\mathbf{x}||_2 \leq 1$, and we require $||\mathbf{M}||_{\mathcal{F}} \leq 1$.

\subsection{Similarity Learning Formulation}

Our goal is to directly optimize the empirical $(\epsilon,\gamma,\tau)$-goodness of $K_\mathbf{M}$. To this end, we are given a training sample of $n_\mathcal{T}$ labeled points $\mathcal{T} = \{z_i=(\mathbf{x_i},y_i)\}_{i=1}^{n_\mathcal{T}}$ and a sample of $n_\mathcal{R}$ labeled reasonable points $\mathcal{R}=\{z_k=(\mathbf{x_k},y_k)\}_{k=1}^{n_\mathcal{R}}$. In practice, $\mathcal{R}$ is a subset of $\mathcal{T}$ with $n_\mathcal{R}=\hat{\tau}n_\mathcal{T}$ ($\hat{\tau} \in~]0,1]$). In the lack of background knowledge, it can be drawn randomly or according to some criterion, e.g., diversity \citep{Kar2011}.

Based on the definition of $(\epsilon,\gamma,\tau)$-goodness in hinge loss (\defref{def:defgoodsim2}), given $\mathcal{R}$ and a margin $\gamma$, we want to optimize the amount of margin violation $\epsilon$ on the training sample (the empirical goodness). Thus, let
$$\ell(\mathbf{M},z_i,\mathcal{R})= {[} 1 - y_i\frac{1}{\gamma n_\mathcal{R}}\sum_{k=1}^{n_\mathcal{R}}y_kK_\mathbf{M}(\mathbf{x_i},\mathbf{x_k}) {]}_{+}$$
denote the empirical goodness of $K_\mathbf{M}$ with respect to a single training point $z_i$. The empirical goodness over the sample $\mathcal{T}$ is denoted by
$$\epsilon_\mathcal{T} = \frac{1}{n_\mathcal{T}}\sum_{i=1}^{n_\mathcal{T}}\ell(\mathbf{M},z_i,\mathcal{R}).$$

We want to learn the matrix $\mathbf{M}$ that minimizes $\epsilon_\mathcal{T}$. This can be done by solving the following regularized problem, referred to as SLLC (Similarity Learning for Linear Classification):
\begin{eqnarray}
\label{eq:sllc1}
\displaystyle\min_{\mathbf{M}\in \mathbb{R}^{d\times d}} & \epsilon_\mathcal{T}\quad+\quad\beta\|\mathbf{M}\|_{\mathcal{F}}^2
\end{eqnarray}
where $\beta$ is a regularization parameter.

Note that SLLC can be cast as a convex quadratic program (QP) by rewriting the sum of $n_\mathcal{T}$ hinge losses in the objective function as $n_\mathcal{T}$ margin constraints and introducing slack variables $\boldsymbol{\xi}\in\mathbb{R}_+^{n_\mathcal{T}}$ in the objective:
\begin{equation}
\label{eq:sllcqp}
\begin{aligned}
 \displaystyle\min_{\mathbf{M}\in \mathbb{R}^{d\times d},\boldsymbol{\xi}\in\mathbb{R}_+^{n_\mathcal{T}}} &&& \frac{1}{n_\mathcal{T}}\sum_{i=1}^{n_\mathcal{T}}\xi_i\quad+\quad\beta\|\mathbf{M}\|_{\mathcal{F}}^2\\
 \text{s.t.} &&& 1 - y_i\frac{1}{\gamma n_\mathcal{R}}\sum_{k=1}^{n_\mathcal{R}}y_kK_\mathbf{M}(\mathbf{x_i},\mathbf{x_k}) \leq \xi_i, && 1 \leq i \leq n_\mathcal{T}.
\end{aligned}
\end{equation}

SLLC is radically different from classic metric and similarity learning algorithms presented in \cref{chap:metriclearning}, which are based on pair or triplet-based constraints.
It learns a global similarity rather than a local one, since $\mathcal{R}$ is the same for each training example. 
Moreover, the constraints are easier to satisfy since they are defined over an average of similarity scores to the points in $\mathcal{R}$ instead of over a single pair or triplet. This means that one can fulfill a constraint without satisfying the margin for each point in $\mathcal{R}$ individually (unlike what we did with GESL in \cref{chap:ecml}).  
SLLC also has a number of desirable properties:
\begin{enumerate}
\item No costly semi-definite programming is required, as opposed to many Mahalanobis distance learning methods. In its convex QP form \eqref{eq:sllcqp}, SLLC can be solved efficiently using standard convex minimization solvers. Moreover, it has only one constraint per training example (instead of one for each pair or triplet), i.e., a total of only $n_\mathcal{T}$ constraints and $n_\mathcal{T}+d^2$ variables. In its unconstrained form \eqref{eq:sllc1}, it is convex but not differentiable everywhere due to the hinge function in the loss. It can be solved in a stochastic or online setting using composite objective mirror descent \citep{Duchi2010a} or dual averaging methods \citep{Xiao2010} and thereby scales to very large problems.
\item The size of $\mathcal{R}$ does not affect the complexity of Problem~\ref{eq:sllcqp}, since each constraint is simply a linear combination of entries of $\mathbf{M}$.
\item If $\mathbf{x_i}$ is sparse, then the associated constraint is sparse as well: some variables of the problem (corresponding to entries of $\mathbf{M}$) have a zero coefficient in the constraint. This makes the problem easier to solve when data have a sparse representation.
\end{enumerate}

We now explain how SLLC can be kernelized to deal with nonlinear problems.

\subsection{Kernelization of SLLC}
\label{sec:nonlinear}

The framework presented in the previous section is theoretically well-founded with respect to Balcan et al.'s theory and has some generalization guarantees, as we will see in the next section. Moreover, it has the advantage of being very simple: we learn a global linear similarity and use it to build a global linear classifier. In order to learn more powerful similarities (and therefore classifiers), we propose to kernelize the approach by learning them in the nonlinear feature space induced by a kernel.

As discussed in \sref{sec:nonlinearml}, kernelizing a particular metric learning algorithm is difficult in general and may lead to intractable problems unless dimensionality reduction is applied.
For these reasons, we instead use the KPCA trick, recently proposed by Chatpatanasiri et al. \citeyearpar{Chatpatanasiri2010}. It provides a straightforward way to kernelize a metric learning algorithm while performing dimensionality reduction at no additional cost, and is based on Kernel Principal Component Analysis \citep{Scholkopf1998}, a nonlinear extension of PCA \citep{Pearson1901}.

PCA provides a way of representing the data by a small number $k$ of linearly uncorrelated variables (called the principal components) that account for most of the variance in the data. Assuming zero-centered data, let $\mathbf{C}$ denote the data covariance matrix:
$$\mathbf{C} = \frac{1}{n}\sum_{i=1}^n\mathbf{x_i}\mathbf{x_j}^T.$$
The new representation $\mathbf{x'_i}\in\mathbb{R}^k$ ($k\leq d$) of a data point $\mathbf{x_i}\in\mathbb{R}^d$ is given by $\mathbf{x'_i} = \mathbf{x_i}^T\mathbf{V}$, where $\mathbf{V}$ is a matrix whose columns are the top $k$ eigenvectors of $\mathbf{C}$.

The basic idea of KPCA is to use a kernel function to implicitly perform PCA in the (possibly infinite-dimensional) nonlinear feature space induced by the kernel, in the spirit of what is done in SVM. Let $K$ be a kernel such that $K(x,y) = \innerp{\phi(x),\phi(x')}$. The data covariance matrix in the new feature space is given by
$$\mathbf{C} = \frac{1}{n}\sum_{i=1}^n\phi(x_i)\phi(x_j)^T.$$
It can be shown that the projection of a point $\phi(x_i)$ onto the $j^{th}$ principal component only depends on inner products and therefore can be computed implicitly through the kernel function. The solution can actually be obtained through an eigendecomposition of the kernel matrix $\mathbf{K}$ whose entries are defined as $K_{i,j} = K(x_i,x_j)$.

Therefore, KPCA allows us to project the data into a new feature space of dimension $k\leq n$. The (unchanged) metric learning algorithm can then be used to learn a metric in that nonlinear space. Chatpatanasiri et al. \citeyearpar{Chatpatanasiri2010} showed that the KPCA trick is theoretically sound for unconstrained metric learning algorithms (they proved representer theorems), which includes SLLC. Throughout the rest of this chapter, we will only consider the kernelized version of SLLC.

Generally speaking, kernelizing a metric learning algorithm may cause or increase overfitting, especially when data are scarce and/or high-dimensional. However, since our framework is entirely linear and global, we expect our method to be quite robust to this undesirable effect. This will be doubly confirmed in the rest of this chapter: experimentally in \sref{sec:icmlexpes}, but also theoretically with the derivation in the following section of generalization guarantees independent from the size of the projection space.

\section{Theoretical Analysis}
\label{sec:icmltheo}

In this section, we present a theoretical analysis of our approach. 
Our main result is the derivation of a generalization bound (\thref{thm:icmlguarantee}) guaranteeing the consistency of SLLC and thus the $(\epsilon,\gamma,\tau)$-goodness in generalization for the considered task.

\subsection{Notations}

For convenience, given a bilinear model $K_\mathbf{M}$, we denote by $\mathbf{M}_\mathcal{R}$ both the similarity defined by the matrix $\mathbf{M}$
and its associated set of reasonable points $\mathcal{R}$  (when it is clear from the context we may omit the subscript $\mathcal{R}$). 
Given a similarity $\mathbf{M}_\mathcal{R}$, $\ell(\mathbf{M}_\mathcal{R},z,\mathcal{R})$ is the loss function over one example $z$. The empirical risk of $\mathbf{M}_\mathcal{R}$ over the sample $\mathcal{T}$ is thus given by
$$R^\ell_\mathcal{T}(\mathbf{M}_\mathcal{R}) = \epsilon_\mathcal{T}(\mathbf{M}_\mathcal{R})=\frac{1}{n_\mathcal{T}} \sum_{i=1}^{n_\mathcal{T}} \ell(\mathbf{M}_\mathcal{R},z_i,\mathcal{R})$$
and corresponds to the empirical goodness, while the true risk is given by
$$R^\ell(\mathbf{M}_\mathcal{R}) = \epsilon(\mathbf{M}_\mathcal{R})= \mathbb{E}_{z\sim P}[\ell(\mathbf{M}_\mathcal{R},z,\mathcal{R})]$$
and corresponds to the ``true'' goodness (or goodness in generalization). In the following, we will rather use $\epsilon_\mathcal{T}(\mathbf{M}_\mathcal{R})$ and $\epsilon(\mathbf{M}_\mathcal{R})$ to denote respectively the empirical and true risks to highlight the equivalence between risk and $(\epsilon,\gamma,\tau)$-goodness in SLLC. When it is clear from the context, we may simply use $\epsilon_\mathcal{T}$ and $\epsilon$.

The similarity is optimized according to a fixed set $\mathcal{R}$ of reasonable points coming from the training sample. Therefore, these reasonable points may not follow the distribution from which the training sample has been generated. Once again, the framework of uniform stability allows us to cope with this situation. Note that the empirical and true risks are defined with respect to a single example and not with respect to pairs. Therefore, we use the standard uniform stability setting (presented in \sref{sec:stability}) instead of the adaptation to the pair-based case introduced by \citet{Jin2009} and  used in \cref{chap:ecml}. 

\subsection{Generalization Bound}

In our case, to prove the uniform stability property we need to show that
\begin{equation}
\label{eq:icmlstab}
\forall \mathcal{T},\forall i, \sup_{z}|\ell(\mathbf{M},z,\mathcal{R})-\ell(\mathbf{M}^i,z,\mathcal{R}^i)|\leq\frac{\kappa}{n_\mathcal{T}},
\end{equation}
where $\mathbf{M}$ is learned from $\mathcal{T}$ and $\mathcal{R}\subseteq \mathcal{T}$, $\mathbf{M}^i$ is the matrix learned from $\mathcal{T}^i$ and $\mathcal{R}^i\subseteq \mathcal{T}^i$ is the set of reasonable points associated to $\mathcal{T}^i$. $\mathcal{T}^i$ is obtained from $\mathcal{T}$ by replacing the $i^{th}$ example $z_i\in \mathcal{T}$ by another example $z'_i$ independent from $\mathcal{T}$ and drawn from $P$.
Note that $\mathcal{R}$ and $\mathcal{R}^i$ are of equal size and can differ in at most one example, depending on whether $z_i$ or $z'_i$ belong to their corresponding set of reasonable points.
For the sake of simplicity, we assume that $\ell$ is bounded by 1.\footnote{Since we assume $\|\mathbf{x}\|_2\leq 1$ and $\|\mathbf{M}\|_{\mathcal{F}}\leq 1$, this can be obtained by dividing $\ell$ by the constant $1+\frac{1}{\gamma}$.}
To show \eqref{eq:icmlstab}, we need the following results.
\begin{lemma}\label{lem:res1}
For any labeled examples $z=(\mathbf{x},y)$, $z'=(\mathbf{x}',y')$ and any models $\mathbf{M}_\mathcal{R}$,
$\mathbf{M}'_{\mathcal{R}'}$, the following properties hold:
\begin{itemize}
\item[P1:] $|K_\mathbf{M}(\mathbf{x},\mathbf{x'})|\leq 1$,  
\item[P2:] $|K_\mathbf{M}(\mathbf{x},\mathbf{x'})-K_\mathbf{M'}(\mathbf{x},\mathbf{x'})|\leq
\|\mathbf{M}-\mathbf{M}'\|_{\mathcal{F}}$,
\item[P3:] 1-admissibility property of $\ell$:
$$|\ell(\mathbf{M},z,{\mathcal{R}})-\ell(\mathbf{M}',z,{\mathcal{R}}')|\leq 1|\frac{\sum_{k=1}^{n_\mathcal{R}}y_{k}
    K_{\mathbf{M}}(\mathbf{x},\mathbf{x_k})}{\gamma n_\mathcal{R}}-\frac{\sum_{j=1}^{n_{\mathcal{R}'}}
    y_{k}'K_{\mathbf{M}'}(\mathbf{x},\mathbf{x_k'})}{\gamma n_{\mathcal{R}'}}|.$$
\end{itemize}
\end{lemma}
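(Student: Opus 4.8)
The plan is to verify the three properties P1--P3 in order; each reduces to a one-line computation from the bilinear form $K_\mathbf{M}(\mathbf{x},\mathbf{x'}) = \mathbf{x}^T\mathbf{M}\mathbf{x'}$, the normalization assumptions $\|\mathbf{x}\|_2 \le 1$ and $\|\mathbf{M}\|_{\mathcal{F}} \le 1$, and the $1$-Lipschitzness of the hinge function. These bounds are precisely the technical ingredients that will be needed afterwards to establish the uniform stability inequality \eqref{eq:icmlstab}.

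For P1, I would observe that $K_\mathbf{M}(\mathbf{x},\mathbf{x'}) = \mathbf{x}^T\mathbf{M}\mathbf{x'} = \langle \mathbf{M},\mathbf{x}\mathbf{x'}^T\rangle_{\mathcal{F}}$ is the Frobenius inner product of $\mathbf{M}$ with the rank-one matrix $\mathbf{x}\mathbf{x'}^T$. The Cauchy--Schwarz inequality for this inner product then gives $|K_\mathbf{M}(\mathbf{x},\mathbf{x'})| \le \|\mathbf{M}\|_{\mathcal{F}}\,\|\mathbf{x}\mathbf{x'}^T\|_{\mathcal{F}}$, and since $\|\mathbf{x}\mathbf{x'}^T\|_{\mathcal{F}} = \|\mathbf{x}\|_2\|\mathbf{x'}\|_2 \le 1$ while $\|\mathbf{M}\|_{\mathcal{F}} \le 1$, this yields $|K_\mathbf{M}(\mathbf{x},\mathbf{x'})| \le 1$, i.e. $K_\mathbf{M}$ qualifies as a similarity function in the sense of the earlier definition.

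For P2, I would use that $\mathbf{M} \mapsto K_\mathbf{M}(\mathbf{x},\mathbf{x'})$ is linear, so $K_\mathbf{M}(\mathbf{x},\mathbf{x'}) - K_{\mathbf{M}'}(\mathbf{x},\mathbf{x'}) = \mathbf{x}^T(\mathbf{M}-\mathbf{M}')\mathbf{x'} = \langle \mathbf{M}-\mathbf{M}',\mathbf{x}\mathbf{x'}^T\rangle_{\mathcal{F}}$; the same Cauchy--Schwarz step as in P1 (this time without invoking any bound on $\|\mathbf{M}-\mathbf{M}'\|_{\mathcal{F}}$) gives $|K_\mathbf{M}(\mathbf{x},\mathbf{x'}) - K_{\mathbf{M}'}(\mathbf{x},\mathbf{x'})| \le \|\mathbf{M}-\mathbf{M}'\|_{\mathcal{F}}\,\|\mathbf{x}\|_2\|\mathbf{x'}\|_2 \le \|\mathbf{M}-\mathbf{M}'\|_{\mathcal{F}}$.

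For P3, I would recall that the hinge map $t \mapsto [t]_+$ is $1$-Lipschitz, hence $|[1-a]_+ - [1-b]_+| \le |a-b|$ for all $a,b$. Applying this with $a = y\,g_{\mathbf{M},\mathcal{R}}(\mathbf{x})$ and $b = y\,g_{\mathbf{M}',\mathcal{R}'}(\mathbf{x})$, where $g_{\mathbf{M},\mathcal{R}}(\mathbf{x}) = \frac{1}{\gamma n_\mathcal{R}}\sum_{k=1}^{n_\mathcal{R}} y_k K_\mathbf{M}(\mathbf{x},\mathbf{x_k})$ is the empirical goodness score of $\mathbf{x}$, and using $|y|=1$, gives exactly the claimed inequality with admissibility constant $1$. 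The only subtlety is purely notational: the two averages run over the reasonable sets $\mathcal{R}$ and $\mathcal{R}'$, which have the same cardinality but may differ in one element, so the right-hand side is left as the raw difference of the two normalized sums; quantifying that difference (using P1 and P2) is postponed to the proof of \eqref{eq:icmlstab}. In short, no step presents a genuine difficulty here — the work is entirely in keeping the bookkeeping between $(\mathbf{M},\mathcal{R})$ and $(\mathbf{M}',\mathcal{R}')$ straight.
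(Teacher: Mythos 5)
Your proposal is correct and follows essentially the same route as the paper's proof: P1 and P2 via the bound $|K_{\mathbf{M}}(\mathbf{x},\mathbf{x'})| \leq \|\mathbf{x}\|_2\|\mathbf{M}\|_{\mathcal{F}}\|\mathbf{x'}\|_2$ (the paper writes the P2 difference as $K_{\mathbf{M}-\mathbf{M}'}$, which is the same linearity-plus-Cauchy--Schwarz step you use), and P3 via $|y|=1$ together with the $1$-Lipschitzness of the hinge. Your remark that quantifying the difference of the two averages is deferred to the stability argument also matches how the paper organizes things.
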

\begin{proof}
$P1$ comes from 
$|K_\mathbf{M}(\mathbf{x},\mathbf{x'})|\leq
\|\mathbf{x}\|_2\|\mathbf{M}\|_{\mathcal{F}}\|\mathbf{x}'\|_2$,
the  normalization on  examples ($\|\mathbf{x}\|_2\leq 1$) and the  requirement on matrices  ($\|\mathbf{M}\|_{\mathcal{F}}\leq 1$). 

For $P2$, we observe that 
$$|K_\mathbf{M}(\mathbf{x},\mathbf{x'})-K_\mathbf{M'}(\mathbf{x},\mathbf{x'})|=|K_{\mathbf{M}-\mathbf{M}'}(\mathbf{x},\mathbf{x}')|,$$
and we use the normalization $\|\mathbf{x}\|_2\leq 1$.

$P3$ follows directly from $|y|=1$ and the 1-lipschitz property of the hinge loss:
$$|[U]_+-[V]_+|\leq |U-V|.$$
\end{proof}

Let $F_\mathcal{T}=\epsilon_\mathcal{T}(\mathbf{M})+\beta\|\mathbf{M}\|_{\mathcal{F}}^2$ be the objective function of SLLC with respect to a sample $\mathcal{T}$ and a
 set of reasonable points $R\subseteq T$.  The following lemma bounds the deviation between $\mathbf{M}$ and $\mathbf{M}^i$.
\begin{lemma}\label{lem:diffA}
For any models $\mathbf{M}$ and $\mathbf{M^i}$ that are minimizers of
$F_\mathcal{T}$ and $F_{\mathcal{T}^i}$ respectively, we have:
$$\|\mathbf{M}-\mathbf{M^i}\|_{\mathcal{F}}\leq \frac{1}{\beta n_\mathcal{T}\gamma}.$$
\end{lemma}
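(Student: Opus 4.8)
The plan is to run the classical Bousquet--Elisseeff stability argument, which here is especially clean because the bilinear similarity $K_\mathbf{M}(\mathbf{x},\mathbf{x'})=\mathbf{x}^T\mathbf{M}\mathbf{x'}$ is \emph{linear} in $\mathbf{M}$. First I would observe that $\epsilon_\mathcal{T}(\cdot)$ is convex in $\mathbf{M}$ (it is an average of hinge functions composed with an affine map of $\mathbf{M}$), so that $F_\mathcal{T}(\mathbf{M})=\epsilon_\mathcal{T}(\mathbf{M})+\beta\|\mathbf{M}\|_{\mathcal{F}}^2$ is $2\beta$-strongly convex, and likewise for $F_{\mathcal{T}^i}$. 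In particular $\mathbf{M}$ and $\mathbf{M}^i$ are the unique minimizers of $F_\mathcal{T}$ and $F_{\mathcal{T}^i}$. Writing the strong-convexity inequality at each minimizer,
$$F_\mathcal{T}(\mathbf{M}^i)\geq F_\mathcal{T}(\mathbf{M})+\beta\|\mathbf{M}-\mathbf{M}^i\|_{\mathcal{F}}^2,\qquad F_{\mathcal{T}^i}(\mathbf{M})\geq F_{\mathcal{T}^i}(\mathbf{M}^i)+\beta\|\mathbf{M}-\mathbf{M}^i\|_{\mathcal{F}}^2,$$
and summing these two inequalities, the quadratic regularizers cancel pairwise and I obtain
$$2\beta\|\mathbf{M}-\mathbf{M}^i\|_{\mathcal{F}}^2\leq \bigl[\epsilon_\mathcal{T}(\mathbf{M}^i)-\epsilon_\mathcal{T}(\mathbf{M})\bigr]-\bigl[\epsilon_{\mathcal{T}^i}(\mathbf{M}^i)-\epsilon_{\mathcal{T}^i}(\mathbf{M})\bigr].$$

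Next I would expand the right-hand side. Since $\mathcal{T}$ and $\mathcal{T}^i$ share the same reasonable set and differ only by replacing $z_i$ with $z_i'$, every summand indexed by $j\neq i$ appears identically in $\epsilon_\mathcal{T}$ and $\epsilon_{\mathcal{T}^i}$ and therefore cancels, leaving
$$2\beta\|\mathbf{M}-\mathbf{M}^i\|_{\mathcal{F}}^2\leq \frac{1}{n_\mathcal{T}}\Bigl(\bigl[\ell(\mathbf{M}^i,z_i,\mathcal{R})-\ell(\mathbf{M},z_i,\mathcal{R})\bigr]-\bigl[\ell(\mathbf{M}^i,z_i',\mathcal{R})-\ell(\mathbf{M},z_i',\mathcal{R})\bigr]\Bigr).$$
Each of the two bracketed differences is controlled using properties P3 and P2 of \lref{lem:res1}: the $1$-admissibility of $\ell$ (P3) together with $|y_k|=1$ and the $\|\cdot\|_{\mathcal{F}}$-Lipschitzness of $K_{\cdot}(\mathbf{x},\mathbf{x_k})$ (P2) gives, for any labeled point $z$,
$$|\ell(\mathbf{M}^i,z,\mathcal{R})-\ell(\mathbf{M},z,\mathcal{R})|\leq \frac{1}{\gamma n_\mathcal{R}}\sum_{k=1}^{n_\mathcal{R}}|K_{\mathbf{M}^i}(\mathbf{x},\mathbf{x_k})-K_{\mathbf{M}}(\mathbf{x},\mathbf{x_k})|\leq \frac{1}{\gamma}\|\mathbf{M}-\mathbf{M}^i\|_{\mathcal{F}}.$$
Plugging this bound in twice yields $2\beta\|\mathbf{M}-\mathbf{M}^i\|_{\mathcal{F}}^2\leq \frac{2}{n_\mathcal{T}\gamma}\|\mathbf{M}-\mathbf{M}^i\|_{\mathcal{F}}$, and dividing by $2\beta\|\mathbf{M}-\mathbf{M}^i\|_{\mathcal{F}}$ (the case $\mathbf{M}=\mathbf{M}^i$ being trivial) gives the announced bound $\|\mathbf{M}-\mathbf{M}^i\|_{\mathcal{F}}\leq \frac{1}{\beta n_\mathcal{T}\gamma}$.

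There is no deep obstacle here; the argument is essentially bookkeeping. The only points needing care are (i) justifying strong convexity and uniqueness of the minimizers so that the two one-sided inequalities above are valid, and (ii) the treatment of the reasonable set when passing from $\mathcal{T}$ to $\mathcal{T}^i$ --- in this lemma $\mathcal{R}$ is held fixed across the perturbation, which is what makes all $j\neq i$ terms cancel exactly; the slightly more delicate case where $\mathcal{R}$ itself changes by one point is deferred to the proof of the stability bound \eqref{eq:icmlstab}, where the extra loss difference coming from swapping one reasonable point is absorbed separately using the same admissibility estimate. I would also note that, alternatively, one can avoid invoking strong convexity directly and instead use the interpolation point $\mathbf{M}-t\Delta\mathbf{M}$ exactly as in \lref{lem:convexN2}, obtaining the same conclusion; I find the strong-convexity route shorter in this single-example (as opposed to pair-based) setting.
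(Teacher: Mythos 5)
Your proof is correct and follows essentially the same route as the paper: the paper invokes the Bousquet--Elisseeff Lemma~20 interpolation argument (convexity of the losses plus minimality of $\mathbf{M}$ and $\mathbf{M}^i$, evaluated at $t=1/2$), which is exactly what your $2\beta$-strong-convexity/quadratic-growth packaging reproduces, and it then bounds the single surviving loss difference via P2--P3 with constant $1/\gamma$ to reach $\|\mathbf{M}-\mathbf{M}^i\|_{\mathcal{F}}^2\leq \|\mathbf{M}-\mathbf{M}^i\|_{\mathcal{F}}/(\beta n_\mathcal{T}\gamma)$, just as you do. Your remark that $\mathcal{R}$ is held fixed here and that the swap of one reasonable point is deferred to the stability lemma also matches the paper's treatment.
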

\begin{proof}
We follow closely the proof of Lemma~20
of \cite{Bousquet2002} and omit some details for the sake of readability (similar ideas are used in the first part of the more detailed proof of \lref{lem:convexN2}). Let $\Delta
\mathbf{M}=\mathbf{M^i}-\mathbf{M}$, $0\leq t\leq 1$ and
\begin{eqnarray*}
M_1 & = & \|\mathbf{M}\|_{\mathcal{F}}^2-\|\mathbf{M}+t\Delta
\mathbf{M}\|_{\mathcal{F}}^2+\|\mathbf{M}^i\|_{\mathcal{F}}^2-\|\mathbf{M}^i-t\Delta
\mathbf{M}\|_{\mathcal{F}}^2\\
M_2 & = &\frac{1}{\beta n_\mathcal{T}}(\epsilon_\mathcal{T}(\mathbf{M}_\mathcal{R})-\epsilon_\mathcal{T}((\mathbf{M}+t\Delta
\mathbf{M})_\mathcal{R}) + \epsilon_{\mathcal{T}^{i}}((\mathbf{M}+t\Delta
\mathbf{M})_\mathcal{R})-\epsilon_{\mathcal{T}^i}(\mathbf{M}_\mathcal{R})).
\end{eqnarray*}  
Using the fact that $F_\mathcal{T}$ and $F_{\mathcal{T}^{i}}$ are convex functions, that $\mathbf{M}$ and $\mathbf{M}^i$ are their respective minimizers and
property P3,  we have $M_1 \leq M_2$. 
Fixing $t=1/2$, we obtain $M_1=\|\mathbf{M}-\mathbf{M}^i\|_{\mathcal{F}}^2$, and using  property $P3$
 and the normalization $\|\mathbf{x}\|_2\leq 1$, we get:
$$M_2\leq\frac{1}{\beta n_\mathcal{T}\gamma}(\|\frac{1}{2}\Delta \mathbf{M}\|_{\mathcal{F}}+\|-\frac{1}{2}\Delta
\mathbf{M}\|_{\mathcal{F}})=\frac{\|\mathbf{M}-\mathbf{M}^i\|_{\mathcal{F}}}{\beta
  n_\mathcal{T}\gamma}.$$ 
This leads to  the inequality 
$\|\mathbf{M}-\mathbf{M}^i\|_{\mathcal{F}}^2\leq \frac{\|\mathbf{M}-\mathbf{M}^i\|_{\mathcal{F}}}{\beta n_\mathcal{T}\gamma}$ from which \lref{lem:diffA} is directly derived.
\end{proof}
We now have all the material needed to prove the stability property of our algorithm.
\begin{lemma}
Let $n_\mathcal{T}$ and $n_\mathcal{R}$ be the number of training examples and reasonable points respectively, $n_\mathcal{R}=\hat{\tau} n_\mathcal{T}$ with $\hat{\tau} \in \left]0,1\right]$.
SLLC has a uniform stability in $\frac{\kappa}{n_\mathcal{T}}$ with $\kappa=\frac{1}{\gamma}(\frac{1}{\beta\gamma}+\frac{2}{\hat{\tau}})=\frac{\hat{\tau}+2\beta\gamma}{\hat{\tau}\beta\gamma^2}$, where $\beta$ is the regularization parameter  and $\gamma$ the margin.
\end{lemma}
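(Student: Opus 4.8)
The plan is to establish \eqref{eq:icmlstab} directly, using \lref{lem:res1} (properties P1--P3) together with the bound on $\|\mathbf{M}-\mathbf{M}^i\|_{\mathcal{F}}$ from \lref{lem:diffA}. The one genuinely non-standard feature here, compared with a textbook uniform-stability argument, is that replacing $z_i$ by $z_i'$ in the training sample perturbs \emph{two} objects at once: the learned matrix ($\mathbf{M}\to\mathbf{M}^i$) \emph{and} the reasonable set ($\mathcal{R}\to\mathcal{R}^i$). So the main work is to split the deviation of the losses into a ``matrix'' part and a ``reasonable set'' part via a triangle inequality through a well-chosen intermediate quantity.

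First I would fix an arbitrary labeled example $z=(\mathbf{x},y)$ and apply P3, which gives
$$
|\ell(\mathbf{M},z,\mathcal{R})-\ell(\mathbf{M}^i,z,\mathcal{R}^i)|\leq
\left|\frac{1}{\gamma n_\mathcal{R}}\sum_{k=1}^{n_\mathcal{R}} y_{k}K_{\mathbf{M}}(\mathbf{x},\mathbf{x_k})
-\frac{1}{\gamma n_\mathcal{R}}\sum_{j=1}^{n_\mathcal{R}} y_{j}'K_{\mathbf{M}^i}(\mathbf{x},\mathbf{x_j'})\right|,
$$
where I have used that $\mathcal{R}$ and $\mathcal{R}^i$ have the same cardinality $n_\mathcal{R}$. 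Inserting the intermediate term $\tfrac{1}{\gamma n_\mathcal{R}}\sum_{k} y_{k}K_{\mathbf{M}^i}(\mathbf{x},\mathbf{x_k})$ (same reasonable set $\mathcal{R}$, new matrix $\mathbf{M}^i$) and applying the triangle inequality yields two pieces. For the first piece, P2 and $|y_k|=1$ give $\tfrac{1}{\gamma n_\mathcal{R}}\sum_k|K_{\mathbf{M}}(\mathbf{x},\mathbf{x_k})-K_{\mathbf{M}^i}(\mathbf{x},\mathbf{x_k})|\leq\tfrac{1}{\gamma}\|\mathbf{M}-\mathbf{M}^i\|_{\mathcal{F}}$, which by \lref{lem:diffA} is at most $\tfrac{1}{\beta n_\mathcal{T}\gamma^2}$. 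For the second piece, since $\mathcal{R}$ and $\mathcal{R}^i$ coincide except on at most one element, all common terms cancel and at most one term of each sum survives; by P1 each such term is bounded in absolute value by $1$, so this piece is at most $\tfrac{2}{\gamma n_\mathcal{R}}$.

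Finally I would substitute $n_\mathcal{R}=\hat\tau\, n_\mathcal{T}$ and combine the two bounds:
$$
\sup_{z}|\ell(\mathbf{M},z,\mathcal{R})-\ell(\mathbf{M}^i,z,\mathcal{R}^i)|
\leq\frac{1}{\beta n_\mathcal{T}\gamma^2}+\frac{2}{\gamma\hat\tau n_\mathcal{T}}
=\frac{1}{n_\mathcal{T}}\cdot\frac{1}{\gamma}\!\left(\frac{1}{\beta\gamma}+\frac{2}{\hat\tau}\right)
=\frac{1}{n_\mathcal{T}}\cdot\frac{\hat\tau+2\beta\gamma}{\hat\tau\beta\gamma^2},
$$
which is exactly the claimed stability in $\kappa/n_\mathcal{T}$ with $\kappa=\tfrac{1}{\gamma}(\tfrac{1}{\beta\gamma}+\tfrac{2}{\hat\tau})=\tfrac{\hat\tau+2\beta\gamma}{\hat\tau\beta\gamma^2}$. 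I expect the only delicate point to be the bookkeeping in the ``reasonable set'' piece (arguing carefully that at most two surviving terms remain and that $n_\mathcal{R}$ is unchanged); everything else reduces to P1--P3 and \lref{lem:diffA}.
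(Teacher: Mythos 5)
Your proposal is correct and follows essentially the same route as the paper: apply P3, separate the effect of changing the matrix from the effect of changing the reasonable set, bound the former via P2 and the $\|\mathbf{M}-\mathbf{M}^i\|_{\mathcal{F}}\leq \frac{1}{\beta n_\mathcal{T}\gamma}$ lemma and the latter (two surviving terms) via P1, then substitute $n_\mathcal{R}=\hat{\tau}n_\mathcal{T}$. The only cosmetic difference is that you introduce an explicit intermediate term and change the matrix over all $n_\mathcal{R}$ points, whereas the paper splits the sums directly and applies P2 only to the $n_\mathcal{R}-1$ common points before loosening to $n_\mathcal{R}$; both yield the identical bound $\frac{1}{\gamma n_\mathcal{R}}\bigl(\frac{n_\mathcal{R}}{\beta n_\mathcal{T}\gamma}+2\bigr)$.
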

\begin{proof}
For any sample $\mathcal{T}$ of size $n_\mathcal{T}$, any $1\leq i\leq n_\mathcal{T}$, any labeled examples $z=(\mathbf{x},y)$ and  $z'_i=(\mathbf{x_i'},y'_i)\sim P$:
\begin{eqnarray*}
\lefteqn{|\ell(\mathbf{M},z,\mathcal{R}) - \ell(\mathbf{M}^i,z,\mathcal{R}^i)| }\\
&\leq&\left|\frac{1}{\gamma  n_{\mathcal{R}}}\displaystyle\sum_{k=1}^{n_{\mathcal{R}}}y_kK_{\mathbf{M}}(\mathbf{x},\mathbf{x_k})-\frac{1}{\gamma  n_{\mathcal{R}^i}}\displaystyle\sum_{k=1}^{n_{\mathcal{R}^i}}y_kK_{\mathbf{M}^i}(\mathbf{x},\mathbf{x_k})\right|\\
&= &\left|\frac{1}{\gamma n_\mathcal{R}} \left(\left(\sum_{k=1,k\neq i}^{n_\mathcal{R}}
y_k(K_{\mathbf{M}}(\mathbf{x},\mathbf{x_k})-K_{\mathbf{M}^i}(\mathbf{x},\mathbf{x_k}))\right)+\right.\right.\\
&&\left.\left. \phantom{\sum_{k=1,k\neq i}^{n_\mathcal{R}}}
y_iK_{\mathbf{M}}(\mathbf{x},\mathbf{x_i})-y_i'K_{\mathbf{M}^i}(\mathbf{x},\mathbf{x_i}')\right)\right|\\
\end{eqnarray*}
\begin{eqnarray*}
&\leq&\frac{1}{\gamma n_\mathcal{R}}\left(\left(\sum_{k=1,k\neq i}^{n_\mathcal{R}} (|y_k|\|\mathbf{M}-\mathbf{M}^i\|_{\mathcal{F}})\right)+\right.\\
&&\left.\phantom{\sum_{k=1,k\neq i}^{n_\mathcal{R}}} |y_iK_{\mathbf{M}^i}(\mathbf{x},\mathbf{x_i})|+|y'_iK_{\mathbf{M}}(\mathbf{x},\mathbf{x_i'})|\right)\\
&\leq&\frac{1}{\gamma n_\mathcal{R}}\left(\frac{n_\mathcal{R}-1}{\beta n_\mathcal{T}\gamma}+2\right)\leq\frac{1}{\gamma n_\mathcal{R}}\left(\frac{n_\mathcal{R}}{\beta n_\mathcal{T}\gamma}+2\right).
\end{eqnarray*}
The first inequality follows from $P3$. The second comes from the fact that $\mathcal{R}$ and $\mathcal{R}^i$ differ in at most one element, corresponding to the  example $z_i$ in $\mathcal{R}$ and the example $z'_i$ replacing $z_i$ in $\mathcal{R}^i$. The last inequalities are obtained by the use of the triangle inequality, $P1$, $P2$, \lref{lem:diffA}, and the fact that the labels belong to $\{-1,1\}$.   
Since $n_\mathcal{R}=\hat{\tau} n_\mathcal{T}$, we get 
$
|\ell(\mathbf{M},z,\mathcal{R}) - \ell(\mathbf{M}^i,z,\mathcal{R}^i)| \leq \frac{1}{\gamma n_\mathcal{T}}(\frac{1}{\beta\gamma}+\frac{2}{\hat{\tau}}).
$
\end{proof}

Applying \thref{thm:stability} with \lref{lem:diffA} gives our main result.
\begin{theorem}\label{thm:icmlguarantee}
Let $\gamma>0$, $\delta>0$ and $n_\mathcal{T}>1$. With probability at least $1-\delta$, for any model $\mathbf{M}_R$ learned with SLLC, we have:
$$\epsilon\leq \epsilon_\mathcal{T}+\frac{1}{ n_\mathcal{T}}\left(\frac{\hat{\tau}+2\beta\gamma}{\hat{\tau}\beta\gamma^2}\right)+\left(\frac{2(\hat{\tau}+2\beta\gamma)}{\hat{\tau}\beta\gamma^2}+1\right)\sqrt{\frac{\ln 1/\delta}{2 n_\mathcal{T}}}.$$
\end{theorem}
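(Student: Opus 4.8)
The plan is to derive Theorem \ref{thm:icmlguarantee} directly from the generic generalization bound for uniformly stable algorithms recalled in \sref{sec:stability} \citep{Bousquet2002}, once the stability constant has been pinned down. Two of the three ingredients are already available: the preceding lemma shows that SLLC has uniform stability $\kappa/n_\mathcal{T}$ with $\kappa=\frac{\hat\tau+2\beta\gamma}{\hat\tau\beta\gamma^2}$, and the loss is bounded by $B=1$ (by the normalization discussed above). The third ingredient is simply the observation that here the empirical risk $\epsilon_\mathcal{T}(\mathbf{M}_\mathcal{R})$ and the true risk $\epsilon(\mathbf{M}_\mathcal{R})$ are, respectively, an average and an expectation over \emph{single} examples $z\sim P$, so we are in the standard (single-example) uniform-stability setting of \sref{sec:stability} rather than the pair-based adaptation used for GESL. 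Plugging $n=n_\mathcal{T}$, $B=1$ and the above $\kappa$ into that bound yields, with probability at least $1-\delta$, $\epsilon\le\epsilon_\mathcal{T}+\kappa/n_\mathcal{T}+(2\kappa+1)\sqrt{\ln(1/\delta)/(2n_\mathcal{T})}$, which is exactly the claimed inequality after expanding $\kappa$.

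The one subtlety that deserves a sentence of justification is that the set $\mathcal{R}$ of reasonable points is drawn from $\mathcal{T}$, so it is not i.i.d. This does not break the argument because the quantity on which stability is controlled is the \emph{full} learned object $\mathbf{M}_\mathcal{R}$ (the matrix together with its reasonable set), and inequality \eqref{eq:icmlstab} already accounts for the fact that replacing an example $z_i$ by $z_i'$ may change $\mathcal{R}$ in at most one element. For completeness one could instead re-prove the bound from scratch by applying the McDiarmid inequality (\thref{thm:McDiarmid}) to $D_\mathcal{T}=\epsilon(\mathbf{M}_\mathcal{R})-\epsilon_\mathcal{T}(\mathbf{M}_\mathcal{R})$, exactly along the lines of the proof of \thref{thm:bound}: bounding $\mathbb{E}_\mathcal{T}[D_\mathcal{T}]$ via a renaming argument (the analogue of \lref{lem:espD}, which uses only the stability property) and bounding the bounded-difference constant by $(2\kappa+1)/n_\mathcal{T}$ using stability for the true-risk part and $\ell\le1$ for the single replaced summand of the empirical risk.

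I do not anticipate a genuine obstacle: with the stability lemma in hand, the proof is essentially a one-line instantiation of a known theorem, and the McDiarmid re-derivation is a routine specialization of the GESL analysis of \sref{sec:mljguarantees} in which the pair-based risk is replaced by a single-example risk and $(\sigma,m)$-admissibility by the trivial bound $\ell\le1$. The only thing requiring care is the bookkeeping around the data-dependence of $\mathcal{R}$, i.e., consistently treating $\mathbf{M}_\mathcal{R}$ (and not just $\mathbf{M}$) as the hypothesis throughout, exactly as in \eqref{eq:icmlstab} and \lref{lem:diffA}.
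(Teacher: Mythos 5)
Your proposal is correct and matches the paper's own argument: the paper proves the theorem precisely by instantiating the generic uniform stability bound of \sref{sec:stability} with the stability constant $\kappa=\frac{\hat\tau+2\beta\gamma}{\hat\tau\beta\gamma^2}$ established in the preceding lemma (via \lref{lem:diffA}) and the bound $B=1$ on the loss, in the standard single-example setting. Your remark about the data-dependence of $\mathcal{R}$ being absorbed into the stability definition \eqref{eq:icmlstab} is exactly the point the paper relies on, so no further work is needed.
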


\thref{thm:icmlguarantee} highlights three important properties of SLLC. First, it has a reasonable $O(1/\sqrt{n_\mathcal{T}})$ convergence rate. Second, it is independent from the dimensionality of the data. This is due to the fact that $\|\mathbf{M}\|_{\mathcal{F}}$ is bounded by a constant.  Third, \thref{thm:icmlguarantee} bounds the true goodness of the learned similarity
function. By minimizing $\epsilon_\mathcal{T}$ with SLLC, we minimize $\epsilon$ and thus an upper bound on the true risk of the resulting linear classifier, as stated by \thref{thm:thmsim}. Note that this is a much tighter bound on the goodness than that derived in \cref{chap:ecml}, where only a loose bound on the empirical goodness was optimized.

\section{Experimental Validation}
\label{sec:icmlexpes}

We propose a comparative study of our method against two widely-used Mahalanobis distance learning algorithms: Large Margin Nearest Neighbor\footnote{Code download from: \url{http://www.cse.wustl.edu/~kilian/code/lmnn/lmnn.html}} (LMNN) from \citet{Weinberger2009} and Information-Theoretic Metric Learning\footnote{Code download from: \url{http://www.cs.utexas.edu/~pjain/itml/}} (ITML) from \citet{Davis2007}. Recall that LMNN essentially optimizes the $k$-NN error on the training set (with a safety margin), whereas ITML aims at best satisfying pair-based constraints while minimizing the LogDet divergence between the learned matrix $\mathbf{M}$ and the identity matrix (refer to \sref{sec:mahalearning} for more details on these methods). We conduct this experimental study on seven classic binary classification datasets of varying domain, size and difficulty, mostly taken from the UCI Machine Learning Repository\footnote{\url{http://archive.ics.uci.edu/ml/}}. Their properties are summarized in \tref{tab:datasets}. Some of them, such as Breast, Ionosphere or Pima, have already been extensively used to evaluate metric learning methods.

\subsection{Setup}

We compare the following methods: (i) the cosine similarity $K_\mathbf{I}$ in KPCA space, as a baseline, (ii) SLLC, (iii) LMNN in the original space, (iv) LMNN in KPCA space, (v) ITML in the original space, and (vi) ITML in KPCA space.\footnote{$K_\mathbf{I}$, LMNN and ITML are normalized to ensure their values belong to $[-1,1]$.} All attributes are scaled to $[-1/d;1/d]$ to ensure $\|\mathbf{x}\|_2 \leq 1$.

To generate a new feature space using KPCA, we use the Gaussian kernel with parameter $\sigma$ equal to the mean of all pairwise training data Euclidean
distances \citep[a standard heuristic, used for instance by][]{Kar2011}. Ideally, we would like to project the data to the feature space of maximum size (equal to the number of training examples), but to keep the computations tractable we only retain three times the number of features of the original data (four times for the low-dimensional datasets), as shown in \tref{tab:datasets}.\footnote{Note that the amount of variance captured thereby was greater than 90\% for all datasets.} On Cod-RNA, KPCA was run on a randomly drawn subsample of 10\% of the training data.

Unless predefined training and test sets are available (as for Splice, Svmguide1 and Cod-RNA), we randomly generate 70/30 splits of the data, and average the results over 100 runs. Training sets are further partitioned 70/30 for validation purposes.

We tune the following parameters by cross-validation: $\beta,\gamma\in\{10^{-7},\dots,10^{-2}\}$ for SLLC, $\lambda_{ITML}\in\{10^{-4},\dots,10^{4}\}$ for ITML, and $\lambda\in\{10^{-3},\dots,10^{2}\}$ for learning the linear classifiers, choosing the value offering the best accuracy.
We choose $\mathcal{R}$ to be the entire training set, i.e., $\hat{\tau}=1$ (interestingly, cross-validation of $\hat{\tau}$ did not improve the results significantly).
We take $k=3$ and $\mu=0.5$ for LMNN, as suggested by \citet{Weinberger2009}. For ITML, we generate $n_\mathcal{T}$ random constraints for a fair comparison with SLLC.

\begin{table}[t]
\begin{center}
\begin{scriptsize}
\begin{tabular}{rccccccc}
\toprule
\textbf{Dataset} & \textbf{Breast} & \textbf{Iono.} & \textbf{Rings} & \textbf{Pima} & \textbf{Splice} & \textbf{Svmguide1} & \textbf{Cod-RNA}\\
\midrule
\# training examples & 488 & 245 & 700 & 537 & 1,000 & 3,089 & 59,535\\
\# test examples & 211 & 106 & 300 & 231 & 2,175 & 4,000 & 271,617\\
\# dimensions & 9 & 34 & 2 & 8 & 60 & 4 & 8\\
\# dim. after KPCA & 27 & 102 & 8 & 24 & 180 & 16 & 24\\
\# runs & 100 & 100 & 100 & 100 & 1 & 1 & 1\\
\bottomrule
\end{tabular}
\end{scriptsize}
\caption[Properties of the datasets used in the experimental study]{Properties of the seven datasets used in the experimental study.}
\label{tab:datasets}
\end{center}
\end{table}

\subsection{Results}

\paragraph{Linear classification} We first report the results obtained in linear classification using Balcan's learning rule (\tref{tab:resultsbalcan}).
SLLC achieves the highest accuracy on 5 out of 7 datasets and competitive performance on the remaining 2. At the same time, on all datasets, SLLC leads to extremely sparse classifiers. The sparsity of the classifier corresponds to the number of training examples that are involved in classifying a new example. Therefore, SLLC leads to much simpler and yet often more accurate classifiers than those built from other similarities. Furthermore, sparsity allows faster predictions, especially when data are plentiful and/or high-dimensional (e.g., Cod-RNA or Splice). Often enough, the learned linear classifier has sparsity 1, which means that classifying a new example boils down to computing its similarity score to a single training example and compare the value with a threshold. Note that we tried large values of $\lambda$ to obtain sparser classifiers from $K_\mathbf{I}$, LMNN and ITML, but this yielded dramatic drops in accuracy. The extreme sparsity brought by SLLC comes from the fact that the constraints are based on an average of similarity scores over the same set of points for all training examples. This brings to the fore the relevance of optimizing the similarity with respect to global constraints.

\begin{table}[t]
\begin{center}
\begin{scriptsize}
\begin{tabular}{rccccccc}
\toprule
\textbf{Dataset} & \textbf{Breast} & \textbf{Iono.} & \textbf{Rings} & \textbf{Pima} & \textbf{Splice} & \textbf{Svmguide1} & \textbf{Cod-RNA}\\
\midrule
\multirow{2}{*}{$K_\mathbf{I}$} & 96.57 & 89.81 & 100.00 & 75.62 & 83.86 & \textbf{96.95} & \textbf{95.91}\\
 & \emph{20.39} & \emph{52.93} & \emph{18.20} & \emph{25.93} & \emph{362} & \textbf{\emph{64}} & \textbf{\emph{557}}\\
\midrule
\multirow{2}{*}{SLLC} & \textbf{96.90} & \textbf{93.25} & \textbf{100.00} & \textbf{75.94} & \textbf{87.36} & 96.55 & 94.08 \\
 & \textbf{\emph{1.00}} & \textbf{\emph{1.00}} & \textbf{\emph{1.00}} & \textbf{\emph{1.00}} & \textbf{\emph{1}} & \emph{8} & \emph{1} \\
\midrule
\multirow{2}{*}{LMNN} & 96.81 & 90.21 & 100.00 & 75.15 & 85.61 & 95.80 & 88.40\\
 & \emph{9.98} & \emph{13.30} & \emph{18.04} & \emph{69.71} & \emph{315} & \emph{157} & \emph{61}\\
\midrule
\multirow{2}{*}{LMNN KPCA} & 96.01 & 86.12 & 100.00 & 74.92 & 86.85 & 96.53 & 95.15\\
 & \emph{8.46} & \emph{9.96} & \emph{8.73} & \emph{22.20} & \emph{156} & \emph{82} & \emph{591}\\
\midrule
\multirow{2}{*}{ITML} & 96.80 & 92.09 & 100.00 & 75.25 & 81.47 & 96.70 & 95.06\\
 & \emph{9.79} & \emph{9.51} & \emph{17.85} & \emph{56.22} & \emph{377} & \emph{49} & \emph{164}\\
\midrule
\multirow{2}{*}{ITML KPCA} & 96.23 & 93.05 & 100.00 & 75.25 & 85.29 & 96.55 & 95.14\\
 & \emph{17.17} & \emph{18.01} & \emph{15.21} & \emph{16.40} & \emph{287} & \emph{89} & \emph{206}\\
\bottomrule
\end{tabular}
\end{scriptsize}
\caption[Accuracy of the linear classifiers built from the studied similarities]{Average accuracy (normal type) and sparsity (italic type) of the linear classifiers built from the studied similarity functions. For each dataset, boldface indicates the most accurate method (sparsity is used to break the ties).}
\label{tab:resultsbalcan}
\end{center}
\end{table}

\paragraph{Nearest neighbor classification} Since LMNN and ITML are designed for $k$-NN use, we also give the results obtained in 3-NN classification (\tref{tab:results3nn}).
Surprinsingly (because it is not designed for $k$-NN), SLLC achieves the best results on 4 datasets (a possible reason for this is given in the next paragraph). It is, however, outperformed by LMNN or ITML on the 3 biggest problems. For most tasks, the accuracy obtained in linear classification is better or similar to that of 3-NN (highlighting the fact that metric learning for linear classification is of interest) while prediction is many orders of magnitude faster due to the sparsity of the linear separators.
Also note that an accurate similarity for $k$-NN classification can achieve poor results in linear classification (LMNN on Cod-RNA), and vice versa (SLLC on Svmguide1).

\begin{table}[t]
\begin{center}
\begin{scriptsize}
\begin{tabular}{rccccccc}
\toprule
\textbf{Dataset} & \textbf{Breast} & \textbf{Iono.} & \textbf{Rings} & \textbf{Pima} & \textbf{Splice} & \textbf{Svmguide1} & \textbf{Cod-RNA}\\
\midrule
$K_\mathbf{I}$ & 96.71 & 83.57 & \textbf{100.00} & 72.78 & 77.52 & 93.93 & 90.07\\
SLLC & \textbf{96.90} & \textbf{93.25} & \textbf{100.00} & \textbf{75.94} & 87.36 & 93.82 & 94.08\\
LMNN & 96.46 & 88.68 & \textbf{100.00} & 72.84 & 83.49 & 96.23 & 94.98\\
LMNN KPCA & 96.23 & 87.13 & \textbf{100.00} & 73.50 & \textbf{87.59} & 95.85 & 94.43\\
ITML & 92.67 & 88.29 & \textbf{100.00} & 72.07 & 77.43 & 95.97 & \textbf{95.42}\\
ITML KPCA & 96.38 & 87.56 & \textbf{100.00} & 72.80 & 84.41 & \textbf{96.80} & 95.32\\
\bottomrule
\end{tabular}
\end{scriptsize}
\caption[Accuracy of 3-NN classifiers using the studied similarities]{Average accuracy of 3-NN classifiers using the studied similarity functions. For each dataset, boldface indicates the most accurate method.}
\label{tab:results3nn}
\end{center}
\end{table}

\begin{figure}[t]
\begin{center}
\psfrag{Dimension}[][][0.8]{\textbf{Dimension}}
\psfrag{Classification accuracy}[][][0.8]{\textbf{Classification accuracy}}
\psfrag{SLLC}[][][0.8]{SLLC}
\psfrag{LMNN}[][][0.8]{LMNN}
\psfrag{ITML}[][][0.8]{ITML\hspace{0.1cm}}
\includegraphics[width=0.7\textwidth]{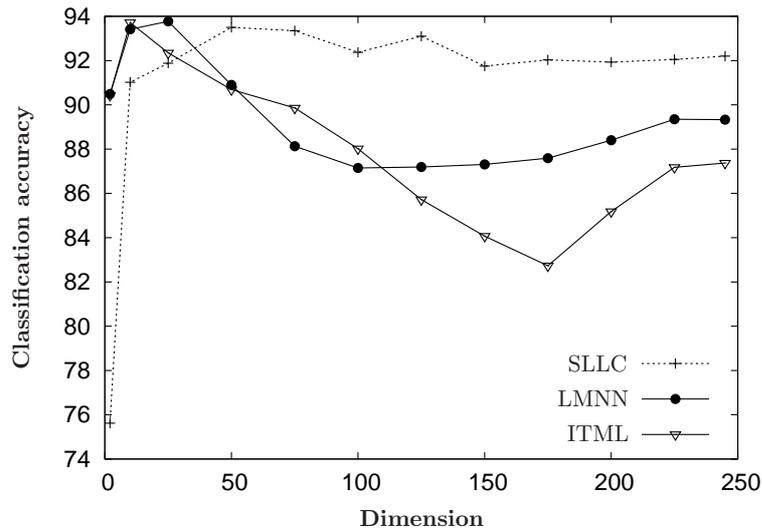}
\caption[Accuracy of the methods with respect to KPCA dimension]{Accuracy of the methods with respect to the dimensionality of the KPCA space on Ionosphere.}
\label{fig:fig-ion}
\end{center}
\end{figure}

\paragraph{Robustness to overfitting} SLLC's good performance on small datasets can be credited to its robustness to overfitting. Indeed, LMNN and ITML are optimized with respect to local constraints, which tend to get easier to satisfy simultaneously as dimensionality grows. On the other hand, SLLC is optimized with respect to global constraints and can thus be seen as more robust. This is confirmed by \fref{fig:fig-ion}, which shows the accuracy of SLLC, LMNN and ITML on the Ionosphere dataset with respect to the number of dimensions retained in KPCA. As expected, LMNN and ITML, tend to overfit as the dimensionality grows while SLLC suffers from very limited overfitting.

\paragraph{Visualization of the projection space} Recall that in Balcan's learning rule, the similarity is used to build a similarity map: data are projected into a new feature space where each coordinate corresponds to the similarity score to a training example, and a linear classifier is learned in that space. \fref{fig:PCA_space_lunes} and \fref{fig:PCA_space_svmguide1} show a low-dimensional embedding of the feature space induced by each similarity for the Rings and Svmguide1 datasets respectively. On both datasets, the space induced by SLLC is the most appropriate to linear classification: the data is well-separated even in this 2D representation of the space. On the Rings dataset, the data is actually perfectly separated in 1D, which explains why we achieve perfect classification accuracy relying on 1 training instance only. This highlights the fact that SLLC optimizes a criterion which is designed for linear classification, and its potential for dimensionality reduction. Conversely, the feature spaces induced by $K_\mathbf{I}$, LMNN and ITML do not offer such quality of linear separability --- for instance and unsurprisingly, LMNN tends to induce spaces that are better suited to nearest neighbor classification.

\begin{figure}[t]
\begin{center}
\psfrag{SLLC (1)}[][][0.8]{SLLC}
\psfrag{KI (0.50196)}[][][0.8]{KI}
\psfrag{LMNN (0.85804)}[][][0.8]{LMNN}
\psfrag{ITML (0.50002)}[][][0.8]{ITML}
\includegraphics[width=0.7\textwidth]{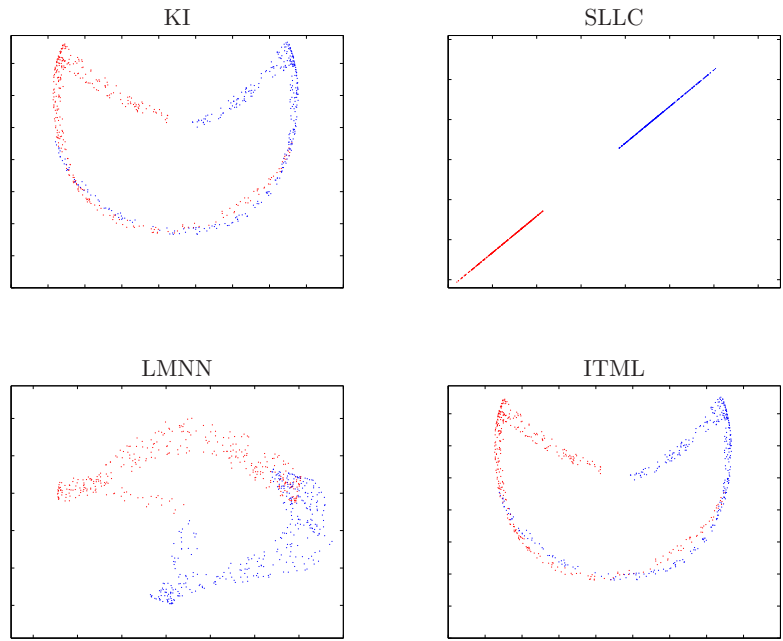}
\caption[Feature space induced by the similarity (Rings dataset)]{Feature space induced by the similarity in which the linear classifier is learned (Rings dataset). Dimension was reduced to 2 for visualization purposes using Principal Component Analysis.}
\label{fig:PCA_space_lunes}
\end{center}
\end{figure}

\begin{figure}[t]
\begin{center}
\psfrag{SLLC (0.93035)}[][][0.8]{SLLC}
\psfrag{KI (0.69754)}[][][0.8]{KI}
\psfrag{LMNN (0.80975)}[][][0.8]{LMNN}
\psfrag{ITML (0.97105)}[][][0.8]{ITML}
\includegraphics[width=0.7\textwidth]{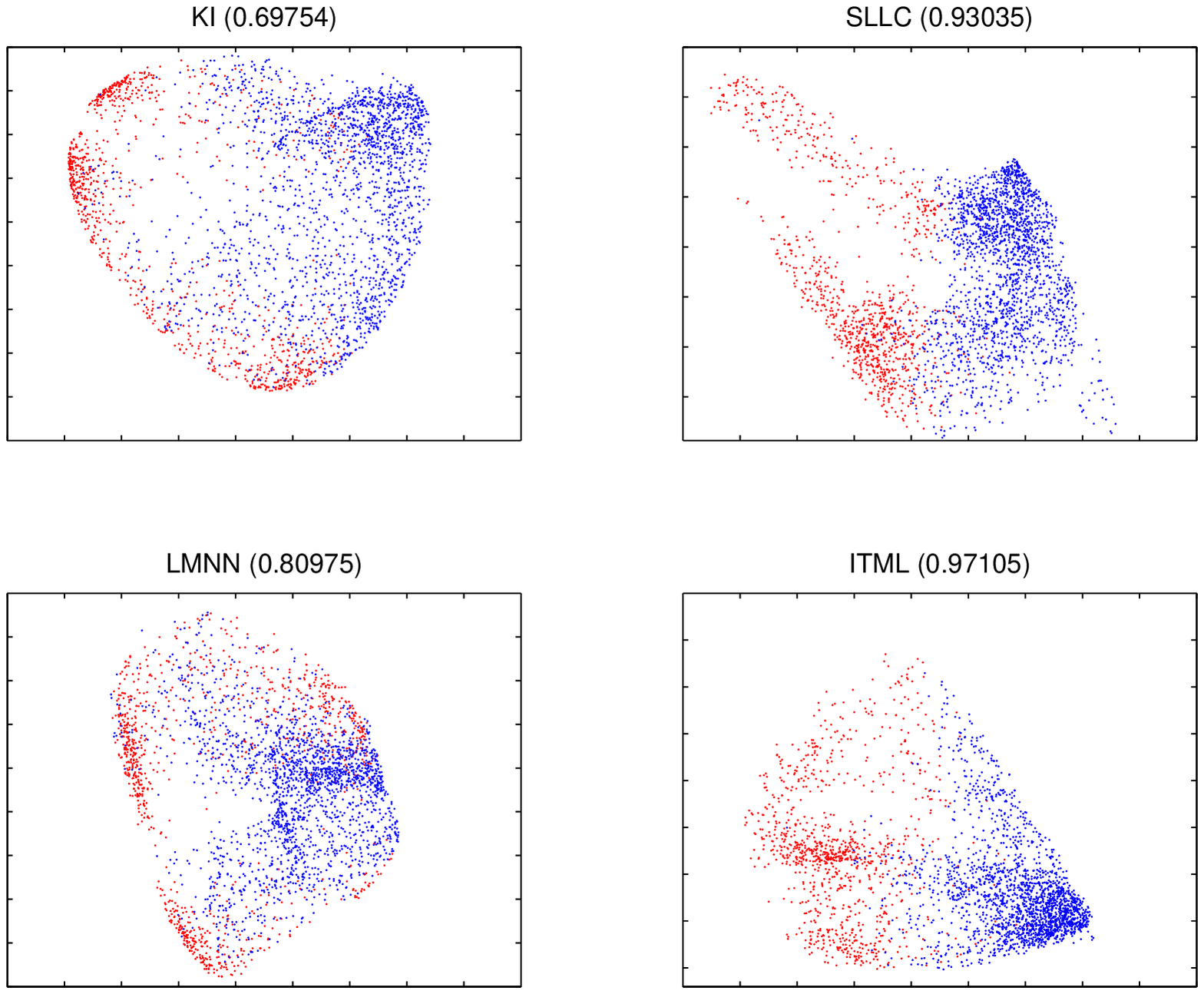}
\caption[Feature space induced by the similarity (Svmguide1 dataset)]{Feature space induced by the similarity in which the linear classifier is learned (Svmguide1 dataset). Dimension was reduced to 2 for visualization purposes using Principal Component Analysis.}
\label{fig:PCA_space_svmguide1}
\end{center}
\end{figure}

\paragraph{Runtime comparison} In this series of experiments, SLLC was solved in its QP form using the standard convex minimization solver \textsc{Mosek}\footnote{\url{http://www.mosek.com/}} while LMNN and ITML have their own specific and sophisticated solver.
Despite this fact, SLLC is several orders of magnitude faster than LMNN (see \tref{tab:resultstime}) because its number of constraints is much smaller. However, it remains slower than ITML.

\begin{table}[t]
\begin{center}
\begin{scriptsize}
\begin{tabular}{rccccccc}
\toprule
\textbf{Dataset} & \textbf{Breast} & \textbf{Iono.} & \textbf{Rings} & \textbf{Pima} & \textbf{Splice} & \textbf{Svmguide1} & \textbf{Cod-RNA}\\
\midrule
SLLC & 4.76 & 5.36 & 0.05 & 4.01 & 158.38 & 185.53 & 2471.25\\
LMNN & 25.99 & 16.27 & 37.95 & 32.14 & 309.36 & 331.28 & 10418.73\\
LMNN KPCA & 41.06 & 34.57 & 84.86 & 48.28 & 1122.60 & 369.31 & 24296.41\\
ITML & 2.09 & 3.09 & 0.19 & 2.96 & 3.41 & 0.83 & 5.98\\
ITML KPCA & 1.68 & 5.77 & 0.20 & 2.74 & 56.14 & 5.30 & 25.25\\
\bottomrule
\end{tabular}
\end{scriptsize}
\caption[Runtime of the studied metric learning methods]{Average time per run (in seconds) required for learning the similarity.}
\label{tab:resultstime}
\end{center}
\end{table}

\section{Conclusion}
\label{sec:icmlconclu}

In this chapter, we presented SLLC, a novel approach to bilinear similarity learning that makes use of both the theory of $(\epsilon,\gamma,\tau)$-goodness and the KPCA trick. It is formulated as a convex minimization problem that can be solved efficiently using standard techniques.
We derived a generalization bound based on the notion of uniform stability that is independent from the size of the input space, and thus from the number of dimensions selected by KPCA. 
It guarantees the true goodness of the learned similarity, and therefore our method can be seen as minimizing an upper bound on the true risk of the linear classifier built from the learned similarity.
We experimentally demonstrated the effectiveness of SLLC and also showed that the learned similarities induce extremely sparse classifiers. Combined with the independence from dimensionality and the robustness to overfitting, it makes the approach very efficient and suitable for high-dimensional data.  \tref{tab:sllcsum} summarizes the main features of SLLC using the same format as in the survey of \cref{chap:metriclearning} (\tref{tab:mlvectsum}).

\begin{table}[t]
\begin{center}
\begin{scriptsize}
\begin{tabular}{cccccccc}
\toprule
\textbf{Method} & \textbf{Convex} & \textbf{Scalable} & \textbf{Competitive} & \textbf{Reg.} & \textbf{Low-rank} & \textbf{Online} & \textbf{Gen.}\\
\midrule
SLLC & \tickYes & \tickYes\tickYes\tickYes & \tickYes & \tickYes & \tickNo & \tickYes & \tickYes\\
\bottomrule
\end{tabular}
\end{scriptsize}
\caption[Summary of the main features of SLLC]{Summary of the main features of SLLC (``Reg.'' and ``Gen.'' respectively stand for ``Regularized'' and ``Generalization guarantees'').}
\label{tab:sllcsum}
\end{center}
\end{table}

It would be interesting to investigate the performance of SLLC when solved in its unconstrained form, either in a stochastic or online way. This would dramatically improve its runtime on large-scale problems and hopefully not significantly reduce the classification performance.

As shown in \tref{tab:sllcsum}, SLLC is not a low-rank approach, since Frobenius norm regularization does not favor low-rank matrices. Another promising perspective would be to study the influence of other regularizers on $\mathbf{A}$, in particular the trace norm or the $L_{2,1}$ norm that tend to induce such matrices. Recent advances in stochastic and online optimization of problems regularized with these norms \citep{Duchi2010a,Xiao2010,Yang2010} could be used to derive an efficient algorithm. The use of such norms would add sparsity at the metric level in addition to the sparsity already obtained at the classifier level.

However, recall that the generalization of such formulations cannot be studied using stability-based arguments, since sparse algorithms are known not to be stable. On the other hand, algorithmic robustness can deal with such algorithms more easily. In the next chapter, we propose an adaptation of robustness to the metric learning setting.

\chapter{Robustness and Generalization for Metric Learning}
\label{chap:nips}

\begin{chapabstract}
Throughout this thesis, we have argued that little work has been done about the generalization ability of metric learning algorithms. We made use in \cref{chap:ecml} and \cref{chap:icml} of uniform stability arguments to derive generalization guarantees for our metric learning methods. Unfortunately, these arguments are somewhat limited to the use of Frobenius regularizarion and thus cannot be applied to many existing metric learning algorithms, in particular those using a sparse or low-rank regularizer on the metric. In this chapter, we address this theoretical issue by proposing an adaptation of the notion of algorithmic robustness (previously introduced by Xu and Mannor) to the classic metric learning setting, where training data consist of pairs or triplets. We show that if a metric learning algorithm is robust in our sense, then it has generalization guarantees. We further show that a weak notion of robustness is a necessary and sufficient condition for an algorithm to generalize, justifying that it is fundamental to metric learning. Lastly, we illustrate how our framework can be used to derive generalization bounds for a large class of metric learning algorithms, some of which could not be studied using previous approaches.\\

The material of this chapter is based on the following technical report:\\

\bibentry{Bellet2012b}.
\end{chapabstract}

\section{Introduction}

Most of the research effort in metric learning has gone into formulating the problem as tractable optimization procedures, but very little has been done on the generalization ability of learned metrics on unseen data, due to the fact that the training pairs/triplets are not i.i.d.
As we have seen in \sref{sec:onlineml}, online metric learning methods \citep[e.g.,][]{Shalev-Shwartz2004,Jain2008,Chechik2009} offer some guarantees, but only in the form of regret bounds assuming that the algorithm is provided with i.i.d. pairs/triplets, and say nothing about generalization to unseen data. Conversion of regret bounds into batch generalization bounds is possible \citep[see for instance][]{Cesa-Bianchi2001,Cesa-Bianchi2004} but as a consequence these bounds also require the i.i.d. assumption.

Putting aside our contributions in \cref{chap:ecml} and \cref{chap:icml}, the question of the generalization ability of batch metric learning has only been addressed in two recent papers, described in \sref{sec:genml}. For the sake of readability, we recall here their main features. The approach of Bian \& Tao \citeyearpar{Bian2011,Bian2012} uses a statistical analysis to give generalization guarantees for loss minimization methods, but their results rely on some hypotheses on the distribution of the examples and do not take into account any regularization on the metric. The most general contribution was proposed by \citet{Jin2009} who adapted the framework of uniform stability to regularized metric learning. However, their approach is based on Frobenius norm regularization and cannot be applied to many types of regularization, in particular sparsity-inducing norms \citep{Xu2012}.   

In this last contribution, we propose to address the lack of theoretical framework by studying the generalization ability of metric learning algorithms according to a notion of algorithmic robustness. Recall that algorithmic robustness, introduced by \citet{Xu2010,Xu2012a} and described in \sref{sec:robustness}, allows one to derive generalization bounds when, given two ``close'' training and testing examples, the variation between their associated loss is bounded. This notion of closeness of examples relies on a partition of the input space into different regions such that two examples in the same region are seen as close.
We propose here to adapt this notion of algorithmic robustness to metric learning, where training data is made of pairs (or triplets).
We show that, in the context of robustness, the problem of training pairs not being i.i.d. can be worked around by simply assuming that the pairs are built from an i.i.d. sample of labeled examples. 
Moreover, following the work of \citet{Xu2010,Xu2012a}, we establish that a weaker notion robustness is actually necessary and sufficient for metric learning algorithms to generalize, highlighting that robustness is a fundamental property. 
Lastly, we illustrate the applicability of our framework by deriving generalization bounds for a larger class of problems than \citet{Jin2009}, using very few algorithm-specific arguments. In particular, it can accommodate a vast choice of regularizers and unlike the approach of Bian \& Tao \citeyearpar{Bian2011,Bian2012}, requires no assumption on the distribution of the examples. 

The rest of the chapter is organized as follows. Our notion of algorithmic robustness for metric learning is presented in \sref{sec:nipsrobustsec}. The necessity and sufficiency of weak robustness is shown in \sref{sec:nipsnessec}. \sref{sec:nipsexsec} is devoted to the application of the proposed framework: we show that a large class of metric learning algorithms are robust. Finally, we conclude in \sref{sec:nipsconclu}.

\section{Robustness and Generalization for Metric Learning}
\label{sec:nipsrobustsec}

After introducing some notations and assumptions, we present our definition of robustness for metric learning and show that if a metric learning algorithm is robust, then it has generalization guarantees.

\subsection{Preliminaries}

We assume that the instance space $\mathcal{X}$ is a compact convex metric space with respect to a norm $\|\cdot\|$ such that $\mathcal{X}\subset\mathbb{R}^d$, thus there exists a constant $R$ such that $\forall \mathbf{x}\in \mathcal{X}$, $\|\mathbf{x}\|\leq R$.
A metric is a function $f:\mathcal{X}\times \mathcal{X}\rightarrow \mathbb{R}$.  
Recall that we use the generic term metric to refer to a distance or a (dis)similarity function.

Given a training sample $\mathcal{T} = \{z_i=(\mathbf{x_i},y_i)\}_{i=1}^n$ drawn i.i.d. from an unknown joint distribution $P$ over the space $\mathcal{Z} = \mathcal{X}\times\mathcal{Y}$, we denote by $\mathcal{P}_\mathcal{T}$ the set of all possible pairs built from $\mathcal{T}$:
$$\mathcal{P}_\mathcal{T}=\{(z_1,z_1),\cdots,(z_1,z_n),\cdots,(z_n,z_n)\}.$$

We generally assume that a metric learning algorithm $\mathcal{A}$ takes as input a finite set of pairs from $(\mathcal{Z}\times\mathcal{Z})^n$ and outputs a metric. We denote by $\mathcal{A}_\mathcal{P}$ the metric learned by an algorithm $\mathcal{A}$ from a sample $\mathcal{P}$ of pairs. 
With any pair of labeled examples $(z,z')$ and any metric $f$, we associate a loss function $\ell(f,z,z')$ that depends on the examples and their labels. This loss is assumed to be nonnegative and uniformly bounded by a constant $B$.  
We define the true risk of $f$ by
$$R^\ell(f)=\mathbb{E}_{z,z'\sim P}[\ell(f,z,z')].$$
We denote the empirical risk of $f$ over the sample of pairs $\mathcal{P}$ by
$$R^\ell_\mathcal{P}(f)=\frac{1}{|\mathcal{P}|}\sum_{(z_i,z_j)\in\mathcal{P}}\ell(f,z_i,z_j).$$

On a few occasions, we discuss the extension of our framework to triplet-based metric learning, where an algorithm $\mathcal{A}$ takes as input a finite set of triplets from $(\mathcal{Z}\times\mathcal{Z}\times\mathcal{Z})^n$. Instead of considering all pairs $\mathcal{P}_\mathcal{T}$ built from $\mathcal{T}$, we consider the sample of \emph{admissible} triplets $\mathcal{R}_{\mathcal{T}}$ built from $\mathcal{T}$ such that for any $(z_1,z_2,z_3)\in \mathcal{R}_{\mathcal{T}}$, $z_1$ and $z_2$ share the same label while $z_3$ does not, with the interpretation that $z_1$ must be more similar to $z_2$ than to $z_3$. In this context, the loss function $\ell$ is defined with respect to triplets of examples and the true risk of a metric $f$ is given by
$$R^\ell(f)=\mathbb{E}_{\substack{z,z',z''\sim P\\y=y'\neq y''}}[\ell(f,z,z',z'')]$$
and the empirical risk of $f$ over the sample of admissible triplets $\mathcal{R}$ by
$$R^\ell_\mathcal{R}(f)=\frac{1}{|\mathcal{R}|}\sum_{(z_i,z_j,z_k)\in\mathcal{R}}\ell(f,z_i,z_j,z_k).$$

\subsection{Robustness for Metric Learning}

We present here our adaptation of the definition of robustness to metric learning.

In \sref{sec:robustness}, we have seen that robustness relies on a partition of the space $\mathcal{Z}$ into $K$ disjoint subsets such that for every training and testing instances belonging to the same region of the partition, the deviation between their respective losses is bounded by a term $\epsilon(\mathcal{T})$.\footnote{Recall from \sref{sec:robustness} that $\mathcal{Z}$ is partitioned such that if two examples fall into the same region, then they share the same label.}
In order to adapt this notion to metric learning, the idea is to use the partition of $\mathcal{Z}$ at the pair level: if a new test pair of examples is close to a training pair, then the respective losses of the two pairs must be close. Two pairs are close when each instance of the first pair falls into the same subset of the partition of $\mathcal{Z}$ as the corresponding instance of the other pair, as shown in \fref{fig:robustness}.
A metric learning algorithm with this property is called robust. This notion is formalized in the following definition.

\begin{figure}[t]
\begin{center}
\psfrag{Classic robustness}[][][0.8]{Classic robustness}
\psfrag{Robustness for metric learning}[][][0.8]{Robustness for metric learning}
\psfrag{z}[][][0.8]{$z$}
\psfrag{z'}[][][0.8]{$z'$}
\psfrag{z1}[][][0.8]{$z_1$}
\psfrag{z2}[][][0.8]{$z_2$}
\psfrag{Ci}[][][0.6]{$C_i$}
\psfrag{Cj}[][][0.6]{$C_j$}
\includegraphics[width=0.7\columnwidth]{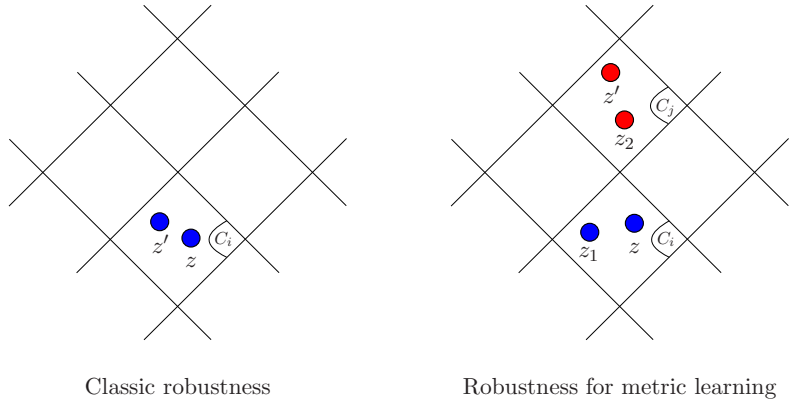}
\caption[Illustration of robustness in the classic and metric learning settings]{Illustration of the property of robustness in the classic and metric learning settings. In this example, we use a cover based on the $L_1$ norm. In the classic definition, if any example $z'$ falls in the same region $C_i$ as a training example $z$, then the deviation between their loss must be bounded. In the metric learning definition proposed in this work, for any pair $(z,z')$ and a training pair $(z_1,z_2)$, if $z,z_1$ belong to some region $C_i$ and $z',z_2$ to some region $C_j$, then the deviation between the loss of these two pairs must be bounded.}
\label{fig:robustness}
\end{center}
\end{figure}

\begin{definition}[Robustness for metric learning]
\label{def:robuml}
An algorithm $\mathcal{A}$ is $(K,\epsilon(\cdot))$ robust for
$K\in\mathbb{N}$ and $\epsilon(\cdot): (\mathcal{Z}\times\mathcal{Z})^n \rightarrow
\mathbb{R}$ if $\mathcal{Z}$ can be partitioned into $K$ disjoints sets, denoted
by $\{C_i\}_{i=1}^K$, such that the following holds for all $\mathcal{T}\in\mathcal{Z}^n$:\\
$\forall (z_1,z_2) \in \mathcal{P}_\mathcal{T}, \forall z,z' \in
\mathcal{Z}, \forall i,j\in[K]:$  if 
$z_1,z\in C_i$ and $z_2,z'\in C_j$ then
$$|\ell(\mathcal{A}_{\mathcal{P}_\mathcal{T}},z_1,z_2)-\ell(\mathcal{A}_{\mathcal{P}_\mathcal{T}},z,z')|\leq \epsilon(\mathcal{P}_\mathcal{T}).$$
\end{definition}

$K$ and $\epsilon(\cdot)$ quantify the robustness of the algorithm which depends on the training sample. Note that the property of robustness is required for every training pair of the sample --- we will later see that this property can be relaxed.

Note that this definition of robustness can be easily extended to triplet-based metric learning. In this context, the robustness property can then be expressed by:\\
$\forall (z_1,z_2,z_3) \in \mathcal{R}_{\mathcal{T}}, \forall z,z',z'' \in \mathcal{Z}, \forall i,j\in[K]:$  if 
$z_1,z\in C_i$,  $z_2,z'\in C_j$, $z_3,z''\in C_k$ then 
\begin{equation}\label{eq:robu_trip} |\ell(\mathcal{A}_{\mathcal{R}_{\mathcal{T}}},z_1,z_2,z_3)-\ell(\mathcal{A}_{\mathcal{R}_{\mathcal{T}}},z,z',z'')|\leq \epsilon(\mathcal{R}_{\mathcal{T}}).
\end{equation}

\subsection{Generalization of Robust Metric Learning Algorithms}

We now give a PAC generalization bound for metric learning algorithms satisfying the property of robustness (\defref{def:robuml}).
We first give the following concentration inequality that we will use in the derivation of the bound.
\begin{proposition}[\citeauthor{Vaart2000}, \citeyear{Vaart2000}]\label{prop:BHC}
Let $(|N_1|,\dots,|N_K|)$ an i.i.d. multinomial random variable with
parameters $n$ and $(\mu(C_1),\dots,\mu(C_K))$. 
By the Breteganolle-Huber-Carol inequality we have:
$
Pr\left\{\sum_{i=1}^K \left|\frac{|N_i|}{n}-\mu(C_i)\right| \geq
  \lambda \right\}\leq 2^K \exp\left(\frac{-n\lambda^2}{2}\right)
$, 
hence with probability at least $1-\delta$,
\begin{equation}
\sum_{i=1}^K \left|\frac{N_i}{n}-\mu(C_i)\right|\leq \sqrt{\frac{2K\ln
  2 + 2 \ln(1/\delta)}{n}}.
\end{equation}
\end{proposition}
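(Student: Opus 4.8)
The plan is to first establish the Bretagnolle--Huber--Carol concentration inequality
$$\Pr\left\{\sum_{i=1}^K \left|\frac{|N_i|}{n}-\mu(C_i)\right| \geq \lambda\right\}\leq 2^K\exp\left(\frac{-n\lambda^2}{2}\right),$$
and then to obtain the high-probability statement by a routine inversion. For the first part, the key observation is that the $L_1$ deviation between the empirical frequencies $\hat p_i = |N_i|/n$ and the true probabilities $q_i = \mu(C_i)$ coincides, up to a factor $2$, with a supremum over subsets: writing $\hat p(A) = \sum_{i\in A}\hat p_i$ and $q(A) = \sum_{i\in A}q_i$ for $A\subseteq[K]$, one has $\sum_{i=1}^K |\hat p_i - q_i| = 2\max_{A\subseteq[K]}(\hat p(A) - q(A))$, the maximum being attained at $A^\star = \{i : \hat p_i > q_i\}$. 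Hence the event $\{\sum_i |\hat p_i - q_i| \geq \lambda\}$ is contained in $\bigcup_{A\subseteq[K]}\{\hat p(A) - q(A) \geq \lambda/2\}$.

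Next I would control each of these $2^K$ events separately. For a fixed $A$, the count $\sum_{i\in A}|N_i|$ follows a binomial law $\mathrm{Bin}(n,q(A))$, being a sum of $n$ i.i.d.\ Bernoulli$(q(A))$ indicators; the one-sided Hoeffding inequality then gives $\Pr\{\hat p(A) - q(A) \geq \lambda/2\} \leq \exp(-2n(\lambda/2)^2) = \exp(-n\lambda^2/2)$. A union bound over the $2^K$ subsets yields exactly the claimed inequality. (One could also sum over the $2^{K-1}$ complementary pairs and pay an extra factor; the combinatorial bookkeeping here is the only point requiring a little care so that the final constant comes out as stated.)

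Finally, for the ``hence'' part, I would set $\delta = 2^K\exp(-n\lambda^2/2)$ and solve for $\lambda$: taking logarithms gives $\ln\delta = K\ln 2 - n\lambda^2/2$, i.e.\ $\lambda = \sqrt{(2K\ln 2 + 2\ln(1/\delta))/n}$. Substituting this value of $\lambda$ into the concentration inequality shows that $\sum_{i=1}^K |N_i/n - \mu(C_i)| \leq \sqrt{(2K\ln 2 + 2\ln(1/\delta))/n}$ holds with probability at least $1-\delta$, which is the statement. None of these steps presents a genuine obstacle, since the result is classical; indeed the whole proposition may simply be invoked from \citet{Vaart2000}, and the only thing worth spelling out is the reduction of the $L_1$ distance to a maximum over $2^K$ binomial tail events.
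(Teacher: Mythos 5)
Your proof is correct, and it in fact supplies more than the paper does: the paper gives no proof of this proposition at all, invoking it directly as the classical Bretagnolle--Huber--Carol inequality from \citet{Vaart2000} and only using it as a black box in the proof of its robustness bound. Your derivation is the standard elementary one, and each step checks out: the identity $\sum_{i=1}^K|\hat p_i-q_i|=2\max_{A\subseteq[K]}(\hat p(A)-q(A))$ holds because both $(\hat p_i)$ and $(q_i)=(\mu(C_i))$ are probability vectors (the $C_i$ partition $\mathcal{Z}$ and $\sum_i|N_i|=n$), so the positive and negative parts of the deviation balance; for fixed $A$ the count $\sum_{i\in A}|N_i|$ is indeed $\mathrm{Bin}(n,q(A))$ and the one-sided Hoeffding bound gives $\exp(-n\lambda^2/2)$ at level $\lambda/2$; and the union bound over the $2^K$ subsets produces exactly the stated constant $2^K$, so your parenthetical worry about the bookkeeping is unfounded (restricting to one subset per complementary pair would only sharpen the constant, which is not needed here). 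The inversion $\delta=2^K\exp(-n\lambda^2/2)\Rightarrow\lambda=\sqrt{(2K\ln 2+2\ln(1/\delta))/n}$ matches the displayed high-probability form. In short, where the paper cites, you prove; the trade-off is only length versus self-containedness, and your argument is a faithful reconstruction of the classical result being cited.
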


We now give our first result on the generalization of metric learning algorithms.
\begin{theorem}\label{thm:robu}
If a learning algorithm $\mathcal{A}$ is $(K,\epsilon(\cdot))$-robust
and the training sample consists of the pairs $\mathcal{P}_\mathcal{T}$ obtained from a sample $\mathcal{T}$ generated by $n$ i.i.d. draws from $P$, then
for any $\delta>0$, with probability at least $1-\delta$ we have:
$$
|R^\ell(\mathcal{A}_{\mathcal{P}_\mathcal{T}})-R^\ell_{\mathcal{P}_\mathcal{T}}(\mathcal{A}_{\mathcal{P}_\mathcal{T}})|\leq \epsilon(\mathcal{P}_\mathcal{T})+2B\sqrt{\frac{2K \ln 2 + 2\ln (1/\delta)}{n}}.
$$
\end{theorem}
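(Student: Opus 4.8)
The plan is to adapt, at the level of pairs, the argument used by \citet{Xu2010,Xu2012a} to prove the robustness bound recalled in \cref{chap:preliminaries}, exploiting the fact that although the pairs in $\mathcal{P}_\mathcal{T}$ are not i.i.d., they are \emph{deterministically} built from the i.i.d. sample $\mathcal{T}$. Let $\{C_i\}_{i=1}^K$ be the partition of $\mathcal{Z}$ given by \defref{def:robuml}; it induces a partition of $\mathcal{Z}\times\mathcal{Z}$ into the $K^2$ cells $C_i\times C_j$. Write $N_i=|\{k\in[n]:z_k\in C_i\}|$ and $\mu(C_i)=\mathrm{Pr}_{z\sim P}[z\in C_i]$, so that $(N_1,\dots,N_K)$ is a multinomial random vector with parameters $n$ and $(\mu(C_1),\dots,\mu(C_K))$. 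Since $\mu(C_i)=0$ forces $N_i=0$ almost surely, every cell $C_i\times C_j$ with $N_i,N_j\geq 1$ has positive $(P\times P)$-measure, so the conditional expectation $\bar q_{ij}=\mathbb{E}[\ell(\mathcal{A}_{\mathcal{P}_\mathcal{T}},z,z')\mid z\in C_i,z'\in C_j]$ is well defined; on such cells I also set $\bar\ell_{ij}=\frac{1}{N_iN_j}\sum \ell(\mathcal{A}_{\mathcal{P}_\mathcal{T}},z_a,z_b)$, the average of the loss over the $N_iN_j$ training pairs lying in $C_i\times C_j$.

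The first step is the \emph{robustness step}. Fix a cell $C_i\times C_j$ with $N_i,N_j\geq1$ and a test pair $(z,z')\in C_i\times C_j$. By \defref{def:robuml}, for \emph{every} training pair $(z_a,z_b)$ in that cell we have $\ell(\mathcal{A}_{\mathcal{P}_\mathcal{T}},z,z')\leq \ell(\mathcal{A}_{\mathcal{P}_\mathcal{T}},z_a,z_b)+\epsilon(\mathcal{P}_\mathcal{T})$; averaging over the $N_iN_j$ such pairs gives $\ell(\mathcal{A}_{\mathcal{P}_\mathcal{T}},z,z')\leq \bar\ell_{ij}+\epsilon(\mathcal{P}_\mathcal{T})$, and integrating over $(z,z')$ against $P\times P$ restricted to the cell yields $\int_{C_i\times C_j}\ell \leq \mu(C_i)\mu(C_j)\bigl(\bar\ell_{ij}+\epsilon(\mathcal{P}_\mathcal{T})\bigr)$; cells with $N_iN_j=0$ are bounded crudely by $B\,\mu(C_i)\mu(C_j)$. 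Summing over all cells, using $\sum_{i,j}\mu(C_i)\mu(C_j)=1$, and subtracting $R^\ell_{\mathcal{P}_\mathcal{T}}(\mathcal{A}_{\mathcal{P}_\mathcal{T}})=\frac{1}{n^2}\sum_{a,b=1}^n\ell(\mathcal{A}_{\mathcal{P}_\mathcal{T}},z_a,z_b)=\sum_{i,j:N_i,N_j\geq1}\frac{N_iN_j}{n^2}\bar\ell_{ij}$, and finally bounding $|\bar\ell_{ij}|\leq B$ (the $N_iN_j=0$ cells being exactly absorbed since there $\frac{N_iN_j}{n^2}=0$), I get
$$R^\ell(\mathcal{A}_{\mathcal{P}_\mathcal{T}})-R^\ell_{\mathcal{P}_\mathcal{T}}(\mathcal{A}_{\mathcal{P}_\mathcal{T}})\leq \epsilon(\mathcal{P}_\mathcal{T})+B\sum_{i,j}\Bigl|\mu(C_i)\mu(C_j)-\tfrac{N_iN_j}{n^2}\Bigr|.$$
The reverse inequality is obtained symmetrically, passing instead through $\bar q_{ij}$ (from \defref{def:robuml}, $\bar\ell_{ij}\leq \bar q_{ij}+\epsilon(\mathcal{P}_\mathcal{T})$, and then using $R^\ell(\mathcal{A}_{\mathcal{P}_\mathcal{T}})\geq\sum_{i,j:N_i,N_j\geq1}\mu(C_i)\mu(C_j)\bar q_{ij}$), so the same bound holds for $\bigl|R^\ell(\mathcal{A}_{\mathcal{P}_\mathcal{T}})-R^\ell_{\mathcal{P}_\mathcal{T}}(\mathcal{A}_{\mathcal{P}_\mathcal{T}})\bigr|$.

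The second step is a \emph{concentration step}. Writing $\hat\mu(C_i)=N_i/n$ and using $|\mu(C_i)\mu(C_j)-\hat\mu(C_i)\hat\mu(C_j)|\leq \mu(C_i)|\mu(C_j)-\hat\mu(C_j)|+\hat\mu(C_j)|\mu(C_i)-\hat\mu(C_i)|$, summing over $i,j$ and using $\sum_i\mu(C_i)=\sum_j\hat\mu(C_j)=1$, I obtain $\sum_{i,j}\bigl|\mu(C_i)\mu(C_j)-\tfrac{N_iN_j}{n^2}\bigr|\leq 2\sum_{i=1}^K\bigl|\mu(C_i)-\tfrac{N_i}{n}\bigr|$. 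Applying the Breteganolle--Huber--Carol inequality (Proposition~\ref{prop:BHC}) to the multinomial vector $(N_1,\dots,N_K)$ bounds the right-hand side by $\sqrt{(2K\ln 2+2\ln(1/\delta))/n}$ with probability at least $1-\delta$; combining with the robustness step gives the claimed bound, the factor $2$ on the $B\sqrt{\cdot}$ term coming precisely from the product structure of the pair weights. The main obstacle — the only place demanding care beyond routine manipulation — is the bookkeeping of degenerate cells: cells with $\mu(C_i)=0$, and cells with $N_i=0$ but $\mu(C_i)>0$ on which \defref{def:robuml} says nothing and the trivial bound $B$ must be used; one has to check that these contributions are exactly what is needed to turn the partial sums into the full $\ell^1$ distance $\sum_{i,j}|\mu(C_i)\mu(C_j)-\tfrac{N_iN_j}{n^2}|$, and that the conditional expectations $\bar q_{ij}$ are only ever invoked on cells where they are defined. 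The same scheme transfers to the triplet setting by partitioning $\mathcal{Z}^3$ into $K^3$ cells and using the robustness property~\eqref{eq:robu_trip}, with the telescoping bound now producing a factor $3$ instead of $2$.
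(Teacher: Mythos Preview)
Your argument is correct and follows essentially the same route as the paper's proof: partition $\mathcal{Z}$ into the $K$ cells, use the robustness property within each occupied cell $C_i\times C_j$, reduce the product deviation $\sum_{i,j}|\mu(C_i)\mu(C_j)-N_iN_j/n^2|$ to $2\sum_i|\mu(C_i)-N_i/n|$ via the identity $ab-\hat a\hat b=a(b-\hat b)+\hat b(a-\hat a)$, and conclude with the Breteganolle--Huber--Carol inequality. The only organizational difference is that the paper telescopes the weights first (passing through $\sum_{i,j}\mathbb{E}[\ell\mid C_i\times C_j]\,\mu(C_i)\tfrac{N_j}{n}$ and then $\sum_{i,j}\mathbb{E}[\ell\mid C_i\times C_j]\,\tfrac{N_iN_j}{n^2}$) and applies robustness at the very last step, whereas you apply robustness first to replace test-pair losses by the empirical cell averages $\bar\ell_{ij}$ and only then compare weights; the two orderings yield the same bound. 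Your explicit bookkeeping of degenerate cells (those with $N_iN_j=0$ or $\mu(C_i)=0$) is in fact more careful than the paper, which writes the conditional expectations $\mathbb{E}[\ell\mid z\in C_i,z'\in C_j]$ uniformly without commenting on cells of zero measure.
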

\begin{proof}
Let $N_i$ be the set of index of points of  $\mathcal{T}$ that fall into the $C_i$. $(|N_1|,\dots,|N_K|)$ is an i.i.d. random variable with parameters $n$ and $(\mu(C_1),\dots,\mu(C_K))$. 
We have:
\begin{small}
\begin{eqnarray*}
\lefteqn{|R^\ell(\mathcal{A}_{\mathcal{P}_\mathcal{T}})-R^\ell_{\mathcal{P}_\mathcal{T}}(\mathcal{A}_{\mathcal{P}_\mathcal{T}})|}\\
&=&\left|\sum_{i=1}^K\sum_{j=1}^K\mathbb{E}_{z,z'\sim
    P}(\ell(\mathcal{A}_{\mathcal{P}_\mathcal{T}},z,z')|z\in C_i,z'\in  C_j)\mu(C_i)\mu(C_j)-\frac{1}{n^2}\sum_{i=1}^n\sum_{j=1}^n\ell(\mathcal{A}_{\mathcal{P}_\mathcal{T}},z_i,z_j)\right|\\
&\stackrel{(a)}{\leq}&\left|\sum_{i=1}^K\sum_{j=1}^K\mathbb{E}_{z,z'\sim
    P}(\ell(\mathcal{A}_{\mathcal{P}_\mathcal{T}},z,z')|z\in C_i,z'\in
  C_j)\mu(C_i)\mu(C_j)-\right.\\
&&\hspace{2cm}\left.\sum_{i=1}^K\sum_{j=1}^K\mathbb{E}_{z,z'\sim
    P}(\ell(\mathcal{A}_{\mathcal{P}_\mathcal{T}},z,z')|z\in C_i,z'\in
  C_j)\mu(C_i)\frac{|N_j|}{n}\right|+\\
&&\left|\sum_{i=1}^K\sum_{j=1}^K\mathbb{E}_{z,z'\sim
    P}(\ell(\mathcal{A}_{\mathcal{P}_\mathcal{T}},z,z')|z\in C_i,z'\in  C_j)\mu(C_i)\frac{|N_j|}{n}-\frac{1}{n^2}\sum_{i=1}^n\sum_{j=1}^n\ell(\mathcal{A}_{\mathcal{P}_\mathcal{T}},z_i,z_j)\right|\\
\end{eqnarray*}
\begin{eqnarray*}
&\stackrel{(b)}{\leq}&\left|\sum_{i=1}^K\sum_{j=1}^K\mathbb{E}_{z,z'\sim
    P}(\ell(\mathcal{A}_{\mathcal{P}_\mathcal{T}},z,z')|z\in C_i,z'\in
  C_j)\mu(C_i)(\mu(C_j)-\frac{|N_j|}{n})\right|+\\
&&\left|\sum_{i=1}^K\sum_{j=1}^K\mathbb{E}_{z,z'\sim
    P}(\ell(\mathcal{A}_{\mathcal{P}_\mathcal{T}},z,z')|z\in C_i,z'\in
  C_j)\mu(C_i)\frac{|N_j|}{n}-\right.\\
&&\hspace{2cm}\left.\sum_{i=1}^K\sum_{j=1}^K\mathbb{E}_{z,z'\sim
    P}(\ell(\mathcal{A}_{\mathcal{P}_\mathcal{T}},z,z')|z\in C_i,z'\in
  C_j)\frac{|N_i|}{n}\frac{|N_j|}{n}\right|+\\
&&\left|\sum_{i=1}^K\sum_{j=1}^K\mathbb{E}_{z,z'\sim
    P}(\ell(\mathcal{A}_{\mathcal{P}_\mathcal{T}},z,z')|z\in C_i,z'\in
  C_j)\frac{|N_i|}{n}\frac{|N_j|}{n}
-\frac{1}{n^2}\sum_{i=1}^n\sum_{j=1}^n\ell(\mathcal{A}_{\mathcal{P}_\mathcal{T}},z_i,z_j)\right|\\
&\stackrel{(c)}{\leq}&B\left(\left|\sum_{j=1}^K\mu(C_j)-\frac{|N_j|}{n}\right|
+\left|\sum_{i=1}^K\mu(C_i)-\frac{|N_i|}{n}\right|\right)+\\
&&\left|\frac{1}{n^2}\sum_{i=1}^K\sum_{j=1}^K\sum_{z_o\in N_i}\sum_{z_l\in
  N_j}\max_{z\in C_i}\max_{z'\in
  C_j}|\ell(\mathcal{A}_{\mathcal{P}_\mathcal{T}},z,z')-\ell(\mathcal{A}_{\mathcal{P}_\mathcal{T}},z_o,z_l)|\right|\\
&\stackrel{(d)}{\leq}&\epsilon(\mathcal{P}_\mathcal{T})+2B\sum_{i=1}^K\left|\frac{|N_i|}{n}-\mu(C_i)\right|\\
&\stackrel{(e)}{\leq}&\epsilon(\mathcal{P}_\mathcal{T})+2B\sqrt{\frac{2K \ln 2 + 2\ln (1/\delta)}{n}}.
\end{eqnarray*}
\end{small}Inequalities $(a)$ and $(b)$ are due to the triangle inequality, $(c)$ uses the fact that $\ell$ is bounded by $B$, that $\sum_{i=1}^K\mu(C_i)=1$  by definition of a multinomial random variable and that $\sum_{j=1}^K \frac{|N_j|}{n}=1$ by definition of the $N_j$. Lastly, $(d)$ comes from the definition of robustness (\defref{def:robuml}) and $(e)$ from the application of \propref{prop:BHC}.
\end{proof}

The previous bound depends on $K$ which is given by the cover chosen for $\mathcal{Z}$. If for any $K$, the associated $\epsilon(\cdot)$ is constant with respect to $\mathcal{T}$ (i.e., $\epsilon_K(\mathcal{T})=\epsilon_K$), we can prove a bound holding uniformly for all $K$:
$$|R^\ell(\mathcal{A}_{\mathcal{P}_\mathcal{T}})-R^\ell_{\mathcal{P}_\mathcal{T}}(\mathcal{A}_{\mathcal{P}_\mathcal{T}})|\leq \inf_{K\geq 1}\left[\epsilon_K+2B\sqrt{\frac{2K \ln 2 + 2\ln(1/\delta)}{n}}\right].$$
The bound also gives an insight into what should be the objective of a robust metric learning algorithm: according to a partition of the labeled input space, given two regions, minimize the maximum loss over pairs of examples belonging to each region. 

For triplet-based metric learning algorithms, by following the definition of robustness given by \eqref{eq:robu_trip} and adapting straightforwardly the losses to triplets such that they output zero for non-admissible triplets, \thref{thm:robu} can be easily extended to obtain the following generalization bound:
\begin{equation}\label{eq:bound_trip}
|R^\ell(\mathcal{A}_{\mathcal{R}_\mathcal{T}})-R^\ell_{\mathcal{R}_\mathcal{T}}(\mathcal{A}_{\mathcal{R}_\mathcal{T}})|\leq \epsilon(\mathcal{R}_\mathcal{T})+3B\sqrt{\frac{2K \ln 2 + 2\ln (1/\delta)}{n}}.
\end{equation}

\subsection{Pseudo-robustness}

The previous study requires the robustness property to be satisfied for every training pair. We show, with the following definition, that it is possible to relax the robustness to be fulfilled for only a subpart of the training sample and yet be able to derive generalization guarantees. 

\begin{definition}
An algorithm $\mathcal{A}$ is $(K,\epsilon(\cdot),\hat{p}_n(\cdot))$
pseudo-robust for
$K\in\mathbb{N}$, $\epsilon(\cdot): (\mathcal{Z}\times\mathcal{Z})^n \rightarrow
\mathbb{R}$ and $\hat{p}_n(\cdot): (\mathcal{Z}\times\mathcal{Z})^n \rightarrow
\{1,\dots,n^2\}$, if $\mathcal{Z}$ can be partitioned into $K$ disjoints sets,
denoted 
by $\{C_i\}_{i=1}^K$, such that  for all
$\mathcal{T}\in\mathcal{Z}^n$ i.i.d. from $P$, there exists a subset of training pairs samples $\hat{\mathcal{P}}_{\mathcal{T}} \subseteq \mathcal{P}_\mathcal{T}$, with $|\hat{\mathcal{P}}_{\mathcal{T}}|=\hat{p}_n(\mathcal{P}_\mathcal{T})$,  
 such that the following holds:\\
$\forall (z_1,z_2)\in \hat{\mathcal{P}}_{\mathcal{T}}, \forall z,z' \in
\mathcal{Z}, \forall i,j\in[K]$:  if $z_1,z\in C_i$  and  $z_2,z'\in C_j$ then
\begin{equation}
|\ell(\mathcal{A}_{\mathcal{P}_\mathcal{T}},z_1,z_2)-l(\mathcal{A}_{\mathcal{P}_\mathcal{T}},z,z')|\leq \epsilon(\mathcal{P}_\mathcal{T}).
\end{equation}
\end{definition}
We can easily observe that $(K,\epsilon(\cdot))$-robust is equivalent to $(K,\epsilon(\cdot),n^2)$ pseudo-robust. The following theorem illustrates the generalization guarantees associated to the pseudo-robustness property.

\begin{theorem}\label{thm:pseudorobustess}
If a learning algorithm $\mathcal{A}$ is
$(K,\epsilon(\cdot),\hat{p}_n(\cdot))$ pseudo-robust and the training pairs $\mathcal{P}_\mathcal{T}$ come from a sample generated by $n$ i.i.d. draws from $P$, then
for any $\delta>0$, with probability at least $1-\delta$ we have:
$$
|R^\ell(\mathcal{A}_{\mathcal{P}_\mathcal{T}})-R^\ell_{\mathcal{P}_\mathcal{T}}(\mathcal{A}_{\mathcal{P}_\mathcal{T}})|\leq \frac{\hat{p}_n(\mathcal{P}_\mathcal{T})}{n^2}\epsilon(\mathcal{P}_\mathcal{T})+B\left(\frac{n^2-\hat{p}_n(\mathcal{P}_\mathcal{T})}{n^2}+2\sqrt{\frac{2K \ln 2 + 2\ln (1/\delta)}{n}}\right).
$$
\end{theorem}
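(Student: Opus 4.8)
The plan is to mirror the argument of \thref{thm:robu} but to split the double sum over pairs according to whether a pair lies in the pseudo-robust subset $\hat{\mathcal{P}}_\mathcal{T}$ or not. First I would write, exactly as in the proof of \thref{thm:robu}, the decomposition of $|R^\ell(\mathcal{A}_{\mathcal{P}_\mathcal{T}})-R^\ell_{\mathcal{P}_\mathcal{T}}(\mathcal{A}_{\mathcal{P}_\mathcal{T}})|$ via the triangle inequality into (i) two terms controlled by $\sum_i |{|N_i|}/{n}-\mu(C_i)|$ and bounded using $\propref{prop:BHC}$, exactly yielding the $2B\sqrt{(2K\ln 2+2\ln(1/\delta))/n}$ contribution, plus (ii) the ``covering'' term
$$\left|\frac{1}{n^2}\sum_{i=1}^K\sum_{j=1}^K\sum_{z_o\in N_i}\sum_{z_l\in N_j}\max_{z\in C_i}\max_{z'\in C_j}|\ell(\mathcal{A}_{\mathcal{P}_\mathcal{T}},z,z')-\ell(\mathcal{A}_{\mathcal{P}_\mathcal{T}},z_o,z_l)|\right|.$$
The only new ingredient is the treatment of this last term: I would partition the index pairs $(z_o,z_l)$ appearing in it into those for which $(z_o,z_l)\in\hat{\mathcal{P}}_\mathcal{T}$ and those for which it is not.

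For the $\hat{p}_n(\mathcal{P}_\mathcal{T})$ pairs in $\hat{\mathcal{P}}_\mathcal{T}$, the pseudo-robustness property applies and each corresponding $\max_{z,z'}|\cdot|$ term is bounded by $\epsilon(\mathcal{P}_\mathcal{T})$, contributing at most $\frac{\hat{p}_n(\mathcal{P}_\mathcal{T})}{n^2}\epsilon(\mathcal{P}_\mathcal{T})$. For the remaining $n^2-\hat{p}_n(\mathcal{P}_\mathcal{T})$ pairs, I have no control beyond the uniform bound $\ell\leq B$, so each such term is bounded by $B$ (the deviation of two losses each in $[0,B]$ is at most $B$), contributing at most $B\frac{n^2-\hat{p}_n(\mathcal{P}_\mathcal{T})}{n^2}$. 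Summing the three contributions gives precisely
$$\frac{\hat{p}_n(\mathcal{P}_\mathcal{T})}{n^2}\epsilon(\mathcal{P}_\mathcal{T})+B\left(\frac{n^2-\hat{p}_n(\mathcal{P}_\mathcal{T})}{n^2}+2\sqrt{\frac{2K \ln 2 + 2\ln (1/\delta)}{n}}\right),$$
which is the claimed bound.

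I do not expect a genuine obstacle here; the delicate point is essentially bookkeeping, namely making sure that when we go from the empirical average $\frac{1}{n^2}\sum_{i,j}\ell(\mathcal{A}_{\mathcal{P}_\mathcal{T}},z_i,z_j)$ to the region-wise representative losses we correctly identify how many of the $n^2$ summands can invoke pseudo-robustness. Since $\hat{\mathcal{P}}_\mathcal{T}$ is, by definition, a fixed subset of $\mathcal{P}_\mathcal{T}$ of cardinality $\hat{p}_n(\mathcal{P}_\mathcal{T})$, and the covering term is a sum of $n^2$ nonnegative quantities indexed by the pairs $(z_o,z_l)\in\mathcal{P}_\mathcal{T}$, the split is unambiguous. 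The concentration step is verbatim that of \thref{thm:robu}, so only a one-line reference to $\propref{prop:BHC}$ is needed. As with the robust case, one also obtains the analogous statement for triplet-based algorithms by replacing $n^2$ with the number of admissible triplets and the factor $2B$ with $3B$, following \eqref{eq:bound_trip}.
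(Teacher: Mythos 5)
Your proposal is correct and follows essentially the same route as the paper's proof: the Bretagnolle--Huber--Carol concentration step is reused verbatim to get the $2B\sqrt{(2K\ln 2+2\ln(1/\delta))/n}$ term, and the covering term is split over the $n^2$ ordered pairs into those in $\hat{\mathcal{P}}_\mathcal{T}$ (each bounded by $\epsilon(\mathcal{P}_\mathcal{T})$ via pseudo-robustness) and the remaining $n^2-\hat{p}_n(\mathcal{P}_\mathcal{T})$ pairs (each bounded by $B$ since $\ell$ is nonnegative and bounded). No gap.
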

\begin{proof}
The proof is similar to that of \thref{thm:robu} and is given in \aref{app:proofpseudo}.
\end{proof}


The notion of pseudo-robustness characterizes a situation that often occurs
in metric learning: it is difficult to satisfy pair-based constraints for all possible pairs. \thref{thm:pseudorobustess} shows that it is sufficient to satisfy a property of robustness over only a subset of the pairs to have generalization guarantees. 
Moreover, it also gives  an insight into the behavior of metric learning approaches aiming at learning a distance to be plugged in a $k$-NN classifier such as LMNN \citep{Weinberger2009}. These methods do not optimize the distance according to all possible pairs, but only according to the nearest neighbors of the same class and some pairs of different class. 
According to the previous theorem, this strategy is well-founded provided that the robustness property is fulfilled for some of the pairs used to optimize the metric. 
Finally, note that this notion of pseudo-robustness can be also easily adapted to triplet based metric learning. 

\section{Necessity of Robustness}
\label{sec:nipsnessec}

We prove here that a notion of weak robustness is actually necessary and sufficient to generalize in a metric learning setup. 
This result is based on  an asymptotic analysis following the work of \citet{Xu2012a}. 
We consider pairs of instances coming from an increasing sample of training instances
$\mathcal{T}=(z_1,z_2,\dots)$ and from a sample of test instances
$\mathcal{U}=(z_1',z_2',\dots)$ such that both samples are assumed to be
drawn i.i.d. from some distribution $P$. We use $\mathcal{T}(n)$ and
$\mathcal{U}(n)$ to denote the first $n$ examples of $\mathcal{T}$ and $\mathcal{U}$ respectively, while 
$\mathcal{T}^*$ denotes a fixed sequence of training examples. 

We first define a notion of generalizability for metric learning.
\begin{definition}[Generalizability for metric learning]
Given a training pair set $\mathcal{P}_{\mathcal{T}^*}$ built from a sequence of examples $\mathcal{T}^*$, a metric learning method
$\mathcal{A}$ generalizes with respect to $\mathcal{P}_{\mathcal{T}^*}$ if
$$\lim_n\left|R^\ell(\mathcal{A}_{\mathcal{P}_{\mathcal{T}^*(n)}})-R^\ell_{\mathcal{P}_{\mathcal{T}^*(n)}}(\mathcal{A}_{\mathcal{P}_{\mathcal{T}^*(n)}})\right|=0.$$ 

A learning method $\mathcal{A}$ generalizes with probability 1 if it generalizes
with respect to the pairs $\mathcal{P}_\mathcal{T}$ of almost all samples $\mathcal{T}$  i.i.d. from $P$.
\end{definition}

Note that this notion of generalizability implies convergence in mean. We then introduce the notion of weak robustness for metric learning.
\begin{definition}[Weak robustness for metric learning]
Given a set of training pairs $\mathcal{P}_{\mathcal{T}^*}$ built from a sequence of examples ${\mathcal{T}^*}$, a metric learning
  method $\mathcal{A}$ is weakly robust with respect to $\mathcal{P}_{\mathcal{T}^*}$ if there
  exists a sequence of $\{\mathcal{D}_n\subseteq \mathcal{Z}^n\}$ such
  that $Pr(\mathcal{U}(n)\in \mathcal{D}_n)\rightarrow 1$ and
$$
\lim_n\left\{\max_{\hat{\mathcal{T}}(n)\in\mathcal{D}_n}\left|R^\ell_{\mathcal{P}_{\hat{\mathcal{T}}(n)}}(\mathcal{A}_{\mathcal{P}_{\mathcal{T}^*(n)}})-R^\ell_{\mathcal{P}_{\mathcal{T}^*(n)}}\mathcal{A}_{\mathcal{P}_{\mathcal{T}^*(n)}})\right|\right\}=0.
$$

A learning method $\mathcal{A}$ is almost surely weakly robust if it is robust with respect to almost all $\mathcal{T}$.
\end{definition}

The definition of robustness requires the labeled sample space to be
partitioned into disjoint subsets such that if some instances of pairs of
train/test examples belong to the same partition, then they have
similar loss. Weak robustness is a generalization of this notion where
we consider the average loss of testing and training pairs: if for a
large (in the probabilistic sense) subset of data, the
testing loss is close to the training loss, then the algorithm is
weakly robust. From \propref{prop:BHC}, we can see that if for any fixed 
$\epsilon>0$ there exists $K$ such that an algorithm $\mathcal{A}$ is $(K,\epsilon(\cdot))$ robust, then $\mathcal{A}$ is weakly robust. We now give the main result of this section about the necessity of robustness.

\begin{theorem}\label{thm:weak}
Given a fixed sequence of training examples $\mathcal{T}^*$, a metric learning
method $\mathcal{A}$ generalizes with respect to $\mathcal{P}_{\mathcal{T}^*}$ if and only if
it is weakly robust with respect to $\mathcal{P}_{\mathcal{T}^*}$.
\end{theorem}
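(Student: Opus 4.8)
The plan is to adapt the asymptotic argument of Xu and Mannor (as in the classic robustness setting) to the pair-based metric learning situation, exploiting the fact that the training pairs $\mathcal{P}_{\mathcal{T}^*}$ are built from an i.i.d. sample of labeled instances. The key structural observation is that both $R^\ell_{\mathcal{P}_{\mathcal{T}^*(n)}}(\mathcal{A}_{\mathcal{P}_{\mathcal{T}^*(n)}})$ and the "test-sample empirical risk" $R^\ell_{\mathcal{P}_{\hat{\mathcal{T}}(n)}}(\mathcal{A}_{\mathcal{P}_{\mathcal{T}^*(n)}})$ are empirical averages of the \emph{same} loss function $\ell(\mathcal{A}_{\mathcal{P}_{\mathcal{T}^*(n)}},\cdot,\cdot)$ over $n^2$ pairs built from $n$ points, and that the true risk $R^\ell(\mathcal{A}_{\mathcal{P}_{\mathcal{T}^*(n)}})$ is the expectation of that same loss over $P\times P$. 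Since $\ell$ is bounded by $B$, a U-statistic / i.i.d.-resampling argument shows that the gap $|R^\ell_{\mathcal{U}(n)} - R^\ell|$ converges to $0$ in probability (uniformly over the deterministic choice of $\mathcal{A}_{\mathcal{P}_{\mathcal{T}^*(n)}}$, since this metric is fixed once $\mathcal{T}^*(n)$ is fixed). This reduces the problem to comparing the training empirical risk with the test empirical risk, which is exactly what weak robustness controls.

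For the "if" direction (weak robustness $\Rightarrow$ generalization): assume $\mathcal{A}$ is weakly robust with respect to $\mathcal{P}_{\mathcal{T}^*}$, so there is a sequence $\{\mathcal{D}_n\subseteq\mathcal{Z}^n\}$ with $\Pr(\mathcal{U}(n)\in\mathcal{D}_n)\to 1$ and $\max_{\hat{\mathcal{T}}(n)\in\mathcal{D}_n}|R^\ell_{\mathcal{P}_{\hat{\mathcal{T}}(n)}}(\mathcal{A}_{\mathcal{P}_{\mathcal{T}^*(n)}})-R^\ell_{\mathcal{P}_{\mathcal{T}^*(n)}}(\mathcal{A}_{\mathcal{P}_{\mathcal{T}^*(n)}})|\to 0$. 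I would split the true risk into its contribution over pairs with instances in $\mathcal{D}_n$ and the complementary event, bounding the latter by $B\cdot\Pr(\mathcal{U}(n)\notin\mathcal{D}_n)\to 0$. On the event $\mathcal{U}(n)\in\mathcal{D}_n$, combine the weak-robustness bound with the U-statistic concentration of $R^\ell_{\mathcal{U}(n)}$ around $R^\ell$ to conclude that $|R^\ell(\mathcal{A}_{\mathcal{P}_{\mathcal{T}^*(n)}})-R^\ell_{\mathcal{P}_{\mathcal{T}^*(n)}}(\mathcal{A}_{\mathcal{P}_{\mathcal{T}^*(n)}})|\to 0$; a formal argument passes through an $\limsup$ over $n$ and lets $\Pr(\mathcal{U}(n)\notin\mathcal{D}_n)\to 0$.

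For the "only if" direction (generalization $\Rightarrow$ weak robustness), which I expect to be the main obstacle, I would argue contrapositively and \emph{construct} the sets $\mathcal{D}_n$ from the distribution itself rather than from the algorithm. Following Xu--Mannor, fix any $\epsilon>0$ and, since $\mathcal{X}$ is compact, take a finite partition of $\mathcal{Z}$ into cells of small diameter; define $\mathcal{D}_n$ to be the set of $n$-samples whose empirical cell-frequencies are close to the true measures $\mu(C_i)$ — by Proposition (Breteganolle--Huber--Carol), $\Pr(\mathcal{U}(n)\in\mathcal{D}_n)\to 1$. The heart of the matter is to show that if $\mathcal{A}$ generalizes with respect to $\mathcal{P}_{\mathcal{T}^*}$, then for samples in $\mathcal{D}_n$ the empirical risk $R^\ell_{\mathcal{P}_{\hat{\mathcal{T}}(n)}}(\mathcal{A}_{\mathcal{P}_{\mathcal{T}^*(n)}})$ must be close to $R^\ell_{\mathcal{P}_{\mathcal{T}^*(n)}}(\mathcal{A}_{\mathcal{P}_{\mathcal{T}^*(n)}})$: because both are close (the first via cell-frequency closeness plus boundedness of $\ell$, the second by the generalization hypothesis plus U-statistic concentration) to the common quantity $R^\ell(\mathcal{A}_{\mathcal{P}_{\mathcal{T}^*(n)}})$. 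The technical care needed here is that the "closeness of cell frequencies $\Rightarrow$ closeness of empirical averages" step requires a uniform handle on the variation of $\ell$ across a cell, which is where the compactness of $\mathcal{X}$ and the (assumed) regularity of $\ell$ are used; one must also be careful that the partition and hence $\mathcal{D}_n$ may depend on $\epsilon$, so the final statement is obtained by a diagonal argument over a sequence $\epsilon\downarrow 0$. The "almost surely" versions of both directions then follow by applying the fixed-sequence statement to almost every realization of $\mathcal{T}$, using that the generalization and weak-robustness properties are defined $\mathcal{T}$-wise.
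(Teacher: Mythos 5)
Your sufficiency direction is essentially sound and close to the paper's: the paper simply writes $R^\ell(\mathcal{A}_{\mathcal{P}_{\mathcal{T}^*(n)}})=\mathbb{E}_{\mathcal{U}(n)}[R^\ell_{\mathcal{P}_{\mathcal{U}(n)}}(\mathcal{A}_{\mathcal{P}_{\mathcal{T}^*(n)}})]$ and conditions on $\mathcal{U}(n)\in\mathcal{D}_n$, bounding the complementary event by $\delta B$ and the good event by the weak-robustness bound; your extra detour through U-statistic concentration is unnecessary (and you should also note the $\delta B$ term, not just the event probability, when handling the bad set), but the argument goes through.

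The necessity direction, however, has a genuine gap. You propose to build $\mathcal{D}_n$ geometrically, from a partition of $\mathcal{Z}$ into small cells plus the Breteganolle--Huber--Carol inequality, and then to argue that for $\hat{\mathcal{T}}(n)\in\mathcal{D}_n$ the test pair-risk is close to the true risk ``via cell-frequency closeness plus boundedness of $\ell$.'' Closeness of cell frequencies alone does not control $|R^\ell_{\mathcal{P}_{\hat{\mathcal{T}}(n)}}(\mathcal{A}_{\mathcal{P}_{\mathcal{T}^*(n)}})-R^\ell(\mathcal{A}_{\mathcal{P}_{\mathcal{T}^*(n)}})|$ unless the loss of the \emph{learned} metric varies little within each cell --- which is exactly the robustness property you are trying to establish, not a hypothesis. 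The theorem assumes only that $\ell$ is nonnegative and bounded by $B$; no Lipschitz or continuity condition in the instances is available, and adding ``(assumed) regularity of $\ell$,'' as you do, makes the necessity argument circular and weakens the statement. The paper avoids this entirely: it argues by contradiction (its Lemma on divergence), defining $\mathcal{D}_n^v$ \emph{directly} as the set of test samples whose pair-risk deviates from the training pair-risk by less than $\epsilon_v=1/v$, and diagonalizing over $v$ to get the sets witnessing weak robustness from the negation of the claim --- no partition, covering number, or compactness is used there. The only probabilistic ingredient is McDiarmid's inequality applied to $R^\ell_{\mathcal{P}_{\mathcal{U}(n)}}(\mathcal{A}_{\mathcal{P}_{\mathcal{T}^*(n)}})$ (which needs only boundedness of $\ell$, the learned metric being fixed by $\mathcal{T}^*(n)$), giving $R^\ell_{\mathcal{P}_{\mathcal{U}(n)}}\to R^\ell$ in probability; combined with the generalization hypothesis, a deviation of at least $\epsilon^*$ occurring with probability at least $\delta^*$ for infinitely many $n$ yields the contradiction. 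To repair your proof you should replace the cell-based construction of $\mathcal{D}_n$ by this deviation-based construction (or an equivalent diagonal choice of thresholds), reserving geometric partitions for the sufficiency-type bounds of the robustness theorems where per-cell loss variation is an explicit hypothesis.
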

\begin{proof2}
Following \citet{Xu2012a}, the sufficiency is obtained by the fact that the testing pairs are built from a sample $\mathcal{U}(n)$ made of $n$ i.i.d. instances. We give the proof in \aref{app:proofsuff}.

For the necessity, we need the following lemma which is a direct adaptation of Lemma~2 from \citet{Xu2012a}. We provide the proof in \aref{app:prooflem1} for the sake of completeness. 
\end{proof2}
\begin{lemma}\label{lem:div}
Given $\mathcal{T}^*$, if a learning method is not weakly robust
with respect to $\mathcal{P}_{\mathcal{T}^*}$, there exist $\epsilon^*,\delta^*>0$ such that
the following holds for infinitely many $n$:
\begin{equation}\label{eq:div}
Pr(|R^\ell_{\mathcal{P}_{\mathcal{U}(n)}}(\mathcal{A}_{\mathcal{P}_{\mathcal{T}^*(n)}})-R^\ell_{\mathcal{P}_{\mathcal{T}^*(n)}}(\mathcal{A}_{\mathcal{P}_{\mathcal{T}^*(n)}})|\geq\epsilon^*)\geq \delta^*.
\end{equation}
\end{lemma}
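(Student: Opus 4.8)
The plan is to prove the contrapositive of \lref{lem:div}: assuming \eqref{eq:div} fails for \emph{every} choice of $\epsilon^*,\delta^*>0$, I would show that $\mathcal{A}$ is weakly robust with respect to $\mathcal{P}_{\mathcal{T}^*}$, contradicting the hypothesis of the lemma. This follows the scheme of Lemma~2 of \citet{Xu2012a}, transposed to pairs. For $n\in\mathbb{N}$ and $\hat{\mathcal{T}}(n)\in\mathcal{Z}^n$, write
$$
\Delta_n(\hat{\mathcal{T}})=\left|R^\ell_{\mathcal{P}_{\hat{\mathcal{T}}(n)}}(\mathcal{A}_{\mathcal{P}_{\mathcal{T}^*(n)}})-R^\ell_{\mathcal{P}_{\mathcal{T}^*(n)}}(\mathcal{A}_{\mathcal{P}_{\mathcal{T}^*(n)}})\right|
$$
for the deviation of the empirical risk on $\mathcal{P}_{\hat{\mathcal{T}}(n)}$ from that on $\mathcal{P}_{\mathcal{T}^*(n)}$, and set $\mathcal{D}_n(\epsilon)=\{\hat{\mathcal{T}}(n)\in\mathcal{Z}^n:\Delta_n(\hat{\mathcal{T}})\leq\epsilon\}$. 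Since $\Delta_n$ is a finite average of the (assumed measurable) loss $\ell$, each $\mathcal{D}_n(\epsilon)$ is measurable, and it is the largest subset of $\mathcal{Z}^n$ on which the maximal deviation from the training risk is at most $\epsilon$; any admissible region appearing in the definition of weak robustness with deviation bound $\epsilon$ is contained in $\mathcal{D}_n(\epsilon)$.

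The core of the argument is then a stitching construction. Negating \eqref{eq:div} gives: for every $\epsilon,\delta>0$ there are only finitely many $n$ with $\mathrm{Pr}(\Delta_n(\mathcal{U})\geq\epsilon)\geq\delta$, where $\Delta_n(\mathcal{U})$ denotes $\Delta_n$ evaluated at $\hat{\mathcal{T}}(n)=\mathcal{U}(n)$. Applying this with $\epsilon=\delta=1/k$ for each integer $k\geq 1$ yields a strictly increasing sequence $(N_k)$ such that $\mathrm{Pr}\big(\mathcal{U}(n)\notin\mathcal{D}_n(1/k)\big)\leq\mathrm{Pr}\big(\Delta_n(\mathcal{U})\geq 1/k\big)<1/k$ for all $n\geq N_k$. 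Define the sequence $\{\mathcal{D}_n\}$ by $\mathcal{D}_n=\mathcal{D}_n(1/k)$ for $N_k\leq n<N_{k+1}$ (and $\mathcal{D}_n=\mathcal{Z}^n$ for $n<N_1$, which is immaterial since weak robustness concerns the limit). Then $\mathrm{Pr}(\mathcal{U}(n)\in\mathcal{D}_n)>1-1/k\to 1$ and $\max_{\hat{\mathcal{T}}(n)\in\mathcal{D}_n}\Delta_n(\hat{\mathcal{T}})\leq 1/k\to 0$ as $n\to\infty$, so $\mathcal{A}$ is weakly robust with respect to $\mathcal{P}_{\mathcal{T}^*}$ --- contradicting the assumption that it is not. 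Hence \eqref{eq:div} must hold for some $\epsilon^*,\delta^*>0$ and infinitely many $n$.

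I do not anticipate a substantive obstacle: the proof is essentially bookkeeping, and the two points to be careful about are (i) measurability of the sets $\mathcal{D}_n(\epsilon)$, which is immediate under the standing assumption that $\ell$ is measurable in its example arguments, and (ii) that $\mathcal{D}_n(\epsilon)$ is the \emph{maximal} admissible region, so that it is the natural object with which to witness weak robustness once the negated statement is in hand. The whole construction adapts with the obvious changes to the triplet-based setting, replacing $\mathcal{P}$ by the admissible-triplet sets $\mathcal{R}$ and $\Delta_n$ by the analogous triplet deviation; this is what is needed to obtain the triplet version of \thref{thm:weak}.
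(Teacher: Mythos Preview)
Your proof is correct and follows essentially the same contrapositive-plus-stitching argument as the paper (which in turn transposes Lemma~2 of \citet{Xu2012a}): negate the conclusion, instantiate $\epsilon=\delta=1/k$ to obtain thresholds $N_k$, and splice the resulting level sets $\mathcal{D}_n(1/k)$ into a single sequence $\{\mathcal{D}_n\}$ witnessing weak robustness. The only cosmetic differences are that the paper uses strict inequality in defining $\mathcal{D}_n^v$ and indexes the stitching via $v(n)=\max\{v:N(v)\leq n,\ v\leq n\}$ rather than your interval-based rule, and it does not comment on measurability or the triplet extension; none of this is substantive.
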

\begin{proof3}

Now, recall that $\ell$ is nonnegative and uniformly bounded by $B$, thus by the
McDiarmid inequality (\thref{thm:McDiarmid}) we have that for any $\epsilon,\delta>0$ there
exists an index $n^*$ such that for any $n>n^*$, with probability at least
$1-\delta$, we have:
$$\left|\frac{1}{n^2}\sum_{(z_i',z_j')\in \mathcal{P}_{\mathcal{U}(n)}}\ell(\mathcal{A}_{\mathcal{P}_{\mathcal{T}^*(n)}},z_i',z_j')-R^\ell(\mathcal{A}_{\mathcal{P}_{\mathcal{T}^*(n)}})\right|\leq
\epsilon.$$
This implies the convergence 
$$R^\ell_{\mathcal{P}_{\mathcal{U}(n)}}(\mathcal{A}_{\mathcal{P}_{\mathcal{T}^*(n)}})-R^\ell(\mathcal{A}_{\mathcal{P}_{\mathcal{T}^*(n)}})\stackrel{Pr}{\rightarrow}0,$$
and thus from a given index:
\begin{equation}\label{eq:lim}
|R^\ell_{\mathcal{P}_{\mathcal{U}(n)}}(\mathcal{A}_{\mathcal{P}_{\mathcal{T}^*(n)}})-R^\ell(\mathcal{A}_{\mathcal{P}_{\mathcal{T}^*(n)}})|\leq \frac{\epsilon^*}{2}.
\end{equation}

Now, by contradiction, suppose algorithm $\mathcal{A}$ is not weakly robust, \lref{lem:div} implies \eref{eq:div} holds for infinitely many $n$. This combined with \eref{eq:lim} implies that for infinitely many $n$:
$$
|R^\ell_{\mathcal{P}_{\mathcal{U}(n)}}(\mathcal{A}_{\mathcal{P}_{\mathcal{T}^*(n)}})-R^\ell_{\mathcal{P}_{\mathcal{T}^*(n)}}(\mathcal{A}_{\mathcal{P}_{\mathcal{T}^*(n)}})|\geq \frac{\epsilon^*}{2}
$$
which means $\mathcal{A}$ does not generalize, thus the necessity of
weak robustness is established.
\end{proof3}

The following corollary follows immediately from \thref{thm:weak}.
\begin{corollary}
A metric learning method $\mathcal{A}$ generalizes with probability 1 if and
only if it is almost surely weakly robust.
\end{corollary}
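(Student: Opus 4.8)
The plan is to obtain the corollary as a direct, pointwise-in-$\mathcal{T}$ consequence of \thref{thm:weak}. Recall the two ``with probability~1'' notions involved. By definition, $\mathcal{A}$ \emph{generalizes with probability~1} exactly when the set of sequences
$$G = \{\mathcal{T} : \mathcal{A}\text{ generalizes with respect to }\mathcal{P}_{\mathcal{T}}\}$$
has probability~$1$ under the product distribution governing the i.i.d.\ draw $\mathcal{T}=(z_1,z_2,\dots)\sim P$, and $\mathcal{A}$ is \emph{almost surely weakly robust} exactly when the set
$$W = \{\mathcal{T} : \mathcal{A}\text{ is weakly robust with respect to }\mathcal{P}_{\mathcal{T}}\}$$
has probability~$1$ under the same distribution. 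So the statement to prove is simply that $G$ has probability $1$ if and only if $W$ does.

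Next I would invoke \thref{thm:weak}, which was established for an \emph{arbitrary fixed} sequence of training examples $\mathcal{T}^*$: it says that $\mathcal{A}$ generalizes with respect to $\mathcal{P}_{\mathcal{T}^*}$ if and only if $\mathcal{A}$ is weakly robust with respect to $\mathcal{P}_{\mathcal{T}^*}$. Since this biconditional holds for \emph{every} sequence, the sets $G$ and $W$ defined above coincide as subsets of the sequence space $\mathcal{Z}^{\infty}$. Therefore they have the same probability, and in particular $\Pr[\mathcal{T}\in G]=1$ iff $\Pr[\mathcal{T}\in W]=1$; unwinding the definitions, this is precisely the corollary. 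I would also add the one-line remark that the sufficiency direction is consistent with the concrete results of \sref{sec:nipsexsec}: by \propref{prop:BHC}, if for every $\epsilon>0$ there is a $K$ such that $\mathcal{A}$ is $(K,\epsilon(\cdot))$-robust in the sense of \defref{def:robuml}, then $\mathcal{A}$ is weakly robust, hence robustness alone already suffices for generalization with probability~$1$.

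There is essentially no obstacle here; the only point requiring a word of care is measurability: to speak of the probabilities of $G$ and $W$ one needs them to be measurable with respect to the product $\sigma$-algebra on $\mathcal{Z}^{\infty}$. As is standard in this line of work (following Xu and Mannor), I would just note that $R^\ell_{\mathcal{P}_{\mathcal{T}(n)}}(\mathcal{A}_{\mathcal{P}_{\mathcal{T}(n)}})$, $R^\ell(\mathcal{A}_{\mathcal{P}_{\mathcal{T}(n)}})$, and the limits and suprema appearing in the definitions of generalizability and weak robustness are measurable functions of $\mathcal{T}$, so that $G$ and $W$ are measurable events; beyond this routine remark, the proof is purely formal and introduces no new ideas.
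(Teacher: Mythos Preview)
Your proposal is correct and takes essentially the same approach as the paper, which simply states that the corollary ``follows immediately from \thref{thm:weak}.'' You have spelled out the underlying logic (that the pointwise biconditional of \thref{thm:weak} gives $G=W$ as sets, hence equal probabilities) and added a measurability remark the paper omits; the tangential comment about \propref{prop:BHC} is unnecessary for the corollary but harmless.
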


This corollary establishes a strong link between generalization in metric learning and the notion of weak robustness.
In the next section, we illustrate the applicability of our framework by showing that many existing metric learning algorithms are robust in our sense.

\section{Examples of Robust Metric Learning Algorithms}
\label{sec:nipsexsec}

We first restrict our attention to Mahalanobis distance learning algorithms of the form:
\begin{eqnarray}
\label{eq:generalform}
\displaystyle\min_{\mathbf{M} \succeq 0} & \frac{1}{n^2}\displaystyle\sum_{(z_i,z_j)\in \mathcal{P}_\mathcal{T}}\ell(d_\mathbf{M}^2,z_i,z_j) \quad+\quad C\|\mathbf{M}\|,
\end{eqnarray}
where $\|\cdot\|$ is some matrix norm and $C>0$ a regularization parameter. The loss function $\ell$ is assumed to be of the form
$$\ell(d_\mathbf{M}^2,z_i,z_j) = g(y_iy_j[1-d_\mathbf{M}^2(\mathbf{x_i},\mathbf{x_j})]),$$
where $g$ is nonnegative and Lipschitz continuous with Lipschitz constant $U$. It typically outputs a small value when its input is large positive and a large value when it is large negative.
Lastly, $g_0 = \sup_{z,z'}g(y_iy_j[1-d_\mathbf{M}^2(\mathbf{0},\mathbf{x},\mathbf{x'})])$ is the largest loss when $\mathbf{M}$ is the zero matrix $\mathbf{0}$.

Recall that showing that a metric learning algorithm is robust (\defref{def:robuml}) implies that the algorithm has generalization guarantees (\thref{thm:robu}). To prove the robustness of \eqref{eq:generalform}, we will use the following theorem, which essentially says that if a metric learning algorithm achieves approximately the same testing loss for pairs that are close to each other, then it is robust.
\begin{theorem}
Fix $\gamma>0$ and a metric $\rho$ of $\mathcal{Z}$. Suppose that $\forall z_1,z_2,z,z' : (z_1,z_2)\in \mathcal{P}_\mathcal{T}, \rho(z_1,z)\leq \gamma, \rho(z_2,z')\leq \gamma$, $\mathcal{A}$ satisfies
$$|\ell(\mathcal{A}_{\mathcal{P}_\mathcal{T}},z_1,z_2)-\ell(\mathcal{A}_{\mathcal{P}_\mathcal{T}},z,z')|\leq \epsilon(\mathcal{P}_\mathcal{T}),$$
and $\mathcal{N}(\gamma/2,\mathcal{Z},\rho) < \infty$. Then $\mathcal{A}$ is $(\mathcal{N}(\gamma/2,\mathcal{Z},\rho),\epsilon(\mathcal{P}_\mathcal{T}))$-robust.
\label{thm:testtheorem}
\end{theorem}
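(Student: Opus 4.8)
The plan is to use the covering number $\mathcal{N}(\gamma/2,\mathcal{Z},\rho)$ to construct an explicit partition of $\mathcal{Z}$ into $K = \mathcal{N}(\gamma/2,\mathcal{Z},\rho)$ disjoint sets of radius at most $\gamma$, and then verify the robustness property (\defref{def:robuml}) directly from the hypothesis. Concretely, since $\mathcal{N}(\gamma/2,\mathcal{Z},\rho) < \infty$, there exists a finite $\gamma/2$-cover $\hat{\mathcal{V}} = \{\hat{t}_1,\dots,\hat{t}_K\}$ of $\mathcal{Z}$. I would turn this cover into a partition by the standard ``greedy assignment'' trick: define $C_1' = \{t \in \mathcal{Z} : \rho(t,\hat t_1) \leq \gamma/2\}$, and for $i \geq 2$ set $C_i' = \{t \in \mathcal{Z} : \rho(t,\hat t_i)\leq \gamma/2\} \setminus \bigcup_{l<i} C_l'$. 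This yields $K$ disjoint sets whose union is $\mathcal{Z}$ (every point is within $\gamma/2$ of some cover element), and each nonempty $C_i'$ has diameter at most $\gamma$ by the triangle inequality for $\rho$.

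Next I would check the deviation condition. Take any $(z_1,z_2)\in\mathcal{P}_\mathcal{T}$, any $z,z'\in\mathcal{Z}$, and any $i,j\in[K]$ with $z_1,z\in C_i'$ and $z_2,z'\in C_j'$. Since $\mathrm{diam}(C_i')\leq\gamma$ and $\mathrm{diam}(C_j')\leq\gamma$, we have $\rho(z_1,z)\leq\gamma$ and $\rho(z_2,z')\leq\gamma$. The hypothesis of the theorem then directly gives
$$|\ell(\mathcal{A}_{\mathcal{P}_\mathcal{T}},z_1,z_2)-\ell(\mathcal{A}_{\mathcal{P}_\mathcal{T}},z,z')|\leq \epsilon(\mathcal{P}_\mathcal{T}),$$
which is exactly what \defref{def:robuml} requires. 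Hence $\mathcal{A}$ is $(K,\epsilon(\cdot))$-robust with $K=\mathcal{N}(\gamma/2,\mathcal{Z},\rho)$.

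There is essentially no hard technical obstacle here — the argument is a routine application of the covering-number-to-partition construction already used implicitly in \sref{sec:robustness} (recall that $\mathcal{Z}$ is partitioned into $|\mathcal{Y}|\mathcal{N}(\gamma,\mathcal{X},\rho)$ subsets there). The only point that needs a little care is the radius bookkeeping: one must use a $\gamma/2$-cover (not a $\gamma$-cover) so that, by the triangle inequality, two points in the same cell are at $\rho$-distance at most $\gamma$ from each other, matching the ``$\rho(z_1,z)\leq\gamma$'' form of the hypothesis rather than a ``$\rho(z_1,\hat t)\leq\gamma$'' form. I would also note in passing that, strictly speaking, the partition may contain empty cells if some $C_i'$ is absorbed entirely by earlier ones; these can simply be discarded, which only decreases $K$ and thus only strengthens the bound in \thref{thm:robu}. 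This theorem then serves as the workhorse for the subsequent examples: to show an algorithm of the form \eqref{eq:generalform} is robust, it suffices to bound $|\ell(d_{\mathbf{M}_\mathcal{T}}^2,z_1,z_2)-\ell(d_{\mathbf{M}_\mathcal{T}}^2,z,z')|$ in terms of $\gamma$, using the Lipschitz constant $U$ of $g$, the bound $R$ on $\|\mathbf{x}\|$, and a bound on $\|\mathbf{M}_\mathcal{T}\|$ derived from comparing the objective value at $\mathbf{M}_\mathcal{T}$ with that at $\mathbf{M}=\mathbf{0}$.
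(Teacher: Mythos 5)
Your proposal is correct and follows essentially the same route as the paper's proof: both obtain from the finite $\gamma/2$-covering number a partition of $\mathcal{Z}$ into cells of diameter at most $\gamma$ (you merely spell out the standard disjointification of the cover that the paper asserts implicitly), and then the hypothesis immediately yields the deviation bound required by the robustness definition. Your remarks on the $\gamma/2$ versus $\gamma$ radius bookkeeping and on discarding empty cells are accurate but do not change the argument.
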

\begin{proof}
By definition of covering number, we can partition $\mathcal{X}$ in $\mathcal{N}(\gamma/2,\mathcal{X},\rho)$ subsets such that each subset has a diameter less or equal to $\gamma$. Furthermore, since $\mathcal{Y}$ is a finite set, we can partition $\mathcal{Z}$ into $|\mathcal{Y}|\mathcal{N}(\gamma/2,\mathcal{X},\rho)$ subsets $\{C_i\}$ such that $z_1,z\in C_i \Rightarrow \rho(z_1,z)\leq \gamma$.
Therefore,
{\small
$$|\ell(\mathcal{A}_{p_\mathcal{T}},z_1,z_2)-\ell(\mathcal{A}_{p_\mathcal{T}},z,z')|\leq \epsilon(\mathcal{P}_\mathcal{T}),\quad \forall z_1,z_2,z,z' : (z_1,z_2)\in \mathcal{P}_\mathcal{T}, \rho(z_1,z)\leq \gamma, \rho(z_2,z')\leq \gamma$$
}implies
$z_1,z_2\in \mathcal{P}_\mathcal{T}, z_1,z\in C_i,z_2,z'\in C_j \Rightarrow |\ell(\mathcal{A}_{\mathcal{P}_\mathcal{T}},z_1,z_2)-\ell(\mathcal{A}_{\mathcal{P}_\mathcal{T}},z,z')|\leq \epsilon(\mathcal{P}_\mathcal{T}),$
which establishes the theorem.
\end{proof}


We now prove the robustness of \eqref{eq:generalform} when $\|\mathbf{M}\|$ is the Frobenius norm, which corresponds to the formulation \eqref{eq:jinform} addressed by \citet{Jin2009}.
\begin{example}[Frobenius norm]
Algorithm \eqref{eq:generalform} with $\|\mathbf{M}\| = \|\mathbf{M}\|_{\mathcal{F}}$ is $(|\mathcal{Y}|\mathcal{N}(\gamma/2,\mathcal{X},\|\cdot\|_2),\frac{8UR\gamma g_0}{C})$-robust.
\label{ex:ex1}
\end{example}
\begin{proof}
Let $\mathbf{M^*}$ be the solution given training data $\mathcal{P}_\mathcal{T}$. Due to optimality of $\mathbf{M^*}$, we have
{\footnotesize
$$\frac{1}{n^2}\displaystyle\sum_{(z_i,z_j)\in \mathcal{P}_\mathcal{T}}g(y_iy_j[1-d_\mathbf{M^*}^2(\mathbf{x_i},\mathbf{x_j})]) + C\|\mathbf{M^*}\|_\mathcal{F}\leq \frac{1}{n^2}\displaystyle\sum_{(z_i,z_j)\in \mathcal{P}_\mathcal{T}}g(y_iy_j[1-d_\mathbf{0}^2(\mathbf{x_i},\mathbf{x_j})]) + C\|\mathbf{0}\|_\mathcal{F} = g_0$$
}and thus $\|\mathbf{M^*}\|_{\mathcal{F}} \leq g_0/C$.

We can partition $\mathcal{Z}$ as $|\mathcal{Y}|\mathcal{N}(\gamma/2,\mathcal{X},\|\cdot\|_2)$ sets, such that if $z$ and $z'$ belong to the same set, then $y=y'$ and $\|\mathbf{x}-\mathbf{x}'\|_2 \leq \gamma$. Now, for $z_1,z_2,z_1',z_2'\in\mathcal{Z}$, if $y_1=y_1'$, $\|\mathbf{x_1}-\mathbf{x_1'}\|_2 \leq \gamma$, $y_2=y_2'$ and $\|\mathbf{x_2}-\mathbf{x_2'}\|_2 \leq \gamma$, then:
\begin{eqnarray*}
\lefteqn{|g(y_1y_2[1-d_\mathbf{M^*}^2(\mathbf{x_1},\mathbf{x_2})]) - g(y_1'y_2'[1-d_\mathbf{M^*}^2(\mathbf{x_1'},\mathbf{x_2'})])|}\\
& \leq & U|(\mathbf{x_1}-\mathbf{x_2})^T\mathbf{M^*}(\mathbf{x_1}-\mathbf{x_2})-(\mathbf{x_1'}-\mathbf{x_2'})^T\mathbf{M^*}(\mathbf{x_1'}-\mathbf{x_2'})|\\
& = & U|(\mathbf{x_1}-\mathbf{x_2})^T\mathbf{M^*}(\mathbf{x_1}-\mathbf{x_2})-(\mathbf{x_1}-\mathbf{x_2})^T\mathbf{M^*}(\mathbf{x_1'}-\mathbf{x_2'})\\
& & + ~(\mathbf{x_1}-\mathbf{x_2})^T\mathbf{M^*}(\mathbf{x_1'}-\mathbf{x_2'})|-(\mathbf{x_1'}-\mathbf{x_2'})^T\mathbf{M^*}(\mathbf{x_1'}-\mathbf{x_2'})|\\
& = & U|(\mathbf{x_1}-\mathbf{x_2})^T\mathbf{M^*}(\mathbf{x_1}-\mathbf{x_2}-(\mathbf{x_1'}+\mathbf{x_2'})) + (\mathbf{x_1}-\mathbf{x_2}-(\mathbf{x_1'}+\mathbf{x_2'}))^T\mathbf{M^*}(\mathbf{x_1'}+\mathbf{x_2'})|\\
& \leq & U (|(\mathbf{x_1}-\mathbf{x_2})^T\mathbf{M^*}(\mathbf{x_1}-\mathbf{x_1'})| + |(\mathbf{x_1}-\mathbf{x_2})^T\mathbf{M^*}(\mathbf{x_2'}-\mathbf{x_2})|\\
& & + ~|(\mathbf{x_1}-\mathbf{x_1'})^T\mathbf{M^*}(\mathbf{x_1'}+\mathbf{x_2'})| + |(\mathbf{x_2'}-\mathbf{x_2})^T\mathbf{M^*}(\mathbf{x_1'}+\mathbf{x_2'})|)\\
& \leq & U(\|\mathbf{x_1}-\mathbf{x_2}\|_2\|\mathbf{M^*}\|_{\mathcal{F}}\|\mathbf{x_1}-\mathbf{x_1'}\|_2 + \|\mathbf{x_1}-\mathbf{x_2}\|_2\|\mathbf{M^*}\|_{\mathcal{F}}\|\mathbf{x_2'}-\mathbf{x_2}\|_2\\
& & + ~\|\mathbf{x_1}-\mathbf{x_1'}\|_2\|\mathbf{M^*}\|_{\mathcal{F}}\|\mathbf{x_1'}-\mathbf{x_2'}\|_2 + \|\mathbf{x_2'}-\mathbf{x_2}\|_2\|\mathbf{M^*}\|_{\mathcal{F}}\|\mathbf{x_1'}-\mathbf{x_2'}\|_2)\\
& \leq & \frac{8UR\gamma g_0}{C}.
\end{eqnarray*}
Hence, the example holds by \thref{thm:testtheorem}.
\end{proof}

The generalization bound for \exref{ex:ex1} derived by \citet{Jin2009} using uniform stability arguments has the same order of convergence.
However, their framework cannot be used to establish generalization bounds for recent sparse metric learning approaches \citep{Rosales2006,Qi2009,Ying2009,Kunapuli2012} because sparse algorithms are known not to be stable \citep{Xu2012}.
The key advantage of robustness over stability is that it can accommodate arbitrary $p$-norms (or even any regularizer which is bounded below by some $p$-norm), thanks to the equivalence of norms. To illustrate this, we show the robustness when $\|\mathbf{M}\|$ is the $L_1$ norm \citep[used in][]{Rosales2006,Qi2009} which promotes sparsity at the component level, the $L_{2,1}$ norm \citep[used in][]{Ying2009} which induces group sparsity at the column/row level, and the trace norm \citep[used in][]{Kunapuli2012} which induces low-rank matrices.

\begin{example}[$L_1$ norm]
Algorithm \eqref{eq:generalform} with $\|\mathbf{M}\| = \|\mathbf{M}\|_1$ is $(|\mathcal{Y}|\mathcal{N}(\gamma,\mathcal{X},\|\cdot\|_1),$ $\frac{8UR\gamma g_0}{C})$-robust.
\label{ex:ex2}
\end{example}
\begin{proof}
See \aref{app:proofex2}.
\end{proof}

\begin{example}[$L_{2,1}$ norm and trace norm]
Algorithm \eqref{eq:generalform} with $\|\mathbf{M}\| = \|\mathbf{M}\|_{2,1}$ or $\|\mathbf{M}\| = \|\mathbf{M}\|_*$ is $(|\mathcal{Y}|\mathcal{N}(\gamma,\mathcal{X},\|\cdot\|_2),\frac{8UR\gamma g_0}{C})$-robust.
\label{ex:ex3}
\end{example}
\begin{proof}
See \aref{app:proofex3}.
\end{proof}

We have seen that kernelization is a convenient way to learn a nonlinear metric. In the following example, we show robustness for a kernelized formulation.
\begin{example}[Kernelization]\label{ex:kernel} Consider the kernelized version of Algorithm~\eqref{eq:generalform}:
\begin{eqnarray}
\displaystyle\min_{\mathbf{M} \succeq 0} & \frac{1}{n^2}\displaystyle\sum_{(z_i,z_j)\in \mathcal{P}_\mathcal{T}}g(y_iy_j[1-d_\mathbf{M}^2(\phi(\mathbf{x_i}),\phi(\mathbf{x_j}))]) \quad+\quad C\|\mathbf{M}\|_\mathbb{H},
\label{eq:generalform_kernel}
\end{eqnarray}
where $\phi(\cdot)$ is a feature mapping to a kernel space $\mathbb{H}$,
$\|\cdot\|_{\mathbb{H}}$ the norm function of $\mathbb{H}$ and
$k(\cdot,\cdot)$ the kernel function. 
Consider a cover of $\mathcal{X}$ by $\|\cdot\|_2$ ($\mathcal{X}$ being compact) and let
$$f_{\mathbb{H}}(\gamma)=\max_{\mathbf{a},\mathbf{b}\in \mathcal{X},
  \|\mathbf{a}-\mathbf{b}\|_2\leq \gamma}K(\mathbf{a},\mathbf{a})+K(\mathbf{b},\mathbf{b})-2K(\mathbf{a},\mathbf{b})\quad\text{and}\quad B_\gamma=\max_{x\in \mathcal{X}}\sqrt{K(\mathbf{x},\mathbf{x})}.$$
If the kernel function is
continuous, $B_\gamma$ and $f_{\mathbb{H}}$ are finite for any
$\gamma>0$ and thus Algorithm~\eqref{eq:generalform_kernel} is
$(|\mathcal{Y}|\mathcal{N}(\gamma,\mathcal{X},\|\cdot\|_2),\frac{8 U B_\gamma
  \sqrt{f_{\mathbb{H}}} g_0}{C})$-robust.
\end{example}
\begin{proof}
See \aref{app:proofexkernel}.
\end{proof}

Using triplet-based robustness \eqref{eq:robu_trip}, we can for instance show the robustness of two popular triplet-based metric learning approaches \citep{Schultz2003,Ying2009} for which no generalization guarantees were known (to the best of our knowledge). Recall that these algorithms have the following form:
\begin{eqnarray*}
\displaystyle\min_{\mathbf{M} \succeq 0} & \frac{1}{|\mathcal{R}_\mathcal{T}|}\displaystyle\sum_{(z_i,z_j,z_k)\in \mathcal{R}_\mathcal{T}} [1 - d_\mathbf{M}^2(\mathbf{x_i},\mathbf{x_k})+d_\mathbf{M}^2(\mathbf{x_i},\mathbf{x_j})]_+ \quad+\quad C\|\mathbf{M}\|,
\end{eqnarray*}
where \citet{Schultz2003} use $\|\mathbf{M}\|$ = $\|\mathbf{M}\|_{\mathcal{F}}$ and \citet{Ying2009} use $\|\mathbf{M}\| = \|\mathbf{M}\|_{1,2}$. These methods are $(\mathcal{N}(\gamma,\mathcal{Z},\|\cdot\|_2),\frac{16UR\gamma g_0}{C})$-robust (by using the same proof technique as in \exref{ex:ex1} and \exref{ex:ex3}). The additional factor 2 comes from the use of triplets instead of pairs.

Furthermore, we can easily prove similar results for other forms of metrics using the same technique. For instance, when the function is the bilinear similarity $\mathbf{x_i}^T\mathbf{M}\mathbf{x_j}$ where $\mathbf{M}$ is not constrained to be PSD \citep[see for instance][]{Chechik2009,Qamar2008,Bellet2012a}, we can improve the robustness to $2UR\gamma g_0/C$.



\section{Conclusion}
\label{sec:nipsconclu}

In this chapter, we proposed a new theoretical framework for establishing generalization bounds for metric learning algorithms, based on the notion of algorithmic robustness originally introduced by \citep{Xu2010,Xu2012a}. We showed that robustness can be adapted to pair and triplet-based metric learning and can be used to derive generalization guarantees without assuming that the pairs or triplets are drawn i.i.d. Furthermore, we showed that a weak notion of robustness characterizes the generalizability of metric learning algorithms, justifying that robustness is fundamental for such algorithms.
The proposed framework is used to derive generalization bounds for a large class of metric learning algorithms with different regularizations, such as sparsity-inducing norms, making the analysis more powerful and general than the (few) existing frameworks. 
Moreover, almost no algorithm-specific argument is needed to derive these bounds.

It is worth noting that our adaptation of robustness to metric learning is relatively straightforward: in most cases, the proof techniques of \citet{Xu2010,Xu2012a} could be reused with only slight modification. Nevertheless, this adaptation is promising since it leads to generalization bounds for many metric learning methods that could not be studied through the prism of previous frameworks. Note that it could be used to make the link between the generalization ability of metric learning methods and their $(\epsilon,\gamma,\tau)$-goodness, in a similar fashion to what we did in \cref{chap:ecml} with uniform stability. An obvious drawback of the proposed framework is that the resulting bounds are loose and often similar from one method to another due to the use of covering numbers and equivalence of norms.

A natural perspective is to consider different, harder settings. Besides extending our framework to more general loss functions \citep[for example those that use both pairs and triplets, such as][]{Weinberger2009} and regularizers \citep[e.g., the LogDet divergence used in][]{Davis2007,Jain2008}, studying other paradigms for metric learning (such as unsupervised, semi-supervised or domain adaptation methods) would be of great interest.

Lastly, another interesting avenue is to design a metric learning algorithm that would maximize the robustness of the resulting metric.

\addtocontents{toc}{\protect\vspace{14pt}}

\chapter{Conclusion \& Perspectives}
\label{chap:conclu}

In this thesis, we have addressed some important limitations of existing supervised metric learning methods by proposing new approaches for feature vectors and structured data.
We paid particular attention to the desirable properties and justifications of each contribution presented in this document. We studied both theoretical frameworks and algorithmic issues, but also the applicability of the different approaches. Overall, it constitutes a wide range of research.

Our first contribution (which was actually not a metric learning algorithm) was to propose a new string kernel built from learned edit similarities. This kernel combines powerful learned edit similarities with the classification performance of support vector machines: it is more adaptable than classic string kernels (such as the spectrum, subsequence or mismatch kernels) while being guaranteed to be PSD, unlike other kernels based on the edit distance. We provided a tractable way to compute it, although the proposed solution can remain computationally expensive.

In order to avoid the cost of transforming learned edit similarities into kernels, we then proposed to use them directly to build a linear classifier, following the framework of learning with $(\epsilon,\gamma,\tau)$-good similarity functions \citep{Balcan2006,Balcan2008,Balcan2008a}. We observed that this yields competitive results in practice. We went one step further by introducing our main second contribution with GESL, a string and tree edit similarity learning method driven by a relaxation of $(\epsilon,\gamma,\tau)$-goodness. The problem is formulated as an efficient convex quadratic program and solved by convex optimization tools, thereby avoiding the use of expensive and locally optimal EM-based algorithms. Unlike many other edit metric learning methods, we were able to use the information brought by both positive and negative pairs, and to derive generalization guarantees for the learned similarity using uniform stability arguments. These guarantees give an upper bound on the true risk of the classifier built from the learned similarity (although a rather loose one). Furthermore, experimental evaluation showed the accuracy of the method but also its ability to output sparse models, which is a valuable property from a practical point of view. Note that the source code for GESL is available and distributed under GNU/GPL 3 license.\footnote{Download from: \url{http://labh-curien.univ-st-etienne.fr/~bellet/}}

To provide a wider range of applicability, our third contribution was an extension of the ideas of GESL to metric learning from feature vectors. The proposed approach, called SLLC, takes advantage of the simple form of the bilinear similarity to efficiently optimize the actual $(\epsilon,\gamma,\tau)$-goodness, instead of only a loose upper bound in GESL. In this context, the similarity is not learned from local pairs or triplets but according to a global criterion. We also kernelized SLLC to be able to learn linear similarities in a nonlinear feature space induced by a kernel. Generalization guarantees based on uniform stability are established for SLLC and give a tighter bound on the true risk of the linear classifier. To the best of our knowledge, GESL and SLLC are the first metric learning methods for which the link between the quality of the learned metric and the error of the classifier using it is formally established.

Finally, purely on the theoretical side, our last contribution overcame the limitations of the previous frameworks studying the generalization of metric learning algorithms. It is based on a relatively straightforward adaptation of algorithmic robustness \citep{Xu2010,Xu2012a} but provides an easy way to derive nontrivial results. We illustrated this by showing how it can be used to prove the robustness of a large class of metric learning algorithms, thereby establishing generalization guarantees for methods that could not be handled with previous arguments.

Staying in the scope of the proposed methods, the adaptation to other metrics or other regularizers are possible future directions. In particular, extending the methods to sparsity-inducing regularizers (in order to obtain more interpretable results as well as additional properties such as low-rank solutions and dimensionality reduction) can be done without giving up generalization guarantees, thanks to the theoretical contribution of \cref{chap:nips}.
To improve the scalability of the approaches, an interesting avenue would be to develop online versions of the algorithms. 
Another promising idea for future work is to explore the field of information geometry, in particular to study the problem of metric learning in the context of Bregman divergences \citep{Bregman1967}. Such divergences are known to generalize many metrics for vectors and matrices, and have interesting properties for solving tasks such as clustering \citep[see e.g.,][]{Banerjee2005,Fischer2010}. To the best of our knowledge, learning Bregman divergences has only been addressed by \citet{Wu2009,Wu2012}.

From a more high-level perspective, many questions remain open as to the theoretical understanding of metric learning. Some of our contributions make the link between the learned metric and its performance in classification, but our results are so far restricted to the context of linear classification, relying on $(\epsilon,\gamma,\tau)$-goodness. A promising avenue would be to derive methods or analytical frameworks capable of making that link for other classifiers. In particular, since most learned metrics are used in $k$-NN, tying the generalization ability of the learned metric to the true risk of the $k$-NN classifier would constitute a beautiful result. One could also derive theoretically sound metric learning methods for other supervised learning tasks, such as regression or ranking, using the recently proposed generalization of the notion of similarity goodness to these settings \citep{Kar2012}.

Another interesting perspective would be to study the generalization ability of learned metrics in other settings, such as domain adaptation \citep{Mansour2009,Ben-David2010}. Domain adaptation (DA) studies the generalization ability of a hypothesis learned from labeled \emph{source} data and used to predict the labels of \emph{target} data, where the distributions generating the source and target data are different. It was shown that successful adaptation is possible when the two distributions are not too different --- a common example of such situation is covariate shift, where only the data distributions are different, while the conditional distribution of labels given a data point remains the same \citep[see for instance][and references therein]{Bickel2009}. Although a few DA metric learning methods already exist \citep{Cao2011,Geng2011}, insights provided by DA generalization bounds \citep{Mansour2009,Ben-David2010} could be used to derive theoretically well-founded approaches.

Finally, one could also focus on clustering, since metrics are essential to many clustering algorithms (such as the prominent $K$-Means). We identify two promising directions for future research. First, one could use the fact that algorithmic robustness is based on a partition of the input space. This geometric interpretation seems particularly relevant to clustering, and a metric learning algorithm that maximizes a notion of robustness could be appropriate to deal with clustering tasks. Another avenue could consist in formally determining which properties of a metric are important to induce quality clusterings. The work of \citet{Balcan2008b} is a first attempt towards a better understanding of this question.

\addtocontents{toc}{\protect\vspace{14pt}}

\begin{listpublis}
\begin{small}

\subsubsection*{International Journals}

\bibentry{Bellet2012}.

\bibentry{Bellet2010}.

\subsubsection*{International Conferences}

\bibentry{Bellet2012a}.

\bibentry{Bellet2011}.

\bibentry{Bellet2011a}.

\subsubsection*{French Conferences}

\bibentry{french3}.

\bibentry{french2}.

\bibentry{french1}.

\end{small}
\end{listpublis}

\appendix
\chapter{Learning Conditional Edit Probabilities}
\label{app:pr}

Our string edit kernel introduced in \cref{chap:pr} is based on edit probabilities learned from a generative or discriminative probabilistic model. In the experimental section, we build the kernel from the method of \citet{Oncina2006}, which is based on estimating the parameters of a conditional memoryless transducer. This appendix gives the technical details of their approach.

Recall that $\mathcal{S}$ denotes the set of positive pairs. For the sake of simplicity, we assume that the input and the output alphabets are the same, denoted by $\Sigma$.
In the following, unless stated otherwise, symbols are denoted by $\mathsf{a}, \mathsf{b}, \dots$, and pairs of input and output strings by $(\mathsf{x},\mathsf{x'})$ or $(\mathsf{w},\mathsf{w'})$ when needed.
Let $f$ be a function such that $[f(\mathsf{x})]_{\pi(\mathsf{x},\dots)}$ is equal to $f(\mathsf{x})$ if the predicate $\pi(x,\dots)$ holds and 0 otherwise, where $x$ is a (set of) dummy variable(s).
In this appendix, for notational convenience, we will see the edit probability matrix as a function. Let $c$ be the conditional probability function that returns for any edit operation $(\mathsf{b}|\mathsf{a})$ the probability to output the symbol $\mathsf{b}$ given an input symbol $\mathsf{a}$.

The aim of this appendix is to show how one can automatically learn the function $c$ from the training pairs $\mathcal{S}$. The values $c(\mathsf{b}|\mathsf{a}), \forall \mathsf{a} \in \Sigma \cup \{\$\}, \mathsf{b} \in \Sigma \cup \{\$\}$ represent the parameters of the memoryless machine $T$. These parameters are trained using an EM-based algorithm that relies on the so-called forward and backward functions.

The conditional edit probability $p_e(\mathsf{x'}|\mathsf{x})$ of the string $\mathsf{x'}$ given an input string $\mathsf{x}$ can be recursively computed using the forward function $\alpha: \Sigma^* \times \Sigma^* \to \mathbb{R}_+$ defined as follows:
\begin{eqnarray*}
  \alpha(\mathsf{x'}|\mathsf{x}) & = & [1]_{\mathsf{x} = \$ \wedge \mathsf{x'} = \$} \\ 
    && +~[c(\mathsf{b}|\mathsf{a}) \cdot \alpha(\mathsf{w'}|\mathsf{w})]_{\mathsf{x} = \mathsf{w}\mathsf{a} \wedge \mathsf{x'} = \mathsf{w'}\mathsf{b}} \\
    && +~[c(\$|\mathsf{a}) \cdot \alpha(\mathsf{x'}|\mathsf{w}]_{\mathsf{x} = \mathsf{w}\mathsf{a}} \\
    && +~[c(\mathsf{b}|\$) \cdot \alpha(\mathsf{w'}|\mathsf{x})]_{\mathsf{x'} = \mathsf{w'}\mathsf{b}}.
\end{eqnarray*}
Using $\alpha(\mathsf{x'}|\mathsf{x})$, we get 
$$p_e(\mathsf{x'}|\mathsf{x}) = c(\$|\$)\cdot \alpha(\mathsf{x'}|\mathsf{x}),$$
where $c(\$|\$)$ is the probability of the termination symbol of a string.

In a symmetric way, $p_e(\mathsf{x'}|\mathsf{x})$ can be recursively computed using the backward function $\beta: \Sigma^* \times \Sigma^* \to \mathbb{R}_+$ defined as follows:
\begin{eqnarray*}
  \beta(\mathsf{x'}|\mathsf{x}) & = & [1]_{\mathsf{x} = \$ \wedge \mathsf{x'} = \$}\\ 
    && +~[c(\mathsf{b}|\mathsf{a}) \cdot \beta(\mathsf{w'}|\mathsf{w})]_{\mathsf{x} = \mathsf{a}\mathsf{w} \wedge \mathsf{x'} = \mathsf{b}\mathsf{w'}} \\
    && +~[c(\$|\mathsf{a}) \cdot \beta(\mathsf{x'}|\mathsf{w})]_{\mathsf{x} = \mathsf{a}\mathsf{w}} \\
    && +~[c(\mathsf{b}|\$) \cdot \beta(\mathsf{w'}|\mathsf{x})]_{\mathsf{x'} = \mathsf{b}\mathsf{w'}}.
\end{eqnarray*}
And we get that
$$p_e(\mathsf{x'}|\mathsf{x}) = c(\$|\$)\cdot \beta(\mathsf{x'}|\mathsf{x}).$$

Both functions can be computed in $O(|\mathsf{x}||\mathsf{x'}|)$ time using a dynamic programming technique and will be used in the following to learn the function $c$. 

In the considered model, a probability distribution is assigned conditionally to each input string, i.e.,
\begin{equation*}
\begin{aligned}
  \sum_{\mathsf{x'} \in \Sigma^*} p(\mathsf{x'}|\mathsf{x}) \in \{0,1\} &&& \forall \mathsf{x} \in \Sigma^*.
\end{aligned}
\end{equation*}
This is equal to $0$ when the input string $\mathsf{x}$ is not in the domain 
of the function.\footnote{If $p_e(\mathsf{x}) = 0$ then $p_e(\mathsf{x},\mathsf{x'}) = 0$ and as $p_e(\mathsf{x'}|\mathsf{x}) = \frac{p_e(\mathsf{x},\mathsf{x'})}{p(\mathsf{x})}$ we have a $\frac{0}{0}$ indeterminate. We choose to avoid it by taking $\frac{0}{0} = 0$, in order to keep $\sum_{\mathsf{x'} \in \Sigma^*} p(\mathsf{x'}|\mathsf{x})$ finite.}

It can be shown \citep[see][for the proof]{Oncina2006} that correct normalization of each conditional distribution is obtained when the following conditions over the function $c$ are fulfilled:
\begin{equation*}
\begin{aligned}
  c(\$|\$) > 0,\\
  c(\mathsf{b}|\mathsf{a}), c(\mathsf{b}|\$), c(\$|\mathsf{a}) \geq 0, &&& \forall \mathsf{a} \in \Sigma, \mathsf{b} \in \Sigma,\\
  \sum_{\mathsf{b} \in \Sigma} c(\mathsf{b}|\$) + \sum_{\mathsf{b} \in \Sigma} c(\mathsf{b}|\mathsf{a}) + c(\$|\mathsf{a}) = 1, &&& \forall \mathsf{a} \in \Sigma,\\
  \sum_{\mathsf{b} \in \Sigma} c(\mathsf{b}|\$) + c(\$|\$) = 1.
\end{aligned}
\end{equation*}

The EM algorithm \citep{Dempster1977} can be used in order the find the optimal parameters of the function $c$ by alternating between an E-step and an M-step.
Given an auxiliary $(|\Sigma|+1)\times(|\Sigma|+1)$ matrix $\delta$, the E-step aims at computing the values of $\delta$ as follows: $\forall \mathsf{a} \in \Sigma, \mathsf{b} \in \Sigma$,

\begin{eqnarray*}
  \delta(\mathsf{b}|\mathsf{a}) & = & \sum_{(\mathsf{x}\mathsf{a}\mathsf{w},\mathsf{x'}\mathsf{b}\mathsf{w'}) \in \mathcal{S}} 
      \frac{\alpha(\mathsf{x'}|\mathsf{x}) \cdot c(\mathsf{b}|\mathsf{a}) \cdot \beta(\mathsf{w'}|\mathsf{w}) \cdot c(\$|\$)}{p_e(\mathsf{x'}\mathsf{b}\mathsf{w'}|\mathsf{x}\mathsf{a}\mathsf{w})}, \\
  \delta(\mathsf{b}|\$) & = & \sum_{(\mathsf{x}\mathsf{w},\mathsf{x'}\mathsf{b}\mathsf{w'}) \in \mathcal{S}} 
      \frac{\alpha(\mathsf{x'}|\mathsf{x}) \cdot c(\mathsf{b}|\$) \cdot \beta(\mathsf{w'}|\mathsf{w}) \cdot c(\$|\$)}{p_e(\mathsf{x'}\mathsf{b}\mathsf{w'}|\mathsf{x}\mathsf{w})},\\
  \delta(\$|\mathsf{a}) & = & \sum_{(\mathsf{x}\mathsf{a}\mathsf{w},\mathsf{x'}\mathsf{w'}) \in \mathcal{S}} 
      \frac{\alpha(\mathsf{x'}|\mathsf{x}) c(\$|\mathsf{a}) \cdot \beta(\mathsf{w'}|\mathsf{w}) \cdot c(\$|\$)}{p_e(\mathsf{x'}\mathsf{w'}|\mathsf{x}\mathsf{a}\mathsf{w})}, \\
  \delta(\$|\$) & = & \sum_{(\mathsf{x},\mathsf{x'}) \in \mathcal{S}} 
      \frac{\alpha(\mathsf{x'}|\mathsf{x}) \cdot c(\$|\$)}{p_e(\mathsf{x'}|\mathsf{x})} = |\mathcal{S}|.
\end{eqnarray*}

The M-step allows us to get the current edit costs:

\begin{equation*}
\begin{aligned}
 & c(\mathsf{b}|\$)  & = &&& \frac{\delta(\mathsf{b}|\$)}{N}, && \text{(insertion)} \\
 & c(\$|\$) & = &&& \frac{N - N(\$)}{N}, && \text{(termination symbol)} \\
 & c(\mathsf{b}|\mathsf{a})       & = &&& \frac{\delta(\mathsf{b}|\mathsf{a})}{N(\mathsf{a})} \cdot \frac{N - N(\$)}{N}, && \text{(substitution)} \\
 & c(\$|\mathsf{a}) & = &&& \frac{\delta(\$|\mathsf{a})}{N(\mathsf{a})} \cdot \frac{N - N(\$)}{N}, && \text{(deletion)}
\end{aligned}
\end{equation*}

where
\begin{align*}
  N = \sum_{\substack{ a \in \Sigma \cup \{\$\} \\
                       b \in \Sigma \cup \{\$\} }} \delta(b|a),&&
  N(\$) = \sum_{b \in \Sigma} \delta(b|\$),&&
  N(a) = \sum_{b \in \Sigma \cup \{\$\}} \delta(b|a).
\end{align*}

\chapter{Proofs}
\label{app:proofs}

\section[Proofs of Chapter~\ref*{chap:ecml}]{Proofs of \cref{chap:ecml}}

\subsection{Proof of \lref{lem:convexN2}}
\label{app:appendix2}

\paragraph{Lemma} Let $F_\mathcal{T}$ and $F_{\mathcal{T}^{i,z}}$ be the functions to optimize, $\mathbf{C}_\mathcal{T}$ and $\mathbf{C}_{\mathcal{T}^{i,z}}$ their corresponding minimizers, and $\beta$ the regularization parameter used in $GESL_L$. Let $\Delta \mathbf{C}=(\mathbf{C}_\mathcal{T}-\mathbf{C}_{\mathcal{T}^{i,z}})$. For any $t\in[0,1]$:
$$
 \|\mathbf{C}_\mathcal{T}\|^2_{\cal{F}}-\|\mathbf{C}_\mathcal{T} -t\Delta \mathbf{C}\|^2_{\cal{F}}+\|\mathbf{C}_{\mathcal{T}^{i,z}}\|^2_{\cal{F}}-\|\mathbf{C}_{\mathcal{T}^{i,z}} +t\Delta \mathbf{C}\|^2_{\cal{F}}  \leq \frac{(2n_\mathcal{T}+n_\mathcal{L})t2k}{\beta n_\mathcal{T}n_\mathcal{L}}\|\Delta \mathbf{C}\|_{\cal{F}}.
$$

\begin{proof}
The first steps of this proof are similar to the proof of Lemma 20 in \citep{Bousquet2002} which we recall for the sake of completeness.
Recall that any convex function $g$ verifies
$$
\forall x,y, \forall t\in[0,1], g(x+t(y-x))-g(x)\leq t(g(y)-g(x)).
$$
$R^\ell_{\mathcal{T}^{i,z}}$ is convex and thus for any $t\in [0,1]$,
\begin{equation}\label{eq:c1}
R^\ell_{\mathcal{T}^{i,z}}(\mathbf{C}_\mathcal{T}-t\Delta \mathbf{C}) -R^\ell_{\mathcal{T}^{i,z}}(\mathbf{C}_\mathcal{T})\leq t(R^\ell_{\mathcal{T}^{i,z}}(\mathbf{C}_{\mathcal{T}^{i,z}})- R^\ell_{\mathcal{T}^{i,z}}(\mathbf{C}_\mathcal{T})).
\end{equation}
Switching the role of $\mathbf{C}_\mathcal{T}$ and $\mathbf{C}_{\mathcal{T}^{i,z}}$, we get:
\begin{equation}\label{eq:c2}
R^\ell_{\mathcal{T}^{i,z}}(\mathbf{C}_{\mathcal{T}^{i,z}}+t\Delta \mathbf{C}) -R^\ell_{\mathcal{T}^{i,z}}(\mathbf{C}_{\mathcal{T}^{i,z}})\leq t(  R^\ell_{\mathcal{T}^{i,z}}(\mathbf{C}_\mathcal{T})-R^\ell_{\mathcal{T}^{i,z}}(\mathbf{C}_{\mathcal{T}^{i,z}})).
\end{equation}
Summing up  inequalities \eqref{eq:c1} and \eqref{eq:c2} yields
\begin{eqnarray}
R^\ell_{\mathcal{T}^{i,z}}(\mathbf{C}_\mathcal{T}-t\Delta \mathbf{C}) -R^\ell_{\mathcal{T}^{i,z}}(\mathbf{C}_\mathcal{T})+R^\ell_{\mathcal{T}^{i,z}}(\mathbf{C}_{\mathcal{T}^{i,z}}+t\Delta \mathbf{C}) -R^\ell_{\mathcal{T}^{i,z}}(\mathbf{C}_{\mathcal{T}^{i,z}})&\leq& 0.\label{eq:c1-2}
\end{eqnarray}
Now, since $\mathbf{C}_\mathcal{T}$ and $\mathbf{C}_{\mathcal{T}^{i,z}}$ are minimizers of $F_\mathcal{T}$ and $F_{\mathcal{T}^{i,z}}$ respectively, we have:
\begin{eqnarray}
F_\mathcal{T}(\mathbf{C}_\mathcal{T})-F_\mathcal{T}(\mathbf{C}_\mathcal{T}-t\Delta \mathbf{C})&\leq& 0 \label{eq:c3}\\ F_{\mathcal{T}^{i,z}}(\mathbf{C}_{\mathcal{T}^{i,z}})-F_{\mathcal{T}^{i,z}}(\mathbf{C}_{\mathcal{T}^{i,z}}+t\Delta \mathbf{C})&\leq& 0.\label{eq:c4}
\end{eqnarray}

By summing up  \eqref{eq:c3} and \eqref{eq:c4} we get:
\begin{eqnarray*}
R^\ell_\mathcal{T}(\mathbf{C}_\mathcal{T})+\beta \|\mathbf{C}_\mathcal{T}\|_{\cal{F}} -\left(R^\ell_\mathcal{T}(\mathbf{C}_\mathcal{T}-t\Delta \mathbf{C})+\beta \|\mathbf{C}_\mathcal{T}-t\Delta \mathbf{C}\|_{\cal{F}}\right) + \hspace*{2cm} &&\\
R^\ell_{\mathcal{T}^{i,z}}(\mathbf{C}_{\mathcal{T}^{i,z}})+\beta \|\mathbf{C}_{\mathcal{T}^{i,z}}\|_{\cal{F}}-( R^\ell_{\mathcal{T}^{i,z}}(\mathbf{C}_{\mathcal{T}^{i,z}}+t\Delta \mathbf{C})+\beta \|\mathbf{C}_{\mathcal{T}^{i,z}}+t\Delta \mathbf{C}\|_{\cal{F}}) &\leq&0.
\end{eqnarray*}
By  summing this last inequality with \eqref{eq:c1-2}, we obtain
\begin{eqnarray*}
R^\ell_\mathcal{T}(\mathbf{C}_\mathcal{T})+\beta \|\mathbf{C}_\mathcal{T}\|_{\cal{F}} -\left(R^\ell_\mathcal{T}(\mathbf{C}_\mathcal{T}-t\Delta \mathbf{C})+\beta \|\mathbf{C}_\mathcal{T}-t\Delta \mathbf{C}\|_{\cal{F}}\right) + \hspace*{2cm}&&\\
\beta \|\mathbf{C}_{\mathcal{T}^{i,z}}\|_{\cal{F}}-(\beta \|\mathbf{C}_{\mathcal{T}^{i,z}}+t\Delta \mathbf{C}\|_{\cal{F}}) + R^\ell_{\mathcal{T}^{i,z}}(\mathbf{C}_\mathcal{T}-t\Delta \mathbf{C}) -R^\ell_{\mathcal{T}^{i,z}}(\mathbf{C}_\mathcal{T})&\leq&0.\\
\end{eqnarray*}

\noindent Let $B=R^\ell_\mathcal{T}(\mathbf{C}_\mathcal{T}-t\Delta \mathbf{C})-R^\ell_{\mathcal{T}^{i,z}}(\mathbf{C}_\mathcal{T}-t\Delta \mathbf{C})-(R^\ell_\mathcal{T}(\mathbf{C}_\mathcal{T})-R^\ell_{\mathcal{T}^{i,z}}(\mathbf{C}_\mathcal{T}))$, we have then
\begin{eqnarray}\label{eq:c5}
\beta(\|\mathbf{C}_\mathcal{T}\|_{\cal{F}}-\|\mathbf{C}_\mathcal{T} -t(\Delta \mathbf{C})\|_{\cal{F}}+\|\mathbf{C}_{\mathcal{T}^{i,z}}\|_{\cal{F}}-\|\mathbf{C}_{\mathcal{T}^{i,z}} +t(\Delta \mathbf{C})\|_{\cal{F}})\leq  B.
\end{eqnarray}

\noindent We now derive a bound for $B$. In the following, $z'_{k_j}\in \mathcal{T}$ denotes the $j^{th}$ landmark associated to $z_k\in \mathcal{T}$ such that $f_{land_\mathcal{T}}(z_k,z'_{k_j})=1$ in $\mathcal{T}$, and $z'^i_{k_j}\in \mathcal{T}^{i,z}$ the $j^{th}$ landmark associated to $z^i_k\in \mathcal{T}^{i,z}$ such that $f_{land_{\mathcal{T}^{i,z}}}(z^i_k,z'^i_{k_j})=1$ in $\mathcal{T}^{i,z}$. 
\begin{eqnarray*}
 B &\leq& |R^\ell_\mathcal{T}(\mathbf{C}_\mathcal{T}-t\Delta \mathbf{C})-R^\ell_{\mathcal{T}^{i,z}}(\mathbf{C}_\mathcal{T}-t\Delta \mathbf{C})-(R^\ell_\mathcal{T}(\mathbf{C}_\mathcal{T})-R^\ell_{\mathcal{T}^{i,z}}(\mathbf{C}_\mathcal{T}))|\\
&\leq&\frac{1}{n_\mathcal{T} n_\mathcal{L}}\left| \sum_{k=1}^{n_\mathcal{T}}\sum_{j=1}^{n_\mathcal{L}} \ell(\mathbf{C}_\mathcal{T}-t\Delta \mathbf{C},z_k,z'_{k_j})-  \sum_{k=1}^{n_\mathcal{T}}\sum_{j=1}^{n_\mathcal{L}} \ell(\mathbf{C}_\mathcal{T}-t\Delta \mathbf{C},z^i_k,z'^i_{k_j}) \right.\\
&&\hspace*{2.5cm}\left.-\left(  \sum_{k=1}^{n_\mathcal{T}}\sum_{j=1}^{n_\mathcal{L}} \ell(\mathbf{C}_\mathcal{T},z_k,z'_{k_j}) -  \sum_{k=1}^{n_\mathcal{T}}\sum_{j=1}^{n_\mathcal{L}} \ell(\mathbf{C}_\mathcal{T},z^i_k,z'^i_{k_j})  \right)\right| \\
&\leq&\frac{1}{n_\mathcal{T} n_\mathcal{L}}\left| \sum_{j=1}^{n_\mathcal{L}} \left(\ell(\mathbf{C}_\mathcal{T}-t\Delta \mathbf{C},z_i,z'_{i_j})-   \ell(\mathbf{C}_\mathcal{T}-t\Delta \mathbf{C},z,z'^i_{j})\right) \right.+\\
&&\hspace*{1.5cm}\sum_{\substack{k=1\\k\neq i}}^{n_\mathcal{T}}\sum_{j=1}^{n_\mathcal{L}}
 \left(\ell(\mathbf{C}_\mathcal{T}-t\Delta \mathbf{C},z_k,z'_{k_j}) - \ell(\mathbf{C}_\mathcal{T}-t\Delta \mathbf{C},z^i_k,z'^i_{k_j})\right)\\
&&\hspace*{2.5cm}\left.-\left(  \sum_{k=1}^{n_\mathcal{T}}\sum_{j=1}^{n_\mathcal{L}} \ell(\mathbf{C}_\mathcal{T},z_k,z'_{k_j}) -  \sum_{k=1}^{n_\mathcal{T}}\sum_{j=1}^{n_\mathcal{L}} \ell(\mathbf{C}_\mathcal{T},z^i_k,z'^i_{k_j})  \right)\right| \\
\end{eqnarray*}
This inequality  is obtained by developing the sum of the first two terms of the second line. The examples $z_i$ in $\mathcal{T}$ and $z$ in $\mathcal{T}^{i,z}$ have $n_\mathcal{L}$ landmarks defined by $f_{land_\mathcal{T}}$ and $f_{land_{\mathcal{T}^{i,z}}}$ respectively.\\
Note that the samples of $n_\mathcal{T}-1$ elements  $\mathcal{T}\backslash\{z_i\}$ and  $\mathcal{T}^{i,z}\backslash\{z\}$ are the same and thus $z_k=z^i_k$ when $k\neq i$. Therefore, for any $z_k\in \mathcal{T}\backslash\{z_i\}$, the sets of landmarks $\mathcal{L}_{\mathcal{T}}^{z_k}=\{z'_{k_j}\in \mathcal{T}|f_{land_\mathcal{T}}(z_k,z'_{k_j})=1\}$ and  $\mathcal{L}_{\mathcal{T}^{i,z}}^{z_k}=\{z'^i_{k_j}\in \mathcal{T}^{i,z}|f_{land_{\mathcal{T}^{i,z}}}(z_k,z'^i_{k_j})=1\}$ differ on at most two elements, say $z_i,z'_{k_{j_2}}\in \mathcal{L}_{\mathcal{T}}^{z_k}\backslash \mathcal{L}_{\mathcal{T}^{i,z}}^{z_k}$ and $z,z'^i_{k_{j_1}}\in \mathcal{L}_{\mathcal{T}^{i,z}}^{z_k}\backslash  \mathcal{L}_{\mathcal{T}}^{z_k}$. Thus, some terms cancel out and we have:
\begin{small}
\begin{eqnarray*}
B&\leq&\frac{1}{n_\mathcal{T} n_\mathcal{L}}\left| \sum_{j=1}^{n_\mathcal{L}} \left(\ell(\mathbf{C}_\mathcal{T}-t\Delta \mathbf{C},z_i,z'_{i_j})-\ell(\mathbf{C}_\mathcal{T}-t\Delta \mathbf{C},z,z'^i_{j})\right) \right.+\sum_{\substack{k=1\\k\neq i}}^{n_\mathcal{T}}
 \left(\ell(\mathbf{C}_\mathcal{T}-t\Delta \mathbf{C},z_k,z_i)\phantom{ \ell(\mathbf{C}_\mathcal{T}-t\Delta \mathbf{C},z_k,z'^i_{k_{j_1}})}\right.\\
&&\hspace*{1.8cm} \left.- \ell(\mathbf{C}_\mathcal{T}-t\Delta \mathbf{C},z_k,z'^i_{k_{j_1}}) + \ell(\mathbf{C}_\mathcal{T}-t\Delta \mathbf{C},z_k,z'_{k_{j_2}}) - \ell(\mathbf{C}_\mathcal{T}-t\Delta \mathbf{C},z_k,z) \right)\\
&&\hspace*{4.4cm}\left.   -\left(  \sum_{k=1}^{n_\mathcal{T}}\sum_{j=1}^{n_\mathcal{L}} \ell(\mathbf{C}_\mathcal{T},z_k,z'_{k_j}) -  \sum_{k=1}^{n_\mathcal{T}}\sum_{j=1}^{n_\mathcal{L}} \ell(\mathbf{C}_\mathcal{T},z_k,z'^i_{k_j})  \right)\right| \\
\end{eqnarray*}
\end{small}The first two lines of the absolute value can be bounded by: $$(2(n_\mathcal{T}-1)+n_\mathcal{L})\sup_{\substack{z_1,z_2\in T\\z_3,z_4\in  \mathcal{T}^{i,z}}}|\ell(\mathbf{C}_\mathcal{T}-t\Delta \mathbf{C},z_1,z_2)-\ell(\mathbf{C}_\mathcal{T}-t\Delta \mathbf{C},z_3,z_4)|.$$ 
The same analysis can be done for the part in parentheses of the last line of the absolute value and we can take the pair of examples in $\mathcal{T}$ and in $\mathcal{T}^{i,z}$ maximizing the whole absolute value to obtain the next inequality:

\begin{eqnarray*}
B&\leq&\frac{2(n_\mathcal{T}-1)+n_\mathcal{L}}{n_\mathcal{T} n_\mathcal{L}} \sup_{\substack{z_1,z_2\in T\\z_3,z_4\in  \mathcal{T}^{i,z}}} \left|\ell(\mathbf{C}_\mathcal{T}-t\Delta \mathbf{C},z_1,z_2)-\ell(\mathbf{C}_\mathcal{T}-t\Delta \mathbf{C},z_3,z_4)\right.\\
&&\hspace*{5.75cm}\left.-\left(\ell(\mathbf{C}_\mathcal{T},z_1,z_2)-\ell(\mathbf{C}_\mathcal{T},z_3,z_4)\right)\right|.\\
\end{eqnarray*}
We continue by applying a reordering of the terms and  the triangular inequality to get the next result:\\
\begin{eqnarray*}
B&\leq&\frac{2(n_\mathcal{T}-1)+n_\mathcal{L}}{n_\mathcal{T} n_\mathcal{L}}\left(\sup_{z_1,z_2\in T}|\ell(\mathbf{C}_\mathcal{T}-t\Delta \mathbf{C},z_1,z_2)-\ell(\mathbf{C}_\mathcal{T},z_1,z_2)|+\right.\\
&&\left.\hspace*{4.5cm}\sup_{z_3,z_4\in  \mathcal{T}^{i,z}}|\ell(\mathbf{C}_\mathcal{T}-t\Delta \mathbf{C},z_3,z_4)-\ell(\mathbf{C}_\mathcal{T} ,z_3,z_4)|\right).\\
\end{eqnarray*}

\noindent We then use twice the k-lipschitz property of $\ell$ which leads to:\\
\begin{eqnarray*}
B&\leq&
\frac{(2n_\mathcal{T}+n_\mathcal{L})}{n_\mathcal{T} n_\mathcal{L}}2k\|-t\Delta \mathbf{C}\|_{\cal{F}}\\
&\leq&
\frac{(2n_\mathcal{T}+n_\mathcal{L})}{n_\mathcal{T} n_\mathcal{L}}t2k\|\Delta \mathbf{C}\|_{\cal{F}}.\\
\end{eqnarray*}
  Then, by applying this bound on $B$ from inequality \eqref{eq:c5}, we get the lemma.
\end{proof}

\subsection{Proof of \lref{lem:espD}}
\label{app:appendixespD}

\paragraph{Lemma} For any learning method of estimation error $D_\mathcal{T}$ and satisfying a uniform stability in $\frac{\kappa}{n_\mathcal{T}}$, we have 
$
\mathbb{E}_\mathcal{T}[D_\mathcal{T}]\leq \frac{2\kappa}{n_\mathcal{T}}.
$

\begin{proof}
First recall that for any $T, z, z'$,  by hypothesis of uniform stability we have:
$$|\ell(\mathbf{C}_\mathcal{T},z,z')   -\ell(\mathbf{C}_{\mathcal{T}^{k,z}},z,z')|\leq \sup_{z_1,z_2} |\ell(\mathbf{C}_\mathcal{T},z_1,z_2)   -\ell(\mathbf{C}_{\mathcal{T}^{k,z}},z_1,z_2)|\leq \frac{\kappa}{n_\mathcal{T}}.$$
Now, we can derive a bound for $\mathbb{E}_\mathcal{T}[D_\mathcal{T}]$.
\begin{eqnarray*}
 \mathbb{E}_\mathcal{T}[D_\mathcal{T}] &\leq&\mathbb{E}_\mathcal{T}[\mathbb{E}_{z,z'}[\ell(\mathbf{C}_\mathcal{T},z,z')]-R^\ell_\mathcal{T}(\mathbf{C}_\mathcal{T})]\\
&\leq& \displaystyle\mathbb{E}_{\substack{\mathcal{T},z,z'}}[|\ell(\mathbf{C}_\mathcal{T},z,z') - \frac{1}{n_\mathcal{T}}\sum_{k=1}^{n_\mathcal{T}}\frac{1}{n_\mathcal{L}}\sum_{j=1}^{n_\mathcal{L}} \ell(\mathbf{C}_\mathcal{T},z_k,z'_{k_j}) |]\\
&\leq& \displaystyle\mathbb{E}_{\substack{\mathcal{T},z,z'}}[|\frac{1}{n_\mathcal{T}}\sum_{k=1}^{n_\mathcal{T}}\frac{1}{n_\mathcal{L}}\sum_{j=1}^{n_\mathcal{L}}(\ell(\mathbf{C}_\mathcal{T},z,z')   -\ell(\mathbf{C}_{\mathcal{T}^{k,z}},z_k,z'_{k_j}) 
+\\
&&\hspace*{3.25cm}\ell(\mathbf{C}_{\mathcal{T}^{k,z}},z_k,z'_{k_j}) -  \ell(\mathbf{C}_\mathcal{T},z_k,z'_{k_j}) )|]\\
&\leq& \displaystyle\mathbb{E}_{\substack{\mathcal{T},z,z'}}[|\frac{1}{n_\mathcal{T}}\sum_{k=1}^{n_\mathcal{T}}\frac{1}{n_\mathcal{L}}\sum_{j=1}^{n_\mathcal{L}}(\ell(\mathbf{C}_\mathcal{T},z,z')   -\ell(\mathbf{C}_{\mathcal{T}^{k,z}},z_k,z'_{k_j}))|] +\\
&& \frac{1}{n_\mathcal{T}}\sum_{k=1}^{n_\mathcal{T}}\frac{1}{n_\mathcal{L}}\sum_{j=1}^{n_\mathcal{L}}\mathbb{E}_{\substack{\mathcal{T},z,z'}}[|
\ell(\mathbf{C}_{\mathcal{T}^{k,z}},z_k,z'_{k_j}) -  \ell(\mathbf{C}_\mathcal{T},z_k,z'_{k_j}) |]\\
&\leq& \displaystyle\mathbb{E}_{\substack{\mathcal{T},z,z'}}[|\frac{1}{n_\mathcal{T}}\sum_{k=1}^{n_\mathcal{T}}\frac{1}{n_\mathcal{L}}\sum_{j=1}^{n_\mathcal{L}}(\ell(\mathbf{C}_\mathcal{T},z,z')   -\ell(\mathbf{C}_{\mathcal{T}^{k,z}},z_k,z'_{k_j}))|] + \frac{\kappa}{n_\mathcal{T}}.\\
\end{eqnarray*}
The last inequality is obtained by applying the hypothesis of uniform stability to the second part of the sum.
Now, since $\mathcal{T}$, $z$ and $z'$ are drawn i.i.d. from distribution $P$, we do not change the expected value by replacing one point with another and thus: 
$$
\mathbb{E}_{\substack{\mathcal{T},z,z'}}[|\ell(\mathbf{C}_\mathcal{T},z,z')-   \ell(\mathbf{C}_\mathcal{T},z_k,z')|]=\mathbb{E}_{\substack{\mathcal{T},z,z'}}[|\ell(\mathbf{C}_\mathcal{T}^{z,k},z_k,z')-   \ell(\mathbf{C}_\mathcal{T},z_k,z')|].
$$
Then, by applying this trick twice on the first element of the sum:
\begin{eqnarray*}
\mathbb{E}_\mathcal{T}[D_\mathcal{T}]&\leq&  \displaystyle\mathbb{E}_{\substack{\mathcal{T},z,z'}}[|\frac{1}{n_\mathcal{T}}\sum_{k=1}^{n_\mathcal{T}}\frac{1}{n_\mathcal{L}}\sum_{j=1}^{n_\mathcal{L}}(\ell(\mathbf{C}_{\mathcal{T}^{k,z}},z_k,z')   -\ell(\mathbf{C}_{\mathcal{T}^{k,z}},z_k,z'_{k_j}))|] + \frac{\kappa}{n_\mathcal{T}}\\
&\leq&  \displaystyle\mathbb{E}_{\substack{\mathcal{T},z,z'}}[|\frac{1}{n_\mathcal{T}}\sum_{k=1}^{n_\mathcal{T}}\frac{1}{n_\mathcal{L}}\sum_{j=1}^{n_\mathcal{L}}(\ell(\mathbf{C}_{\{T^{k,z}\}^{k_j,z'}},z_k,z'_{k_j})   -\ell(\mathbf{C}_{\mathcal{T}^{k,z}},z_k,z'_{k_j}))|] + \frac{\kappa}{n_\mathcal{T}}\\
&\leq& \frac{\kappa}{n_\mathcal{T}}+\frac{\kappa}{n_\mathcal{T}},
\end{eqnarray*}
which gives the lemma.
\end{proof}

\subsection{Proof of \lref{lem:diffD}}
\label{app:appendixdiffD}

\paragraph{Lemma} For any edit cost matrix learned by $GESL_L$ using $n_\mathcal{T}$ training examples and $n_\mathcal{L}$ landmarks, and any loss function $\ell$ satisfying $(\sigma,m)$-admissibility, we have the following bound:
$$
\forall i,1\leq i\leq n_\mathcal{T},\quad\forall z,\quad |D_\mathcal{T} - D_{\mathcal{T}^{i,z}}|\leq \frac{2\kappa}{n_\mathcal{T}} +  \frac{(2n_\mathcal{T}+n_\mathcal{L})(2\sigma+m)}{n_\mathcal{T} n_\mathcal{L}}.
$$

\begin{proof}
First, we derive a bound on $|D_\mathcal{T} - D_{\mathcal{T}^{i,z}}|$.
\begin{eqnarray*}
\lefteqn{|D_\mathcal{T} - D_{\mathcal{T}^{i,z}}|}\\
&&= | R^\ell(\mathbf{C}_\mathcal{T}) -R^\ell_\mathcal{T}(\mathbf{C}_\mathcal{T}) - (R^\ell(\mathbf{C}_{\mathcal{T}^{i,z}}) - R^\ell_{\mathcal{T}^{i,z}}(\mathbf{C}_{\mathcal{T}^{i,z}}))|\\
&&= | R^\ell(\mathbf{C}_\mathcal{T}) -R^\ell_\mathcal{T}(\mathbf{C}_\mathcal{T}) - R^\ell(\mathbf{C}_{\mathcal{T}^{i,z}}) + R^\ell_{\mathcal{T}^{i,z}}(\mathbf{C}_{\mathcal{T}^{i,z}}) + R^\ell_\mathcal{T}(\mathbf{C}_{\mathcal{T}^{i,z}}) - R^\ell_\mathcal{T}(\mathbf{C}_{\mathcal{T}^{i,z}})|\\
&&= | R^\ell(\mathbf{C}_\mathcal{T}) - R^\ell(\mathbf{C}_{\mathcal{T}^{i,z}})+R^\ell_\mathcal{T}(\mathbf{C}_{\mathcal{T}^{i,z}}) -R^\ell_\mathcal{T}(\mathbf{C}_\mathcal{T})  + R^\ell_{\mathcal{T}^{i,z}}(\mathbf{C}_{\mathcal{T}^{i,z}})  - R^\ell_\mathcal{T}(\mathbf{C}_{\mathcal{T}^{i,z}})|\\
&&\leq|R^\ell(\mathbf{C}_\mathcal{T})-R^\ell(\mathbf{C}_{\mathcal{T}^{i,z}})|+|R^\ell_\mathcal{T}(\mathbf{C}_{\mathcal{T}^{i,z}})-R^\ell_\mathcal{T}(\mathbf{C}_\mathcal{T})|+
|R^\ell_{\mathcal{T}^{i,z}}(\mathbf{C}_{\mathcal{T}^{i,z}})-R^\ell_{T}(\mathbf{C}_{\mathcal{T}^{i,z}})|\\
&&\leq \mathbb{E}_{z_1,z_2}[|\ell(\mathbf{C}_\mathcal{T},z_1,z_2)-\ell(\mathbf{C}_{\mathcal{T}^{i,z}},z_1,z_2)|] + \\
&&\frac{1}{n_\mathcal{T}}\sum_{k=1}^{n_\mathcal{T}} \frac{1}{n_\mathcal{L}}\sum_{j=1}^{n_\mathcal{L}} |\ell(\mathbf{C}_{\mathcal{T}^{i,z}},z_k,z'_{k_j})-\ell(\mathbf{C}_\mathcal{T},z_k,z'_{k_j})| +|R^\ell_{\mathcal{T}^{i,z}}(\mathbf{C}_{\mathcal{T}^{i,z}})-R^\ell_{T}(\mathbf{C}_{\mathcal{T}^{i,z}})|\\
&&\leq 2\frac{\kappa}{n_\mathcal{T}}+|R^\ell_{\mathcal{T}^{i,z}}(\mathbf{C}_{\mathcal{T}^{i,z}})-R^\ell_{T}(\mathbf{C}_{\mathcal{T}^{i,z}})| \textrm{ by using the hypothesis of stability twice.}
\end{eqnarray*}
Now, proving \lref{lem:diffD} boils down to bounding the last term above. Using arguments similar to those used in the second part of the proof of \lref{lem:convexN2}, we get
$$
|R^\ell_{\mathcal{T}^{i,z}}(\mathbf{C}_{\mathcal{T}^{i,z}})-R^\ell_{T}(\mathbf{C}_{\mathcal{T}^{i,z}})|\leq\frac{(2n_\mathcal{T}+n_\mathcal{L})}{n_\mathcal{T} n_\mathcal{L}}\sup_{\substack{z_1,z_2\in T\\z_3,z_4\in  \mathcal{T}^{i,z}}}|\ell(\mathbf{C}_{\mathcal{T}^{i,z}},z_1,z_2) - \ell(\mathbf{C}_{\mathcal{T}^{i,z}},z_3,z_4)|.
$$
\noindent Now by the $(\sigma,m)$-admissibility of $\ell$, we have that: 

\begin{equation*}
|\ell(\mathbf{C}_{\mathcal{T}^{i,z}},z_1,z_2) - \ell(\mathbf{C}_{\mathcal{T}^{i,z}},z_3,z_4)|\leq
\sigma|y_1y_2-y_3y_4|+m\leq 2\sigma+m, 
\end{equation*} 
since whatever the labels, $|y_1y_2-y_3y_4|\leq 2$. This leads us to the desired result.
\end{proof}

\subsection{Proof of \lref{lem:k-lips-V}}
\label{app:appendix1}

\paragraph{Lemma} The function $\ell_{HL}$ is $k$-lipschitz with $k=W$.

\begin{proof}
We need to bound $|\ell_{HL}(\mathbf{C},z,z')-\ell_{HL}(\mathbf{C}',z,z')|$ which implies to consider two cases:  when z and z' have the same labels and when they have different labels. We consider here the first case, the second one can be easily derived from the first one ($B_1$ playing the same role as $B_2$).
\begin{small}
\begin{eqnarray*}
|\ell_{HL}(\mathbf{C},z,z')-\ell_{HL}(\mathbf{C}',z,z')|&\leq& |[\sum_{l,c} C_{l,c}\#_{l,c}(\mathsf{x},\mathsf{x'})-B_2]_+-[\sum_{l,c} C'_{l,c}\#_{l,c}(\mathsf{x},\mathsf{x'})-B_2]_+| \\
&\leq&|\sum_{l,c} C_{l,c}\#_{l,c}(\mathsf{x},\mathsf{x'})-B_2- (\sum_{l,c} C'_{l,c}\#_{l,c}(\mathsf{x},\mathsf{x'})-B_2)|\\
&\leq&|\sum_{l,c} (C_{l,c} - C'_{l,c})\#_{l,c}(\mathsf{x},\mathsf{x'})|\\
&\leq& \|\mathbf{C}-\mathbf{C'}\|_{\cal{F}}\|\boldsymbol{\#}(\mathsf{x},\mathsf{x'})\|_{\cal{F}}\\
&\leq& W \|\mathbf{C}-\mathbf{C'}\|_{\cal{F}}.
\end{eqnarray*}
\end{small}The second line is obtained by the 1-lipschitz property of the hinge loss:
$$|[U]_+-[V]_+|\leq |U-V|.$$

\noindent The fourth one comes from the Cauchy-Schwartz inequality:
$$|\sum_{i=1}^n\sum_{j=1}^m A_{i,j} B_{i,j}| \leq \|\mathbf{A}\|_{\cal{F}}\|\mathbf{B}\|_{\cal{F}}.$$

\noindent Finally, since by hypothesis $\|\boldsymbol{\#}(z,z')\|_{\cal{F}}\leq  W$, the lemma holds. 
\end{proof}

\subsection{Proof of \lref{lem:boundC}}
\label{app:appendixboundC}

\paragraph{Lemma} Let $(\mathbf{C}_\mathcal{T},B_1,B_2)$ an optimal solution learned by $GESL_{HL}$ from a training sample $\mathcal{T}$, and let $B_\gamma=max(\eta_\gamma,-log(1/2))$. Then 
$
\|\mathbf{C}_\mathcal{T}\|_{\cal{F}}\leq \sqrt{\frac{B_\gamma}{\beta}}.
$

\begin{proof}
Since $(\mathbf{C}_\mathcal{T},B_1,B_2)$ is an optimal solution, the value reached by the objective function is lower than the one obtained with  $(\boldsymbol{0},B_\gamma,0)$, where $\boldsymbol{0}$ denotes the matrix of zeros:
$$ 
\displaystyle\frac{1}{n_\mathcal{T}}\sum_{k=1}^{n_\mathcal{T}} \frac{1}{n_\mathcal{L}}\sum_{j=1}^{n_\mathcal{L}}\ell_{HL}(\mathbf{C},z_k,z'_{k_j})+\beta\| \mathbf{C}_\mathcal{T}\|^2_{\cal{F}} \leq \frac{1}{n_\mathcal{T}} \sum_{k=1}^{n_\mathcal{T}}\frac{1}{n_\mathcal{L}} \sum_{j=1}^{n_\mathcal{L}} \ell_{HL}(\boldsymbol{0},z_k,z'_{k_j})+\beta\| \boldsymbol{0}\|^2_{\cal{F}}\leq B_\gamma.
$$
For the last inequality, note that regardless of the labels of $z_k$ and $z'_{k_j}$, $\ell_{HL}(\boldsymbol{0},z_k,z'_{k_j})$ is bounded either by $B_\gamma$ or $0$.
Since
$$\frac{1}{n_\mathcal{T}}\sum_{k=1}^{n_\mathcal{T}}\frac{1}{n_\mathcal{L}}\sum_{j=1}^{n_\mathcal{L}} \ell_{HL}(\mathbf{C},z_k,z'_{k_j})\geq 0,$$
we get $\beta\|\mathbf{C}_\mathcal{T}\|^2_{\cal{F}}\leq B_\gamma$.
\end{proof}

\section[Proofs of Chapter~\ref*{chap:nips}]{Proofs of \cref{chap:nips}}

\subsection{Proof of \thref{thm:pseudorobustess} (pseudo-robustness)}
\label{app:proofpseudo}

\paragraph{Theorem} If a learning algorithm $\mathcal{A}$ is
$(K,\epsilon(\cdot),\hat{p}_n(\cdot))$ pseudo-robust and the training pairs $\mathcal{P}_\mathcal{T}$ come from a sample generated by $n$ i.i.d. draws from $P$, then
for any $\delta>0$, with probability at least $1-\delta$ we have:
$$
|R^\ell(\mathcal{A}_{\mathcal{P}_\mathcal{T}})-R^\ell_{\mathcal{P}_\mathcal{T}}(\mathcal{A}_{\mathcal{P}_\mathcal{T}})|\leq \frac{\hat{p}_n(\mathcal{P}_\mathcal{T})}{n^2}\epsilon(\mathcal{P}_\mathcal{T})+B\left(\frac{n^2-\hat{p}_n(\mathcal{P}_\mathcal{T})}{n^2}+2\sqrt{\frac{2K \ln 2 + 2\ln (1/\delta)}{n}}\right).
$$

\begin{proof}
From the proof of \thref{thm:robu}, we can easily deduce that:
\begin{eqnarray*}
\lefteqn{|R^\ell(\mathcal{A}_{\mathcal{P}_\mathcal{T}})-R^\ell_{\mathcal{P}_\mathcal{T}}(\mathcal{A}_{\mathcal{P}_\mathcal{T}})|\leq 2B\sum_{i=1}^K|\frac{|N_i|}{n}-\mu(C_i)|+}\\
&&\left|\sum_{i=1}^K\sum_{j=1}^K\mathbb{E}_{z,z'\sim
    P}[\ell(\mathcal{A}_{\mathcal{P}_\mathcal{T}},z,z')|z\in C_i,z'\in
  C_j]\frac{|N_i|}{n}\frac{|N_j|}{n}
-\frac{1}{n^2}\sum_{i=1}^n\sum_{j=1}^n\ell(\mathcal{A}_{\mathcal{P}_\mathcal{T}},z_i,z_j)\right|.
\end{eqnarray*}
Then, we have
\begin{eqnarray*}
&\leq&2B\sum_{i=1}^K|\frac{|N_i|}{n}-\mu(C_i)| +\\
&&\left|\frac{1}{n^2}\sum_{i=1}^K\sum_{j=1}^K\sum_{(z_o,z_l)\in\hat{\mathcal{P}}_\mathcal{T}}
\sum_{z_o\in N_i}\sum_{z_l\in  N_j}\max_{z\in C_i}\max_{z'\in C_j}|\ell(\mathcal{A}_{\mathcal{P}_\mathcal{T}},z,z')-\ell(\mathcal{A}_{\mathcal{P}_\mathcal{T}},z_o,z_l)|\right|+\\
&&\left|\frac{1}{n^2}\sum_{i=1}^K\sum_{j=1}^K\sum_{(z_o,z_l)\not\in\hat{\mathcal{P}}_\mathcal{T}}\sum_{z_o\in N_i}\sum_{z_l\in  N_j}\max_{z\in C_i}\max_{z'\in C_j}|\ell(\mathcal{A}_{\mathcal{P}_\mathcal{T}},z,z')- \ell(\mathcal{A}_{\mathcal{P}_\mathcal{T}},z_o,z_l)|\right|\\
&\leq&\frac{\hat{p}_n(\mathcal{P}_\mathcal{T})}{n^2}\epsilon(\mathcal{P}_\mathcal{T})+B\left(\frac{n^2-\hat{p}_n(\mathcal{P}_\mathcal{T})}{n^2}+2\sqrt{\frac{2K \ln 2 + 2\ln (1/\delta)}{n}}\right).
\end{eqnarray*}
The second inequality is obtained by the triangle inequality, the last one is obtained by the application of \propref{prop:BHC}, the hypothesis of pseudo-robustness and the fact that $\ell$ is nonnegative and bounded by $B$  and thus $|\ell(\mathcal{A}_{\mathcal{P}_\mathcal{T}},z,z')-\ell(\mathcal{A}_{\mathcal{P}_\mathcal{T}},z_o,z_l)|\leq B$.
\end{proof}

\subsection{Proof of sufficiency of \thref{thm:weak}}
\label{app:proofsuff}

\paragraph{Theorem} Given a fixed sequence of training examples $\mathcal{T}^*$, a metric learning
method $\mathcal{A}$ generalizes with respect to $\mathcal{P}_{\mathcal{T}^*}$ if and only if
it is weakly robust with respect to $\mathcal{P}_{\mathcal{T}^*}$.

\begin{proof} The proof of sufficiency corresponds to the first part of the proof of Theorem~8 of \citet{Xu2012a}. 
When  $\mathcal{A}$ is weakly robust there exists a sequence $\{\mathcal{D}_n\}$
such that for any $\delta,\epsilon>0$ there exists
$N(\delta,\epsilon)$ such that for all $n>N(\delta,\epsilon)$,
$Pr(\mathcal{U}(n)\in \mathcal{D}_n)>1-\delta$ and
\begin{equation}\label{eq:d_n}
\max_{\hat{\mathcal{T}}(n)\in \mathcal{D}_n}\left|R^\ell_{\mathcal{P}_{\hat{\mathcal{T}}(n)}}(\mathcal{A}_{\mathcal{P}_{\mathcal{T}^*(n)}})-R^\ell_{\mathcal{P}_{\mathcal{T}^*(n)}}(\mathcal{A}_{\mathcal{P}_{\mathcal{T}^*(n)}})\right|<\epsilon.
\end{equation} 
Therefore for any  $n>N(\delta,\epsilon)$,
\begin{eqnarray*}
\lefteqn{|R^\ell(\mathcal{A}_{\mathcal{P}_{\mathcal{T}^*(n)}})-R^\ell_{\mathcal{P}_{\mathcal{T}^*(n)}}(\mathcal{A}_{\mathcal{P}_{\mathcal{T}^*(n)}})|}\\
&=&|\mathbb{E}_{\mathcal{U}(n)}[R^\ell_{\mathcal{P}_{\mathcal{U}(n)}}(\mathcal{A}_{\mathcal{P}_{\mathcal{T}^*(n)}})]-R^\ell_{\mathcal{P}_{\mathcal{T}^*(n)}}(\mathcal{A}_{\mathcal{P}_{\mathcal{T}^*(n)}})|\\\
&=&|Pr(\mathcal{U}(n)\not\in \mathcal{D}_n)\mathbb{E}[R^\ell_{\mathcal{P}_{\mathcal{U}(n)}}(\mathcal{A}_{p_{\mathcal{T}^*(n)}})|\mathcal{U}(n)\not\in
\mathcal{D}_n]\\
&&+Pr(\mathcal{U}(n)\in \mathcal{D}_n)\mathbb{E}[R^\ell_{\mathcal{P}_{\mathcal{U}(n)}}(\mathcal{A}_{p_{\mathcal{T}^*(n)}})|\mathcal{U}(n)\in
\mathcal{D}_n]-
R^\ell_{\mathcal{P}_{\mathcal{T}^*(n)}}(\mathcal{A}_{p_{\mathcal{T}^*(n)}})|\\
&\leq&Pr(\mathcal{U}(n)\not\in \mathcal{D}_n)|\mathbb{E}[R^\ell_{\mathcal{P}_{\mathcal{U}(n)}}(\mathcal{A}_{p_{\mathcal{T}^*(n)}})|\mathcal{U}(n)\not\in
\mathcal{D}_n]-R^\ell_{\mathcal{P}_{\mathcal{T}^*(n)}}(\mathcal{A}_{p_{\mathcal{T}^*(n)}})|+\\
&&Pr(\mathcal{U}(n)\in \mathcal{D}_n)|\mathbb{E}[R^\ell_{\mathcal{P}_{\mathcal{U}(n)}}(\mathcal{A}_{p_{\mathcal{T}^*(n)}})|\mathcal{U}(n)\in
\mathcal{D}_n]-R^\ell_{\mathcal{P}_{\mathcal{T}^*(n)}}(\mathcal{A}_{p_{\mathcal{T}^*(n)}})|\\
&\leq&\delta
B+\max_{\hat{\mathcal{T}}(n)\in\mathcal{D}_n}|R^\ell_{\mathcal{P}_{\hat{\mathcal{T}}(n)}}(\mathcal{A}_{\mathcal{P}_{\mathcal{T}^*(n)}})-R^\ell_{\mathcal{P}_{\mathcal{T}^*(n)}}(\mathcal{A}_{\mathcal{P}_{\mathcal{T}^*(n)}})|\\ &\leq&
\delta B+\epsilon.
\end{eqnarray*}
The first inequality holds because the testing samples $\mathcal{U}(n)$
consist of $n$ instances IID from $P$. The second equality
is obtained by conditional expectation. The next inequality uses the fact that $\ell$ is nonnegative and upper bounded by $B$. Finally, we apply \eqref{eq:d_n}. 
We thus conclude that $\mathcal{A}$ generalizes for $\mathcal{P}_{\mathcal{T}^*}$
because $\epsilon$ and $\delta$ can be chosen arbitrarily.
\end{proof}

\subsection{Proof of \lref{lem:div}}
\label{app:prooflem1}

\paragraph{Lemma} Given $\mathcal{T}^*$, if a learning method is not weakly robust
with respect to $\mathcal{P}_{\mathcal{T}^*}$, there exist $\epsilon^*,\delta^*>0$ such that
the following holds for infinitely many $n$:
\begin{equation*}
Pr(|R^\ell_{\mathcal{P}_{\mathcal{U}(n)}}(\mathcal{A}_{\mathcal{P}_{\mathcal{T}^*(n)}})-R^\ell_{\mathcal{P}_{\mathcal{T}^*(n)}}(\mathcal{A}_{\mathcal{P}_{\mathcal{T}^*(n)}})|\geq\epsilon^*)\geq \delta^*.
\end{equation*}

\begin{proof}
This proof follows exactly the same principle as the proof of Lemma~2 from \citet{Xu2012a}.  
By contradiction, assume $\epsilon^*$ and $\delta^*$ do not exist. Let
$\epsilon_v=\delta_v=1/v$ for $v=1,2, ...$, then there exists a non
decreasing sequence $\{N(v)\}_{v=1}^\infty$ such that for all $v$, if
$n\geq N(v)$ then
$$Pr(|R^\ell_{\mathcal{P}_{\mathcal{U}(n)}}(\mathcal{A}_{\mathcal{P}_{\mathcal{T}^*(n)}})-R^\ell_{\mathcal{P}_{\mathcal{T}^*(n)}}(\mathcal{A}_{\mathcal{P}_{\mathcal{T}^*(n)}})|\geq
\epsilon_v)<\delta_v.$$ 
For each $n$ we define 
$$
\mathcal{D}_n^v\triangleq\{\hat{\mathcal{T}}(n)|R^\ell_{\mathcal{P}_{\hat{\mathcal{T}}(n)}}(\mathcal{A}_{\mathcal{P}_{\mathcal{T}^*(n)}})-R^\ell_{\mathcal{P}_{\mathcal{T}^*(n)}}(\mathcal{A}_{\mathcal{P}_{\mathcal{T}^*(n)}})|<\epsilon_v\}.
$$
For each $n\geq N(v)$ we have 
$$Pr(\mathcal{U}(n)\in
\mathcal{D}_n^v)=1-Pr(|R^\ell_{\mathcal{P}_{\mathcal{U}(n)}}(\mathcal{A}_{\mathcal{P}_{\mathcal{T}^*(n)}})-R^\ell_{\mathcal{P}_{\mathcal{T}^*(n)}}(\mathcal{A}_{\mathcal{P}_{\mathcal{T}^*(n)}})|\geq
\epsilon_v)>1-\delta_v.$$
For $n\geq N(1)$, define $\mathcal{D}_n\triangleq \mathcal{D}_n^{v(n)}$, where $v(n)=\max(v|N(v)\leq
n; v\leq n)$. Thus for all, $n\geq N(1)$ we have $Pr(\mathcal{U}(n)\in
\mathcal{D}_n)>1-\delta_{v(n)}$ and 
$$\sup_{\hat{\mathcal{T}}(n)\in \mathcal{D}_n}|R^\ell_{\mathcal{P}_{\hat{\mathcal{T}}(n)}}(\mathcal{A}_{\mathcal{P}_{\mathcal{T}^*(n)}})-R^\ell_{\mathcal{P}_{\mathcal{T}^*(n)}}(\mathcal{A}_{\mathcal{P}_{\mathcal{T}^*(n)}})|<\epsilon_{v(n)}.$$ 
Note that $v(n)$ tends to infinity, it follows that
$\delta_{v(n)}\rightarrow 0$ and $\epsilon_{v(n)}\rightarrow 0$. 
Therefore, $Pr(\mathcal{U}(n)\in \mathcal{D}_n)\rightarrow 1$ and 
$$
\lim_{n\rightarrow \infty}\{\sup_{\hat{\mathcal{T}}(n)\in \mathcal{D}_n}|R^\ell_{\mathcal{P}_{\hat{\mathcal{T}}(n)}}(\mathcal{A}_{\mathcal{P}_{\mathcal{T}^*(n)}})-R^\ell_{\mathcal{P}_{\mathcal{T}^*(n)}}(\mathcal{A}_{\mathcal{P}_{\mathcal{T}^*(n)}})|\}=0.
$$
That is $\mathcal{A}$ is weakly robust with respect to $\mathcal{P}_\mathcal{T}$, which is the
desired contradiction.
\end{proof}

\subsection{Proof of \exref{ex:ex2} ($L_1$ norm)}
\label{app:proofex2}

\paragraph{Example} Algorithm \eqref{eq:generalform} with $\|\mathbf{M}\| = \|\mathbf{M}\|_1$ is $(|\mathcal{Y}|\mathcal{N}(\gamma,\mathcal{X},\|\cdot\|_1),\frac{8UR\gamma g_0}{C})$-robust.

\begin{proof}
Let $\mathbf{M^*}$ be the solution given training data $\mathcal{P}_\mathcal{T}$. Due to optimality of $\mathbf{M^*}$, we have $\|\mathbf{M^*}\|_1 \leq g_0/C$.
We can partition $\mathcal{Z}$ as $|\mathcal{Y}|\mathcal{N}(\gamma/2,\mathcal{X},\|\cdot\|_1)$ sets, such that if $z$ and $z'$ belong to the same set, then $y=y'$ and $\|\mathbf{x}-\mathbf{x'}\|_1 \leq \gamma$. Now, for $z_1,z_2,z_1',z_2'\in\mathcal{Z}$, if $y_1=y'_1$, $\|\mathbf{x_1}-\mathbf{x_1'}\|_1 \leq \gamma$, $y_2=y'_2$ and $\|\mathbf{x_2}-\mathbf{x_2'}\|_1 \leq \gamma$, then:
\begin{eqnarray*}
\lefteqn{|g(y_1y_2[1-d_\mathbf{M^*}^2(\mathbf{x_1},\mathbf{x_2})]) - g(y_1'y_2'[1-d_\mathbf{M^*}^2(\mathbf{x_1'},\mathbf{x_2'})])|}\\
& \leq & U (|(\mathbf{x_1}-\mathbf{x_2})^T\mathbf{M^*}(\mathbf{x_1}-\mathbf{x_1'})| + |(\mathbf{x_1}-\mathbf{x_2})^T\mathbf{M^*}(\mathbf{x_2'}-\mathbf{x_2})|\\
& & + ~|(\mathbf{x_1}-\mathbf{x_1'})^T\mathbf{M^*}(\mathbf{x_1'}+\mathbf{x_2'})| + |(\mathbf{x_2'}-\mathbf{x_2})^T\mathbf{M^*}(\mathbf{x_1'}+\mathbf{x_2'})|)\\
& \leq & U(\|\mathbf{x_1}-\mathbf{x_2}\|_\infty\|\mathbf{M^*}\|_1\|\mathbf{x_1}-\mathbf{x_1'}\|_1 + \|\mathbf{x_1}-\mathbf{x_2}\|_\infty\|\mathbf{M^*}\|_1\|\mathbf{x_2'}-\mathbf{x_2}\|_1\\
& & + ~\|\mathbf{x_1}-\mathbf{x_1'}\|_1\|\mathbf{M^*}\|_1\|\mathbf{x_1'}-\mathbf{x_2'}\|_\infty + \|\mathbf{x_2'}-\mathbf{x_2}\|_1\|\mathbf{M^*}\|_1\|\mathbf{x_1'}-\mathbf{x_2'}\|_\infty)\\
& \leq & \frac{8UR\gamma g_0}{C}.
\end{eqnarray*}
\end{proof}

\subsection{Proof of \exref{ex:ex3} ($L_{2,1}$ norm and trace norm)}
\label{app:proofex3}

\paragraph{Example} Algorithm \eqref{eq:generalform} with $\|\mathbf{M}\| = \|\mathbf{M}\|_{2,1}$ or $\|\mathbf{M}\| = \|\mathbf{M}\|_*$ is $(|\mathcal{Y}|\mathcal{N}(\gamma,\mathcal{X},\|\cdot\|_2),\frac{8UR\gamma g_0}{C})$-robust.

\begin{proof}
We can prove the robustness for the $L_{2,1}$ norm and the trace norm in the same way. Let $\|M\|$ be either the $L_{2,1}$ norm or the trace norm and $\mathbf{M^*}$ be the solution given training data $\mathcal{P}_\mathcal{T}$. Due to optimality of $\mathbf{M^*}$, we have $\|\mathbf{M^*}\| \leq g_0/C$. We can partition $\mathcal{Z}$ in the same way as in the proof of \exref{ex:ex1} and use the inequality $\|\mathbf{M^*}\|_{\mathcal{F}} \leq \|\mathbf{M^*}\|_{2,1}$ \citep[from Theorem~3 of][]{Feng2003} for the $L_{2,1}$ norm or the well-known inequality $\|\mathbf{M^*}\|_{\mathcal{F}} \leq \|\mathbf{M^*}\|_*$ for the trace norm to derive the same bound:
\begin{eqnarray*}
\lefteqn{|g(y_1y_2[1-d_\mathbf{M^*}^2(\mathbf{x_1},\mathbf{x_2})]) - g(y_1'y_2'[1-d_\mathbf{M^*}^2(\mathbf{x_1'},\mathbf{x_2'})])|}\\
& \leq & U(\|\mathbf{x_1}-\mathbf{x_2}\|_2\|\mathbf{M^*}\|_{\mathcal{F}}\|\mathbf{x_1}-\mathbf{x_1'}\|_2 + \|\mathbf{x_1}-\mathbf{x_2}\|_2\|\mathbf{M^*}\|_{\mathcal{F}}\|\mathbf{x_2'}-\mathbf{x_2}\|_2\\
& & + ~\|\mathbf{x_1}-\mathbf{x_1'}\|_2\|\mathbf{M^*}\|_{\mathcal{F}}\|\mathbf{x_1'}-\mathbf{x_2'}\|_2 + \|\mathbf{x_2'}-\mathbf{x_2}\|_2\|\mathbf{M^*}\|_{\mathcal{F}}\|\mathbf{x_1'}-\mathbf{x_2'}\|_2)\\
& \leq & U(\|\mathbf{x_1}-\mathbf{x_2}\|_2\|\mathbf{M^*}\|\|\mathbf{x_1}-\mathbf{x_1'}\|_2 + \|\mathbf{x_1}-\mathbf{x_2}\|_2\|\mathbf{M^*}\|\|\mathbf{x_2'}-\mathbf{x_2}\|_2\\
& & + ~\|\mathbf{x_1}-\mathbf{x_1'}\|_2\|\mathbf{M^*}\|\|\mathbf{x_1'}-\mathbf{x_2'}\|_2 + \|\mathbf{x_2'}-\mathbf{x_2}\|_2\|\mathbf{M^*}\|\|\mathbf{x_1'}-\mathbf{x_2'}\|_2)\\
& \leq & \frac{8UR\gamma g_0}{C}.
\end{eqnarray*}
\end{proof}

\subsection{Proof of \exref{ex:kernel} (Kernelization)}
\label{app:proofexkernel}

\paragraph{Example}  Consider the kernelized version of Algorithm~\eqref{eq:generalform}:
\begin{eqnarray*}
\displaystyle\min_{\mathbf{M} \succeq 0} & \frac{1}{n^2}\displaystyle\sum_{(z_i,z_j)\in \mathcal{P}_\mathcal{T}}g(y_iy_j[1-d_\mathbf{M}^2(\phi(\mathbf{x_i}),\phi(\mathbf{x_j}))]) \quad+\quad C\|\mathbf{M}\|_\mathbb{H},
\end{eqnarray*}
where $\phi(\cdot)$ is a feature mapping to a kernel space $\mathbb{H}$,
$\|\cdot\|_{\mathbb{H}}$ the norm function of $\mathbb{H}$ and
$k(\cdot,\cdot)$ the kernel function. 
Consider a cover of $\mathcal{X}$ by $\|\cdot\|_2$ ($\mathcal{X}$ being compact) and let
$$f_{\mathbb{H}}(\gamma)=\max_{\mathbf{a},\mathbf{b}\in \mathcal{X},
  \|\mathbf{a}-\mathbf{b}\|_2\leq \gamma}K(\mathbf{a},\mathbf{a})+K(\mathbf{b},\mathbf{b})-2K(\mathbf{a},\mathbf{b})\quad\text{and}\quad B_\gamma=\max_{x\in \mathcal{X}}\sqrt{K(\mathbf{x},\mathbf{x})}.$$
If the kernel function is
continuous, $B_\gamma$ and $f_{\mathbb{H}}$ are finite for any
$\gamma>0$ and thus the algorithm is
$(|\mathcal{Y}|\mathcal{N}(\gamma,\mathcal{X},\|\cdot\|_2),\frac{8 U B_\gamma
  \sqrt{f_{\mathbb{H}}} g_0}{C})$-robust.

\begin{proof}
We assume $\mathbb{H}$ to be an Hilbert space with an inner product operator $\innerp{\cdot,\cdot}$. The mapping $\phi$ is continuous from $\mathcal{X}$ to $\mathbb{H}$. The norm $\|\cdot\|_{\mathbb{H}}:\mathbb{H}\rightarrow \mathbb{R}$ is defined as $\|\mathbf{x}\|_{\mathbb{H}}=\sqrt{\innerp{\mathbf{x},\mathbf{x}}}$ for all $\mathbf{x}\in \mathbb{H}$, for matrices $\|\mathbf{M}\|_{\mathbb{H}}$ we take the Frobenius norm. The kernel function is defined as $K(\mathbf{x_1},\mathbf{x_2})=\innerp{\phi(\mathbf{x_1}),\phi(\mathbf{x_2})}$. 

$B_\gamma$ and $f_{\mathbb{H}}(\gamma)$ are finite by the compactness of $\mathcal{X}$ and continuity of $K(\cdot,\cdot)$. Let $\mathbf{M^*}$ be the solution given training data $\mathcal{P}_\mathcal{T}$, by the optimality of $\mathbf{M^*}$ and using the same trick as for the previous example proofs we have $\|\mathbf{M^*}\|_{\mathbb{H}} \leq g_0/c$. 
Then, by considering a partition of $\mathcal{Z}$ into $|\mathcal{Y}|\mathcal{N}(\gamma/2,\mathcal{X},\|\cdot\|_2)$ disjoint subsets such that if $(\mathbf{x_1},y_1)$ and $(\mathbf{x_2},y_2)$ belong to the same set then $y_1=y_2$ and $\|\mathbf{x_1}-\mathbf{x_2}\|_2\leq \gamma$. 

We have:
\begin{eqnarray}
\lefteqn{|g(y_1y_2[1-d_\mathbf{M^*}^2(\phi(\mathbf{x_1}),\phi(\mathbf{x_2}))]) - g(y_1'y_2'[1-d_\mathbf{M^*}^2(\phi(\mathbf{x_1'}),\phi(\mathbf{x_2'}))])|}\nonumber\\
& \leq & U( |(\phi(\mathbf{x_1})-\phi(\mathbf{x_2}))^T\mathbf{M^*}(\phi(\mathbf{x_1})-\phi(\mathbf{x_1'}))|
+ |(\phi(\mathbf{x_1})-\phi(\mathbf{x_2}))^T\mathbf{M^*}(\phi(\mathbf{x_2'})-\phi(\mathbf{x_2}))|\nonumber\\
& & + ~|(\phi(\mathbf{x_1})-\phi(\mathbf{x_1'}))^T\mathbf{M^*}(\phi(\mathbf{x_1'})+\phi(\mathbf{x_2'}))| +
|(\phi(\mathbf{x_2'})-\phi(\mathbf{x_2}))^T\mathbf{M^*}(\phi(\mathbf{x_1'})+\phi(\mathbf{x_2'}))|)\nonumber\\
&\leq&U(|\phi(\mathbf{x_1})^T\mathbf{M^*}(\phi(\mathbf{x_1})-\phi(\mathbf{x_1'}))| +|\phi(\mathbf{x_2})^T\mathbf{M^*}(\phi(\mathbf{x_1})-\phi(\mathbf{x_1'}))|+\label{eq:l}\\
&&|\phi(\mathbf{x_1})^T\mathbf{M^*}(\phi(\mathbf{x_2'})\phi(\mathbf{x_2}))| +|\phi(\mathbf{x_2})^T\mathbf{M^*}(\phi(\mathbf{x_2'})-\phi(\mathbf{x_2}))|+\nonumber\\
&&|(\phi(\mathbf{x_1})-\phi(\mathbf{x_1'}))^T\mathbf{M^*}\phi(\mathbf{x_1'})| +|(\phi(\mathbf{x_1})-\phi(\mathbf{x_1'}))^T\mathbf{M^*}\phi(\mathbf{x_2'})|+\nonumber\\
&&|(\phi(\mathbf{x_2'})-\phi(\mathbf{x_2}))^T\mathbf{M^*}\phi(\mathbf{x_1'})| +|(\phi(\mathbf{x_2'})-\phi(\mathbf{x_2}))^T\mathbf{M^*}\phi(\mathbf{x_2'})|).\nonumber
\end{eqnarray}

Then, note that 
\begin{small}
\begin{eqnarray*}
|\phi(\mathbf{x_1})^T\mathbf{M^*}(\phi(\mathbf{x_1})-\phi(\mathbf{x_1'}))|&\leq &
\sqrt{\langle\phi(\mathbf{x_1}),\phi(\mathbf{x_1})\rangle} \|\mathbf{M}^*\|_{\mathbb{H}}\sqrt{\langle\phi(\mathbf{x_1'})-\phi(\mathbf{x_2'}),\phi(\mathbf{x_1'})-\phi(\mathbf{x_2'})\rangle}\\
&\leq& B_\gamma\frac{g_o}{C}\sqrt{f_{\mathbb{H}}(\gamma)}.
\end{eqnarray*}
\end{small}Thus, by applying the same principle to all the terms in the right part of inequality \eqref{eq:l}, we obtain:
\begin{eqnarray*}
|g(y_1y_2[1-d_\mathbf{M^*}^2(\phi(\mathbf{x_1}),\phi(\mathbf{x_2}))]) -
g(y_{ij}[1-d_\mathbf{M^*}^2(\phi(\mathbf{x_1'}),\phi(\mathbf{x_2'}))])|& \leq & \frac{8U B_\gamma \sqrt{f_{\mathbb{H}}(\gamma)}  g_0}{C}.
\end{eqnarray*}
\end{proof}

\backmatter
\bibliographystyle{ecs}
\bibliography{Thesis}
\begin{abstract}
\begin{scriptsize}

\paragraph{Abstract} In recent years, the crucial importance of metrics in machine learning
algorithms has led to an increasing interest in optimizing distance
and similarity functions using knowledge from training data to make them suitable for the problem at hand.
This area of research is known as \emph{metric learning}. Existing methods typically aim at optimizing the parameters of a given metric with respect to some local constraints over the training sample. The learned metrics are generally used in nearest-neighbor and clustering algorithms.
When data consist of feature vectors, a large body of work has focused on learning a Mahalanobis distance, which is parameterized by a positive semi-definite matrix. Recent methods offer good scalability to large datasets.
Less work has been devoted to metric learning from structured objects (such as strings or trees), because it often involves complex procedures. Most of the work has focused on optimizing a notion of edit distance, which measures (in terms of number of operations) the cost of turning an object into another.
We identify two important limitations of current supervised metric learning approaches. First, they allow to improve the performance of \emph{local} algorithms such as $k$-nearest neighbors, but metric learning for \emph{global} algorithms (such as linear classifiers) has not really been studied so far. Second, and perhaps more importantly, the question of the generalization ability of metric learning methods has been largely ignored.
In this thesis, we propose theoretical and algorithmic contributions that address these limitations. Our first contribution is the derivation of a new kernel function built from learned edit probabilities. Unlike other string kernels, it is guaranteed to be valid and parameter-free. Our second contribution is a novel framework for learning string and tree edit similarities inspired by the recent theory of $(\epsilon,\gamma,\tau)$-good similarity functions and formulated as a convex optimization problem. Using uniform stability arguments, we establish theoretical guarantees for the learned similarity that give a bound on the generalization error of a linear classifier built from that similarity. In our third contribution, we extend the same ideas to metric learning from feature vectors by proposing a bilinear similarity learning method that efficiently optimizes the $(\epsilon,\gamma,\tau)$-goodness. The similarity is learned based on global constraints that are more appropriate to linear classification. Generalization guarantees are derived for our approach, highlighting that our method minimizes a tighter bound on the generalization error of the classifier. Our last contribution is a framework for establishing generalization bounds for a large class of existing metric learning algorithms. It is based on a simple adaptation of the notion of algorithmic robustness and allows the derivation of bounds for various loss functions and regularizers.

\paragraph{R\'esum\'e} Ces derni\`eres ann\'ees, l'importance cruciale des m\'etriques en apprentissage automatique a men\'e \`a un int\'er\^et grandissant pour l'optimisation de distances et de similarit\'es en utilisant l'information contenue dans des donn\'ees d'apprentissage pour les rendre adapt\'ees au probl\`eme trait\'e. Ce domaine de recherche est souvent appel\'e \emph{apprentissage de m\'etriques}. En g\'en\'eral, les m\'ethodes existantes optimisent les param\`etres d'une m\'etrique devant respecter des contraintes locales sur les donn\'ees d'apprentissage. Les m\'etriques ainsi apprises sont g\'en\'eralement utilis\'ees dans des algorithms de plus proches voisins ou de clustering.
Concernant les donn\'ees num\'eriques, beaucoup de travaux ont port\'e sur l'apprentissage de distance de Mahalanobis, param\'etris\'ee par une matrice positive semi-d\'efinie. Les m\'ethodes r\'ecentes sont capables de traiter des jeux de donn\'ees de grande taille.
Moins de travaux ont \'et\'e d\'edi\'es \`a l'apprentissage de m\'etriques pour les donn\'ees structur\'ees (comme les cha\^ines ou les arbres), car cela implique souvent des proc\'edures plus complexes. La plupart des travaux portent sur l'optimisation d'une notion de distance d'\'edition, qui mesure (en termes de nombre d'op\'erations) le co\^ut de transformer un objet en un autre.
Au regard de l'\'etat de l'art, nous avons identifi\'e deux limites importantes des approches actuelles. Premi\`erement, elles permettent d'am\'eliorer la performance d'algorithmes \emph{locaux} comme les $k$ plus proches voisins, mais l'apprentissage de m\'etriques pour des algorithmes \emph{globaux} (comme les classifieurs lin\'eaires) n'a pour l'instant pas \'et\'e beaucoup \'etudi\'e. Le deuxi\`eme point, sans doute le plus important, est que la question de la capacit\'e de g\'en\'eralisation des m\'ethodes d'apprentissage de m\'etriques a \'et\'e largement ignor\'ee.
Dans cette th\`ese, nous proposons des contributions th\'eoriques et algorithmiques qui r\'epondent \`a ces limites. Notre premi\`ere contribution est la construction d'un nouveau noyau construit \`a partir de probabilit\'es d'\'edition apprises. A l'inverse d'autres noyaux entre cha\^ines, sa validit\'e est garantie et il ne comporte aucun param\`etre. Notre deuxi\`eme contribution est une nouvelle approche d'apprentissage de similarit\'es d'\'edition pour les cha\^ines et les arbres inspir\'ee par la th\'eorie des $(\epsilon,\gamma,\tau)$-bonnes fonctions de similarit\'e et formul\'ee comme un probl\`eme d'optimisation convexe. En utilisant la notion de stabilit\'e uniforme, nous \'etablissons des garanties th\'eoriques pour la similarit\'e apprise qui donne une borne sur l'erreur en g\'en\'eralisation d'un classifieur lin\'eaire construit \`a partir de cette similarit\'e. Dans notre troisi\`eme contribution, nous \'etendons ces principes \`a l'apprentissage de m\'etriques pour les donn\'ees num\'eriques en proposant une m\'ethode d'apprentissage de similarit\'e bilin\'eaire qui optimise efficacement l'$(\epsilon,\gamma,\tau)$-goodness. La similarit\'e est apprise sous contraintes globales, plus appropri\'ees \`a la classification lin\'eaire. Nous d\'erivons des garanties th\'eoriques pour notre approche, qui donnent de meilleurs bornes en g\'en\'eralisation pour le classifieur que dans le cas des donn\'ees structur\'ees. Notre derni\`ere contribution est un cadre th\'eorique permettant d'\'etablir des bornes en g\'en\'eralisation pour de nombreuses m\'ethodes existantes d'apprentissage de m\'etriques. Ce cadre est bas\'e sur la notion de robustesse algorithmique et permet la d\'erivation de bornes pour des fonctions de perte et des r\'egulariseurs vari\'es.

\end{scriptsize}
\end{abstract}

\end{document}